\newenvironment{proof}{{\noindent\it {\bf Proof:}
}}{\hfill{$\square$}}
\newcommand{\ignoreinnervr}[1]{}
\newtheorem{theorem}{Theorem}
 \newtheorem{assumption}{Assumption}
 \newtheorem{remark}[theorem]{Remark}
\newenvironment{CompactItemize}{
\begin{list}{$\bullet$}{%
\setlength{\leftmargin}{12pt}
\setlength{\itemindent}{13pt}
\setlength{\topsep}{1pt}
\setlength{\itemsep}{-1pt}
}}
{\end{list}}
\newtheorem{lem}{Lemma}
\newtheorem{prop}{Proposition}
\newtheorem{defn}{Definition}
\DeclarePairedDelimiterX{\inp}[2]{\langle}{\rangle}{#1, #2}
\DeclarePairedDelimiterX{\abs}[1]{\lvert}{\rvert}{#1}
\DeclarePairedDelimiterX{\cbr}[1]{\{}{\}}{#1} % curly bracket
\newcommand\norm[1]{\left\|#1\right\|}
\DeclarePairedDelimiterX{\rbr}[1]{(}{)}{#1} % round bracket
\DeclarePairedDelimiterX{\sbr}[1]{[}{]}{#1} % square bracket
\providecommand{\tsum}{\textstyle\sum} % smaller sum symbols---displays as if inline
\providecommand{\R}{\mathbb{R}} % Reals
\DeclareMathOperator{\expect}{\mathbb{E}}
\DeclareMathOperator{\E}{\expect}
\DeclareMathOperator{\sgn}{sign}
\def\sign{\@ifnextchar*{\@sgnargscaled}{\@ifnextchar[{\sgnargscaleas}{\@ifnextchar{\bgroup}{\@sgnarg}{\sgn} }}}
\def\@sgnarg#1{\sgn\rbr{#1}}
\def\@sgnargscaled#1{\sgn\rbr*{#1}}
\def\@sgnargscaleas[#1]#2{\sgn\rbr[#1]{#2}}
\renewcommand{\aa}{\bm{a}}
\renewcommand{\gg}{\bm{g}}
\renewcommand{\ss}{\bm{s}}
\providecommand{\ww}{\bm{w}}
\providecommand{\xx}{\bm{x}}
\providecommand{\yy}{\bm{y}}
\providecommand{\zz}{\bm{z}}
\providecommand{\mphi}{\bm{\phi}}
\providecommand{\cB}{\mathcal{B}}
\providecommand{\cC}{\mathcal{C}}
\providecommand{\cD}{\mathcal{D}}
\providecommand{\cE}{\mathcal{E}}
\providecommand{\cF}{\mathcal{F}}
\providecommand{\cL}{\mathcal{L}}
\providecommand{\cN}{\mathcal{N}}
\providecommand{\cO}{\mathcal{O}}
\providecommand{\cS}{\mathcal{S}}
\providecommand{\cT}{\mathcal{T}}
\providecommand{\cU}{\mathcal{U}}
\newcommand{\ignore}[1]{}
\newcommand{\BSGD}{\text{BSGD}\xspace}
\newcommand{\EBSGD}{\text{E-BSGD}\xspace}
\newcommand{\BSpiderBoost}{\text{BSpiderBoost}\xspace}
\newcommand{\EBSpiderBoost}{\text{E-BSpiderBoost}\xspace}
\newcommand{\NestedVR}{\text{NestedVR}\xspace}
\newcommand{\ENestedVR}{\text{E-NestedVR}\xspace}
\definecolor{color1}{RGB}{228,26,28}
\definecolor{color2}{RGB}{55,126,184}
\definecolor{color3}{RGB}{77,175,74}
\definecolor{color4}{RGB}{152,78,163}
\definecolor{color5}{RGB}{255,127,0}
\newcommand{\myitem}[1]{%
    \item[\textbf{(#1)}]\protected@edef\@currentlabel{#1}%
}
\colorlet{worker}{red!40}
\newcommand{\speedup}[1]{{\color{gray}(\ifdim #1 pt > 0.3pt #1\else $< #1$\fi{}$\times$)}}
\renewcommand{\epsilon}{\varepsilon}
\title{Debiasing Conditional Stochastic Optimization}
\author{
    Lie He\thanks{Work initiated during internship at Amazon.}\\
    EPFL\\
    \texttt{lie.he@epfl.ch}
    \and
    Shiva Prasad Kasiviswanathan\\
    Amazon\\
    \texttt{kasivisw@gmail.com}
}
\date{}
\begin{document}

\maketitle

\begin{abstract}

    In this paper, we study the conditional stochastic optimization (CSO) problem which  covers a variety of applications including  portfolio selection, reinforcement learning, robust learning, causal inference, etc. The sample-averaged gradient of the CSO objective is biased due to its nested structure, and therefore requires a high sample complexity for convergence. We introduce a general \textit{stochastic extrapolation} technique that effectively reduces the bias. We show that for nonconvex smooth objectives, combining this extrapolation with variance reduction techniques can achieve a significantly better sample complexity than the existing bounds. Additionally, we  develop new algorithms for the  finite-sum variant of the CSO problem that also significantly improve upon existing results. Finally, we believe that our debiasing technique has the potential to be a useful tool for addressing similar challenges in other stochastic optimization problems.
\end{abstract}

\newpage
\renewcommand{\contentsname}{Contents}
\tableofcontents
\addtocontents{toc}{\protect\setcounter{tocdepth}{3}}
\newpage

% TO DO:

% \begin{enumerate}
%     \item Fix Eq. (2). Also, change $2k-1$ differentiability to $2k$ differentiable, and fix Section 2, including the propositions.
%     \item Fix Assumption 6 based on the previous part.
%     \item Fix caption of Fig 1a and 1b. Use $q(x)$ instead of $q(s)$. What is $p(\delta)$ in the caption?
%     \item Should we include the new NeurIPS rebuttal expts where the regularity conditions are not satisfied in the paper?
% \end{enumerate} 

\section{Introduction} \label{sec: intro}

In this paper, we investigate the \textit{conditional stochastic optimization} (CSO) problem as presented by \citet{hu2020biased}, which is formulated as follows:
\begin{equation}\label{eq:cso}\tag{CSO}
    \min_{\xx\in\R^d} F(\xx) = \E_{\xi}[f_{\xi} (\E_{\eta|\xi}[g_{\eta}(\xx; \xi )])],
\end{equation}
where $\xi$ and $\eta$ represent two random variables, with $\eta$ conditioned on $\xi$. The $f_{\xi}:\R^p \rightarrow \R$ and $g_{\eta}:\R^d\rightarrow \R^p$ denote a stochastic function and a mapping respectively. The inner expectation is calculated with respect to the conditional distribution of $\eta | \xi$. In line with the established CSO framework~\citep{hu2020biased,hu2020sample}, throughout this paper, we assume access to samples from the distribution $\mathbb{P}(\xi)$ and the conditional distribution $\mathbb{P}(\eta | \xi)$.

Many machine learning tasks can be formulated as a  \ref{eq:cso} problem, such as policy evaluation and control in reinforcement learning~\citep{dai2018sbeed,nachum2020reinforcement}, and linearly-solvable Markov decision process~\citep{dai2017learning}. Other examples of the  \ref{eq:cso} problem include instrumental variable regression \citep{muandet2020dual} and invariant learning~\citep{hu2020biased}. Moreover, the widely-used Model-Agnostic Meta-Learning (MAML) framework, which seeks to determine a meta-initialization parameter using metadata for related learning tasks that are trained through gradient-based algorithms, is another example of a  \ref{eq:cso} problem. In this context, tasks $\xi$ are drawn randomly, followed by the drawing of samples $\eta|\xi$ from the specified task~\citep{finn2017model}. It is noteworthy that the standard stochastic optimization problem $\min_{\xx} \E_{\xi}[f_\xi(\xx)]$ represents a degenerate case of the CSO problem, achieved by setting $g_\eta$ as an identity function. The CSO problem is a special case of \textit{Contextual Stochastic Bilevel Optimization} \citep{hu2023contextual}.

In numerous prevalent \ref{eq:cso} problems, such as first-order MAML (FO-MAML)~\citep{finn2017model}, the outer random variable $\xi$ only takes value in a finite set (say in $\{1,\dots,n\}$). These problems can be reformulated to have a finite-sum structure in the outer loop and referred to as \textit{ Finite-sum Coupled Compositional Optimization} (\textit{FCCO}) problem in \citep{wang2022finite,jiang2022multi}. In this paper, we also study this problem, formulated as:
\begin{equation}\label{eq:fcco}\tag{FCCO}
    \min_{\xx\in\R^d} F_n(\xx) = \tfrac{1}{n} \tsum_{i=1}^n f_{i} (\E_{\eta|i} [g_{\eta}(\xx; i )]).
\end{equation}
The \ref{eq:fcco} problem also has broad applications in machine learning for optimizing average precision, listwise ranking losses, neighborhood component analysis, deep survival
analysis, deep latent variable models~\citep{wang2022finite,jiang2022multi}.

Although the \ref{eq:cso} and~\ref{eq:fcco} problems are widespread, they present challenges for optimization algorithms. 
% For instance, the sample-averaged gradient of \ref{eq:cso} is biased due to the nested conditional expectation structure in the objective function. 
Based on the special composition structure of \ref{eq:cso}, using chain rule, under mild conditions, the full gradient of \ref{eq:cso} is given by
$$\nabla F(\xx) =  \E_{\xi}\big [\left( \E_{\eta|\xi}[\nabla g_{\eta}(\xx; \xi )] \right)^\top \nabla f_{\xi} (\E_{\eta|\xi} [g_{\eta}(\xx; \xi )]) \big].$$
Constructing an unbiased stochastic estimator for the gradient is generally computationally expensive (and even impossible). A straightforward  estimation of $\nabla F(\xx)$ is to estimate $\expect_{\xi}$ with 1 sample of $\xi$, estimate $\E_{\eta|\xi}[g_\eta(\cdot)]$ with a set $H_\xi$ of $m$ independent and identically distributed (i.i.d.) samples drawn from the conditional distribution $\mathbb{P}(\eta | \xi)$, and $\E_{\eta|\xi}[\nabla g_\eta(\cdot)]$  with a different set $\tilde{H}_\xi$ of $m$ i.i.d.\ samples drawn from the same conditional distribution, i.e.,
\begin{equation} \label{eq:BSGD} 
    \nabla \hat{F}_m(\xx) :=  \big( \tfrac{1}{m} \tsum_{\tilde{\eta} \in \tilde{H}_\xi} \nabla g_{\tilde{\eta}}(\xx; \xi ) \big)^\top \nabla f_{\xi} (\tfrac{1}{m} \tsum_{\eta \in H_\xi} g_{\eta}(\xx; \xi )).
\end{equation}
Note that  $\nabla \hat{F}_m(\xx)$ consists of two terms. The first term, $(1/m)\tsum_{\tilde{\eta} \in \tilde{H}_\xi} \nabla g_{\tilde{\eta}}(\mathbf{x}; \xi )$, is an unbiased estimate of $\E_{\eta|\xi}[\nabla g_{\eta}(\xx; \xi )]$.
% Given that $\E_{\eta|\xi}[\nabla g_{\eta}(\xx; \xi )] =  \E_{\tilde{\eta}|\xi} [ \frac{1}{m}\tsum_{\tilde{\eta} \in \tilde{H}_\xi} \nabla g_{\tilde{\eta}}(\mathbf{x}; \xi )]$, the first term in the above equation is unbiased. 
However, the second term is generally biased, i.e.,
\begin{align*}
    \E_{\eta|\xi} [\nabla f_{\xi} (\tfrac{1}{m} \tsum_{\eta \in H_\xi} g_{\eta}(\xx; \xi ))]
    \neq \nabla f_{\xi} (\E_{\eta|\xi} [g_{\eta}(\xx; \xi )]).
\end{align*}
Consequently, $\nabla \hat{F}_m(\xx)$ is a biased estimator of $\nabla F(\xx)$. To reach the $\epsilon$-stationary point of $F(\xx)$ (Definition~\ref{def:eps}), the bias has to be sufficiently small. 

Optimization with biased gradients converges only to a neighborhood of the stationary point. While the bias diminishes with increasing batch size, it also introduces additional sample complexity. For nonconvex objectives, Biased Stochastic Gradient Descent (BSGD) requires a total sample complexity of $\cO(\epsilon^{-6})$ to reach an $\epsilon$-stationary point~\citep{hu2020biased}. This contrasts with standard stochastic optimization, where sample-averaged gradients are unbiased with a sample complexity of $\cO(\epsilon^{-4})$~\citep{ghadimi2013stochastic,arjevani2022lower}. This discrepancy has spurred a multitude of proposals aimed at reducing the sample complexities of both \ref{eq:cso} and \ref{eq:fcco} problems. \citet{hu2020biased} introduced Biased SpiderBoost (\BSpiderBoost), which, based on the variance reduction technique SpiderBoost from~\citet{wang2019spiderboost}, reduces the variance of $\xi$ to achieve a sample complexity of $\cO(\epsilon^{-5})$ for the \ref{eq:cso} problem. \citet{hu2021bias} proposed multi-level Monte Carlo (MLMC) gradient methods V-MLMC and RT-MLMC to further enhance the sample complexity to $\cO(\epsilon^{-4})$. The SOX \citep{wang2022finite} and MSVR-V2 \citep{jiang2022multi} algorithms concentrated on the \ref{eq:fcco} problem and improved the sample complexity to $\cO(n\epsilon^{-4})$ and $\cO(n\epsilon^{-3})$, respectively. % \todol{Say how SOX and MSVR works.} \todol{Multilevel...}

\noindent\textbf{Our Contributions.} In this paper, we improve the sample complexities for both the~\ref{eq:cso} and~\ref{eq:fcco} problems (see~\Cref{tab:sample_complexity}).  To facilitate a clear and concise presentation, we will suppress the dependence on specific problem parameters throughout the ensuing discussion.
% Our main contributions can be summarized as:
\begin{list}{{\bf (\alph{enumi})}}{\usecounter{enumi}
\setlength{\leftmargin}{3ex}
\setlength{\listparindent}{2ex}
\setlength{\parsep}{0pt}}
\item \label{item1} Our main technical tool in this paper is an extrapolation-based scheme that mitigates bias in gradient estimations. Considering a suitably differentiable function $q(\cdot)$ and a random variable $\delta\sim\cD$, we show that we can approximate the value of $q(\E[\delta])$ via extrapolation from a limited number of evaluations of $q(\delta)$, while maintaining a minimal bias.
In the context of \ref{eq:cso} and~\ref{eq:fcco} problems, this scheme is used in gradient estimation, 
where the function $q$ corresponds to $\nabla f_\xi$ and the random variable $\delta$ corresponds to $g_\eta$.

\item For the~\ref{eq:cso} problem,  we present novel algorithms that integrate the above extrapolation-based scheme with~\BSGD and~\BSpiderBoost algorithms of~\citet{hu2020biased}. Our algorithms, referred to as \EBSGD and \EBSpiderBoost, achieve a sample complexity of $\cO(\epsilon^{-4.5})$ and $\cO(\epsilon^{-3.5})$ respectively, in order to attain an  $\epsilon$-stationary point for nonconvex smooth objectives. Notably, the sample complexity of \EBSpiderBoost improves the best-known sample complexity of $\cO(n\epsilon^{-4})$ for the~\ref{eq:cso} problem from~\citet{hu2021bias}.
\begin{table*}[!t]
    \centering
    \resizebox{\textwidth}{!}{ % This command will scale your table to fit the text width
    \begin{threeparttable}
    \begin{tabular}{lcccc}
    \toprule
    Problem & \multicolumn{2}{c}{Old Bounds} & \multicolumn{2}{c}{Our Bounds} \\
    \cmidrule(lr){2-3} \cmidrule(lr){4-5}
    & Algorithm & Bound & Algorithm & Bound \\
    \midrule
    \rowcolor{gray!10} \ref{eq:cso} & \BSGD \citep{hu2020biased} & $\cO(\epsilon^{-6})$ & \EBSGD & $\cO(\epsilon^{-4.5})$ \\
    \ref{eq:cso} & \BSpiderBoost \citep{hu2020biased} & $\cO(\epsilon^{-5})$ & \EBSpiderBoost & $\cO(\epsilon^{-3.5})$ \\
    \rowcolor{gray!10} \ref{eq:cso}& RT-MLMC \citep{hu2021bias} & $\cO(\epsilon^{-4})$ & & \\
    \ref{eq:fcco} & MSVR-V2 \citep{jiang2022multi} & $\cO(n\epsilon^{-3})$ & \ENestedVR & 
    $\begin{cases}
             \cO(n\epsilon^{-3}) & \text{if \( n \le \epsilon^{-2/3} \)} \\
             \cO(\max\{\tfrac{\sqrt{n}}{\epsilon^{2.5}}, \tfrac{1}{\sqrt{n}\epsilon^4}\}), & \text{if \( n > \epsilon^{-2/3} \)}
     \end{cases}$ \\
    \bottomrule
    \end{tabular}
    \caption{Sample complexities needed to reach $\epsilon$-stationary point for \ref{eq:fcco} and \ref{eq:cso} problems with nonconvex smooth objectives. Assumptions are comparable, but our results require an additional mild regularity on \( f_\xi \) and \( g_\eta \). For \ref{eq:fcco} also see \Cref{f:1}. Note that \( \Omega(\epsilon^{-3}) \) is a sample complexity lower bound for standard stochastic nonconvex optimization \citep{arjevani2022lower}, and hence, also for the problems considered in this paper.}
    \label{tab:sample_complexity} 
    \end{threeparttable}
    }
    \end{table*}
    
% of $\cO(\epsilon^{-6})$ and $\cO(\epsilon^{-5})$ of~\BSGD and~\BSpiderBoost. Also our \EBSpiderBoost bound of $\cO(\epsilon^{-3.5})$ improves upon the currently best known~\ref{eq:cso} sample complexity bound of $\cO(\epsilon^{-4})$ from~\citet{hu2021bias}.
% procedures from~\citet{hu2020biased}, we  introduce two show that we introduce can reduce the bias in~\BSGD and~\BSpiderBoost procedures, to reduce the sample complexity of 
\item For the~\ref{eq:fcco} problem\footnote{\label{f:1} For the~\ref{eq:fcco} problem we focus on $n = \cO(\epsilon^{-2})$ case, for $n = \Omega(\epsilon^{-2})$ we can just treat the~\ref{eq:fcco} problem as a~\ref{eq:cso} problem and get an $\cO(\epsilon^{-3.5})$ sample complexity bound via our \EBSpiderBoost algorithm.} we propose a new algorithm that again combines the extrapolation-based scheme with a multi-level variance reduction applied to both inner and outer parts of the problem. Our algorithm, referred to as \ENestedVR, achieves a sample complexity of $\cO(n\epsilon^{-3})$ if $n \le \epsilon^{-2/3}$ and $ \cO(\max\{\sqrt{n}\epsilon^{-2.5},\epsilon^{-4}/\sqrt{n}\})$ if $n > \epsilon^{-2/3}$ for nonconvex smooth objectives and second-order extrapolation scheme.
% This  improves the best-known sample complexity of $\cO(n\epsilon^{-3})$ of MSVR-V2 algorithm of~\citet{jiang2022multi}. 
Our bound is never worse than the $\cO(n\epsilon^{-3})$ bound of MSVR-V2 algorithm of~\citet{jiang2022multi} and is in fact better if $n= \Omega(\epsilon^{-2/3})$.   
As an illustration, when $n = \Theta(\epsilon^{-1.5})$, our bound of $\cO(\epsilon^{-3.25})$ is significantly better than the MSVR-V2 bound of  $\cO(\epsilon^{-4.5})$.
\end{list}
In terms of proof techniques, our approach diverges from conventional analyses  for the~\ref{eq:cso} and~\ref{eq:fcco} problems in that we focus on explicitly bounding the  bias and variance terms of the gradient estimator to establish the convergence guarantee. Compared to previous results, our improvements do require an additional mild regularity assumption on $f_\xi$ and $g_\eta$ mainly that $\nabla f_\xi$ is $4$th order differentiable. Firstly, as we discuss in Remark~\ref{rem:1} most common instantiations of~\ref{eq:cso}/\ref{eq:fcco} framework such as:  1) invariant logistic regression~\cite{hu2020biased}, 2) instrumental variable regression~\citep{muandet2020dual}, 3) first-order MAML for sine-wave few shot regression~\citep{finn2017model} and other problems, 4) deep average precision maximization~\citep{DBLP:conf/nips/QiLXJY21,DBLP:conf/aistats/0006YZY22}, tend to satisfy this assumption. Secondly, we highlight that the bounds derived from previous studies do not improve when incorporating this additional regularity assumption. Thirdly, $\Omega(\epsilon^{-3})$ remains the lower bound for stochastic optimization even under the arbitrary smoothness constraint \citep{arjevani2020second}, demonstrating that our improvement is non-trivial. Our results show that, this regularity assumption,  which seems to practically valid, can be exploited through a novel extrapolation-based bias reduction technique to provide substantial improvements in sample complexity.\footnote{Higher-order smoothness conditions have also been exploited in standard stochastic optimization for performance gains~\citep{bubeck2019near}.}
% Therefore, we believe that we have found the right assumption, which seems to practically valid, that can be exploited through a novel extrapolation-based bias reduction technique (Section 2) that we present.

% in~\Cref{sec:assumption} most common instantiations of~\ref{eq:cso}/\ref{eq:fcco} framework such as 1) invariant logistic regression ($f_{\xi}(w)=\log (1 + \exp(-bw))$ where $\xi=(a,b)$); 2) instrumental variable regression~\citep{muandet2020dual} ($f_{\xi}(x)=\frac{1}{2}\lVert x - \xi\rVert_2^2$); 3) sine-wave few-shot learning from MAML~\citep{finn2017model} ($f_{\xi}(x)= \frac{1}{2}\mathbb{E}_{t\sim\mathcal{D}}[\lVert{A\sin(t-\phi) - \Phi(x; t)}\rVert_2^2]$ where $\xi=(A,\phi,\mathcal{D})$ and $\Phi(x;t)$ is a neural network), etc., 

% \subsection{Bias for~\ref{eq:cso}: Where Does it Come From?} \label{sec:bsgdbias}

% This can be achieved by increasing inner batch size $m$. However, this leads to high sample complexity --- $O(\epsilon^{-6})$ for \BSGD and $O(\epsilon^{-5})$ for \BSpiderBoost (where variance reduction is applied on the outer loop $\xi$). Variance reduction can also be applied in the inner loop, as demonstrated by \citep{jiang2022multi} to reach an $\cO(n\epsilon^{-3})$ bound for the~\ref{eq:fcco} problem. 

% its effectiveness decreases with smaller outer batch sizes.

% trades off with the variance reduction in the outer loop due to the inner expectation depends on the outer expectation.
% \todol{We may need justification here.}

% \subsection{Preliminaries}
We defer some additional related work to Appendix~\ref{app: intro} and conclude with some preliminaries.

\noindent\textbf{Notation.} Vectors are denoted by boldface letters.  For a vector $\xx$, $\|\xx\|_2$ denotes its $\ell_2$-norm. A function with $k$ continuous derivatives is called a $\cC^k$ function.  We use $a \lesssim b$ to denote that $a \leq C b$ for some constant $C>0$. We consider expectation over various randomness: $\E_\xi[\cdot]$ denotes expectation over the random variable $\xi$, $\E_{\eta | \xi}[\cdot]$ denotes expectation over the conditional distribution of $\eta | \xi$. 
% Let $\eta$, $\tilde{\eta}$, and $\eta'$  be i.i.d.\ random variables/samples drawn from the conditional distribution $\mathbb{P}(\eta | \xi)$. 
Unless otherwise specified, for a random variable $X$, $\E[X]$ denotes expectation over the randomness in $X$. We focus on nonconvex objectives in this paper and use the following standard convergence criterion for nonconvex optimization~\citep{jain2017non}. 

\begin{defn}[$\epsilon$-stationary point]\label{def:eps} For a differentiable function $F(\cdot)$, we say that $\xx$ is a first-order $\epsilon$-stationary point if $\| \nabla F(\xx)\|^2_2 \leq \epsilon^2$.
\end{defn}

For notational convenience,  in the rest of this paper, we omit the dependence on $\xi$ (or $i$ in the~\ref{eq:fcco} context) in the function $g$ and use $g_\eta(\xx)$ to represent $g_\eta(\xx;\xi)$.

\section{Stochastic Extrapolation as a Tool for Bias Correction} \label{sec:bias}

% \todol{As parts of this section also describes first order approximation, maybe we can say that "we first demonstrate how current approximate is inefficient and then present ..." }
% Bias is a common problem in many optimization procedure, e.g., in \BSGD as discussed in~\Cref{sec:bsgdbias}. 
In this section, we present an approach for tackling the bias problem as appears in optimization procedures such as \BSGD, \BSpiderBoost, etc. Importantly, our approach addresses a general problem appearing in optimization settings and could be of independent interest. All missing details from this section are presented in~\Cref{app:bias}.
% In this section, we present our idea of using stochastic extrapolation for reducing bias. 

For ease of presentation, we start by considering the 1-dimensional case and assume a function $q:\R\rightarrow\R$, a constant $s\in\R$. Let $\delta$  be a random variable drawn from an arbitrary distribution  $\cD$ over $\R$.  
In Sections~\ref{sec:CSO} and~\ref{sec:FCCO}, we apply these ideas to the~\ref{eq:cso} and~\ref{eq:fcco} problems where the random variable $\delta$ is played by $g_\eta(\cdot)$ and function $q$ is played by $\nabla f_\xi$.
Informally stated, our goal in this section will be to 

\tcbox[enhanced,center,% fontupper=\bfseries,
drop shadow southeast]{Efficiently approximate $q(s+\E[\delta])$ with few evaluations of $\{q(s+\delta)\}_{\delta\sim\cD}$.}

% \todol{We need to highlight something to draw reader's attention, not sure if it is here..}
\noindent An interesting case is when $s=0$, where we are approximating $q(\E[\delta])$ with evaluations of $\{q(\delta)\}_{\delta\sim\cD}$. Now, if $q$ is an affine function, then $q(s+\E[\delta]) = \E[q(s+\delta)]$. However, the equality does not hold true for general $q$, and there exists a bias, i.e., $|q(s+\E[\delta]) - \E[q(s+\delta)]| > 0$.  
% In this section, we introduce an extrapolation-based method to achieve better approximation where.
In this section, we introduce a stochastic \textit{extrapolation}-based method, where we use an affine combination of biased stochastic estimates, to achieve better approximation. 

% We analyze t by comparing the Taylor expansion of $q(s+\E[\delta])$ and its estimates. 
Suppose $q\in\cC^{2k}$ is a continuous differentiable up to $2k$-th derivative  and let $h=\E[\delta]$. 
We expand  $q(s+\delta)$, the most straightforward approximation of $q(s+\E[\delta])$, using  Taylor series at $s+h$, and take expectation,

\begin{equation}\label{eq:1st}%
    \begin{split}
    \E[q(s + \delta )] =& q(s+h) + q'(s+h) \E[\delta-h] + \tfrac{q''(s+h)}{2} \E[(\delta-h)^2] + \tfrac{q^{(3)}(s+h)}{6} \E[(\delta-h)^3] \\ +  \ldots 
    &+  \tfrac{q^{(2k-1)}(s+h)}{(2k-1)!}\E[(\delta-h)^{(2k-1)}]    + \tfrac{1}{(2k)!} \E[q^{(2k)}(\phi_{\delta}) (\delta-h)^{2k}],
    \end{split}
\end{equation}
% \begin{equation}\label{eq:fh}
%     q(s + h ) = q(s) + q'(s) h + \tfrac{q''(s)}{2} h^2 + \tfrac{q^{(3)}(s)}{6}  h^3 + \ldots + \tfrac{q^{(2k-2)}(s)}{(2k-2)!}h^{2k-2}    + \tfrac{q^{(2k-1)}(\phi_h)}{(2k-1)!} h^{2k-1},
% \end{equation}
 % Considering the Taylor expansion of $q(s+\delta)$ at $s$ and taking expectation gives,
 where $\phi_\delta$ between $s+\delta$ and $s+h$.
 While $\E[q(s + \delta )]$ matches $q(s+h)$ in the first 2 terms, the third term is no longer zero. 
The approximation error (bias) is 
\begin{align*}
    |\E[q(s + \delta )] - q(s + h ) | = |\tfrac{q''(s+h)}{2} \E[(\delta-h)^2] + \ldots
    + \tfrac{1}{(2k)!} \E[q^{(2k)}(\phi_{\delta})(\delta-h)^{2k}]|.
\end{align*}
In order to analyze the upper bound, we make the following assumption on $\cD$ and $q$.

\begin{assumption}[Bounded moments]
    \label{a:oracle} For all $\delta\sim\cD$ has bounded higher-order moments: 
    $\sigma_l:=|\E[(\delta-\E[\delta])^l]|<\infty \text{ for } l=2,3,\ldots 2k.$
\end{assumption}

\begin{assumption}[Bounded derivatives]
    \label{a:smoothness}
    The $q\in \cC^{2k}$ and has bounded derivatives, i.e., 
    $$a_l := \sup_{s\in\text{dom}(q)} |q^{(l)}(s)| <\infty  \text{ for } l=1,2,\ldots, 2k.$$
\end{assumption}
In addition, we consider a sample averaged distribution $\cD_m$ derived from $\cD$ as follows.
\begin{defn}\label{def:sampleavg}
    Given a distribution $\cD$ satisfying~\Cref{a:oracle} and $m\in\mathbb{N}^+$, we define the distribution $\cD_m$ that outputs $\delta$ where
        $\delta=\tfrac{1}{m}\tsum_{i=1}^m \delta_i \mbox{ with } \delta_i \stackrel{\text{i.i.d.}}{\sim} \cD$.
\end{defn}

The moments of such distribution $\cD_m$ decrease with batch size $m$ as $k\ge2$, $|\E[(\delta-\E[\delta])^k]|= \cO(m^{-\lceil k/2\rceil})$ (see~\Cref{lemma:moment_sigma}).
% and their rates are summarized in the following lemma.
% \begin{restatable}[Moments of $\cD_m$]{lem}{momentsigma}
% \label{lemma:moment_sigma}
%     The moments of $\delta\in\cD_m$ are bounded as follows
%     \begin{align*}
%         \E[(\delta - \E[\delta])^2]&= \tfrac{\sigma_2}{m},\quad |\E[(\delta- \E[\delta])^3]|= \tfrac{\sigma_3}{m^2},\quad \E[(\delta-\E[\delta])^4]=\tfrac{\sigma_4}{m^3}  + \tfrac{3({m}-1)\sigma_2^2}{m^3}.
%     \end{align*}
%     %\color{red}
%     More generally, for $k\ge2$, $|\E[(\delta-\E[\delta])^k]|= \cO(m^{-\lceil k/2\rceil})$.
% \end{restatable}
Our desiderata would be to construct a scheme that uses some samples from the distribution $\cD_m$ to construct an approximation of $q(s+\E[\delta])$ that satisfies the following requirement. 

\begin{defn}[$k$th-order Extrapolation Operator]\label{def:high-order}
Given a function $q:\R\rightarrow\R$ satisfying \Cref{a:smoothness} and distribution $\cD_m$ satisfying \Cref{a:oracle}, we define a $k$th-order extrapolation operator $\cT^{(k)}_{\cD_m}$ as an operator from $\cC^{2k} \rightarrow \cC^{2k}$  that given $N=N(k)$ i.i.d.\ samples $\delta_1,\dots,\delta_N$ from $\cD_m$ satisfies $\forall s \in \R$:
$|\E[\cT^{(k)}_{m}q(s)] - q(s+\E[\delta])| = \cO(m^{-k})$.
% $\cT^{(k)}_{m}:=\cT^{(k)}_{\delta_1,\ldots, \delta_S}$ that 
% $\cT^{(k)}_{m}:=\cT^{(k)}_{\delta_1,\ldots, \delta_k}$ that given $k$ i.i.d.\ samples $\delta_1,\dots,\delta_k$ from $\cD_m$ satisfies $\forall s \in \R$:
% $\cT^{(k)}_{m}:=\cT^{(k)}_{\cD_m}$ that can draw samples from $\cD_m$ and satisfies $\forall s \in \R$:
    % $|\E[\cT^{(k)}_{m}q(s)] - q(s+\E[\delta])| = \cO(m^{-k})$.
% \todol{In this version, I removed the $\delta_1,\ldots,\delta_k$ from definition. Regarding your previous concern that one can have good estimate by drawing a lot samples, I think we can argue that this definition means  that: when we keep everything else same and replace $\cD_m$ with $\cD_{2m}$, we expect to see the arrow drop by $2^{k}$.}
\end{defn}

% and consider how to approximate $f(s+\E \delta)$ with $\E f(s+\delta)$ properly where $\delta$ is the random variable. 
% \noindent\textbf{Zero-order Smoothness.}
% First without any smoothness assumption, if $f$ is L-Lipschitz continuous, then for $\delta\in\cD_m$
% \begin{align*}
%     |\E f(s+\delta) - f(s+\E \delta)|^2 \le \E| f(s+\delta) - f(s+\E \delta)|^2
%     \le L_f^2 \E |\delta - \E\delta |^2 \le \tfrac{L_f^2\sigma_2}{m},
% \end{align*}
% implies means $|\E f(s+\delta) - f(s+\E \delta)|=\cO(\tfrac{1}{\sqrt{m}})$. 
% In the CSO problem, the bias needs to be reduced to $\cO(\epsilon)$ in order to reach $\epsilon$-stationary point. Therefore, the inner batch size is chosen as $m=\cO(\epsilon^{-2})$.
We now propose a sequence of operators $\cL_{\cD_m}^{(1)},\cL_{\cD_m}^{(2)},\cL_{\cD_m}^{(3)},\dots$ that satisfy the above definition. The $\cL^{(k)}_{\cD_m}q(s)$ is designed to ensure its Taylor expansion at $s+h$ has a form of $q(s+h) + \cO(\E[(\delta-h)^{2k}]) $.
The remainder $\cO(\E[(\delta-h)^{2k}])$ is bounded by $\cO(m^{-k})$ due to \Cref{lemma:moment_sigma}.

% We now propose a sequence of operators $\{\cL_{\cD_m}^{(k)}\}_{k=1}^\infty$  that satisfy the above definition. The $\cL^{(k)}_{\cD_m}$ operator aims to approximate the first $k+1$ terms in \eqref{eq:fh}. For simplicity, we start with $k=1$ and $k=2$ cases.

\noindent\textbf{A First-order Extrapolation Operator.} We define the simplest operator 
\begin{align*}
    \cL^{(1)}_{\cD_m} q: s \mapsto \left[q(s+\delta) \right ] \qquad \text{ where } \delta \stackrel{i.i.d.}{\sim} \cD_m.
\end{align*}
In \Cref{prop:1st_error} (Appendix~\ref{app:bias}), we show that $\cL^{(1)}_{\cD_m}$ is a first-order extrapolation operator.\!\footnote{Note that if the function $q$ is only $L_q$-Lipschitz continuous, then 
$        \left|\E \left[q(s+\delta)\right] - q(s+\E[\delta])\right|
        \le \sqrt{L_q^2 \E[\left|\delta - \E[\delta]\right|]^2} \le \tfrac{L_q\sqrt{\sigma_2}}{m^{1/2}}$. Therefore, in this case,  $q(s+\delta)$ does not satisfy the first-order guarantee.} 
% It is straightforward to show that $\cL^{(1)}_{\cD_m}$ is a first-order extrapolation operator under~\Cref{def:high-order} (see~\Cref{prop:1st_error}, Appendix~\ref{app:bias})
        % We move to the more interesting case with the second-order extrapolation.  

% \begin{restatable}[First-order Guarantee]{prop}{sterror}
% \label{prop:1st_error}
%     Assume that $\cD_m$ and $q(\cdot)$  satisfy \Cref{a:oracle} and~\ref{a:smoothness} respectively  with $k=1$. Then, $\forall s \in \R$,
% $        \left|\E \left[\cL^{(1)}_{\cD_m} q(s)\right] - q(s+\E[\delta])\right|
%         \le a_2\sigma_2/(2m).$
% \end{restatable}

% \begin{align*}
%     |q(s+h)-\E[q(s+\delta)]|^2 \le L \E[|\delta-h|^2] \le L \tfrac{\sigma_2}{m}.
% \end{align*}

% The most straightforward approximation scheme is to define  $ \cL^{(1)}_{\cD_m} q(s)$ as $q(s + \delta )$ where $\delta\in\cD_m$. We can expand it at $s$ which gives
% \begin{equation}\label{eq:1st}%
%     \E[q(s + \delta )] = q(s) + q'(s) h + a_2\E[\delta^2] + a_3 \E[\delta^3] + a_4 \E[\delta^4] \ldots
% \end{equation}
% Compared to \eqref{eq:fh}, the first 2 terms in \eqref{eq:1st} are identical, but the 3rd term is no longer the same. The approximation error is
% \begin{align*}
%     |\E[q(s + \delta )] - q(s + h ) | \le |a_2| |\E[\delta^2] - h^2| + |a_3| |\E[\delta^3] - h^3| + \ldots
% \end{align*}

% Setting $h=0$, this is a first order method because $\E[\delta^2]=\frac{\sigma_2}{m}=\cO(m^{-1})$. 

\noindent\textbf{A Second-order Extrapolation Operator.}  
% Consider $\delta_1$ and $\delta_2$ i.i.d.\ chosen from $\cD_m$
% \begin{align*}
%     \E [q(s+\delta_1)]&=q(s) + q'(s) h + \tfrac{q''(s)}{2} \E[\delta_1^2] + \tfrac{q^{(3)}(\phi_1)}{6} \E[\delta_1^3] \\
%     \E [q(s+\delta_2)]&=q(s) + q'(s) h + \tfrac{q''(s)}{2} \E[\delta_2^2] + \tfrac{q^{(3)}(\phi_2)}{6} \E[\delta_2^3] \\
%     \E [q(s+\tfrac{\delta_1+\delta_2}{2})]&=q(s) + q'(s) h + \tfrac{q''(s)}{2} \E[(\tfrac{\delta_1+\delta_2}{2})^2] + \tfrac{q^{(3)}(\phi_3)}{6} \E[\left(\tfrac{\delta_1+\delta_2}{2}\right)^3]
% \end{align*}
% Combining these three terms, we define a linear operator $\cL_m^{(2)}$.
We define the following linear operator $\cL_{\cD_m}^{(2)}$ which transforms $q\in\cC^4$ into $\cL_{\cD_m}^{(2)}q$ which has lesser bias (but similar variance, as shown later).

\begin{defn}[$\cL_{\cD_m}^{(2)}$ Operator]\label{def:2nd-order}
Given $\cD_m$ and $q$, define the following operator,

\begin{align*}
    \cL^{(2)}_{\cD_m} q: s \mapsto \left[2\cdot q(s+\tfrac{\delta_1+\delta_2}{2})- \tfrac{q(s + \delta_1 ) +  q(s + \delta_2)}{2} \right] \qquad \text{ where } \delta_1,\delta_2 \stackrel{i.i.d.}{\sim} \cD_m.
\end{align*}

\end{defn}
Note that $\tfrac{\delta_1+\delta_2}{2}$ is same as sampling from $\cD_{2m}$. The absolute difference in the Taylor expansion of $\cL^{(2)}_{\cD_m} q$ at $s+h$ differs from $q(s+h)$ as,
\begin{equation}\label{eq:2nd}
    \cO\left ( \left |\E \left [2(\tfrac{\delta_1+\delta_2}{2}-h)^3-\tfrac{1}{2}((\delta_1-h)^3 + (\delta_2-h)^3) \right ] \right | \right ) =  \cO(|(\E[(\delta-h)^3]|) \mbox{ for $\delta \stackrel{i.i.d.}{\sim} \cD_m$}.
\end{equation}
The bias error of this scheme can be bounded through the following proposition. 
\begin{restatable}[Second-order Guarantee]{prop}{nderror}
\label{prop:2nd_error}
    Assume that distribution $\cD_m$  and $q(\cdot)$ satisfies \Cref{a:oracle} and \ref{a:smoothness} respectively with $k=2$. Then, for all $s \in \R$,
        $\left|\E \left[\cL^{(2)}_{\cD_m}q(s)\right] - q(s+\E[\delta])\right|
        \le \tfrac{4a_3\sigma_3 + 9a_4\sigma_2^2}{48m^2} + \tfrac{5a_4}{96} \tfrac{\sigma_4-3\sigma_2^2}{m^3}$.
\end{restatable}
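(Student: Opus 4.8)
The plan is to Taylor-expand $\cL^{(2)}_{\cD_m}q(s)$ around the point $s+h$ (where $h=\E[\delta]$) to order $2k=4$ with explicit Lagrange remainder, take expectations, and track exactly which moment terms survive. Write $\mu_i = \delta_i - h$ for $i=1,2$, so that $\E[\mu_i]=0$, $\E[\mu_i^2]=\sigma_2$, $\E[\mu_i^3]=\sigma_3$, $\E[\mu_i^4]=\sigma_4$ where these are the moments of $\cD_m$ (note: the statement's $\sigma_l$ should be read as the moments of $\cD_m$, which by \Cref{lemma:moment_sigma} scale like $m^{-\lceil l/2\rceil}$; I would flag this so the final bound's powers of $m$ make sense). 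For the averaged argument write $\nu = \tfrac{\mu_1+\mu_2}{2}$, so $\E[\nu^2] = \sigma_2/2$, $\E[\nu^3]=\sigma_3/4$, and $\E[\nu^4] = \tfrac{1}{16}(2\sigma_4 + 6\sigma_2^2) = \tfrac{\sigma_4+3\sigma_2^2}{8}$ by independence and $\E[\mu_i]=0$.

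First I would expand each of the three evaluations: $q(s+\nu) = q(s+h) + q'\nu + \tfrac{q''}{2}\nu^2 + \tfrac{q^{(3)}}{6}\nu^3 + \tfrac{q^{(4)}(\phi)}{24}\nu^4$ with $\phi$ between $s+h$ and $s+\nu$, and similarly for $q(s+\mu_1)$, $q(s+\mu_2)$ with their own intermediate points and derivatives evaluated at $s+h$. Forming the combination $2q(s+\nu) - \tfrac12(q(s+\mu_1)+q(s+\mu_2))$: the constant terms give $2q(s+h) - q(s+h) = q(s+h)$; the linear terms give $q'(s+h)(2\nu - \tfrac12(\mu_1+\mu_2)) = q'(s+h)(2\nu - \nu) = q'(s+h)\nu$, which has zero expectation; the quadratic terms give $\tfrac{q''(s+h)}{2}(2\nu^2 - \tfrac12(\mu_1^2+\mu_2^2))$, and in expectation this is $\tfrac{q''(s+h)}{2}(2\cdot\tfrac{\sigma_2}{2} - \tfrac12\cdot 2\sigma_2) = \tfrac{q''(s+h)}{2}(\sigma_2 - \sigma_2) = 0$ — this cancellation is the whole point of the operator. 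The cubic terms give $\tfrac{q^{(3)}(s+h)}{6}(2\nu^3 - \tfrac12(\mu_1^3+\mu_2^3))$ with expectation $\tfrac{q^{(3)}(s+h)}{6}(2\cdot\tfrac{\sigma_3}{4} - \sigma_3) = -\tfrac{q^{(3)}(s+h)\sigma_3}{12}$, giving the leading surviving term of size $\cO(a_3\sigma_3)$. Then there are the fourth-order remainder terms $\tfrac{1}{24}(2q^{(4)}(\phi_\nu)\nu^4 - \tfrac12 q^{(4)}(\phi_1)\mu_1^4 - \tfrac12 q^{(4)}(\phi_2)\mu_2^4)$, each bounded crudely by $\tfrac{a_4}{24}$ times the relevant fourth moment, contributing $\cO(a_4(\sigma_4 + \sigma_2^2))$.

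Collecting: $|\E[\cL^{(2)}_{\cD_m}q(s)] - q(s+h)| \le \tfrac{a_3|\sigma_3|}{12} + (\text{fourth-moment terms})$. To land exactly on the stated bound $\tfrac{4a_3\sigma_3 + 9a_4\sigma_2^2}{48m^2} + \tfrac{5a_4}{96}\cdot\tfrac{\sigma_4 - 3\sigma_2^2}{m^3}$ I would need to be more careful: the cleanest route is to use the \emph{third}-order Taylor expansion with fourth-order Lagrange remainder only for the averaged term and keep the exact third-order term, then for the $\mu_i$ terms expand to third order exactly too, so that the genuine fourth-moment contributions are isolated as $\tfrac{1}{24}\E[2q^{(4)}(\phi_\nu)\nu^4] - \tfrac{1}{48}\E[q^{(4)}(\phi_1)\mu_1^4 + q^{(4)}(\phi_2)\mu_2^4]$; bounding $|q^{(4)}|\le a_4$ and plugging $\E[\nu^4] = \tfrac{\sigma_4+3\sigma_2^2}{8}$, $\E[\mu_i^4]=\sigma_4$ gives $\le \tfrac{a_4}{24}\cdot 2\cdot\tfrac{\sigma_4+3\sigma_2^2}{8} + \tfrac{a_4}{48}\cdot 2\sigma_4 = \tfrac{a_4(\sigma_4+3\sigma_2^2)}{96} + \tfrac{a_4\sigma_4}{24}$, which after rewriting $\sigma_4$ in terms of $\sigma_4 - 3\sigma_2^2$ and $\sigma_2^2$ should reorganize into the $\tfrac{9a_4\sigma_2^2}{48}$ and $\tfrac{5a_4(\sigma_4-3\sigma_2^2)}{96}$ pieces, while the $\tfrac{a_3\sigma_3}{12} = \tfrac{4a_3\sigma_3}{48}$ term matches directly; finally one substitutes $\sigma_2 = \cO(m^{-1})$, $\sigma_3=\cO(m^{-2})$, $\sigma_4 = \cO(m^{-2})$ (so $\sigma_2^2, \sigma_4, \sigma_4-3\sigma_2^2$ all give the claimed $m^{-2}$ or $m^{-3}$ scalings) via \Cref{lemma:moment_sigma}. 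The main obstacle is purely bookkeeping: getting the exact rational coefficients $\tfrac{4}{48}, \tfrac{9}{48}, \tfrac{5}{96}$ right requires being disciplined about which terms are kept exactly versus bounded by the Lagrange remainder, and correctly using the independence-based identity $\E[\nu^4] = \tfrac{\sigma_4+3\sigma_2^2}{8}$ — a sign or factor slip there propagates into a wrong final constant. There is no conceptual difficulty beyond the moment-scaling lemma, which is already available.
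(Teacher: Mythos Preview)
Your proposal is correct and follows essentially the same route as the paper: Taylor-expand each of the three evaluations to order four with Lagrange remainder, observe the exact cancellations at orders $0$, $1$, and $2$, bound the surviving third-order term by $\tfrac{a_3}{12}|\E[(\delta-h)^3]|$, bound the fourth-order remainders via $|q^{(4)}|\le a_4$ and the triangle inequality, and then substitute the $\cD_m$ moment formulas from \Cref{lemma:moment_sigma}. Your only organizational difference is working abstractly with the $\cD_m$ moments and substituting the $m$-scalings at the end, whereas the paper plugs them in immediately; your algebra $\tfrac{a_4(\sigma_4+3\sigma_2^2)}{96}+\tfrac{a_4\sigma_4}{24}=\tfrac{5a_4\sigma_4+3a_4\sigma_2^2}{96}$ (in $\cD_m$-moment notation) does indeed reorganize into the stated constants once you replace the $\cD_m$ fourth moment by $\tfrac{\sigma_4}{m^3}+\tfrac{3(m-1)\sigma_2^2}{m^3}$ and the second by $\tfrac{\sigma_2}{m}$.
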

\begin{remark}
    While extrapolation is motivated by Taylor expansion which requires smoothness, higher order derivatives are not explicitly computed. Appendix~\ref{ssec:necessity} empirically shows that applying extrapolation to non-smooth functions achieves similar bias correction. Relaxing the smoothness conditions is a direction for future work.
\end{remark}
% \begin{wrapfigure}{r}{0.35\textwidth}\label{fig:statistic_estimation}
%     \includegraphics{}
%     \caption{Estimation}
% \end{wrapfigure}

% \begin{wrapfigure}{r}{0.3\textwidth}
%     \vspace*{-5mm}
%     \hspace*{-5mm}
%     \includegraphics{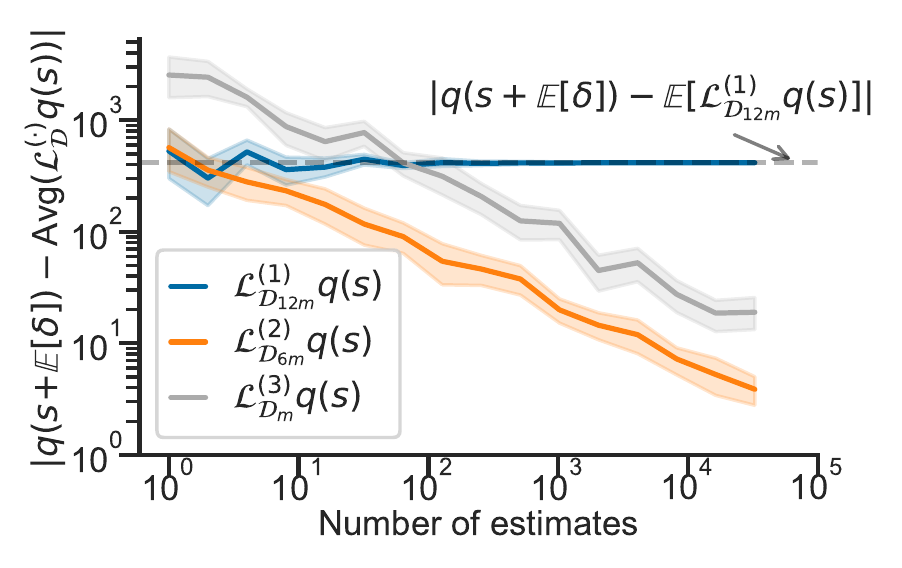}
%     \vspace*{-9mm}
%     \hspace*{2mm}
%     \caption{Estimating $q(s)=\frac{1}{2} s^2$ using $\cL^{(1)}_{\cD_m}$ and $\cL^{(2)}_{\cD_m}$. The dashed blue lines represents the $\cO(m^{-2})$ error bound. }
%         \label{fig:statistic_estimation}
% \end{wrapfigure}

\begin{figure}
\begin{subfigure}{.49\textwidth}
  \centering
  \includegraphics[width=.9\linewidth]{figures/stochastic_estimation.pdf}
  \caption{$q(s)=s^2/2$, $\delta\sim\cN(10,100)$, $m=1$.}
  \label{fig:stochastic_estimation:1}
\end{subfigure}\hfill
\begin{subfigure}{.49\textwidth}
  \centering
  \includegraphics[width=.9\linewidth]{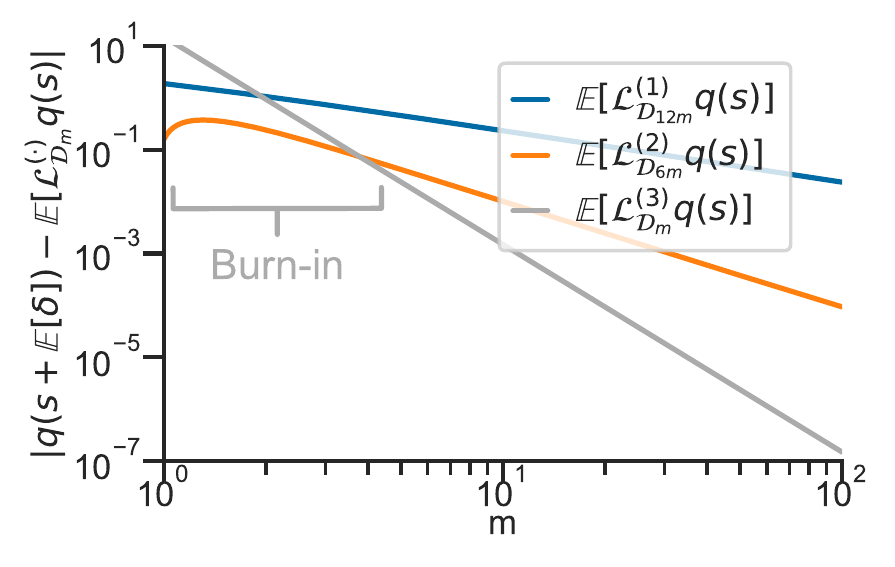}
  \caption{$q(s)=s^4$, $p(\delta)=\delta/2$ where $\delta\in[0,2]$.}
  \label{fig:stochastic_estimation:2}
\end{subfigure}
\caption{The Fig. \ref{fig:stochastic_estimation:1} investigates the estimation errors of $\cL^{(\cdot)}q(s)$ with their number of observations. The Fig. \ref{fig:stochastic_estimation:2} compares the biases of $\expect[\cL^{(\cdot)}q(s)]$ with increasing inner batch size $m$.
}
\label{fig:stochastic_estimation}
\end{figure}

% \begin{figure}[h]
%     \centering
%     \begin{minipage}{0.49\textwidth}
%         \centering
%           \includegraphics[width=.8\linewidth]{figures/stochastic_estimation.pdf}
%         \caption{1a}
%         \label{fig:sfig1}
%     \end{minipage}\hfill
%     \begin{minipage}{0.49\textwidth}
%       \centering
%       \includegraphics[width=.8\linewidth]{figures/stochastic_estimation_2.pdf}
%       \caption{Biases of estimates $\cL^{(\cdot)}_{\cD_{m}}q(s)$ for $q(s)=s^4$. The $\delta\in[0,2]$ draw from distribution $p(\delta)=\delta/2$.}
%       \label{fig:sfig2}
%     \end{minipage}
% \end{figure}

The above proposition shows that $\cL_{\cD_m}^{(2)}$ is in fact a second-order extrapolation operator with $k=2$ under~\Cref{def:high-order}. We will use this operator when we consider the~\ref{eq:cso} and~\ref{eq:fcco} problems later.
Now, focusing on variance, we can relate the variance of $\cL^{(2)}_{\cD_m}q(s)$ in terms of the variance of $q(s+\delta)$. In particular, a consequence of~\Cref{lemma:extrapolation:variance} is that 

$$\E \left [\big(\cL_{\cD_m}^{(2)} q(s) - \E[\cL_{\cD_m}^{(2)} q(s)]\big)^2  \right] = \cO(\E[ (q(s+\delta) - \E[q(s+\delta)])^2]).$$

% \begin{restatable}[Bounded variance]{lem}{extrapolationvariance}
%     \label{lemma:extrapolation:variance}
%     Assume that $q(\cdot)$ is smooth and $\cD_m$ is the soktochastic oracle in \Cref{a:oracle} with $h=0$ and $m\ge2$. Suppose that the variance of $q(\xx)$ is bounded
%     $$\E[\norm{q(\xx+\delta) - \E[q(\xx+\delta)]}_2^2] \le \tfrac{V^2}{m} + C.$$
%     Then the variance of extrapolation $\cL_{\cD_m}^{(2)} q(\xx)$ is upper bounded by
%     \begin{align*}
%          \E \left [\norm{\cL_{\cD_m}^{(2)} q(\xx) - \E[\cL_{\cD_m}^{(2)} q(\xx)] }_2^2 \right]
%         \le 14(\tfrac{V^2}{m} + C).
%     \end{align*}
% \end{restatable}

% This second-order construction will be useful when we consider the  \ref{eq:cso} problem. 
\noindent\textbf{Extension of $\cL_{\cD_m}^{(2)}$ to Higher-dimensional Case.}
If $q: \R^p \rightarrow \R^\ell$ is a vector-valued function, then there is a straightforward extension of~\Cref{def:2nd-order}. Now, for distribution $\cD$ over $\R^p$ and corresponding sampled averaged distribution $\cD_m$, and $\ss \in \R^p$
\begin{align}\label{eq:2ndor}
    \cL^{(2)}_{\cD_m} q: \ss \mapsto \left[2\cdot q(\ss+\tfrac{\boldsymbol\delta_1+\boldsymbol\delta_2}{2})- \tfrac{q(\ss + \boldsymbol\delta_1 ) +  q(\ss + \boldsymbol\delta_2)}{2} \right] \qquad \text{ where } \boldsymbol\delta_1,\boldsymbol\delta_2 \stackrel{\text{i.i.d.}}{\sim} \cD_m.
\end{align}

% Lie: Note that Lemma 4 redefines $a_3$ as the $l_2$ norm of the vector $(a_3^{1}, ..., a_3^{\ell})$
% This results in the following extension of~\Cref{prop:2nd_error}, where $a_3^{(l)}=\sup_{\ss} \norm{\nabla^{3} q^{(l)}(\ss)}_2$  and $\nabla^{3} q^{(l)}(\ss)$ is the $3$rd derivative of the $l$th coordinate of $q$. Let $a_3$ be the $\ell_2$ norm of $(a_3^{(1)}, \ldots, a_3^{(\ell)})$, then we have:
% $    \norm{\E[\cL^{(1)}_{\cD_m} q(\ss)] - q(\ss+\E[\boldsymbol\delta])}_2^2
    % \le \tsum_{l=1}^\ell (\frac{\sigma_3  a_3^{(l)}}{m^2})^2$.% \end{remark}

% $\sigma_k$ is related to randomness, not $f$
% get that the guarantee of~\Cref{prop:2nd_error} holds in each dimension.

\noindent\textbf{Higher-order Extrapolation Operators.}
The idea behind the construction of $\cL_{\cD_m}^{(2)}$ can be generalized to higher $k$'s.  For example, in~\Cref{prop:higherorder}, we construct a third-order extrapolation operator $\cL^{(3)}_{\cD_m}$ through higher degree Taylor series approximation
% If $q\in\cC^{k+1}$, then higher order extrapolation operators can be defined recursively, i.e. for $k=3,4$
\begin{align*}
    \cL^{(3)}_{\cD_m} q: s &\mapsto (-\tfrac{1}{36} \cL^{(2)}_{\cD_m} +\tfrac{5}{9} \cL^{(2)}_{\cD_{2m}} - \tfrac{3}{4} \cL^{(2)}_{\cD_{3m}} - \tfrac{16}{9} \cL^{(2)}_{\cD_{4m}} + 3 \cL^{(2)}_{\cD_{6m}} ) q(s).
\end{align*}
While this idea of expressing the $k$-th order operator as an affine combination of lower-order operators works for every $k$, explicit constructions soon become tedious.

In Fig.~\ref{fig:stochastic_estimation}, we empirically demonstrate the effectiveness of extrapolation in stochastic estimation. \footnote{
We use $\cL^{(1)}_{\cD_{12m}}$, $\cL^{(2)}_{\cD_{6m}}$, $\cL^{(3)}_{\cD_{m}}$ 
%instead of $\cL^{(1)}_{\cD_{m}}$, $\cL^{(2)}_{\cD_{m}}$, $\cL^{(3)}_{\cD_{m}}$ 
to ensure that each estimate uses same amount of samples (12m).
}
In Fig.~\ref{fig:stochastic_estimation:1}, we choose $q(s)=s^2/2$, $\delta\sim\cN(10,100)$. 
For both $\cL^{(2)}_{\cD_{6m}} q(s)$ and $\cL^{(3)}_{\cD_{m}} q(s)$, their estimation errors converge to 0 with increasing number of estimates. This coincides with \Cref{prop:2nd_error} as $a_3=0$ and $a_4=0$ for quadratic $q$. In contrast, biased first order method only converges to a neighborhood. In Fig.~\ref{fig:stochastic_estimation:2}, we consider $q(s)=s^4$ and $p(\delta)=\delta/2$ where $\delta\in[0,2]$. All three methods are biased and their biases decrease with $m$, i.e. $\cO(m^{-k})$ for $k$th order method. Depending on the constants (e.g. $a_i$, $\sigma_i$), a higher-order extrapolation method may need decently large $m$  (burn-in phase) to outperform lower-order methods.% i.e., start displaying the expected bias decrease.

% as $q(s)=s^2/2$ 
% evaluate $\cL^{(1)}_{\cD_m}$ and $\cL^{(2)}_{\cD_m}$ for a simple case where $q(s)=\frac{1}{2}s^2$ and $\delta\sim\cN(1,10^4)$. By averaging over 10,000 trials, we observe that $\cL^{(2)}_{\cD_m}$ yields not only lower error but also a faster reduction as $m$ increases, reaching $\cO(m^{-2})$ the error bound as expected.

\section{Applying Stochastic Extrapolation in the CSO Problem} \label{sec:CSO}

In this section, we apply the extrapolation-based scheme from the previous section to reduce the bias in the~\ref{eq:cso} problem.
We focus on variants of \BSGD and their accelerated version \BSpiderBoost based on our second-order approximation operator (\Cref{def:2nd-order}). 
% We analyze their sample complexities to reach convergence in~\Cref{sec:csoanalyses}. 
% We conjecture that the sample complexities can be further improved with higher-order extrapolations and stronger regularity conditions than~\Cref{a:regularity}. 
% \lie{The reviewer 1 suggests that since we did not justify it, we should remove it. Maybe we can add a small section in the appendix to justify it?}
Let $H_\xi$, $\tilde{H}_\xi$, and $H'_\xi$ indicate different sets, each of which contains $m$ i.i.d.\ random variables/samples drawn from the conditional distribution $\mathbb{P}(\eta  | \xi)$. Remember that, as mentioned earlier, we use $g_\eta(\xx)$ to represent $g_\eta(\xx;\xi)$.

\begin{algorithm}[!ht]
 \caption{\EBSGD}\label{algo:EBSGD}
\begin{algorithmic}[1]
 % \SetKwInOut{KwInput}{Input}\SetKwInOut{KwOutput}{Output} % Use \KwInput and \KwOput to display input and output
 % \SetKwInOut{KwStates}{States}
 % \SetKwData{KwNull}{null}
  \State \textbf{Input:} $\xx^0\in\R^d$, step-size $\gamma$, batch sizes $m$ 
  \For{$t=0,1,\ldots, T-1$}
\State Draw one sample $\xi$  and compute extrapolated gradient $G^{t+1}_\text{\EBSGD}$ from~\eqref{eq:ebsgd}
\State $\xx^{t+1} \leftarrow \xx^t - \gamma G^{t+1}_\text{\EBSGD}$
\EndFor
\State \textbf{Output:} $\xx^s$ picked uniformly at random from $\{\xx^t\}_{t=0}^{T-1}$
\end{algorithmic}
\end{algorithm}

\noindent\textbf{Extrapolated \BSGD.}
At time $t$, \BSGD constructs a biased estimator of $\nabla F(\xx^t)$ using one sample $\xi$ and  $2m$ i.i.d.\ samples from the conditional distribution as in~\eqref{eq:BSGD}
% \todol{Duplications? }
\begin{equation}
    \label{eq:bsgd} 
    G^{t+1}_\text{\BSGD} = \big(\tfrac{1}{m} \tsum_{\tilde{\eta} \in \tilde{H}_\xi} \nabla g_{\tilde{\eta}}(\xx^t) \big)^\top
    \nabla f_\xi\big(\tfrac{1}{m} \tsum_{\eta \in H_\xi}  g_{\eta}(\xx^t) \big).
\end{equation}
To reduce this bias, we apply the second-order extrapolation operator from \eqref{eq:2ndor}.
At time $t$, we define $\cD^{t+1}_{\gg,\xi}$ to be the distribution of the random variable $\tfrac{1}{m}\tsum_{\eta \in H_\xi} g_\eta(\xx^t)$.
Then we apply $\cL^{(2)}_{\cD^{t+1}_{\gg,\xi}}$ by setting $q$ to $\nabla f_\xi$ and $\ss=0$, i.e.

\begin{multline} \label{eqn:lf}
    \cL^{(2)}_{\cD^{t+1}_{\gg,\xi}} \nabla f_{\xi} (0)
   := 2\nabla f_\xi \left (\tfrac{1}{2m}\tsum_{\eta \in H_\xi} g_{\eta}(\xx^t) +\tfrac{1}{2m}\tsum_{\eta' \in H'_\xi} g_{\eta'}(\xx^t)) \right ) \\
   - \tfrac{1}{2} \left (\nabla f_\xi(\tfrac{1}{m}\tsum_{\eta \in H_\xi} g_{\eta}(\xx^t)) + \nabla f_\xi(\tfrac{1}{m}\tsum_{\eta' \in H'_\xi} g_{\eta'}(\xx^t)) \right ),
\end{multline}
where $\tfrac{1}{m}\tsum_{\eta \in H_\xi} g_{\eta}(\xx^t)$ and $\tfrac{1}{m}\tsum_{\eta' \in H'_\xi} g_{\eta'}(\xx^t)$ are i.i.d.\ drawn from $\cD^{t+1}_{\gg,\xi}$.
In~\Cref{algo:EBSGD}, we present our extrapolated \BSGD (\EBSGD) scheme, where we replace $\nabla f_{\xi} (\tfrac{1}{m} \tsum_{\eta \in H_\xi} g_{\eta}(\xx^t))$ in~\eqref{eq:bsgd} by $ \cL^{(2)}_{\cD^{t+1}_{\gg,\xi}} \nabla f_{\xi} (0)$ resulting in this following gradient estimate:

\begin{equation}
    \label{eq:ebsgd} 
    G^{t+1}_\text{\EBSGD} = \left(\tfrac{1}{m} \tsum_{\tilde{\eta} \in \tilde{H}_\xi} \nabla g_{\tilde{\eta}}(\xx^t) \right)^\top
    \cL^{(2)}_{\cD^{t+1}_{\gg,\xi}} \nabla f_{\xi} (0).
\end{equation}

\begin{algorithm}[htb]
 \caption{\EBSpiderBoost}\label{algo:EBSB}
 \begin{algorithmic}[1]
 % \SetKwInOut{KwInput}{Input}\SetKwInOut{KwOutput}{Output} % Use \KwInput and \KwOput to display input and output
 % \SetKwInOut{KwStates}{States}
 % \SetKwData{KwNull}{null}
 % \SetKwFunction{KwSpider}{InnerSpiderStep} % Define procedure
 % \SetKwFunction{KwUpdate}{UpdateStates} % Define procedure
 %  \SetAlgoLined
  \State {\bfseries Input:} $\xx^0\in\R^d$, step-size $\gamma$, batch sizes $B_1$, $B_2$, Probability $p_{\text{out}}$
  % \State {\bfseries States:} $\mphi_i$, $\zz_i$, $\yy_i$, $\zz'_i = $, $\yy'_i=$
%   \KwOutput{how to write algorithm with \LaTeX2e }
  \For{$t=0,1,\ldots, T-1$}
   \State Draw $\chi_\text{out}$ from $\textbf{Bernoulli}(p_\text{out})$ \;
    \If{
        ($t=0$) or ($\chi_\text{out}=1$)
    } \Comment{Large batch}
       \State Draw $\cB_1$ outer samples $\{\xi_1,\dots,\xi_{B_1}\}$\;
       \State Compute extrapolated gradient $G^{t+1}_\text{\EBSGD}$ with \eqref{eq:ebsgd} \;
        $$G^{t+1}_{\text{E-BSB}} = \tfrac{1}{B_1} \tsum_{\xi\in \cB_1} G^{t+1}_\text{\EBSGD}$$
    \Else \Comment{Small batch}
     \State Draw $\cB_2$ outer samples $\{\xi_1,\dots,\xi_{B_2}\}$
      \State Compute extrapolated gradient $G^{t+1}_\text{\EBSGD}$ with \eqref{eq:ebsgd} \;
        $$G^{t+1}_\text{E-BSB} = G^{t}_{\text{E-BSB}}+ \tfrac{1}{B_2} \tsum_{\xi\in \cB_2} (G^{t+1}_\text{\EBSGD} - G^{t}_\text{\EBSGD})\;$$
    \EndIf
   \State $\xx^{t+1} = \xx^t - \gamma G^{t+1}_{\text{E-BSB}}$\;
  \EndFor
  \State \textbf{Output:} $\xx^s$ picked uniformly at random from $\{\xx^t\}_{t=0}^{T-1}$
\end{algorithmic}
\end{algorithm}

\noindent\textbf{Extrapolated \BSpiderBoost.} 
\BSpiderBoost, proposed by~\citet{hu2020biased}, uses the variance reduction methods for nonconvex smooth stochastic optimization developed by~\citet{fang2018spider,wang2019spiderboost}. \BSpiderBoost builds upon \BSGD and has two kinds of updates: a large batch and a small batch update. In each step, it decides which update to apply based on a random coin. With probability $p_\text{out}$, it selects a large batch update with $B_1$ outer samples of $\xi$. With remaining probability $1-p_\text{out}$, it selects a small batch update where the gradient estimator will be updated with
gradient information in the current iteration generated with $B_2$ outer samples of $\xi$ and the information from the last iteration. Formally, it constructs a gradient estimate as follows,

\begin{align} \label{eq:BSB} 
    G^{t+1}_\text{BSB} = \begin{cases}
        G^{t}_\text{BSB} + \tfrac{1}{B_2} \tsum_{\xi \in \cB_2, |\cB_2|=B_2} (G_\text{BSGD}^{t+1} - G_\text{BSGD}^{t})& \text{with prob. $1-p_\text{out}$} \\
        \tfrac{1}{B_1} \tsum_{\xi \in \cB_1, |\cB_1| = B_1} G_\text{BSGD}^{t+1} & \text{with prob. $p_\text{out}$}.
    \end{cases}
\end{align}

We propose our extrapolated \BSpiderBoost scheme (formally defined in~\Cref{algo:EBSB}) by replacing the \BSGD gradient estimates in \eqref{eq:BSB} with \EBSGD. 

\begin{align} \label{eq:E-BSB} 
    G^{t+1}_\text{E-BSB} = \begin{cases}
        G^{t}_\text{E-BSB} + \tfrac{1}{B_2} \tsum_{\xi \in \cB_2, |\cB_2|=B_2} (G^{t+1}_\text{\EBSGD} - G^{t}_\text{\EBSGD})& \text{with prob. $1-p_\text{out}$} \\
        \tfrac{1}{B_1} \tsum_{\xi \in \cB_1, |\cB_1| = B_1} G^{t+1}_\text{\EBSGD} & \text{with prob. $p_\text{out}$}.
    \end{cases}
\end{align}
% In~\Cref{algo:EBSB}, we present our extrapolated \BSpiderBoost scheme.

% Let $G^{t+1}_\text{BSB}$ denote the \BSpiderBoost gradient estimate

% w.p.\ $p_\text{out}$. and small batch  with $B_2$ outer samples of $\xi$ w.p.\ . 
% divides updates into rounds: at the beginning of the round, it will initialize the gradient estimator
% with $B_1$ outer samples of $\xi$. Then in later iterations in the round, the estimator will be updated with
% gradient information in current iteration generated with $B_2$ outer samples and the information from the last iteration

% \todol{add this back}

\noindent\textbf{Sample Complexity Analyses of \EBSGD and \EBSpiderBoost.} \label{sec:csoanalyses}
We adopt the standard assumptions used in the literature~\citep{qi2021stochastic,wang2022momentum,wang2022finite,zhang2021multilevel}. All proofs are deferred to~\Cref{app:cso}.
% The \Citep{hu2020biased} only uses \Cref{a:boundedvariance:g}.

\begin{assumption}[Lower bound]
    \label{a:lowerbound:F}
    $F$ is lower bounded by $F^{\star}$.
    
\end{assumption}
\begin{assumption}[Bounded variance]
    \label{a:boundedvariance:g}
    Assume that $g_\eta$ and $\nabla g_\eta$ have bounded variances, i.e., for all $\xi$ in the support of $\mathbb{P}(\xi)$ and $\xx \in \R^p$, 
    $$\sigma^2_g:=\E_{\eta|\xi}  [\norm{g_{\eta}(\xx;\xi) - \E_{\eta|\xi} [g_{\eta}(\xx;\xi)]}_2^2 ]<\infty\text{ and } 
     \zeta^2_g:=\E_{\eta|\xi} [\norm{\nabla g_{\eta}(\xx;\xi) - \E_{\eta|\xi} [\nabla g_{\eta}(\xx;\xi)]}_2^2  ]<\infty.$$
\end{assumption}

\begin{assumption}[Lipschitz continuity/smoothness of $f_\xi$ and $g_\eta$]
    \label{a:smoothness:fg}
    For all $\xi$ in the support of $\mathbb{P}(\xi)$ , 
    $f_{\xi}(\cdot)$ is $C_f$-Lipschitz continuous (i.e., $\norm{f_\xi(\xx) - f_\xi(\xx')}_2 \leq C_f \norm{\xx-\xx'}_2$ $\forall \xx,\xx' \in \R^p$) and $L_f$-Lipschitz smooth (i.e., $ \norm{\nabla f_\xi(\xx) - \nabla f_\xi(\xx')}_2 \leq L_f \norm{\xx-\xx'}_2$, $\forall \xx,\xx' \in \R^p$) for any $\xi$. Similarly, for all $\xi$ in the support of $\mathbb{P}(\xi)$ and $\eta$ in the support of $\mathbb{P}(\eta|\xi)$, $g_{\eta}(\cdot;\xi)$ is $C_g$-Lipschitz continuous and $L_g$-Lipschitz smooth.
    
\end{assumption}
The smoothness of $f_\xi$ and $g_\eta$ naturally implies the smoothness of $F$.~\citet[Lemma 4.2]{zhang2021multilevel} show that~\Cref{a:smoothness:fg} ensures $F$ is: 1) $C_F$-Lipschitz continuous with $C_F=C_fC_g$; and 2) $L_F$-Lipschitz smooth with $L_F=L_gC_f + C_g^2 L_f$. We denote $\tilde{L}_F=\zeta_g C_f+\sigma_gC_gL_f$.
Moreover,~\Cref{a:smoothness:fg} also guarantees that $f_\xi$ and $g_\eta$ have bounded gradients. In addition, $f_\xi$ and $g_\eta$ are assumed to satisfy the following regularity condition in order to apply our extrapolation-based scheme from~\Cref{sec:bias}. 
\begin{assumption}[Regularity]  \label{a:regularity}
    For all $\xi$ in the support of $\mathbb{P}(\xi)$, $\nabla f_\xi$ is $4$th-order differentiable with bounded derivatives  (i.e., $a_l := \sup_{\gg\in\R^p}\norm{\nabla^{(l)} f_\xi(\gg)}_2 < \infty $ for $l=1,2,3,4$, $\forall \xx \in \R^p$) and $g_\eta$ has bounded moments upto 4th-order (i.e., $\sigma_k = \sup_{\xx\in\R^d} \sup_{\xi} \E_{\eta|\xi} \left [ \tsum_{i=1}^p \left[g_\eta(\xx) - \E_{\eta|\xi} [g_\eta(\xx)]\right]_i^k \right ]<\infty, k=1,2,3,4$).
\end{assumption}

\begin{remark} \label{rem:1}
    The core piece of~\Cref{a:regularity} is the $4$th order differentiability of $\nabla f_\xi$ as other parts can be easily satisfied through appropriate boundedness assumptions. This condition though is satisfied by common instantiations of~\ref{eq:cso}/\ref{eq:fcco}. We discuss some examples including  invariant logistic regression, instrumental variable regression, first-order MAML for sine-wave few-shot regression task,  deep average precision maximization in~\Cref{sec:experiments}. Therefore, our improvements in sample complexity apply to all these problems.
\end{remark} 
% \begin{remark} \label{rem:1}
% The core piece of~\Cref{a:regularity} is the $3$rd order differentiability of $\nabla f_\xi$ as other parts can be easily satisfied through appropriate boundedness assumptions. This condition though is satisfied by common instantiations of~\ref{eq:cso}/\ref{eq:fcco}: 1) invariant logistic regression ($f_{\xi}(w)=\log (1 + \exp(-bw))$ where $\xi=(a,b)$); 2) instrumental variable regression ($f_{\xi}(x)=\frac{1}{2}\lVert x - \xi\rVert_2^2$); 3) First-order MAML for sine-wave few-shot regression task ($f_{\xi}(x)= \frac{1}{2}\mathbb{E}_{t\sim\mathcal{D}}[\lVert{A\sin(t-\phi) - \Phi(x; t)}\rVert_2^2]$ where $\xi=(A,\phi,\mathcal{D})$ and $\Phi(x;t)$ is a neural network), \textcolor{blue}{4) deep average precision maximization ($f_\xi(\xx)=-x_1/x_2$ where $\xx=[x_1, x_2]$)}. Therefore, our improvements in sample complexity apply to all these problems. We discuss a few of them in more detail in~\Cref{sec:experiments}.
% \end{remark} 
% \marginpar{what is $\xi$?}

Consider some time $t>0$. Let $G^{t+1}$ be a stochastic estimate of $\nabla F(\xx^t)$ where $\xx^{t}$ is the current iterate. The next iterate $\xx^{t+1} := \xx^{t} - \gamma G^t$. Let $\E[\cdot]$ denote the conditional expectation, where we condition on all the randomness until time $t$.  We consider the bias and variance terms coming from our gradient estimate. Formally, we define the following two quantities
% \E=\E_{\chi_\text{out}}\E_{\chi_\text{in}}\E_{\cB|\chi_\text{out}} \E_{\cS|\chi_\text{out},\chi_\text{in}}$
\begin{eqnarray*}
\cE_\text{bias}^{t+1} = \norm{\nabla F(\xx^t) - \E[G^{t+1}] }_2^2, \quad \cE_\text{var}^{t+1} =\E[\norm{G^{t+1} - \E[G^{t+1}] }_2^2].
\end{eqnarray*}

% Using descent lemma (\Cref{lemma:sd}) and bias and variance of \BSGD (\Cref{lemma:bsgd:bias_variance})
%     \begin{align*}
%         \norm{\nabla F(\xx^t)}_2^2 &\le \tfrac{2(\E F(\xx^{t+1})-F(\xx^t))}{\gamma}
%         + L_F\gamma \cE_\text{var}^{t+1}+ \cE_\text{bias}^{t+1}
%        \end{align*} 

% Now from~\Cref{lemma:sd}, in order to reach $\norm{\nabla F(\xx^t)}_2^2 \le \epsilon^2$, we would need 
Our idea of getting to an $\epsilon$-stationary point (\Cref{def:eps}) will be to ensure that $\cE_\text{bias}^{t+1}$ and $\cE_\text{var}^{t+1}$ are bounded. The main technical component of our analyses is in fact analyzing these bias and variance terms for the various gradient estimates considered. For this purpose, we first analyze the bias and variance terms for the (original) \BSGD (\Cref{lemma:bsgd:bias_variance}) and \BSpiderBoost (\Cref{lemma:bsb:bias_variance}) algorithms, which are then used to get the corresponding bounds for our \EBSGD (\Cref{lemma:ebsgd:bias_variance}) and \EBSpiderBoost (\Cref{lemma:ebsb:bias_variance}) algorithms. 
Through these bias and variance bounds, we establish the following main results of this section.
% Now we give the convergence rates of our \EBSGD in \Cref{thm:ebsgd}.
% \marginpar{Define again $a_3$ \color{blue} they are defined in Assumption 1 and 2. }
\begin{restatable}{theorem}{ebsgd}[\EBSGD Convergence]\label{thm:ebsgd}
Consider the~\eqref{eq:cso} problem.
Suppose Assumptions \ref{a:lowerbound:F},~\ref{a:boundedvariance:g},~\ref{a:smoothness:fg},~\ref{a:regularity} hold true and $ L_F,C_F,\tilde{L}_F,C_g,F^\star$ are constants and  $C_e(f;g):=\tfrac{8a_3\sigma_3 + 18a_4\sigma_2^2+5a_4\sigma_4}{96}$ defined in \Cref{cor:extrapolation:cso} are associated with second order extrapolation in the CSO problem. Let step size $\gamma\le 1/(2L_F)$.
    Then the output $\xx^s$ of \EBSGD (\Cref{algo:EBSGD}) satisfies: $\E[\norm{\nabla F(\xx^s)}^2_2] \leq \epsilon^2$, for  nonconvex $F$, if the inner batch size $m = \Omega(C_eC_g\epsilon^{-1/2})$,
    and the number of iterations  
        $$T = \Omega ( L_F ( F(\xx^{0})-F^\star) (\nicefrac{\tilde{L}_F^2}{m}+ C_F^2 )\epsilon^{-4}).$$
\end{restatable}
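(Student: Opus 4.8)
The plan is to follow the standard descent-lemma argument for nonconvex smooth stochastic optimization, but carefully tracking the bias and variance of the \EBSGD gradient estimate $G^{t+1}_{\text{\EBSGD}}$ through the quantities $\cE_\text{bias}^{t+1}$ and $\cE_\text{var}^{t+1}$. First, since $F$ is $L_F$-Lipschitz smooth (a consequence of~\Cref{a:smoothness:fg}), I would write the descent inequality
\begin{align*}
    F(\xx^{t+1}) \le F(\xx^t) - \gamma \langle \nabla F(\xx^t), G^{t+1}\rangle + \tfrac{L_F\gamma^2}{2}\norm{G^{t+1}}_2^2.
\end{align*}
Taking the conditional expectation $\E[\cdot]$ (over the randomness at step $t$) and using $\E[\norm{G^{t+1}}_2^2] = \norm{\E[G^{t+1}]}_2^2 + \cE_\text{var}^{t+1}$ together with the identity $\langle \nabla F(\xx^t), \E[G^{t+1}]\rangle = \tfrac12\norm{\nabla F(\xx^t)}_2^2 + \tfrac12\norm{\E[G^{t+1}]}_2^2 - \tfrac12\norm{\nabla F(\xx^t)-\E[G^{t+1}]}_2^2$, I would obtain, after choosing $\gamma \le 1/(2L_F)$ so that the $\norm{\E[G^{t+1}]}_2^2$ terms have a favorable sign, an inequality of the shape
\begin{align*}
    \E[F(\xx^{t+1})] \le F(\xx^t) - \tfrac{\gamma}{2}\norm{\nabla F(\xx^t)}_2^2 + \tfrac{\gamma}{2}\cE_\text{bias}^{t+1} + \tfrac{L_F\gamma^2}{2}\cE_\text{var}^{t+1}.
\end{align*}

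Next I would invoke the bias/variance bound for \EBSGD (\Cref{lemma:ebsgd:bias_variance}, together with the extrapolation bias bound of~\Cref{cor:extrapolation:cso} built on~\Cref{prop:2nd_error}). The second-order extrapolation operator $\cL^{(2)}$ contributes a bias of order $C_e(f;g)^2 C_g^2 / m^4$ (squared) to $\cE_\text{bias}^{t+1}$, plus a $\tilde L_F^2/m$-type term from the inner sampling of $\nabla g$; the variance term $\cE_\text{var}^{t+1}$ is $\cO(\tilde L_F^2/m + C_F^2)$ since extrapolation preserves variance up to constants (\Cref{lemma:extrapolation:variance}). So $\cE_\text{bias}^{t+1} \lesssim \tilde L_F^2/m + C_e^2 C_g^2/m^4$ and $\cE_\text{var}^{t+1} \lesssim \tilde L_F^2/m + C_F^2$. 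Choosing $m = \Omega(C_e C_g \epsilon^{-1/2})$ forces the extrapolation bias term $C_e^2 C_g^2/m^4 = \cO(\epsilon^2)$, which is what makes the second-order scheme pay off (versus $m=\Omega(\epsilon^{-1})$ needed for first-order). Then summing the descent inequality over $t=0,\dots,T-1$, telescoping $F$, dividing by $\gamma T/2$, and using that $\xx^s$ is uniform over the iterates gives
\begin{align*}
    \E[\norm{\nabla F(\xx^s)}_2^2] \lesssim \tfrac{F(\xx^0)-F^\star}{\gamma T} + \tfrac{\tilde L_F^2}{m} + \epsilon^2 + L_F\gamma\big(\tfrac{\tilde L_F^2}{m}+C_F^2\big).
\end{align*}
Setting $\gamma = \Theta(1/L_F)$ (the largest allowed), requiring $\tilde L_F^2/m = \cO(\epsilon^2)$ (already implied once $m$ is large, or folded into the $m$ requirement), and choosing $T = \Omega(L_F(F(\xx^0)-F^\star)(\tilde L_F^2/m + C_F^2)\epsilon^{-4})$ drives every term to $\cO(\epsilon^2)$, yielding the claim.

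The main obstacle — and where the novelty of the paper concentrates — is establishing the bias/variance bounds for $G^{t+1}_{\text{\EBSGD}}$ in~\Cref{lemma:ebsgd:bias_variance}. The estimate $G^{t+1}_{\text{\EBSGD}} = (\tfrac1m\sum_{\tilde\eta}\nabla g_{\tilde\eta})^\top \cL^{(2)}\nabla f_\xi(0)$ is a product of two correlated-looking (but actually independently sampled) terms, so bounding $\norm{\nabla F(\xx^t) - \E[G^{t+1}]}_2$ requires decomposing the error into (i) the Jacobian-sampling error $\E[\tfrac1m\sum\nabla g_{\tilde\eta}] = \E_{\eta|\xi}[\nabla g_\eta]$ (unbiased, contributes only variance), (ii) the extrapolation bias $\norm{\E_{\eta|\xi}[\cL^{(2)}\nabla f_\xi(0)] - \nabla f_\xi(\E_{\eta|\xi}[g_\eta])}_2 = \cO(C_e C_g/m^2)$ from~\Cref{prop:2nd_error} applied coordinatewise and aggregated via~\Cref{lemma:moment_sigma}, and (iii) the outer-$\xi$ sampling, which for \EBSGD is just one sample so contributes variance $C_F^2$ and no bias. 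Carefully combining these — controlling cross terms via Cauchy–Schwarz, using $C_f$-Lipschitzness of $f_\xi$ and boundedness of $\nabla g_\eta$ to bound products, and checking that the $\cO(m^{-2})$ bias squared is $\cO(m^{-4})$ — is the technical heart; the descent-lemma wrapper above is then routine.
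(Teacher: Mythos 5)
Your overall strategy is the same as the paper's: descent lemma, then the bias/variance bounds for the extrapolated estimator, with $m=\Omega(C_eC_g\epsilon^{-1/2})$ chosen so that the squared extrapolation bias $C_g^2C_e^2/m^4$ is $\cO(\epsilon^2)$. The decomposition in your final paragraph (unbiased Jacobian average, $\cO(C_eC_g/m^2)$ extrapolation bias, outer $\xi$-sampling contributing only variance $C_F^2$) is exactly right and matches \Cref{lemma:ebsgd:bias_variance} and \Cref{cor:extrapolation:cso}. However, the wrap-up has a genuine error: you set $\gamma=\Theta(1/L_F)$, but then the variance term $L_F\gamma\,\cE_\text{var}^{t+1}=\Theta(\tilde L_F^2/m+C_F^2)$ in your own displayed bound is a \emph{constant} -- it does not shrink with $T$, so no choice of $T$ drives the right-hand side to $\cO(\epsilon^2)$. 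This is precisely why plain (E-)BSGD needs $T=\Theta(\epsilon^{-4})$ iterations: the step size must be taken small, $\gamma=\Theta\bigl(\sqrt{(F(\xx^0)-F^\star)/(L_F(\tilde L_F^2/m+C_F^2)T)}\bigr)$ (capped at $1/(2L_F)$), so that the telescoped term and the variance term balance to $\sqrt{L_F(F(\xx^0)-F^\star)(\tilde L_F^2/m+C_F^2)/T}$, and only then does $T=\Omega(L_F(F(\xx^0)-F^\star)(\tilde L_F^2/m+C_F^2)\epsilon^{-4})$ give $\epsilon^2$. With a constant step size the argument as written fails.

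A second, smaller inconsistency: midway you assert $\cE_\text{bias}^{t+1}\lesssim \tilde L_F^2/m+C_e^2C_g^2/m^4$ and then wave off the $\tilde L_F^2/m$ term as ``already implied once $m$ is large.'' It is not: with $m=\Theta(C_eC_g\epsilon^{-1/2})$ you get $\tilde L_F^2/m=\Theta(\epsilon^{1/2})\gg\epsilon^2$, so if that term really sat in the bias the theorem would force $m=\Omega(\epsilon^{-2})$ and you would lose the entire improvement. Fortunately it does not belong there: the inner Jacobian average is unbiased (as you yourself note in item (i)), so the $\tilde L_F^2/m$ term appears only in the variance, and the bias of \EBSGD is purely the extrapolation bias $C_g^2C_e^2/m^4$. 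You should delete the $\tilde L_F^2/m$ contribution from the bias bound; once that is done and the step size is chosen as $\Theta(1/\sqrt{T})$ rather than $\Theta(1/L_F)$, your argument coincides with the paper's proof.
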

The \EBSGD takes $\cO(\epsilon^{-4})$ iterations to converge and compute $\cO(\epsilon^{-0.5})$ gradients per iteration. Therefore, its resulting sample complexity is $\cO(\epsilon^{-4.5})$ which is more efficient than $\cO(\epsilon^{-6})$ of BSGD. Similar improvements can be observed for \EBSpiderBoost in \Cref{thm:ebspiderboost}.
\begin{restatable}{theorem}{ebspiderboost} [\EBSpiderBoost Convergence]
\label{thm:ebspiderboost}
   Consider the~\eqref{eq:cso} problem under the same assumptions as \Cref{thm:ebsgd}. Let step size $\gamma\le 1/(13L_F)$.
   Then  the output $\xx^s$ of  \EBSpiderBoost (\Cref{algo:EBSB}) satisfies: $\E[\norm{\nabla F(\xx^s)}^2_2] \leq \epsilon^2$, for  nonconvex $F$, if the inner batch size $m = \cO(C_eC_g \epsilon^{-0.5})$, the hyperparameters of the outer loop of \EBSpiderBoost 
   $B_1= (\tilde{L}_F^2/{m} + C_F^2)\epsilon^{-2}, \quad B_2 = \sqrt{B_1}, \quad p_\text{out} = 1/B_2$,    
    and the number of iterations  
    $$T = \Omega(L_F( F(\xx^{0})-F^\star)\epsilon^{-2}).$$
\end{restatable}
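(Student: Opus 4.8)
The plan is to follow the standard variance-reduced nonconvex analysis (à la SpiderBoost), but with the bias and variance of the extrapolated estimator supplied by the lemmas referenced in the excerpt. First I would invoke the $L_F$-smoothness of $F$ (from \Cref{a:smoothness:fg} and the cited \citet{zhang2021multilevel} consequence) to write the descent inequality
\begin{equation*}
  \E[F(\xx^{t+1})] \le \E[F(\xx^t)] - \tfrac{\gamma}{2}\E\|\nabla F(\xx^t)\|_2^2 - (\tfrac{1}{2\gamma} - \tfrac{L_F}{2})\E\|\xx^{t+1}-\xx^t\|_2^2 + \tfrac{\gamma}{2}\E\|\nabla F(\xx^t) - G^{t+1}_\text{E-BSB}\|_2^2,
\end{equation*}
so that the crux becomes controlling $\E\|\nabla F(\xx^t) - G^{t+1}_\text{E-BSB}\|_2^2 \le 2\cE_\text{bias}^{t+1} + 2\cE_\text{var}^{t+1}$. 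The bias term $\cE_\text{bias}^{t+1}$ is handled uniformly in $t$: since the \EBSGD estimator uses the second-order extrapolation operator $\cL^{(2)}$, \Cref{prop:2nd_error} (applied via \Cref{cor:extrapolation:cso}, which packages the constant $C_e(f;g)$) gives $\cE_\text{bias}^{t+1} = \cO(C_e^2 C_g^2 / m^4)$; choosing $m = \Omega(C_e C_g \epsilon^{-1/2})$ makes this $\cO(\epsilon^2)$, hence negligible. This is exactly the point where the extra regularity \Cref{a:regularity} buys the improved $m$-dependence relative to \BSGD.

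Next I would handle the variance via the SpiderBoost recursion. On "large-batch" steps, $\cE_\text{var}^{t+1} = \cO((\tilde L_F^2/m + C_F^2)/B_1)$ (the $\tilde L_F^2/m$ piece coming from the inner conditional-sampling variance, the $C_F^2$ piece from the outer $\xi$-sampling, with extrapolation only inflating these by constants per \Cref{lemma:extrapolation:variance} / \Cref{lemma:ebsgd:bias_variance}); on "small-batch" steps one uses the martingale-difference telescoping identity for the estimator error
\begin{equation*}
  \E\|G^{t+1}_\text{E-BSB} - \nabla F(\xx^t)\|_2^2 \le (1-p_\text{out})\big(\E\|G^t_\text{E-BSB} - \nabla F(\xx^{t-1})\|_2^2 + \tfrac{1}{B_2}\E\|G^{t+1}_\text{\EBSGD} - G^t_\text{\EBSGD} - (\nabla F(\xx^t)-\nabla F(\xx^{t-1}))\|_2^2\big) + p_\text{out}\,\cO(\tfrac{\tilde L_F^2/m + C_F^2}{B_1}),
\end{equation*}
plus the bias contribution. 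The per-step increment is bounded by the smoothness of the \EBSGD estimator map times $\|\xx^t - \xx^{t-1}\|_2^2 = \gamma^2\|G^t_\text{E-BSB}\|_2^2$, giving a term $\cO(L_F^2 \gamma^2 / B_2)\,\E\|G^t_\text{E-BSB}\|_2^2$. With $B_1 = (\tilde L_F^2/m + C_F^2)\epsilon^{-2}$, $B_2 = \sqrt{B_1}$, $p_\text{out} = 1/B_2$, unrolling the recursion over a block of length $\le 1/p_\text{out} = B_2 = \sqrt{B_1}$ and using $\sqrt{B_1}\cdot (L_F^2\gamma^2/B_2) = L_F^2\gamma^2 = \cO(1)$ for $\gamma \le 1/(13 L_F)$, the accumulated variance along any block telescopes to $\cO(\epsilon^2)$ plus a controllable $\cO(L_F^2\gamma^2)\sum \E\|\xx^{t+1}-\xx^t\|_2^2$ term that is absorbed by the negative $\|\xx^{t+1}-\xx^t\|_2^2$ term in the descent inequality (this is where the explicit constant $13$ comes from — picking $\gamma$ small enough that $\tfrac{1}{2\gamma} - \tfrac{L_F}{2} - (\text{variance-leakage coefficient}) \ge 0$).

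Finally I would sum the descent inequality over $t = 0,\dots,T-1$, telescope $F(\xx^0) - F(\xx^T) \ge F(\xx^0) - F^\star$ using \Cref{a:lowerbound:F}, divide by $\gamma T$, and identify $\tfrac{1}{T}\sum_t \E\|\nabla F(\xx^t)\|_2^2 = \E\|\nabla F(\xx^s)\|_2^2$ by the uniform random choice of $s$. This yields $\E\|\nabla F(\xx^s)\|_2^2 \le \tfrac{2(F(\xx^0)-F^\star)}{\gamma T} + \cO(\epsilon^2)$, and setting $T = \Omega(L_F(F(\xx^0)-F^\star)\epsilon^{-2})$ (with $\gamma = \Theta(1/L_F)$) forces the right side below $\epsilon^2$. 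The main obstacle, as in all SpiderBoost-type proofs, is the bookkeeping of the variance recursion across epochs — making sure the per-step error increments, which depend on the (random) iterate movement $\gamma^2\|G^t_\text{E-BSB}\|_2^2$ rather than on a deterministic bound, are correctly fed back into the Lyapunov argument so that the $L_F^2\gamma^2$ leakage is dominated; the extrapolation machinery itself contributes only the (already-established, $t$-independent) constant-factor inflation of variance and the $\cO(1/m^4)$ bias, so it does not complicate the recursion beyond what \BSpiderBoost already requires.
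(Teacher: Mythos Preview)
Your overall strategy --- descent inequality, then control of the estimator error via a SpiderBoost-style recursion, with the second-order extrapolation supplying an $O(m^{-4})$ bias --- matches the paper's. There is, however, a genuine gap in how you account for the bias. You assert that $\cE_\text{bias}^{t+1} = \|\nabla F(\xx^t) - \E[G^{t+1}_\text{E-BSB}]\|_2^2$ is uniformly $O(C_e^2 C_g^2 / m^4)$. That bound is the bias of the \emph{\EBSGD} estimator, not of \EBSpiderBoost: on a small-batch step the conditional expectation $\E[G^{t+1}_\text{E-BSB}\mid\text{past}]$ retains the realized value $G^t_\text{E-BSB}$, so the \EBSpiderBoost bias inherits all the randomness accumulated since the last large batch and is \emph{not} $O(m^{-4})$. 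Relatedly, your MSE recursion is not a clean martingale-difference telescoping either, because the per-step increment $\tfrac{1}{B_2}\sum(G^{t+1}_\text{\EBSGD}-G^t_\text{\EBSGD})-(\nabla F(\xx^t)-\nabla F(\xx^{t-1}))$ has nonzero conditional mean (the \EBSGD bias at $\xx^t$ minus that at $\xx^{t-1}$), producing a cross term you have not addressed.

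The paper's remedy (\Cref{lemma:ebsb:bias_variance}) is to split $\cE_\text{bias}^{t+1} \le 2\|\nabla F(\xx^t) - \E[G^{t+1}_\text{\EBSGD}]\|_2^2 + 2\|\E[G^{t+1}_\text{\EBSGD}] - \E[G^{t+1}_\text{E-BSB}]\|_2^2$: the first piece is the genuine $O(m^{-4})$ extrapolation bias, while the second unrolls to $\tfrac{2(1-p_\text{out})^2}{p_\text{out}}\cdot\tfrac{1}{T}\sum_t\cE_\text{var}^{t+1}$. Since $1/p_\text{out}=B_2=\Theta(\epsilon^{-1})$, this amplifies the averaged variance by an $\epsilon^{-1}$ factor, so one must show the averaged variance is itself $O(\epsilon^3)$ --- which the SpiderBoost recursion with the stated $B_1,B_2,p_\text{out}$ delivers, and is precisely where the constant in $\gamma\le 1/(13L_F)$ enters (it forces $84L_F^2\gamma^2\le\tfrac12$, so the $\|\E^t[G^{t+1}\mid t]\|_2^2$ term arising in the variance bound can be absorbed by the left side of the descent lemma). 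The missing idea is thus not the recursion itself but the decomposition that isolates the $t$-uniform extrapolation bias from the staleness-induced bias, which must then be fed back into and controlled by the variance analysis.
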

The resulting sample complexity of \EBSpiderBoost is $\cO(\epsilon^{-3.5})$, which improves $\cO(\epsilon^{-5})$ bound of \BSpiderBoost~\citep{hu2020biased} and $\cO(\epsilon^{-4})$ bound of V-MLMC/RT-MLMC~\citep{hu2021bias}.

% Consider the analysis of the bias term for different choices of $G$.
% \begin{itemize}
%     \item For \BSGD, $G(\xx)=\nabla \hat{F}^m(\xx)$, therefore the Bias
%     \begin{align*}
%         \cE^{t+1}_\text{bias} &= \norm{ \E_\xi (\E_{\eta|\xi} \nabla g_\eta(\xx^t))^\top (\nabla f_\xi(\E_{\eta|\xi} g_\eta(\xx^t)) - \E_{\eta|\xi} \nabla f_\xi(\tfrac{1}{m} \tsum_{j=1}^m g_{\eta}(\xx^t) ) )  }_2^2 \\
%         &\le C_g^2 L_f^2 \E_{\eta|\xi} \norm{\tfrac{1}{m} \tsum_{j=1}^m g_{\eta}(\xx^t) - \E_{\eta|\xi} g_\eta(\xx^t)}_2^2
%         \le \tfrac{C_g^2 L_f^2\sigma_g^2}{m}.
%     \end{align*}
%     Thus the inner batch size has to be of order $m=\cO(\epsilon^{-2})$. The outer batch size does not play a role here.
%     \item The \BSpiderBoost has the same analysis and inner batch size as \BSGD.
%     \item The MSVR or similar type (cite) can be simplified as 
%     \begin{align*}
%         G(\xx^t)= (\zz_i^t)^\top \nabla f_\xi(\yy_i^{t+1}).
%     \end{align*}
%     Then the bias can be bounded as follows
%     \begin{align*}
%         \cE^{t+1}_\text{bias}
%         &\le C_g^2 L_f^2 \E_i \E_{\eta|i} \norm{\yy_i^{t+1} - \E_{\eta|\xi} g_\eta(\xx^t)}_2^2.
%     \end{align*}
% \end{itemize}

\section{Applying Stochastic Extrapolation in the FCCO Problem} \label{sec:FCCO}

In this section, we apply the extrapolation-based scheme from~\Cref{sec:bias} to the  \ref{eq:fcco} problem. We focus on case where $n= O(\epsilon^{-2})$. For larger $n$, we can treat the \ref{eq:fcco} problem as a~\ref{eq:cso} problem and get an $\cO(\epsilon^{-3.5})$ bound from~\Cref{thm:ebspiderboost}. All missing details are presented in Appendix~\ref{app:fcco}.
 
 Now, a straightforward algorithm for~\ref{eq:fcco} is to use the finite-sum variant of SpiderBoost (or SPIDER)~\citep{fang2018spider,wang2019spiderboost} in~\Cref{algo:EBSB}. In this case, if we choose the outer batch sizes to be $B_1=n$, $B_2=\sqrt{n}$ and the inner batch size to be
$m = \max \{\epsilon^{-2}/n, \epsilon^{-1/2} \}$. The resulting sample complexity of \EBSpiderBoost now becomes, 
$\cO(\max\{\sqrt{n}/\epsilon^{2.5}, 1/\sqrt{n}\epsilon^{4}\}),$
% $\max\{\tfrac{\epsilon^{-4}}{\sqrt{n}}, \sqrt{n}\epsilon^{-2.5} \}$ 
which recovers $\cO(\epsilon^{-3.5})$ bound as in~\Cref{thm:ebspiderboost} for $n=\Theta(\epsilon^{-2})$. 
However, when $n$ is small, such as $n=\cO(1)$, the sample complexity degenerates to $\cO(\epsilon^{-4})$ which is worse than the $\Omega(\epsilon^{-3})$ lower bound of stochastic optimization~\citep{arjevani2022lower}. We leave the details to~\Cref{thm:ebsb:fcco}. We still use Assumptions~\ref{a:lowerbound:F},~\ref{a:boundedvariance:g},~\ref{a:smoothness:fg},~\ref{a:regularity} for the analysis of \ref{eq:fcco} problem, replacing the role of $\xi$ with $i$.

% \cut{
% }
\begin{algorithm}[h]
 \caption{\ENestedVR}\label{algo:ENVR}
  \begin{algorithmic}[1]
 % \SetKwInOut{KwInput}{input}\SetKwInOut{KwOutput}{output} % Use \KwInput and \KwOput to display input and output
 % \SetKwInOut{KwStates}{states}
 % \SetKwData{KwNull}{null}
 % \SetKwFunction{KwSpider}{InnerSpiderStep} % Define procedure
 % \SetKwFunction{KwUpdate}{UpdateStates} % Define procedure
 % \SetAlgoLined
   \State {\bfseries Input:} $\xx^0\in\R^d$, step-size $\gamma$, batch sizes $S_1$, $S_2$, $B_1$, $B_2$, Probability $p_{\text{in}},p_{\text{out}}$ 
  \For{$t=0,1,\ldots, T-1$}
    \If{
        ($t=0$) or (with prob. $p_\text{out}$)
    }\Comment{Large outer batch}
        \For{$i\in\cB_1\sim[n]$ with $|\cB_1|=B_1$}
            \State draw $\yy_i^{t+1}$ from  distribution $\cD_{\yy,i}^{t+1}$ defined in \eqref{eq:nestedvr:y} \; 
            \State compute $\zz_{i}^{t+1}$ using \eqref{eq:nestedvr:z} and define $\phi_i^t=\xx^t$ \; 
        \EndFor
        \State $G_\text{E-NVR}^{t+1}=\frac{1}{B_1}\sum_{i\in\cB_1}  (\zz_i^{t+1})^\top \cL^{(2)}_{\cD^{t+1}_{\yy,i} } \nabla f_i(0)$
    \Else\Comment{Small outer batch}
        \For{$i\in\cB_2$  with $|\cB_2|=B_2$}
            \State draw $\yy_i^{t+1}$ and $\yy_i^{t}$ from  distribution $\cD_{\yy,i}^{t+1}$ and $\cD_{\yy,i}^{t}$ defined in \eqref{eq:nestedvr:y} \; 
            \State compute $\zz_{i}^{t+1}$ using \eqref{eq:nestedvr:z} and define $\phi_i^t=\xx^t$ \; 
        \EndFor
        \State $G_\text{E-NVR}^{t+1}=G_\text{E-NVR}^t + \frac{1}{B_2}\sum_{i\in\cB_2}  (\zz_i^{t+1})^\top
        (\cL^{(2)}_{\cD^{t+1}_{\yy,i}} \nabla f_i(0) - \cL^{(2)}_{\cD^{t}_{\yy,i}} \nabla f_i(0) )$\;
    \EndIf
  \State  $\xx^{t+1} = \xx^t - \gamma G_\text{E-NVR}^{t+1}$\;
  \EndFor
  \State \textbf{Output:} $\xx^s$ picked uniformly at random from $\{\xx^t\}_{t=0}^{T-1}$
 \end{algorithmic}
\end{algorithm}
\noindent\textbf{Extrapolated \NestedVR.}
We now introduce a nested variance reduction algorithm \ENestedVR which reaches low sample complexity for all choices of $n$.
% We also show that \ENestedVR achieves to a variant of MSVR-V2~\citet{jiang2022multi} as a special case. 
Missing proofs from this section are presented in~\Cref{app:fcco}.
% For the~\ref{eq:fcco} problem, for a fixed $i$, the true gradient is: 
% \begin{align*}
%     \nabla F_i(\xx) = (\E_{\eta|i} [\nabla g_{\eta} (\xx)] )^\top \nabla f_i(\E_{\eta|i} [g_{\eta} (\xx)]) .
% \end{align*}
For the stochasticities in the~\ref{eq:fcco} problem, our idea is to use two nested SpiderBoost variance reduction components: one for the outer random variable $i$ and the other for the inner random variable $\eta|i$. In each outer (resp.\ inner) SpiderBoost step, we choose large batch $B_1$ (resp.\ $S_1$) with probability $p_\text{out}$ (resp.\ $p_\text{in}$); otherwise we choose small batch. Let $H_i$ denote a set of $m$ i.i.d.\ samples drawn from the conditional distribution $\mathbb{P}(\eta|i)$. Similarly, let $\tilde{H}_i$ denote another set of $m$ i.i.d.\ samples drawn from the same conditional distribution.
For each given $i$, we approximate $\E_{\eta|i}[g_\eta(\xx^t)]$ with $\yy_i^{t+1}$ from distribution $\cD_{\yy,i}^{t+1}$ where,
\begin{equation}\label{eq:nestedvr:y}
    \yy_i^{t+1} = \begin{cases}
        \tfrac{1}{S_1} \tsum_{\eta \in H_i} g_\eta(\xx^t) & \text{with prob. $p_\text{in}$ or $t=0$} \\
        \yy_i^t + \tfrac{1}{S_2} \tsum_{\eta \in H_i} ( g_\eta(\xx^t) - g_\eta(\mphi_i^t)) & \text{with prob. $1-p_\text{in}$}.
    \end{cases}
\end{equation}
Similarly, we approximate $\E_{\tilde{\eta}|i}[\nabla g_{\tilde{\eta}}(\xx^t)]$ with  $\zz_i^{t+1}$ defined as follows
\begin{equation}\label{eq:nestedvr:z}
    \zz_i^{t+1} = \begin{cases}
        \tfrac{1}{S_1} \tsum_{\tilde{\eta} \in \tilde{H}_i} \nabla g_{\tilde{\eta}}(\xx^t) & \text{with prob. $p_\text{in}$ or $t=0$} \\
        \zz_i^t + \tfrac{1}{S_2} \tsum_{\tilde{\eta} \in \tilde{H}_i} (\nabla g_{\tilde{\eta}}(\xx^t) - \nabla g_{\tilde{\eta}}(\mphi_i^t)) & \text{with prob. $1-p_\text{in}$},
    \end{cases}
\end{equation}
where $\mphi_i^t$ is the last time $i$ is visited before time $t$. If $i$ is not selected at time $t$, then $\yy_i^{t+1} = \yy_i^t$ and $\zz_i^{t+1} = \zz_i^t$. Note that we use independent samples for $\yy_i^{t+1}$ and $\zz_i^{t+1}$. 

Finally, we present \ENestedVR in~\Cref{algo:ENVR} where second-order extrapolation operator $\cL^{(2)}_{\cdot}$ is applied to each occurrence of $\nabla f_i$.
We now analyze its convergence guarantee. Our analysis works by first looking at the effect of multi-level variance reduction without the extrapolation (that we refer to as \NestedVR, \Cref{thm:nestedvr}, Appendix~\ref{app:nestervr}), and then showing how extrapolation could further help to drive down the sample complexity.
% \todos{Fix parameters of the theorem}
\begin{restatable}{theorem}{enestedvr} [\ENestedVR Convergence]
\label{thm:enestedvr}
   Consider the~\eqref{eq:fcco} problem. Under the same assumptions as \Cref{thm:ebsgd}. 
   \begin{CompactItemize}
  \item If $n=\cO(\epsilon^{-2/3})$, then we choose the hyperaparameters of \ENestedVR (\Cref{algo:ENVR}) as 
$        B_1=B_2=n, p_\text{out}=1, S_1=\tilde{L}_F^2\epsilon^{-2}, S_2=\tilde{L}_F\epsilon^{-1},  p_\text{in}=\tilde{L}_F^{-1}\epsilon,  \gamma=\cO(\tfrac{1}{L_F}).$
 \item  If $n=\Omega(\epsilon^{-2/3})$, then we choose the hyperaparameters of \ENestedVR as     
 $    B_1=n, B_2=\sqrt{n}, p_\text{out}=1/\sqrt{n}, S_1=S_2=\max\left\{C_eC_g \epsilon^{-1/2}, \tilde{L}_F^2/(n\epsilon^2)\right\}, p_\text{in}=1,\gamma=\cO(\tfrac{1}{L_F}).
 $
    \end{CompactItemize}
    Then the output $\xx^s$ of \ENestedVR  satisfies: $\E[\norm{\nabla F(\xx^s)}^2_2] \leq \epsilon^2$, for  nonconvex $F$ with iterations  
    $$T = \Omega\left(L_F( F(\xx^{0})-F^\star)\epsilon^{-2} \right).$$
\end{restatable}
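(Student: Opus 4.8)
The plan is to prove \Cref{thm:enestedvr} by carefully decomposing the progress of the iterates in terms of the bias and variance of the gradient estimator $G^{t+1}_\text{E-NVR}$, mirroring the strategy used for \EBSpiderBoost but now accounting for the extra inner variance-reduction layer. First I would establish the standard descent inequality: since $F$ is $L_F$-smooth (a consequence of \Cref{a:smoothness:fg}, as noted after that assumption), with step size $\gamma\le 1/(2L_F)$ one gets $\E[F(\xx^{t+1})]\le \E[F(\xx^t)] - \tfrac{\gamma}{2}\E[\norm{\nabla F(\xx^t)}_2^2] + \tfrac{\gamma}{2}\E[\norm{\nabla F(\xx^t)-G^{t+1}_\text{E-NVR}}_2^2] + O(\gamma^2 L_F)\E[\norm{G^{t+1}_\text{E-NVR}}_2^2]$. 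Summing over $t=0,\dots,T-1$, dividing by $\gamma T$, and using \Cref{a:lowerbound:F} shows that it suffices to bound the averaged error term $\tfrac{1}{T}\sum_t \E[\norm{\nabla F(\xx^t)-G^{t+1}_\text{E-NVR}}_2^2]$, which splits into a bias part $\cE^{t+1}_\text{bias}$ and a variance part $\cE^{t+1}_\text{var}$ as defined in \Cref{sec:csoanalyses}.

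The key step is to control these two quantities. For the bias, I would invoke the second-order extrapolation guarantee (\Cref{prop:2nd_error} / \Cref{cor:extrapolation:cso}) applied with $q=\nabla f_i$ and the distribution $\cD^{t+1}_{\yy,i}$ of $\yy_i^{t+1}$: because $\yy_i^{t+1}$ is (up to the SpiderBoost recursion) a sample average of $S_1$ or effectively $S_2$-batch increments, its centered moments decay like $O(S^{-\lceil k/2\rceil})$ by \Cref{lemma:moment_sigma}, so the extrapolation bias is $O(C_e C_g\, S_1^{-2})$ (plus a term from the finite-sum inner SpiderBoost drift which one handles as in the standard SPIDER analysis). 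Choosing $S_1 = \Omega(C_e C_g\,\epsilon^{-1/2})$ in the $n=\Omega(\epsilon^{-2/3})$ regime makes this $O(\epsilon^2)$; in the $n=\cO(\epsilon^{-2/3})$ regime the inner loop uses genuine SpiderBoost variance reduction with $p_\text{in}=\tilde L_F^{-1}\epsilon$ and one bounds the accumulated inner error by the usual telescoping-of-drift argument, again landing at $O(\epsilon^2)$. For the variance, I would use the nested SpiderBoost recursion: conditioning appropriately, $\E[\norm{G^{t+1}_\text{E-NVR}-\nabla F(\xx^t)}_2^2]$ obeys a recursion of the form $\le (1-p_\text{out})\E[\norm{G^{t}_\text{E-NVR}-\nabla F(\xx^{t-1})}_2^2] + O(\gamma^2/B_2)\cdot(\text{Lipschitz constants})\cdot\E[\norm{G^t}_2^2] + (\text{inner variance injected per step})$, where the inner variance term is controlled by the $\yy,\zz$ recursions and \Cref{a:boundedvariance:g}; the variance of the extrapolated quantity $\cL^{(2)}_{\cD^{t+1}_{\yy,i}}\nabla f_i(0)$ is only a constant factor larger than that of the plain estimate by \Cref{lemma:extrapolation:variance}. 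Unrolling this recursion and using the relations $B_2=\sqrt{B_1}$, $p_\text{out}=1/B_2$ (or $B_1=B_2=n$, $p_\text{out}=1$) gives an $O(\epsilon^2)$ bound on the time-averaged variance provided $B_1$ and $S_1$ are as prescribed.

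Putting the descent inequality together with the $O(\epsilon^2)$ bounds on averaged bias and variance, and choosing $\gamma=\Theta(1/L_F)$, yields $\tfrac{1}{T}\sum_t\E[\norm{\nabla F(\xx^t)}_2^2]\lesssim \tfrac{L_F(F(\xx^0)-F^\star)}{T} + O(\epsilon^2)$, so $T=\Omega(L_F(F(\xx^0)-F^\star)\epsilon^{-2})$ suffices for the uniformly-random output $\xx^s$ to be $\epsilon$-stationary. I would present this by first stating and proving the bias/variance lemmas for \NestedVR (the no-extrapolation version, \Cref{thm:nestedvr}) and then showing that swapping in $\cL^{(2)}$ replaces the non-vanishing $O(1/S_1)$ bias of the plain estimator with the $O(1/S_1^2)$ extrapolation bias, which is precisely what lets $S_1$ be taken as small as $\Theta(\epsilon^{-1/2})$ rather than $\Theta(\epsilon^{-1})$ and thus drives the sample complexity down from $O(n\epsilon^{-4})$-type bounds to the claimed ones.

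The main obstacle I anticipate is the two-level coupling in the variance recursion: the inner estimators $\yy_i^{t+1},\zz_i^{t+1}$ are themselves SpiderBoost-type recursions whose error depends on how long ago index $i$ was last visited ($\mphi_i^t$), so one must carefully track, per coordinate $i$, both the outer staleness and the inner staleness simultaneously, and show the errors do not compound. Handling the interaction between the finite-sum outer SPIDER step (which reuses $G^t_\text{\EBSGD}$ values computed at possibly-stale $\yy_i$'s) and the extrapolation operator — in particular verifying that the bias of $\cL^{(2)}_{\cD^{t+1}_{\yy,i}}\nabla f_i(0)$ relative to $\nabla f_i(\E_{\eta|i}[g_\eta(\xx^t)])$ really is $O(S_1^{-2})$ even when $\yy_i^{t+1}$ carries accumulated drift rather than being a fresh i.i.d.\ batch average — is the delicate point, and I would isolate it in a dedicated lemma controlling $\E\norm{\yy_i^{t+1}-\E_{\eta|i}[g_\eta(\xx^t)]}^k$ for $k\le 4$ before feeding it into the extrapolation bound.
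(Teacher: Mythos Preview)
Your overall architecture---descent lemma, bias/variance split, nested SPIDER recursion for the variance, constant-factor overhead from \Cref{lemma:extrapolation:variance}, and the two-regime parameter choice---matches the paper. The place where your plan diverges from the paper, and where it contains a conceptual gap, is the handling of the extrapolation bias when $\yy_i^{t+1}$ carries accumulated drift.

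You propose to verify that the bias of $\cL^{(2)}_{\cD^{t+1}_{\yy,i}}\nabla f_i(0)$ relative to $\nabla f_i(\E_{\eta|i}[g_\eta(\xx^t)])$ is $O(S_1^{-2})$ by controlling $\E\norm{\yy_i^{t+1}-\E_{\eta|i}[g_\eta(\xx^t)]}^k$ for $k\le 4$. But the extrapolation operator does \emph{not} target $\nabla f_i(\E_{\eta|i}[g_\eta(\xx^t)])$; conditionally on the past it targets $\nabla f_i(\E[\yy_i^{t+1}\mid\text{past}])$, i.e.\ $\nabla f_i\big(\yy_i^t+\E_{\eta|i}[g_\eta(\xx^t)-g_\eta(\mphi_i^t)]\big)$ in the small-batch case. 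The difference $\yy_i^t-\E_{\eta|i}[g_\eta(\mphi_i^t)]$ is deterministic given the past, so no amount of higher-moment control on $\yy_i^{t+1}-\E_{\eta|i}[g_\eta(\xx^t)]$ will let the extrapolation shrink it; feeding that quantity into \Cref{prop:2nd_error} would be a category error. The paper instead splits the bias as
\[
\underbrace{\big\|\nabla f_i(\E_{\eta|i}[g_\eta(\xx^t)])-\E_{p_\text{in}}[\nabla f_i(\E_{\eta|i}[\yy_i^{t+1}])]\big\|_2^2}_{A_{1,1}^{t+1}}\;+\;\underbrace{\big\|\E_{p_\text{in}}[\nabla f_i(\E_{\eta|i}[\yy_i^{t+1}])]-\E[\cL^{(2)}_{\cD^{t+1}_{\yy,i}}\nabla f_i(0)]\big\|_2^2}_{A_{1,2}^{t+1}}.
\]
The second piece is a pure extrapolation bias against the conditional mean and is cleanly $\le C_e^2/S_2^4$ because only the fresh $S_2$ (or $S_1$) samples enter. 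The first piece is $\le (1-p_\text{in})^2 L_f^2\,\cE_y^t$ with $\cE_y^t:=\E\norm{\yy_i^t-\E_{\eta|i}[g_\eta(\xx^{t-1})]}_2^2$, and is then controlled by a simple second-moment recursion (\Cref{lemma:nestedvr:y}) together with a staleness lemma for $\tfrac1n\sum_j\norm{\xx^t-\mphi_j^t}_2^2$ (\Cref{lemma:staleness}). No third or fourth moments of the drift are ever needed. In Case~2 ($p_\text{in}=1$) the drift term $A_{1,1}$ vanishes outright, so the issue you flag arises only in Case~1, and there the $(1-p_\text{in})^2\tilde L_F^2/S_1$ contribution---not the extrapolation term---is what forces $S_1=\tilde L_F^2\epsilon^{-2}$.
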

From~\Cref{thm:enestedvr}, \ENestedVR has a sample complexity of $\cO(n\epsilon^{-3})$ in the small $n$ regime ($n = \cO(\epsilon^{-2/3})$) and $\cO(\max\{\sqrt{n}/\epsilon^{2.5}, 1/\sqrt{n}\epsilon^{4}\})$ in the large $n$ regime  ($n = \Omega(\epsilon^{-2/3})$). Therefore, in the large $n$ regime, this improves the $\cO(n \epsilon^{-3})$ sample complexity of MSVR-V2~\citep{jiang2022multi}.
% uses two sets of hyperparameters to reach good sample complexity in large $n$ regime ($n = \Omega(\epsilon^{-2/3})$).
% \todos{Say the final sample comnplexity}

%For ILR alone

\section{Applications} \label{sec:experiments}
\vspace*{-1ex}

% We now consider some common instantiations of~\ref{eq:cso}/\ref{eq:fcco} and show how the additional Assumption~\eqref{a:regularity} is satisfied implying that the improvements in sample complexity that we demonstrate in this paper are applicable to these settings. 
In this section, we demonstrate the numerical performance of our proposed algorithms. We focus on the application of invariant logistic regression here. In Appendix~\ref{app: expts}, we discuss performance of our proposed algorithms on other common \ref{eq:cso}/\ref{eq:fcco} applications.
% through two applications:  and (in ). We also defer discussion of additional applications, such as  and deep average precision maximization to Appendix~\ref{app: expts}.

% Since the core component of Assumption~\eqref{a:regularity} is the $3$rd order differentiability of $\nabla f_\xi$\footnote{Other parts of Assumption~\eqref{a:regularity} can be easily satisfied by making appropriate boundedness assumptions.}, we focus on that piece in the following.

\vspace*{-1ex}
\subsection{Application of Invariant Risk Minimization}
\vspace*{-1ex}
Invariant learning has wide applications in machine learning and related areas~\citep{mroueh2015learning,anselmi2016unsupervised}. Invariant logistic regression~\citep{,hu2020biased} is formulated as follows:
$$
\min_{\xx} \mathbb{E}_{\xi=(\aa,b)} [\log (1 + \exp(-b\mathbb{E}_{\eta|\xi} [\eta]^\top \xx)],
$$
where $\aa$ and $b$ represent a sample and its corresponding label, and $\eta$ is a noisy observation of the sample $\aa$. 
This first part can be considered as a CSO objective, with $f_{\xi}(y) := \log(1+\exp(-b y))$ and $g_{\eta}(\xx; \xi) := \eta^\top \xx$. As the loss $f_{\xi}\in\cC^\infty$ is smooth, our results from Sections~\ref{sec:CSO} and~\ref{sec:FCCO} are applicable. 

An $\ell_2$-regularizer is added to ensure the existence of an unique minimizer.
Since the gradient of the penalization term is unbiased, we only have to consider the bias of the data-dependent term. We generate a synthetic dataset with $d=10$ dimensions. The minimizer is drawn from Gaussian distribution $\xx^\star\sim\cN(0, 1)\in\R^d$. We draw invariant samples $\{(\aa_i, b_i)\}_{i}$ where $\aa_i \sim \cN(0, 1)\in\R^d$ and compute $b_i=\text{sgn}(\aa_i^\top \xx^\star)$ and its perturbed observation $\eta\sim\cN(\aa_i, 100)\in\R^d$.

\begin{figure}
\begin{subfigure}{.6\textwidth}
  \centering
  \includegraphics[width=\linewidth]{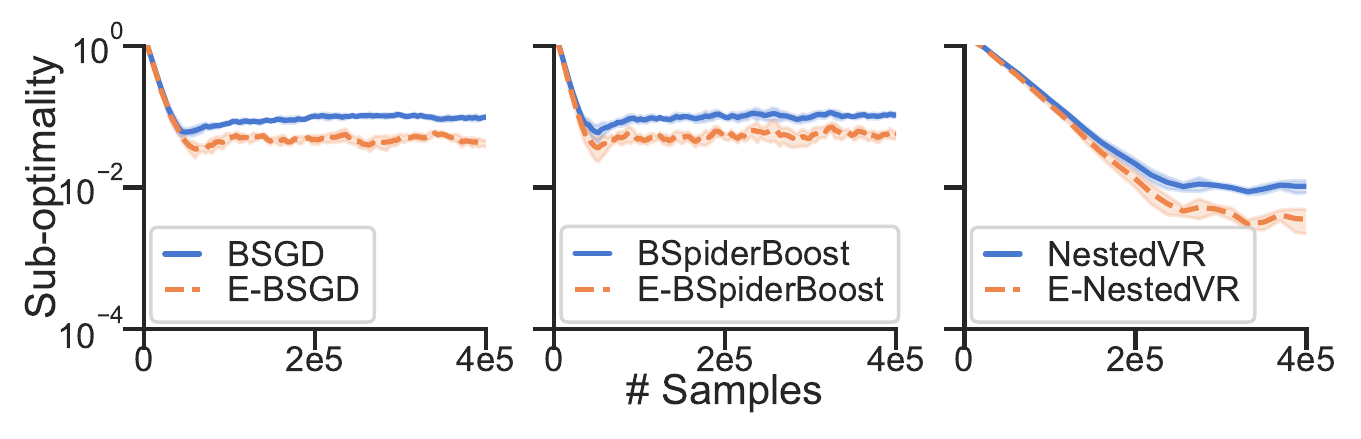}
  \caption{CSO: Large $n=50000$}
  \label{fig:ilr:1}
\end{subfigure} 
\hspace*{0.25cm}
\begin{subfigure}{.4\textwidth}
  \centering
  \vspace*{-1cm}
  \includegraphics[width=\linewidth]
  {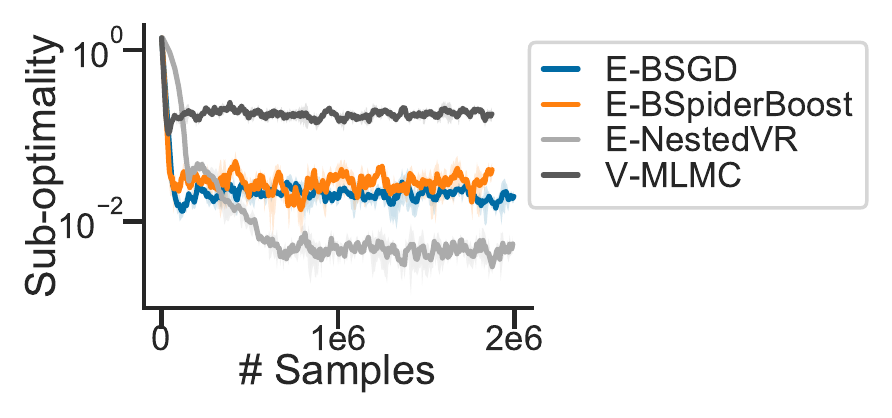}
  \caption{FCCO: small $n=50$}
  \label{fig:ilr:2}
\end{subfigure}
    \caption{Performances of algorithms and their extrapolated versions on the invariant logistic regression task. 
    Algorithms in each subplot use the same amount of inner batch size in each iteration. The shaded region represents the 95\%-confidence interval computed over 10 runs.
    }
        \label{fig:ilr}
\end{figure}

We consider drawing $\xi$ from a large set ($n=50000$) and a small set ($n=50$) as CSO and FCCO problems respectively.  As baselines, we implemented the \BSGD and \BSpiderBoost methods from~\citep{hu2020biased}, V-MLMC approach from~\citep{hu2021bias}, and \NestedVR approach from Appendix~\ref{app:nestervr} which achieves the same complexity as MSVR-V2~\citep{jiang2022multi} for the FCCO problem. RT-MLMC shares the same sample complexity as V-MLMC, and is thus omitted from the experiment \citep{hu2021bias}.
The results are shown in Fig.~\ref{fig:ilr}. In the CSO setting, we compare biased gradient methods with their extrapolated variants (\BSGD vs.\ \EBSGD, \BSpiderBoost vs.\  \EBSpiderBoost, and \NestedVR vs.\ \ENestedVR). 
The extrapolated versions of \BSGD, \BSpiderBoost, and \NestedVR consistently reach lower error than their non-extrapolated counterparts, as is evident in Figure~\ref{fig:ilr:1}. In this case, the performance of \BSpiderBoost is similar to \BSGD as also noted by the authors of these techniques~\citep{hu2020biased}, and a drawback of \BSpiderBoost seems to be that it is much harder to tune in practice. However, it is clear that \EBSGD outperforms \BSGD, and \EBSpiderBoost outperforms \BSpiderBoost, respectively.
In the FCCO setting, we compare extrapolation based methods and MLMC based methods. Figure~\ref{fig:ilr:1}, shows that \ENestedVR outperforms all other extrapolated algorithms, including the V-MLMC approach of~\citep{hu2021bias}, matching our theoretical findings.

\vspace*{-1ex}
\subsection{Application of Instrumental Variable Regression}
\vspace*{-1ex}

\begin{figure}
\begin{subfigure}{0.62\textwidth}
  \centering
  \includegraphics[width=\linewidth]{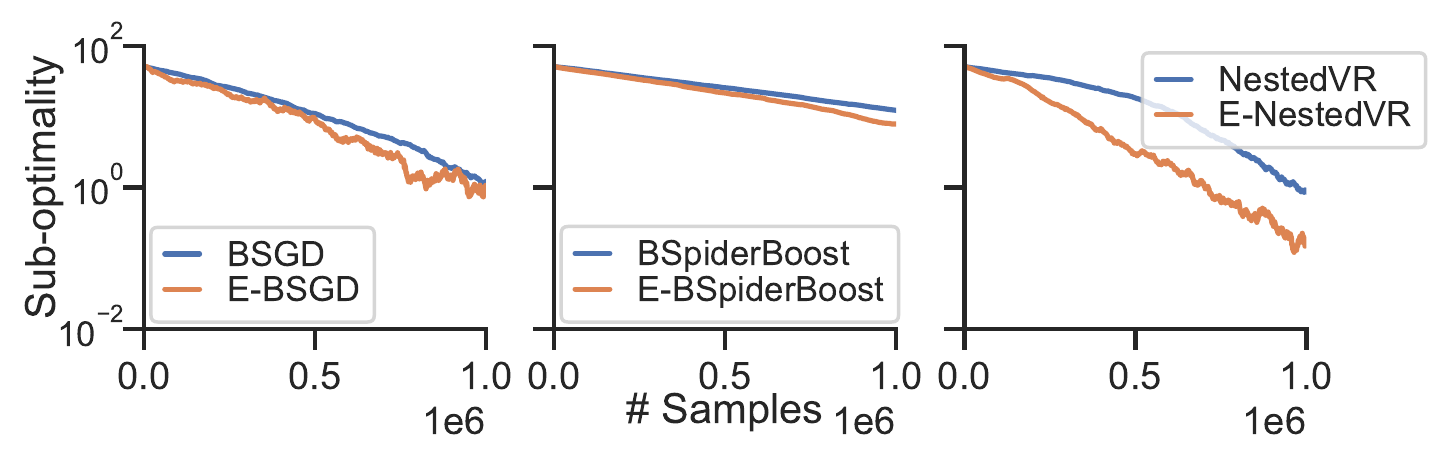}
  \caption{CSO: Large $n = 10000$}
  \label{fig:ivr:1}
\end{subfigure} 
\hspace*{0.25cm}
\begin{subfigure}{.38\textwidth}
  \centering
  \vspace*{-1cm}
  \includegraphics[width=\linewidth]
  {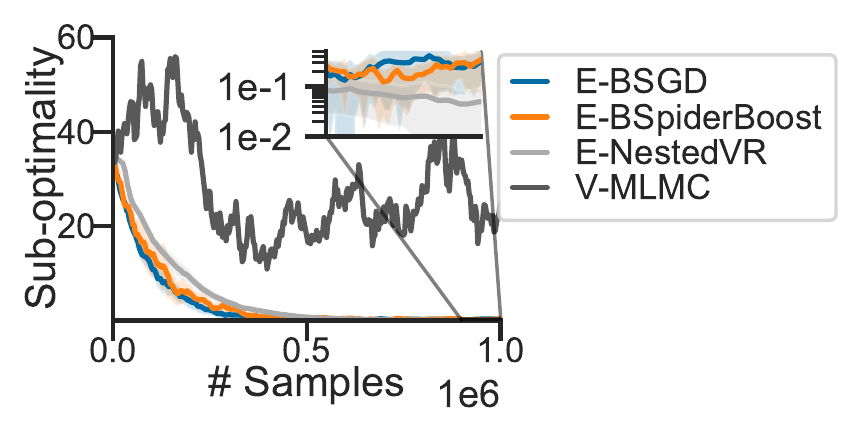}
  \caption{FCCO: small $n = 50$}
  \label{fig:ivr:2}
\end{subfigure}
  \caption{Performances of algorithms and their extrapolated versions on the instrumental variable regression task. The shaded region represents the 95\%-confidence interval.}
        \label{fig:ivr}
\end{figure}

Instrumental variable regression is a popular technique in econometrics that aims to estimate the causal effect
of input $X$ on a confounded outcome $Y$ with an instrument variable $Z$, which, for a given $X$, is conditionally
independent of $Y$. As noted by~\citep{muandet2020dual}, the instrumental variable regression is a special case of the CSO problem. The instrumental variable regression problem is phrased as:
\begin{align*}
    \min_{w} \E_{\xi=(Y, Z)} \left[ \ell(Y, \E_{X|Z}[g_X(w)] ) \right]
\end{align*}
where $\xi=(Y, Z), \eta=X$. This can be viewed in the CSO framework with $f_\xi(\cdot) = \ell(Y,\cdot)$.
We choose $\ell(y, \hat{y}) = \log\cosh(y - \hat{y})$ as regression loss function and $g_X(w)=w^\top X$ to be a linear regressor. In this case, $f_\xi \in \cC^\infty$ with $\nabla f_\xi(\hat{y})=\tanh(\hat{y} - Y)$, and our results from Sections~\ref{sec:CSO} and~\ref{sec:FCCO} apply. We generate the data similar to \citep{singh2019kernel}
\begin{align*}
    Z \sim \text{Uniform}([-3, 3]^2), \quad e\sim \cN(0, 1),& \quad \delta\sim\cN(0, 0.1), \quad \gamma\sim\text{Exponential}(10) \\
    X = \tfrac{1}{2} z_1 + \tfrac{1}{2} e + \gamma,& \quad Y = X + e + \delta
\end{align*}
where $z_1$ is the first dimension of $Z$, $e$ is the confounding variable and $\delta$, $\gamma$ are noises.
% Remark: I made two changes 
% - I changed the gamma from the original gaussian distribution to exponential so that the distribution are asymmetric. Otherwise, when coupled with tanh (anti symmetric), it could be unbiased.
% - I changed the squared loss to log cosh. This is because when we use two sample estimates, the squared loss is unbiased. 
% Let $\gamma$ denote the exponential distribution.
In this experiment, we solve the instrumental variable regression using BSGD, BSpiderBoost, NestedVR and their extrapolated variants described in Sections~\ref{sec:CSO} and~\ref{sec:FCCO}.
In each pair of experiments, the samples used per iteration are fixed same, i.e.: 1) BSGD uses $m=2$ and E-BSGD uses $m=1$; 2) For BSpiderBoost and E-BSpiderBoost, we use cycle length of 10, small batch and large batch in Spider to be 10 and 100 respectively, and we choose inner batch sizes $m=2$ for BSpiderBoost and $m=1$ for E-BSpiderBoost; 3) For NestedVR and E-NestedVR, we fix the outer batch size to 10 and 5 respectively, and choose fix the inner Spider Cycle to be 10 with large batch 100 and small batch 10.
The results are presented in \Cref{fig:ivr}. As is quite evident, the extrapolation variants achieve faster convergence in all 3 cases, confirming our theoretical findings. 
% From Figure~\ref{fig:ivr:2}, we also see the superior performance of \ENestedVR.
% and all experiments are repeated for 3 times

%For ILR and MAML 
% \input{oldexpt}

% The goal is to fit each task to samples from a sine wave of the form $A \sin(t - \phi)$ where $A$ and $\phi$ represent the amplitude and phase of the wave, respectively, that collectively define the task, i.e., \textcolor{blue}{Looks wrong. the task is $(A,\phi,\cD^i_\text{query}))$?} the $i$'s are now $(A,\phi,\cD^i_\text{query})$. The objective is to train a neural network $\Phi(x; t)$ that captures the periodic structure of the sinusoid regression and generalize well to an unseen sine wave with few training samples. To achieve this, the mean square error $\ell_i(x;\cD)= \tfrac{1}{2}\E_{t\sim\cD}\norm{A\sin(t-\phi) - \Phi(x; t)}_2^2 $ where $\cD$ is a distribution over $\R$.
% Here, $g_\eta(\xx; i) = \xx - \alpha \nabla \ell_i(\xx, \cD^i_\text{supp})$ where $\eta|i=\cD^i_\text{supp}=\{(t, A\sin(t-\phi))\}_t$ 
% and $f_i(\cdot) = \ell_i(\cdot ; \cD^i_\text{query} )$ represents the meta testing loss. Note that $f_i$ is a smooth function and our results are applicable.
% \begin{itemize}
%     \item $(t, A\sin(t-\phi))$ is a training sample of task $i$. 
%     \item $\eta=t$?
% \end{itemize}

% \noindent\textbf{Image classification.}

\section{Concluding Remarks}

In this paper, we consider the conditional stochastic optimization~\ref{eq:cso} problem and its finite-sum variant~\ref{eq:fcco}. Due to the interplay between nested structure and stochasticity, most of the existing gradient estimates suffer from large biases and have large sample complexity of $\cO(\epsilon^{-5})$. We propose stochastic extrapolation-based algorithms that tackle this bias problem and improve the sample complexities for both these problems. While we focus on nonconvex objectives, our proposed algorithms can also be beneficial when used with strongly convex, convex objectives. 
% For future work, we plan to weaken the regularity conditions needed for applying our extrapolation-based scheme. 
We also believe that similar ideas could also prove helpful for multi-level stochastic optimization problems \citep{zhang2021multilevel} with nested dependency.
% and to more optimization problems suffering from the same bias.

% \todos{Some future work, maybe using the extrapolation scheme for other problems?}
% and reaches a sample complexity of $\cO(\epsilon^{-3.5})$ under mild regularity conditions, improving the previous best 

% \bibliographystyle{plainnat}
 \subsection*{Acknowledgements}
We would like to thank Caner Turkmen, Sai Praneeth Karimireddy, and Martin Jaggi for helpful initial discussions surrounding this project.
\bibliographystyle{plainnat}
\bibliography{sample}

\newpage
\appendix

% \section{Missing Pseudocodes} \label{app: pseudo}
% We present pseudocodes of \EBSGD and \EBSpiderBoost scheme in Algorithms~\ref{algo:EBSGD} and~\ref{algo:EBSB} respectively.

\section{Missing Details from~\Cref{sec: intro}} \label{app: intro}

\subsection{Other Related Work}
\noindent\textbf{CSO.}~\citet{dai2017learning} proposed  a primal-dual stochastic approximation algorithm to solve a min-max reformulation of
 \ref{eq:cso}, employing the kernel embedding techniques. However, this method requires convexity of $f_\xi$ and linearity of $g_\eta$, which are not satisfied by general applications when neural networks are involved. \citet{goda2022constructing} showed that a special class of  \ref{eq:cso} problems can be unbiased, e.g., when $f_\xi$ measures the squared error between some $u(\xi)$ and $\E_{\eta | \xi}[g_\eta(\xx;\xi]$, giving rise to this objective function $\E_\xi[(u(\xi)-\E_{\eta | \xi}[g_\eta(\xx;\xi])^2]$. However, they did not show any improvement over the sample complexity of \BSGD (i.e., $\cO(\epsilon^{-6})$).~\citet{hu2020biased} also analyzed lower bounds on the minimax error for the~\ref{eq:cso} problem and showed that for a specific class of biased gradients with $\cO(\epsilon)$ bias (same bias as \BSGD) and variance $\cO(1)$ the bound achieved by \BSpiderBoost is tight. However, these lower bounds are not applicable in settings such as ours (and also to~\citep{hu2021bias}) where the bias is smaller than the BSGD bias.
% sample complexity of \BSpiderBoost  
% cannot be improved for the nonconvex smooth~\ref{eq:cso} problems. However, this lower bound was established for a specific class of biased gradients with $\cO(\epsilon)$ bias (same bias as \BSGD) and variance $\cO(1)$.

\smallskip
\noindent\textbf{Variance Reduction.} The reduction of variance in stochastic optimization is a crucial approach to decrease sample complexity, particularly when dealing with finite-sum formulations of the form $\min_{\xx} \frac{1}{n}\sum_{i=1}^n f_i(\xx)$. Pioneering works such as Stochastic Average Gradient (SAG) \citep{schmidt2017minimizing}, Stochastic Variance Reduced Gradient (SVRG) \citep{johnson2013accelerating,reddi2016stochastic}, and SAGA \citep{defazio2014saga,reddi2016fast} improved the iteration complexity from $\cO(\epsilon^{-4})$ in Stochastic Gradient Descent (SGD) to $\cO(\epsilon^{-2})$. Subsequent research, including Stochastic Path-Integrated Differential Estimator (SPIDER) \citep{fang2018spider} and Stochastic Recursive Gradient Algorithm (SARAH) \citep{nguyen2017sarah}, expanded the application of these techniques to both finite-sum and online scenarios, where $n$ is large or possibly infinite. These methods boast an improved sample complexity of $\min(\sqrt{n}\epsilon^{-2}, \epsilon^{-3})$.
% Add multi-level SPIDER paper in the future.
SpiderBoost~\citep{wang2019spiderboost}, achieves the same near-optimal complexity performance as SPIDER, but allows a much larger step size and hence runs faster in practice
than SPIDER. In this paper, we use a probabilistic variant of SpiderBoost as the variance reduction module for  \ref{eq:cso} and  \ref{eq:fcco} problems.  We highlight that alternative techniques, such as SARAH, can also be applied and offer similar guarantees.

\smallskip
\noindent\textbf{Bias Correction.} One of the classic problems in statistics is to design procedures to reduce the bias of estimators.  Well-established general bias correction techniques, such as the jackknife \citep{tukey1958bias}, bootstrap \citep{efron1992bootstrap}, Taylor series \citep{withers1987bias,han2020minimax}, have been extensively studied and applied in various contexts \citep{jiao2020bias}. However, these methods are predominantly examined in relation to standard statistical distributions, with limited emphasis on their adaptability to optimization problems. Our proposed extrapolation-based approach is derived from sample-splitting methods \citep{han2020minimax}, specifically tailored and analyzed for optimization problems involving unknown distributions.

\smallskip
\noindent\textbf{Stochastic Composition Optimization.}
Finally, a closely related class of problems, called stochastic composition optimization, has been extensively studied (e.g.,~\citep{yermol1971general,ermoliev2013sample,wang2016accelerating,wang2017stochastic}) in the literature where the goal is: 
\begin{align} \label{eq:SCO} 
\min_{\xx \in \R^d} \E_\xi [ f_\xi(\E_\eta[g_\eta(\xx)])]. 
\end{align}
Despite having nested expectations in their formulations~\eqref{eq:cso} and~\eqref{eq:SCO} are fundamentally different: a) in stochastic composite optimization  the inner randomness $\eta$ is conditionally dependent on the outer randomness $\xi$ and b) in~\ref{eq:cso}  the inner random function $g_\eta(\xx, \xi)$ depends on both $\xi$ and $\eta$.
These differences lead to quite different sample complexity bounds for these problems, as explored in~\citet{hu2020sample}.
In fact, \citet{zhang2021multilevel} presented a near optimal complexity of $\cO(\min(\epsilon^{-3},\sqrt{n}\epsilon^{-2}))$ for stochastic composite optimization problems using nested variance reduction. While \citet{wang2016accelerating} also use the ``extrapolation'' technique, their motivation and formula are significantly different from ours and cannot reduce the bias in the CSO problem.

% A point to note here is while nested (inner + outer) variance reduction helps for the stochastic composite optimization problem~\citep{zhang2021multilevel}, it does not help for \ref{eq:cso} problem because of the dependence between inner and outer random variables.

\section{Missing Details from~\Cref{sec:bias}} \label{app:bias}
\begin{restatable}[Moments of $\cD_m$]{lem}{momentsigma}
\label{lemma:moment_sigma}
    The moments of $\delta\in\cD_m$ are bounded as follows
    \begin{align*}
        \E[(\delta - \E[\delta])^2]&= \tfrac{\sigma_2}{m},\quad |\E[(\delta- \E[\delta])^3]|= \tfrac{\sigma_3}{m^2},\quad \E[(\delta-\E[\delta])^4]=\tfrac{\sigma_4}{m^3}  + \tfrac{3({m}-1)\sigma_2^2}{m^3}.
    \end{align*}
    %\color{red}
    More generally, for $k\ge2$, $|\E[(\delta-\E[\delta])^k]|= \cO(m^{-\lceil k/2\rceil})$.
\end{restatable}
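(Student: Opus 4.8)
The plan is to center everything and reduce to a multinomial expansion of a sum of i.i.d.\ mean-zero variables. Write $X_i := \delta_i - \E[\delta_i]$ for $i=1,\dots,m$, so the $X_i$ are i.i.d., $\E[X_i]=0$, and $\E[|X_i|^l]$ is finite for $l\le 2k$ by \Cref{a:oracle}; note $\sigma_2 = \E[X_1^2]\ge 0$ and $\sigma_l = |\E[X_1^l]|$ for $l\ge 3$. By the definition of $\cD_m$ we have $\delta - \E[\delta] = \tfrac{1}{m}\sum_{i=1}^m X_i$, hence $\E[(\delta-\E[\delta])^k] = m^{-k}\,\E[(\sum_{i=1}^m X_i)^k]$, and it suffices to control $\E[(\sum_i X_i)^k]$.

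First I would dispatch the three explicit cases $k=2,3,4$ by direct expansion. Expanding $(\sum_i X_i)^k$ and using independence gives $\E[\prod_i X_i^{k_i}] = \prod_i \E[X_i^{k_i}]$, and since $\E[X_i]=0$ every monomial in which some $X_i$ appears to the first power contributes $0$. For $k=2$ only the diagonal terms $X_i^2$ survive, giving $\E[(\sum_i X_i)^2] = m\sigma_2$; for $k=3$ the terms $X_i^2 X_j$ ($i\neq j$) vanish, leaving $\E[(\sum_i X_i)^3] = m\,\E[X_1^3]$; for $k=4$ the survivors are $X_i^4$ and $X_i^2 X_j^2$ ($i\neq j$), with multinomial coefficient $\tfrac{4!}{2!\,2!}=6$ and $\binom{m}{2}$ unordered pairs, so $\E[(\sum_i X_i)^4] = m\sigma_4 + 6\binom{m}{2}\sigma_2^2 = m\sigma_4 + 3m(m-1)\sigma_2^2$. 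Dividing by $m^k$ and taking absolute values yields the three claimed identities.

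For the general bound I would argue directly from the multinomial expansion $\E[(\sum_i X_i)^k] = \sum \tfrac{k!}{k_1!\cdots k_m!}\prod_i \E[X_i^{k_i}]$ over compositions $k_1+\dots+k_m=k$. The nonvanishing terms are exactly those in which every nonzero $k_i$ satisfies $k_i\ge 2$; such a term has at most $\lfloor k/2\rfloor$ nonzero parts. For each of the finitely many (constant in $m$) partition shapes with $j\le\lfloor k/2\rfloor$ parts, the number of ways to place the nonzero indices among $\{1,\dots,m\}$ is $\binom{m}{j}=\cO(m^j)=\cO(m^{\lfloor k/2\rfloor})$, while the accompanying factor $\tfrac{k!}{k_1!\cdots k_m!}\prod_i\E[X_i^{k_i}]$ is bounded by a constant depending only on $k$ and $\sigma_2,\dots,\sigma_k$. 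Summing, $|\E[(\sum_i X_i)^k]| = \cO(m^{\lfloor k/2\rfloor})$, and therefore $|\E[(\delta-\E[\delta])^k]| = \cO(m^{\lfloor k/2\rfloor - k}) = \cO(m^{-\lceil k/2\rceil})$, using $\lfloor k/2\rfloor - k = -\lceil k/2\rceil$ for both parities of $k$.

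The calculations are entirely routine; the only place warranting a little care is the combinatorial bookkeeping in the last step — stating cleanly why the surviving terms have at most $\lfloor k/2\rfloor$ nonzero parts and hence at most $\cO(m^{\lfloor k/2\rfloor})$ index placements, and checking the exponent identity $\lfloor k/2\rfloor - k = -\lceil k/2\rceil$. I do not anticipate any genuine obstacle.
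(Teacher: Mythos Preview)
Your proposal is correct and follows essentially the same approach as the paper: center the variables, expand $(\sum_i X_i)^k$ via the multinomial theorem, discard terms containing any first-power factor by independence and mean-zero, compute the small cases directly, and for general $k$ bound the number of index placements by $\cO(m^{\lfloor k/2\rfloor})$ since surviving terms have at most $\lfloor k/2\rfloor$ nonzero parts. The paper's write-up is nearly identical, differing only in cosmetic organization.
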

\begin{proof}
% \begin{align*}
% \E[(\delta - \E[\delta] )^2]=\E[( (\delta - \E[\delta]) - (\E[\delta]-\E[\delta]) )^2]
% \end{align*}
Define $\hat{\delta}=\delta - \E[\delta]$ as the centered random variable. Now
\begin{align*}
\E[(\delta - \E[\delta] )^k]=\E[ \hat{\delta}^k].
\end{align*}
So we focus on $\E[ \hat{\delta}^k]$ in the remainder of the proof.
For $k=2$,
    \begin{align*}
        |\E[\hat{{\delta}}^2]|=\tfrac{1}{m^2} |\E[\tsum_{i=1}^m \hat{\delta}_i]^2|
        =\tfrac{1}{m^2} \left| \E\left[\tsum_{i} \hat{\delta}_i^2 + 2\tsum_{i<j} \hat{\delta}_i\hat{\delta}_j  \right] \right| 
        =\tfrac{\sigma_2}{m} .
    \end{align*}
    For $k=3$,
    \begin{align*}
        |\E[\hat{\delta}^3]|&=\tfrac{1}{m^3} |\E[\tsum_{i=1}^m \hat{\delta}_i]^3| \\
        &=\tfrac{1}{m^3} \left| \E\left[\tsum_{i} \hat{\delta}_i^3 + 3 \tsum_{i\neq j} \hat{\delta}_i^2\hat{\delta}_j + 6 \tsum_{i<j<k} \hat{\delta}_i\hat{\delta}_j\hat{\delta}_k  \right] \right| \\
        &=\tfrac{\sigma_3}{m^2} .
    \end{align*}
    For $k=4$,
    \begin{align*}
        |\E[\hat{\delta}^4]|&=\tfrac{1}{m^4} |\E[\tsum_{i=1}^m \hat{\delta}_i]^4| \\
        &=\tfrac{1}{m^4} \left| \E\left[\tsum_{i} \hat{\delta}_i^4 + 4 \tsum_{i\neq j} \hat{\delta}_i^3\hat{\delta}_j
        + 6 \tsum_{i<j} \hat{\delta}_i^2\hat{\delta}_j^2 + 24 \tsum_{i<j<k<l} \hat{\delta}_i\hat{\delta}_j\hat{\delta}_k\hat{\delta}_l  \right] \right| \\
        &=\tfrac{1}{m^4} \left| m \E [\hat{\delta}_i^4] + 6 \tfrac{m(m-1)}{2} \E[\hat{\delta}_i^2]\E[\hat{\delta}_j^2] \right| \\
        &=\tfrac{\sigma_4}{m^3}  + \tfrac{3(m-1)\sigma_2^2}{m^3}.
    \end{align*}
    For $k=5$,
    \begin{align*}
        |\E[\hat{\delta}^5]|&=\tfrac{1}{m^5} |\E[\tsum_{i=1}^m \hat{\delta}_i]^5| \\
        &=\tfrac{1}{m^5} \left| \E\left[\tsum_{i} \hat{\delta}_i^5 + 10 \tsum_{i\neq j} \hat{\delta}_i^3\hat{\delta}_j^2 \right] \right| \\
        &=\tfrac{1}{m^5} \left| m \E[\hat{\delta}_i^5] + 10 m(m-1) \E[\hat{\delta}_i^3]\E[\hat{\delta}_j^2] \right| \\
        &=\tfrac{\sigma_5}{m^4}  + \tfrac{10(m-1)\sigma_3\sigma_2}{m^4}.
    \end{align*}
    For general $k>0$, we expand the following term as a function of $m$
    \begin{align*}
        |\E[\hat{\delta}^k]|&=\tfrac{1}{m^k} |\E[\tsum_{i=1}^m \hat{\delta}_i]^k|.
    \end{align*}
    As $\E[\hat{\delta}_i]=0$ and $\hat{\delta}_i$ and $\hat{\delta}_j$ are independent for different $i$ and $j$,  the outcome has the following form
    \begin{equation}\label{eq:ho:1}
        |\E[\hat{\delta}^k]|=\frac{1}{m^k} \cO \left( \sum_{\substack{
        2a_2+3a_3+\cdots+ka_k=k \\ a_i \ge 0~ \forall i}}  m^{\sum_{i=2}^k a_i }  \sigma_2^{a_2}\sigma_3^{a_3}\cdots\sigma_k^{a_k} \right)
    \end{equation}
    where $\sum_{i=2}^k {a_i}$ is the count of independent $\{\hat{\delta}_i\}$ used in $\sigma_2^{a_2}\sigma_3^{a_3}\cdots\sigma_4^{a_4}$.
    Among the terms in \eqref{eq:ho:1}, the dominating one in terms of $m$ is one with largest $\sum_{i=2}^k{a_i}$, i.e.
        \begin{align*}
        |\E[\hat{\delta}^k]|=\begin{cases}
            \tfrac{1}{m^k} \cO(m^{k/2}) \sigma_2^{k/2}& \text{if k even,} \\
            \tfrac{1}{m^k} \cO(m^{\lfloor k/2 \rfloor}) \sigma_2^{\lfloor k/2\rfloor-1}\sigma_3& \text{if k odd.}
        \end{cases}
        \end{align*}
        Then, we can simplify the upper right-hand side with
        \begin{align*}
            |\E[\hat{\delta}^k]|&= \cO(m^{-k+\lfloor k/2\rfloor}),
        \end{align*}
        which gives all the desired results.
\end{proof}

\begin{restatable}[First-order Guarantee]{prop}{sterror}
\label{prop:1st_error}
    Assume that $\cD_m$ and $q(\cdot)$  satisfy \Cref{a:oracle} and~\ref{a:smoothness} respectively  with $k=1$. Then, $\forall s \in \R$,
$        \left|\E \left[\cL^{(1)}_{\cD_m} q(s)\right] - q(s+\E[\delta])\right|
        \le a_2\sigma_2/(2m).$
\end{restatable}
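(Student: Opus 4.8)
The plan is to carry out a first-order Taylor expansion of $q(s+\delta)$ around the point $s+h$, where $h=\E[\delta]$, with an explicit integral (or Lagrange) remainder, and then take expectations. Writing $q(s+\delta) = q(s+h) + q'(\phi_\delta)(\delta - h)$ for some $\phi_\delta$ between $s+\delta$ and $s+h$ is not quite enough because $q'(\phi_\delta)(\delta-h)$ does not have mean exactly $q'(s+h)\E[\delta-h]=0$; instead I would expand one order further, writing
\begin{equation*}
    q(s+\delta) = q(s+h) + q'(s+h)(\delta-h) + \tfrac{1}{2}q''(\psi_\delta)(\delta-h)^2
\end{equation*}
for some $\psi_\delta$ between $s+\delta$ and $s+h$, which is exactly the $k=1$ instance of the expansion~\eqref{eq:1st} already displayed in the paper. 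Since $\cL^{(1)}_{\cD_m}q(s) = q(s+\delta)$ with $\delta\sim\cD_m$, taking expectation over $\delta$ kills the linear term because $\E[\delta-h]=0$, leaving
\begin{equation*}
    \E[\cL^{(1)}_{\cD_m}q(s)] - q(s+h) = \tfrac{1}{2}\E[q''(\psi_\delta)(\delta-h)^2].
\end{equation*}

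Next I would bound this remainder in absolute value. Using \Cref{a:smoothness} with $k=1$ (so that $a_2 = \sup |q''| < \infty$) gives
\begin{equation*}
    \left|\E[\cL^{(1)}_{\cD_m}q(s)] - q(s+h)\right| \le \tfrac{a_2}{2}\,\E[(\delta-h)^2].
\end{equation*}
Then I invoke \Cref{lemma:moment_sigma} (Moments of $\cD_m$), whose first identity states $\E[(\delta-\E[\delta])^2] = \sigma_2/m$ for $\delta\sim\cD_m$. Substituting yields $\left|\E[\cL^{(1)}_{\cD_m}q(s)] - q(s+\E[\delta])\right| \le a_2\sigma_2/(2m)$, which is exactly the claimed bound; since the argument did not depend on $s$, it holds for all $s\in\R$.

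I do not expect a serious obstacle here — this is the simplest case of the general extrapolation framework. The only points requiring a little care are: (i) making sure the Taylor remainder is taken to second order (not first) so that the mean-zero property of $\delta-h$ under $\cD_m$ actually eliminates the leading term, and (ii) correctly citing \Cref{lemma:moment_sigma} for the second moment of the sample-averaged distribution $\cD_m$ rather than of $\cD$ itself (the factor $1/m$ is the whole point). One should also note that $\cD_m$ inherits \Cref{a:oracle} from $\cD$, and that $\E[\delta] = \E_{\delta_i\sim\cD}[\delta_i]$ is unchanged under averaging, so "$q(s+\E[\delta])$" means the same quantity whether $\delta\sim\cD$ or $\delta\sim\cD_m$; this is what lets the statement be phrased cleanly. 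Everything else is a one-line estimate.
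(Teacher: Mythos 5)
Your proposal is correct and follows essentially the same route as the paper's own proof: a second-order Taylor expansion of $q(s+\delta)$ at $s+h$ with Lagrange remainder, cancellation of the linear term via $\E[\delta-h]=0$, the uniform bound $a_2$ on $q''$, and the second-moment identity $\E[(\delta-h)^2]=\sigma_2/m$ from \Cref{lemma:moment_sigma}. No gaps.
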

\begin{proof}
    Let $h=\E[\delta]$. If the function $q\in\cC^2$, then the Taylor expansion at $s+h$ with remainders leads to
    \begin{align*}
        \E [q(s+\delta)]&=q(s+h) + q'(s+h) \E[\delta-h] + \tfrac{1}{2} \E[q''(\phi_1) (\delta-h)^2]
    \end{align*}
    where $\phi_1$ between $s+h$ and $s+\delta$.  Then the error of extrapolation becomes
    \begin{align*}
        |\E[q(s+\delta)] - q(s+h)| &= \left|
        \tfrac{1}{2} \E[q''(\phi_1) (\delta-h)^2]  \right| 
        \le \tfrac{a_2}{2} 
         \E[(\delta-h)^2].
    \end{align*}
    By \Cref{a:smoothness} and \Cref{lemma:moment_sigma}, we have that
    \begin{align*}
        |\E[q(s+\delta)] - q(s+h)|
        \le \tfrac{a_2}{2} \E[(\delta-h)^2] = \tfrac{a_2}{2} \E[(\delta-h)^2] = \tfrac{a_2\sigma_2}{2m}.
    \end{align*}
    This completes the proof.
\end{proof}

\nderror*
\begin{proof}
    Let $h=\E[\delta]$. If the function $q\in\cC^4$, then the Taylor expansion at $s+h$ with remainders leads to
    \begin{align*}
        \E [q(s+\delta_1)]=&q(s+h) + q'(s+h) \E[\delta_1 -h] + \tfrac{q''(s+h)}{2} \E[(\delta_1-h)^2]
        + \tfrac{q^{(3)}(s+h)}{6} \E[(\delta_1-h)^3]\\ 
        &+ \tfrac{1}{24} \E[q^{(4)}(\phi_1) (\delta_1-h)^4] \\
        \E [q(s+\delta_2)]=&q(s+h) + q'(s+h) \E[\delta_2 -h] + \tfrac{q''(s+h)}{2} \E[(\delta_2-h)^2]
        + \tfrac{q^{(3)}(s+h)}{6} \E[(\delta_2-h)^3]\\ 
        &+ \tfrac{1}{24} \E[q^{(4)}(\phi_2) (\delta_2-h)^4] \\
        \E [q(s+\tfrac{\delta_1+\delta_2}{2})]=&q(s+h) + q'(s+h) \E[\tfrac{\delta_1+\delta_2}{2} -h] + \tfrac{q''(s+h)}{2} \E[(\tfrac{\delta_1+\delta_2}{2}-h)^2]  \\
        & + \tfrac{q^{(3)}(s+h)}{6} \E[ \left(\tfrac{\delta_1+\delta_2}{2}-h\right)^3]
        + \tfrac{1}{24} \E[q^{(4)}(\phi_3) \left(\tfrac{\delta_1+\delta_2}{2}-h\right)^4]
    \end{align*}
    where $\phi_1,\phi_2,\phi_3$ between $s+h$ and $s+\delta_1$, $s+\delta_2$, $s+\delta_3$ respectively.

    As $\E[\delta-h]=0$, the error of extrapolation becomes    
    \begin{align*}
        &|\E[\cL_{\cD_m}^{2} q(s)] - q(s+h)| \\
        \le&  \left|
        2  \E\left[ \tfrac{q^{(3)}(s+h)}{6} \left(\tfrac{\delta_1+\delta_2}{2}-h\right)^3\right] 
        - \tfrac{1}{2} \left( \E[\tfrac{q^{(3)}(s+h)}{6} (\delta_1-h)^3] + \E[\tfrac{q^{(3)}(s+h)}{6} (\delta_2-h)^3] \right) \right| \\
        & +  \left|
        2  \E\left[ \tfrac{q^{(4)}(\phi_3)}{24} \left(\tfrac{\delta_1+\delta_2}{2}-h\right)^4\right] 
        - \tfrac{1}{2} \left( \E[\tfrac{q^{(4)}(\phi_1)}{24}(\delta_1-h)^4] + \E[\tfrac{q^{(4)}(\phi_2)}{24}(\delta_2-h)^4] \right) \right|\\
        \le& \tfrac{a_3}{6} \left|
        2  \E\left[ \left(\tfrac{\delta_1+\delta_2}{2}-h\right)^3\right] 
        - \tfrac{1}{2} \left( \E[ (\delta_1-h)^3] + \E[ (\delta_2-h)^3] \right) \right| \\
        & +  \left|
        2  \E\left[ \tfrac{q^{(4)}(\phi_3)}{24} \left(\tfrac{\delta_1+\delta_2}{2}-h\right)^4\right] 
        - \tfrac{1}{2} \left( \E[\tfrac{q^{(4)}(\phi_1)}{24}(\delta_1-h)^4] + \E[\tfrac{q^{(4)}(\phi_2)}{24}(\delta_2-h)^4] \right) \right| \\
        \le& \tfrac{a_3}{6} \left|
        2  \E\left[ \left(\tfrac{\delta_1+\delta_2}{2}-h\right)^3\right] 
        - \tfrac{1}{2} \left( \E[ (\delta_1-h)^3] + \E[ (\delta_2-h)^3] \right) \right| \\
        & +  \left|
        2  \E\left[ \tfrac{|q^{(4)}(\phi_3)|}{24} \left(\tfrac{\delta_1+\delta_2}{2}-h\right)^4\right] 
        + \tfrac{1}{2} \left( \E[\tfrac{|q^{(4)}(\phi_1)|}{24}(\delta_1-h)^4] + \E[\tfrac{|q^{(4)}(\phi_2)|}{24}(\delta_2-h)^4] \right) \right| \\
        \le& \tfrac{a_3}{6} \left|
        2  \E\left[ \left(\tfrac{\delta_1+\delta_2}{2}-h\right)^3\right] 
        - \tfrac{1}{2} \left( \E[(\delta_1-h)^3] + \E[(\delta_2-h)^3] \right) \right| \\
        & + \tfrac{a_4}{24} \left|
        2  \E\left[ \left(\tfrac{\delta_1+\delta_2}{2}-h\right)^4\right] 
        + \tfrac{1}{2} \left( \E[(\delta_1-h)^4] + \E[(\delta_2-h)^4] \right) \right|.
    \end{align*}
    where the second inequality uses the upper bound on $q^{(3)}(\cdot)$ (\Cref{a:smoothness}) and the third inequality uses $(\delta-h)^4$ is non-negative and the last inequality uses the uniform bound on $q^{(4)}(\cdot)$ (\Cref{a:smoothness}). Then
    \begin{align*}
        &|\E[\cL_{\cD_m}^{2} q(s)] - q(s+h)| \\
        \le & \tfrac{a_3}{12} |\E(\delta_1 - h)^3| + \tfrac{a_4}{24}\left( 2\E\left(\tfrac{\delta_1+\delta_2}{2}-h\right)^4 +
        \E(\delta_1-h)^4
        \right) \\
        \le& \tfrac{a_3\sigma_3}{12m^2} + \tfrac{a_4}{24} \left(
        \tfrac{\sigma_4}{4m^3}  + \tfrac{3({2m}-1)\sigma_2^2}{4m^3}        + \tfrac{\sigma_4}{m^3}  + \tfrac{3({m}-1)\sigma_2^2}{m^3} \right) \\
        \le& \tfrac{a_3\sigma_3}{12m^2} + \tfrac{a_4}{24} \left(
        \tfrac{9\sigma_2^2}{2m^2}+ \tfrac{5(\sigma_4 - 3\sigma_2^2)}{4m^3}
        \right) \\
        \le& \tfrac{4a_3\sigma_3 + 9a_4\sigma_2^2}{48m^2} + \tfrac{5a_4}{96} \tfrac{\sigma_4-3\sigma_2^2}{m^3}.
    \end{align*}
    we first use that $\E[(\delta_1-h)^3]=\E[(\delta_2-h)^3]=4\E[(\tfrac{\delta_1+\delta_2}{2}-h)^3]$ and the uses the bound on moments in \Cref{lemma:moment_sigma}. Note that $\E\left(\tfrac{\delta_1+\delta_2}{2}-h\right)^4$ can be seen as the 4th order moments of a batch size of $2m$.
\end{proof}

\begin{prop}\label{prop:higherorder}
    Assume $q\in\cC^{6}$. Then $\cL^{(3)}_{\cD_m}$ as defined below is a third-order extrapolation operator.
    \begin{align*}
        \cL^{(3)}_{\cD_m} q: s &\mapsto (-\tfrac{1}{36} \cL^{(2)}_{\cD_m} +\tfrac{5}{9} \cL^{(2)}_{\cD_{2m}} - \tfrac{3}{4} \cL^{(2)}_{\cD_{3m}} - \tfrac{16}{9} \cL^{(2)}_{\cD_{4m}} + 3 \cL^{(2)}_{\cD_{6m}} ) q(s).
    \end{align*}
\end{prop}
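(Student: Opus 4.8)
The plan is to verify directly that the affine combination
$\cL^{(3)}_{\cD_m} = -\tfrac{1}{36}\cL^{(2)}_{\cD_m}+\tfrac{5}{9}\cL^{(2)}_{\cD_{2m}}-\tfrac{3}{4}\cL^{(2)}_{\cD_{3m}}-\tfrac{16}{9}\cL^{(2)}_{\cD_{4m}}+3\cL^{(2)}_{\cD_{6m}}$
satisfies \Cref{def:high-order} with $k=3$, i.e.\ that $|\E[\cL^{(3)}_{\cD_m}q(s)]-q(s+\E[\delta])|=\cO(m^{-3})$ for every $s$. The starting point is the exact expansion already available from the analysis behind \Cref{prop:2nd_error}: for $q\in\cC^6$ and $h=\E[\delta]$, writing $\mu_j:=\E[(\delta'-h)^j]$ for $\delta'\sim\cD_{rm}$ and using $\mu_1=0$, one has
\[
\E[\cL^{(2)}_{\cD_{rm}}q(s)] = q(s+h) + \tfrac{q^{(3)}(s+h)}{6}\,c_3\,\mu_3^{(r)} + \tfrac{q^{(4)}(s+h)}{24}\,c_4\,\mu_4^{(r)} + \tfrac{q^{(5)}(s+h)}{120}\,c_5\,\mu_5^{(r)} + R^{(r)},
\]
where $\mu_j^{(r)}=\E[(\delta'-h)^j]$ with $\delta'\sim\cD_{rm}$, the $c_j$ are the fixed combinatorial coefficients produced by the $2q(s+\tfrac{\delta_1+\delta_2}{2})-\tfrac12(q(s+\delta_1)+q(s+\delta_2))$ structure (so e.g.\ the cubic term already cancels down to the $\cO(m^{-2})$ size seen in~\eqref{eq:2nd}), and $R^{(r)}=\cO((\mu_4^{(r)}+\cdots))$ collects the sixth-order Taylor remainder. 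The essential quantitative input is \Cref{lemma:moment_sigma}: for $\delta'\sim\cD_{rm}$ we have $\mu_2^{(r)}=\sigma_2/(rm)$, $\mu_3^{(r)}=\sigma_3/(rm)^2$, $\mu_4^{(r)}=\sigma_4/(rm)^3+3(rm-1)\sigma_2^2/(rm)^3$, and generally $\mu_j^{(r)}=\cO((rm)^{-\lceil j/2\rceil})$. The key observation is that the leading pieces of $\mu_3^{(r)},\mu_4^{(r)}$ scale as exact powers of $1/r$: $\mu_3^{(r)}$'s leading term is $\propto r^{-2}$, and $\mu_4^{(r)}$'s leading term (the $3\sigma_2^2/(r^2m^2)$ part, which dominates $\sigma_4/(r^3m^3)$ at order $m^{-2}$) is also $\propto r^{-2}$.

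Next I would carry out the moment-matching bookkeeping. Substituting the lemma into the display above, each $\E[\cL^{(2)}_{\cD_{rm}}q(s)]$ equals $q(s+h)$ plus a term of order $m^{-2}$ whose coefficient is a universal constant times $r^{-2}$ (combining the cubic contribution $\propto \sigma_3 r^{-2}m^{-2}$ and the quartic contribution $\propto \sigma_2^2 r^{-2}m^{-2}$), plus a genuine remainder of order $m^{-3}$. Hence if the weights $(w_1,w_2,w_3,w_4,w_6)=(-\tfrac{1}{36},\tfrac{5}{9},-\tfrac{3}{4},-\tfrac{16}{9},3)$ attached to scales $r\in\{1,2,3,4,6\}$ satisfy
\[
\sum_r w_r = 1 \qquad\text{and}\qquad \sum_r w_r\, r^{-2} = 0,
\]
then the $\cO(m^{-2})$ contributions cancel and $\E[\cL^{(3)}_{\cD_m}q(s)]-q(s+h)=\cO(m^{-3})$, which is exactly the third-order guarantee. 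I would therefore check these two linear identities: $-\tfrac{1}{36}+\tfrac{5}{9}-\tfrac{3}{4}-\tfrac{16}{9}+3 = 1$ and $-\tfrac{1}{36}+\tfrac{5}{9}\cdot\tfrac14-\tfrac{3}{4}\cdot\tfrac19-\tfrac{16}{9}\cdot\tfrac{1}{16}+3\cdot\tfrac{1}{36}=0$ (both are short arithmetic verifications over a common denominator of $36$). One should also confirm that the five lower-order coefficients appearing before $\mu_2$— which vanish identically for every single $\cL^{(2)}_{\cD_{rm}}$ because $\mu_1=0$ and the $\cL^{(2)}$ construction already kills the second-order term — pose no obstruction, so only the $m^{-2}$ coefficient needs to be annihilated by the weights.

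Finally I would assemble the bound: applying the triangle inequality to the affine combination and using \Cref{a:smoothness} (bounds $a_3,a_4,a_5,a_6$ on $|q^{(l)}|$) together with \Cref{lemma:moment_sigma} to control each surviving $\mu_j^{(r)}$ and the sixth-order remainders $R^{(r)}$, all of which are $\cO(m^{-3})$ uniformly in $s$, yields $|\E[\cL^{(3)}_{\cD_m}q(s)]-q(s+\E[\delta])|=\cO(m^{-3})$, establishing that $\cL^{(3)}_{\cD_m}$ is a third-order extrapolation operator per \Cref{def:high-order}. The main obstacle I anticipate is not the convergence rate argument but the careful normalization of constants: one must be sure that the $\cO(m^{-2})$ coefficient is genuinely of the form (universal constant)$\times r^{-2}$ after combining the $\sigma_3$ and $\sigma_2^2$ pieces — i.e.\ that there is only one linear constraint (the $r^{-2}$ one) to kill, not two independent ones — since otherwise five weights with two normalization conditions would be over-determined in a way that the stated coefficients might not satisfy. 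Verifying that both the cubic-origin and quartic-origin $m^{-2}$ terms carry the same $r^{-2}$ scaling (which they do, by \Cref{lemma:moment_sigma}) is the crux that makes the single extra condition $\sum_r w_r r^{-2}=0$ sufficient.
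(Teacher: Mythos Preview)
Your approach is correct and in fact more economical than the paper's. The paper does not verify the stated five-term formula directly; instead it \emph{derives} the coefficients via a two-stage Richardson-style construction. First it builds an intermediate operator $\cF^{(3)}_{\cD_m}=-\tfrac{1}{3}\cL^{(2)}_{\cD_m}+\tfrac{4}{3}\cL^{(2)}_{\cD_{2m}}$ by solving $\alpha_1+\alpha_2=1$, $\alpha_1+\tfrac{1}{4}\alpha_2=0$ to annihilate the third-order Taylor coefficient. Then it combines $\cF^{(3)}$ at scales $m,2m,3m$ via $\cL^{(3)}_{\cD_m}=\tfrac{1}{12}\cF^{(3)}_{\cD_m}-\tfrac{4}{3}\cF^{(3)}_{\cD_{2m}}+\tfrac{9}{4}\cF^{(3)}_{\cD_{3m}}$, solving a $3\times 3$ system that forces the entire fourth-order Taylor coefficient to vanish (treating the $\E[(\delta-h)^4]$ and $(\E[(\delta-h)^2])^2$ pieces as separate constraints scaling as $r^{-3}$ and roughly $r^{-2}$ respectively). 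Expanding this nested combination yields the five weights in the statement.

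The substantive difference is that the paper imposes more cancellation than necessary: it kills the fourth-order coefficient \emph{exactly}, including its $\cO(m^{-3})$ part, which is why it ends up with two fourth-order constraints and hence five scales. Your observation that, at the $\cO(m^{-2})$ level, both the cubic contribution (from $\mu_3\propto M^{-2}$) and the leading quartic contribution (from the $3\sigma_2^2/M^2$ piece of $\mu_4$) scale identically as $r^{-2}$ is exactly right, and it explains why the single condition $\sum_r w_r r^{-2}=0$ suffices for the $\cO(m^{-3})$ guarantee. Your verification is therefore the shortest route to the proposition as stated. What the paper's hierarchical construction buys is provenance --- it shows where such coefficients come from and indicates how to iterate to higher $k$ --- whereas your argument is tailored to checking the given weights and would need a separate construction step if the coefficients had not been handed to you.
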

\begin{proof} Let $h=\E[\delta]$.
    If $q\in\cC^{2k}$, then $q$ has the following Taylor expansion
    \begin{align*}
        \E [q(s+\delta)]&=\underbrace{q(s+h)}_\text{zero order term} + q'(s+h) \E[\delta -h] + \underbrace{\tfrac{q''(s+h)}{2} \E[(\delta-h)^2]}_\text{second order term} + \ldots \\
        &\;\;\;\;\;\;\;\;\;+ \tfrac{q^{(2k-1)} (s+h)}{(2k-1)!} \E[(\delta-h)^{2k-1}] + \tfrac{1}{2k!} \E[q^{(2k)}(\phi) (\delta-h)^{2k}].
    \end{align*}
    \noindent\textbf{Eliminate the third order term in the Taylor expansion.} Consider the following affine combination which 
    \begin{align*}
        \cF^{(3)}_{\cD_m} q: s \mapsto \alpha_1 \cL^{(2)}_{\cD_m} q(s) + \alpha_2 \cL^{(2)}_{\cD_{2m}} q\left( s \right).
    \end{align*}
    We determine $\alpha_1$ and $\alpha_2$ by expanding $\cL^{(2)}_{\cD_m} q(s)$ and $\cL^{(2)}_{\cD_{2m}} q(s)$ and analyze the coefficients of terms:
    \begin{itemize}
        \item \textbf{(Affine).} Taylor expansion of $\cF^{(3)}_{\cD_m} q(s)$ at $s+h$ should have zero order term $q(s+h)$, i.e.
        \begin{align*}
            \alpha_1 q(s+h) + \alpha_2 q(s+h) = q(s+h).
        \end{align*}
        \item \textbf{(Eliminate third term).} Taylor expansion of $\cF^{(3)}_{\cD_m} q(s)$ at $s+h$ should have third order term $\E[(\delta-h)^3]$. That is,
        \begin{align*}
            \alpha_1 \E[ (\delta_1-h)^3] + \alpha_2 \E \left [ \left(\tfrac{\delta_1+\delta_2}{2}-h\right)^3 \right ] = 0.
        \end{align*}
        This is equivalent to 
        \begin{align*}
            \alpha_1 \E[(\delta_1-h)^3] + \tfrac{\alpha_2}{4} \E \left[ \left(\delta_1-h\right)^3 \right ] = 0.
        \end{align*}
    \end{itemize}
    Therefore, $\alpha_1$ and $\alpha_2$ can be determined through the following linear system
    \begin{align*}
            \alpha_1 + \alpha_2=1\\ 
            \alpha_1 + \tfrac{1}{4}\alpha_2=0.
    \end{align*}
    The solution is $\alpha_1=-\tfrac{1}{3}$ and $\alpha_2=\tfrac{4}{3}$.
    % Then consider the Taylor expansion of $\cL^{(3)}_{\cD_m} q(s)$ at
    % $s+h$ with \eqref{eq:1st}, we can 
    % \begin{align*}
    %      |\E[\cL^{(3)}_{\cD_m} q(s)] - q(s+h)|  
    %      % \left| \tfrac{q^{(4)}(\phi_\delta) }{4!} \E[\cL^{(3)}_{\cD_m} (\delta-h)^{4} ]   \right| \\
    %      & \lesssim \left| \tfrac{q^{(4)}(\phi_\delta) }{4!} \E[(\delta-h)^{4} ]   \right| \\
    %      & \lesssim a_4 \left(\tfrac{\sigma_4+3(m-1)\sigma_2^2}{m^3}  \right)
    % \end{align*}
    % where the first inequality uses the fact that $\cL^{(3)}_{\cD_m}$ is an affine mapping and the last inequality uses \Cref{lemma:moment_sigma}. Therefore, $\cL^{(3)}_{\cD_m}$ is a 3rd-order extrapolation operator.

    \noindent\textbf{For $k=3$ order extrapolation,} consider the following
    \begin{align*}
        \cL^{(3)}_{\cD_m} q: s \mapsto \alpha'_1 \cF^{(3)}_{\cD_m} q(s) + \alpha'_2 \cF^{(3)}_{\cD_{2m}} q\left( s \right) + \alpha'_3 \cF^{(3)}_{\cD_{3m}} q\left( s \right).
    \end{align*}
    We determine $\alpha'_1$, $\alpha'_2$ and $\alpha'_3$ by satisfying the following two conditions
    \begin{itemize}
        \item \textbf{(Affine).} Taylor expansion of $\cL^{(3)}_{\cD_m} q(s)$ at $s+h$ should have zero order term $q(x+h)$, i.e.
        \begin{align*}
            (\alpha'_1 + \alpha'_2 + \alpha'_3) q(x+h) = q(x+h).
        \end{align*}
        \item Taylor expansion of $\cL^{(3)}_{\cD_m} q(s)$ at $s+h$ should have 4th order term $\E[(\delta-h)^4]$. That is
        \begin{align*}
            \alpha'_1 \E [(\delta_1-h)^4] + \alpha'_2 \E \left [ \left(\tfrac{\delta_1+\delta_2}{2}-h\right)^4 \right ] + \alpha'_3 \E \left [\left(\tfrac{\delta_1+\delta_2+\delta_3}{3}-h\right)^4 \right ] = 0.
        \end{align*}
        This is equivalent to 
        \begin{align*}
            \left(\alpha'_1+ \tfrac{\alpha'_2}{8}+ \tfrac{\alpha'_3}{27}\right) \E [(\delta_1-h)^4]   &= 0 \\
            \left(\tfrac{3}{8}\alpha'_2 + \tfrac{2}{9}\alpha'_2 \right) \left( \E [(\delta_1-h)^2] \right)^2 &= 0.
        \end{align*}
    \end{itemize}
    Therefore, $\alpha'_1$, $\alpha'_2$ and $\alpha'_3$ can be determined through the following linear system
    \begin{align*}
            \alpha'_1 + \alpha'_2 + \alpha'_3&=1\\ 
            \alpha'_1 + \tfrac{1}{8}\alpha'_2+\tfrac{1}{27}\alpha'_3&=0 \\
            \alpha'_1 + \tfrac{3}{8}\alpha'_2+\tfrac{2}{9}\alpha'_3&=0.
    \end{align*}
    The solution is $\alpha'_1=\tfrac{1}{12}$, $\alpha'_2=-\tfrac{4}{3}$ and $\alpha'_3=\tfrac{9}{4}$. Then consider the Taylor expansion of $\cL^{(3)}_{\cD_m} q(s)$ at
    $s+h$ with \eqref{eq:1st}, we can 
    \begin{align*}
         |\E[\cL^{(3)}_{\cD_m} q(s)] - q(s+h)| 
         % &= \left| \tfrac{q^{(5)}(\phi_\delta) }{5!} \E[\cL^{(4)}_{\cD_m} (\delta-h)^{5} ]   \right| \\
         &\lesssim \left| q^{(5)}(s+h) \E[ (\delta-h)^{5} ]   \right| + \left| \E[q^{(6)}(\phi_\delta) (\delta-h)^{6} ]   \right| \lesssim \cO((a_5+a_6) m^{-3})
    \end{align*}
    where the first inequality uses the fact that $\cL^{(3)}_{\cD_m}$ is an affine mapping and the last inequality uses \Cref{lemma:moment_sigma}. Therefore, $\cL^{(3)}_{\cD_m}$ is a 3rd-order extrapolation operator. We can expand it into
    \begin{align*}
        \cL^{(3)}_{\cD_m} q: s &\mapsto \tfrac{1}{12} \left(-\tfrac{1}{3} \cL^{(2)}_{\cD_m} q(s) + \tfrac{4}{3} \cL^{(2)}_{\cD_{2m}} q\left( s \right)\right) 
        - \tfrac{4}{3} \left(-\tfrac{1}{3} \cL^{(2)}_{\cD_{2m}} q(s) + \tfrac{4}{3} \cL^{(2)}_{\cD_{4m}} q\left( s \right)\right) \\
        &\;\;\;\;\;\;\;\;\; + \tfrac{9}{4} \left(-\tfrac{1}{3} \cL^{(2)}_{\cD_{3m}} q(s) + \tfrac{4}{3} \cL^{(2)}_{\cD_{6m}} q\left( s \right)\right) \\
        &=(-\tfrac{1}{36} \cL^{(2)}_{\cD_m} +\tfrac{5}{9} \cL^{(2)}_{\cD_{2m}} - \tfrac{3}{4} \cL^{(2)}_{\cD_{3m}} - \tfrac{16}{9} \cL^{(2)}_{\cD_{4m}} + 3 \cL^{(2)}_{\cD_{6m}} ) q(s).
    \end{align*}
\end{proof}

\begin{restatable}[Variance Bound]{lem}{extrapolationvariance}
    \label{lemma:extrapolation:variance}
    Assume that $q: \R^p \rightarrow \R^\ell$ is in $\cC^4$ and $\cD_m$ is the distribution in \Cref{a:oracle}. Suppose that the variance of $q(\ss+\boldsymbol\delta)$ is bounded as
    $$\E[\norm{q(\ss+\boldsymbol\delta) - \E[q(\ss+\boldsymbol\delta)]}_2^2] \le \tfrac{V^2}{m} + C.$$
    Then the variance of extrapolation $\cL_{\cD_m}^{(2)} q(\ss)$ is upper bounded by
    \begin{align*}
         \E \left [\norm{\cL_{\cD_m}^{(2)} q(\ss) - \E[\cL_{\cD_m}^{(2)} q(\ss)] }_2^2 \right]
        \le 14(\tfrac{V^2}{m} + C).
    \end{align*}
\end{restatable}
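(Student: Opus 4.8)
The plan is to derive this directly from the explicit form of $\cL_{\cD_m}^{(2)}$ together with the postulated variance bound on $q(\ss+\boldsymbol\delta)$; notably, the $\cC^4$ hypothesis plays no role in the variance estimate (it is only needed for the companion bias bound). Throughout, let $\boldsymbol\delta_1,\boldsymbol\delta_2$ be the two i.i.d.\ draws from $\cD_m$ entering $\cL_{\cD_m}^{(2)}q(\ss)$, and abbreviate $A:=q(\ss+\tfrac{\boldsymbol\delta_1+\boldsymbol\delta_2}{2})$, $B:=q(\ss+\boldsymbol\delta_1)$, $C:=q(\ss+\boldsymbol\delta_2)$, so that $\cL_{\cD_m}^{(2)}q(\ss)=2A-\tfrac{1}{2}B-\tfrac{1}{2}C$ and hence $\cL_{\cD_m}^{(2)}q(\ss)-\E[\cL_{\cD_m}^{(2)}q(\ss)]=2(A-\E[A])-\tfrac{1}{2}(B-\E[B])-\tfrac{1}{2}(C-\E[C])$.

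The first step is to bound the variance of each constituent term using the hypothesis. For $B$ and $C$ this is immediate: since $\boldsymbol\delta_1,\boldsymbol\delta_2\sim\cD_m$, the assumption gives $\E\norm{B-\E[B]}_2^2\le \tfrac{V^2}{m}+C$ and symmetrically $\E\norm{C-\E[C]}_2^2\le \tfrac{V^2}{m}+C$. For $A$ I would invoke the observation recorded just after \Cref{def:2nd-order} that $\tfrac{\boldsymbol\delta_1+\boldsymbol\delta_2}{2}$ is distributed as a single draw from $\cD_{2m}$; applying the variance hypothesis at batch size $2m$ then yields $\E\norm{A-\E[A]}_2^2\le \tfrac{V^2}{2m}+C\le \tfrac{V^2}{m}+C$.

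The second step is to recombine. Using the elementary inequality $\norm{u+v+w}_2^2\le 3(\norm{u}_2^2+\norm{v}_2^2+\norm{w}_2^2)$ with $u=2(A-\E[A])$, $v=-\tfrac{1}{2}(B-\E[B])$, $w=-\tfrac{1}{2}(C-\E[C])$ and taking expectations, the variance of $\cL_{\cD_m}^{(2)}q(\ss)$ is at most $3(4+\tfrac{1}{4}+\tfrac{1}{4})(\tfrac{V^2}{m}+C)=\tfrac{27}{2}(\tfrac{V^2}{m}+C)$, which is $\le 14(\tfrac{V^2}{m}+C)$, the claimed bound.

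I do not foresee a real obstacle: the only point requiring care is the $A$-term, where the variance assumption must be read as holding at batch size $2m$ — which is natural since it is a structural bound, and in any event the $\cD_{2m}$-variance is dominated by the stated $\cD_m$-bound. The constant $14$ is merely a convenient round number; a sharper estimate of the form $\tfrac{5V^2}{m}+9C$ is available by grouping $B$ and $C$ and using their independence to drop the cross term, but this refinement is not needed.
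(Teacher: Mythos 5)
Your proof is correct and follows essentially the same route as the paper's: decompose $\cL_{\cD_m}^{(2)}q(\ss)-\E[\cL_{\cD_m}^{(2)}q(\ss)]$ into the three centered terms, apply $\|u+v+w\|_2^2\le 3(\|u\|_2^2+\|v\|_2^2+\|w\|_2^2)$, and bound each term via the variance hypothesis together with the observation that $\tfrac{\boldsymbol\delta_1+\boldsymbol\delta_2}{2}\sim\cD_{2m}$. The only cosmetic difference is that the paper keeps the sharper $\tfrac{V^2}{2m}+C$ on the $A$-term to land at $\tfrac{15V^2}{2m}+\tfrac{27C}{2}$, whereas you relax it to $\tfrac{V^2}{m}+C$ and get the uniform constant $\tfrac{27}{2}$; both are dominated by $14(\tfrac{V^2}{m}+C)$, and your remark that the hypothesis is implicitly read as structural across batch sizes applies equally to the paper's argument.
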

\begin{proof}
% Like in Lemmas~\ref{prop:1st_error} and~\ref{prop:2nd_error}  without loss of generality, we can again assume $\E[\boldsymbol\delta]=0$.
Let us use the definition of $\cL_{\cD_m}^{(2)} q(\ss)$:
\begin{align*}
    & \E \left [\norm{\cL_{\cD_m}^{(2)} q(\ss) - \E[\cL_{\cD_m}^{(2)} q(\ss)] }_2^2 \right ] \\
    &\le \E \left [\norm{2 q(\ss+\tfrac{\boldsymbol\delta_1+\boldsymbol\delta_2}{2})- \tfrac{q(\ss + \boldsymbol\delta_1 ) +  q(\ss + \boldsymbol\delta_2)}{2} - \E \left [2 q(\ss+\tfrac{\boldsymbol\delta_1+\boldsymbol\delta_2}{2})- \tfrac{q(\ss + \boldsymbol\delta_1 ) +  q(\ss + \boldsymbol\delta_2)}{2} \right] }_2^2 \right ] \\
    &\le 3 \E \left [\norm{2 q(\ss+\tfrac{\boldsymbol\delta_1+\boldsymbol\delta_2}{2}) - \E \left [2 q(\ss+\tfrac{\boldsymbol\delta_1+\boldsymbol\delta_2}{2}) \right ] }_2^2 \right ]
    + 3 \E \left [\norm{\tfrac{q(\ss + \boldsymbol\delta_1)}{2} - \E \left [\tfrac{q(\ss + \boldsymbol\delta_1)}{2} \right] }_2^2 \right ]  \\
    &+ 3 \E[\norm{\tfrac{q(\ss + \boldsymbol\delta_2)}{2} - \E \left [\tfrac{q(\ss + \boldsymbol\delta_2)}{2} \right] }_2^2  ] \\
     &\le 12 (\tfrac{V^2}{2m}+ C) + \tfrac{3}{4} (\tfrac{V^2}{m}+ C) + \tfrac{3}{4} (\tfrac{V^2}{m}+ C) \\
     &= \tfrac{15V^2}{2m} + \tfrac{27C}{2}.
\end{align*}
This completes the proof.
\end{proof}
%%%%%%%%%%%%%%%%%%%%% Theory %%%%%%%%%%%%%%%%%%%%%

\section{Stationary Point Convergence Proofs from~\Cref{sec:CSO} (CSO)} \label{app:cso}
% \noindent\textbf{Related Assumptions.} Lemma 4.2 in \citep{zhang2021multilevel} show that assumptions in \Cref{a:smoothness:fg} ensures that
% \begin{itemize}
%     \item $F$ is $C_F$-Lipschitz continuous with $C_F=C_fC_g$.
%     \item $F$ is $L_F$-Lipschitz smooth with $L_F=L_gC_f + C_g^2 L_f$.
% \end{itemize}

%     Suppose \Cref{a:smoothness:fg} holds true. Let $G^{t+1}$ be the gradient update computed at iteration $t$. By taking $\gamma\le\frac{1}{2L_F}$, we have
%     \begin{align*}
%         \E F(\xx^{t+1}) &\le F(\xx^t) - \tfrac{\gamma}{2}\norm{\nabla F(\xx^t)}_2^2 - \tfrac{\gamma}{4} \norm{\E[G^{t+1}]}_2^2
%         + \tfrac{\gamma}{2} \cE_\text{bias}^{t+1} + \tfrac{L_F\gamma^2}{2} \cE_\text{var}^{t+1}.
%     \end{align*}
%     where
%     \begin{align*}
%         \cE^{t+1}_\text{var}=\E\norm{G^{t+1} - \E[G^{t+1}]}_2^2,\quad \cE^{t+1}_\text{bias}=\norm{\nabla F(\xx^{t+1}) - \E[G^{t+1}]}_2^2.
%     \end{align*}
% \end{lem}
In this section, we provide the convergence proofs for the~\ref{eq:cso} problem.  We start by establishing some helpful lemmas in~\Cref{app:help}. In~\Cref{app:bsgd}, we reanalyze the \BSGD algorithm to obtain explicit bias and variance bounds, which are then useful when we analyze~\EBSGD in~\Cref{app:ebsgd}. Similarly, we reanalyze~\BSpiderBoost in~\Cref{app:bspiderboost} and use the resulting bias and variance bounds for the analysis of~\EBSpiderBoost in~\Cref{app:ebspiderboost}. 

Note that throughout our analyses, we define $\E^{t+1}[\cdot|t]$ as the expectation of randomness at time $t+1$ conditioning on the randomness until time $t$. When there is no ambiguity, we use $\E[\cdot]$ instead of $\E^{t+1}[\cdot|t]$.
\subsection{Helpful Lemmas} \label{app:help}
\begin{restatable}[Sufficient Decrease]{lem}{sd}
    \label{lemma:sd}
    Suppose \Cref{a:smoothness:fg} holds true and $\gamma\le\frac{1}{2L_F}$ then
$$    \norm{\nabla F(\xx^t)}_2^2 \le \tfrac{2(\E[F(\xx^{t+1})]-F(\xx^t))}{\gamma}
        + L_F\gamma \cE_\text{var}^{t+1}+ \cE_\text{bias}^{t+1},
$$   
where $\E[\cdot]$ denote conditional expectation over the randomness at time $t$ conditioned on all of the past randomness until time $t$.
\end{restatable}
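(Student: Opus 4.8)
This is the standard ``descent lemma'' argument for biased-gradient SGD, so the plan is to start from $L_F$-smoothness of $F$ (which follows from Assumption~\ref{a:smoothness:fg} via the quoted result of \citet{zhang2021multilevel}) and expand around the update $\xx^{t+1} = \xx^t - \gamma G^{t+1}$. The descent inequality gives
\begin{align*}
F(\xx^{t+1}) \le F(\xx^t) - \gamma \inp{\nabla F(\xx^t)}{G^{t+1}} + \tfrac{L_F\gamma^2}{2}\norm{G^{t+1}}_2^2.
\end{align*}
Taking the conditional expectation $\E[\cdot]$ (over the randomness at time $t+1$ given the past), the cross term becomes $-\gamma\inp{\nabla F(\xx^t)}{\E[G^{t+1}]}$, and for the quadratic term I would use the bias--variance split $\E[\norm{G^{t+1}}_2^2] = \norm{\E[G^{t+1}]}_2^2 + \cE_\text{var}^{t+1}$.

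Next I would handle the inner product. Writing $\E[G^{t+1}] = \nabla F(\xx^t) - (\nabla F(\xx^t) - \E[G^{t+1}])$ and using $2\inp{\aa}{\bb} \ge -\norm{\aa}_2^2 - \norm{\bb}_2^2$ (or equivalently $-\inp{\aa}{\bb}\le \tfrac12\norm{\aa-\bb}_2^2 - \tfrac12\norm\aa_2^2 - \tfrac12\norm\bb_2^2$ type manipulations), I get a lower bound on $\inp{\nabla F(\xx^t)}{\E[G^{t+1}]}$ in terms of $\norm{\nabla F(\xx^t)}_2^2$, $\norm{\E[G^{t+1}]}_2^2$, and the bias $\cE_\text{bias}^{t+1} = \norm{\nabla F(\xx^t) - \E[G^{t+1}]}_2^2$. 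Concretely, $\inp{\nabla F(\xx^t)}{\E[G^{t+1}]} \ge \tfrac12\norm{\nabla F(\xx^t)}_2^2 + \tfrac12\norm{\E[G^{t+1}]}_2^2 - \tfrac12\cE_\text{bias}^{t+1}$. Plugging this in, the $+\tfrac{L_F\gamma^2}{2}\norm{\E[G^{t+1}]}_2^2$ term from the quadratic part is dominated by the $-\tfrac{\gamma}{2}\norm{\E[G^{t+1}]}_2^2$ term precisely when $\gamma \le 1/(2L_F)$, so those $\norm{\E[G^{t+1}]}_2^2$ contributions cancel with a favorable sign and can be dropped. Rearranging the surviving terms and multiplying through by $2/\gamma$ yields
\begin{align*}
\norm{\nabla F(\xx^t)}_2^2 \le \tfrac{2(F(\xx^t) - \E[F(\xx^{t+1})])}{\gamma} + L_F\gamma\,\cE_\text{var}^{t+1} + \cE_\text{bias}^{t+1},
\end{align*}
which is the claim (up to the sign convention on $\E[F(\xx^{t+1})] - F(\xx^t)$).

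There is no real obstacle here — this is a routine computation. The only points requiring a little care are: (i) correctly choosing the Young's-inequality split so that the step-size condition $\gamma \le 1/(2L_F)$ is exactly what is needed to absorb the $\norm{\E[G^{t+1}]}_2^2$ terms, and (ii) being careful that $\E[\cdot]$ denotes the conditional expectation at time $t+1$, so that $\xx^t$ and $\nabla F(\xx^t)$ are treated as deterministic inside it while $G^{t+1}$ and $\xx^{t+1}$ are random. No further assumptions beyond \Cref{a:smoothness:fg} (for smoothness of $F$) are invoked; in particular the bias and variance are left abstract and will be bounded separately for each algorithm in the later lemmas.
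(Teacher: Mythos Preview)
Your proposal is correct and matches the paper's proof essentially line for line: smoothness, bias--variance split of $\E[\norm{G^{t+1}}_2^2]$, the polarization identity for the cross term, and then using $\gamma\le 1/(2L_F)$ to drop the $\norm{\E[G^{t+1}]}_2^2$ contribution. Two small remarks: the relation $\inp{\nabla F(\xx^t)}{\E[G^{t+1}]} = \tfrac12\norm{\nabla F(\xx^t)}_2^2 + \tfrac12\norm{\E[G^{t+1}]}_2^2 - \tfrac12\cE_\text{bias}^{t+1}$ is an \emph{equality} (polarization), not a Young-type inequality, and that is exactly what the paper uses; and you are right that the statement as printed has the sign of $\E[F(\xx^{t+1})]-F(\xx^t)$ flipped --- the paper's own derivation (and its subsequent use in \Cref{lemma:sd:T}) confirms your version.
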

\begin{proof}
In this proof, we use $\E[\cdot]$ to denote conditional expectation over the randomness at time $t$ conditioned on all the past randomness until time $t$.
% \begin{align*}
% \E[\cdot]=\E_{\chi_\text{out}}\E_{\chi_\text{in}}\E_{\cB|\chi_\text{out}}\underbrace{\E_{\cS|\xi_1,\chi_\text{in}}\E_{\cS|\xi_{|\cB|},\chi_\text{in}}}_{\cB=\{\xi_1,\ldots,\xi_{|\cB|} \} }[\cdot|t]    
% \end{align*}

Let us expand $F(\xx^{t+1})$ and apply the $L_F$-smoothness of $F$
\begin{align*}
    \E[F(\xx^{t+1})] &\le F(\xx^t) - \gamma \E[\langle \nabla F(\xx^t), G^{t+1} \rangle] + \tfrac{L_F\gamma^2}{2} \E[\norm{G^{t+1}}_2^2].
\end{align*}
Since $\E[\norm{G^{t+1}}_2^2] = \E[\norm{G^{t+1} - \E[G^{t+1}] }_2^2] + \norm{ \E[G^{t+1}] }_2^2 = \cE_\text{var}^{t+1} + \norm{ \E[G^{t+1}] }_2^2$, then
\begin{align*}
    \E[F(\xx^{t+1})] &\le F(\xx^t) - \gamma \E[\langle \nabla F(\xx^t), G^{t+1} \rangle]
    + \tfrac{L_F\gamma^2}{2} ( \cE_\text{var}^{t+1} + \norm{ \E[G^{t+1}] }_2^2).
\end{align*}
Expand the middle term with
\begin{align*}
    - \gamma \E[\langle \nabla F(\xx^t), G^{t+1} \rangle]
    &= - \tfrac{\gamma}{2} \norm{\nabla F(\xx^t)}_2^2 - \tfrac{\gamma}{2} \norm{\E[G^{t+1}]}_2^2
    + \tfrac{\gamma}{2}\norm{\nabla F(\xx^t) - \E[G^{t+1}] }_2^2 \\
    &= - \tfrac{\gamma}{2} \norm{\nabla F(\xx^t)}_2^2 - \tfrac{\gamma}{2} \norm{\E[G^{t+1}]}_2^2
    + \tfrac{\gamma}{2} \cE_\text{bias}^{t+1}.
\end{align*}
Combine with the inequality
\begin{align*}
    \E[F(\xx^{t+1})] &\le F(\xx^t) - \tfrac{\gamma}{2}\norm{\nabla F(\xx^t)}_2^2 - \tfrac{\gamma}{2} (1-L_F\gamma)\norm{\E[G^{t+1}]}_2^2
    + \tfrac{\gamma}{2} \cE_\text{bias}^{t+1} + \tfrac{L_F\gamma^2}{2} \cE_\text{var}^{t+1} .
\end{align*}
By taking $\gamma\le\frac{1}{2L_F}$, we have that
\begin{align*}
    \E[F(\xx^{t+1})] &\le F(\xx^t) - \tfrac{\gamma}{2}\norm{\nabla F(\xx^t)}_2^2 - \tfrac{\gamma}{4} \norm{\E[G^{t+1}]}_2^2
    + \tfrac{\gamma}{2} \cE_\text{bias}^{t+1} + \tfrac{L_F\gamma^2}{2} \cE_\text{var}^{t+1}.
\end{align*}
Re-arranging the terms we get the desired inequality.
\end{proof}

A consequence of~\Cref{lemma:sd} is the following result.
\begin{lem}[Descent Lemma]\label{lemma:sd:T}
    Suppose \Cref{a:smoothness:fg} holds true. By taking $\gamma\le\frac{1}{2L_F}$, we have,
    \begin{multline*}
\tfrac{1}{T} \tsum_{t=0}^{T-1} \E\left[\norm{\nabla F(\xx^t)}_2^2\right]
+ \tfrac{1}{2} \tfrac{1}{T} \tsum_{t=0}^{T-1} \E\left[\norm{\E^t[G^{t+1}|t]}_2^2 \right] \\
\le \tfrac{2(F(\xx^{0}) - F^\star)}{\gamma T}
    +  \tfrac{1}{T} \tsum_{t=0}^{T-1} \E[\cE_\text{bias}^{t+1}]
    + \tfrac{L_F\gamma}{T} \tsum_{t=0}^{T-1} \E[\cE_\text{var}^{t+1}]
\end{multline*}
where the expectation is taken over all randomness from $t=0$ to $T$.
\end{lem}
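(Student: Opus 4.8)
The plan is to obtain \Cref{lemma:sd:T} as a straightforward telescoping consequence of the one-step estimate that appears inside the proof of \Cref{lemma:sd}. Rather than starting from the final rearranged form of \Cref{lemma:sd}, I would back up one line and start from the inequality established there: for every $t$ and every $\gamma \le 1/(2L_F)$,
\begin{align*}
    \E^{t+1}[F(\xx^{t+1})\mid t] \le F(\xx^t) - \tfrac{\gamma}{2}\norm{\nabla F(\xx^t)}_2^2 - \tfrac{\gamma}{4}\norm{\E^{t+1}[G^{t+1}\mid t]}_2^2 + \tfrac{\gamma}{2}\cE_\text{bias}^{t+1} + \tfrac{L_F\gamma^2}{2}\cE_\text{var}^{t+1},
\end{align*}
where each conditional expectation is taken over the fresh randomness drawn at step $t$. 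Moving the gradient and mean-gradient terms to the left-hand side and dividing by $\gamma/2$ gives the per-step bound
\begin{align*}
    \norm{\nabla F(\xx^t)}_2^2 + \tfrac{1}{2}\norm{\E^{t+1}[G^{t+1}\mid t]}_2^2 \le \tfrac{2\big(F(\xx^t) - \E^{t+1}[F(\xx^{t+1})\mid t]\big)}{\gamma} + \cE_\text{bias}^{t+1} + L_F\gamma\,\cE_\text{var}^{t+1}.
\end{align*}

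Next I would take the expectation over all the randomness from iteration $0$ through iteration $t$ and invoke the tower property, so that $\E\big[\E^{t+1}[F(\xx^{t+1})\mid t]\big] = \E[F(\xx^{t+1})]$ and the bias/variance terms collapse to their unconditional versions $\E[\cE_\text{bias}^{t+1}]$ and $\E[\cE_\text{var}^{t+1}]$; the term $\norm{\E^{t+1}[G^{t+1}\mid t]}_2^2$ remains a random variable and simply acquires an outer expectation, which is precisely the quantity $\E[\norm{\E^t[G^{t+1}\mid t]}_2^2]$ written in the statement. Summing the resulting inequalities over $t=0,1,\dots,T-1$ telescopes the function values, $\sum_{t=0}^{T-1}\big(\E[F(\xx^t)] - \E[F(\xx^{t+1})]\big) = F(\xx^0) - \E[F(\xx^T)]$, and \Cref{a:lowerbound:F} (i.e.\ $\E[F(\xx^T)] \ge F^\star$) bounds this by $F(\xx^0) - F^\star$. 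Dividing by $T$ then yields the claimed inequality verbatim.

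I do not expect any real obstacle here: this is the textbook descent-plus-telescoping argument, and the only points to be careful about are the bookkeeping of which randomness each expectation integrates over (so the tower rule is applied correctly) and retaining the mean-gradient term $\tfrac12\norm{\E^{t+1}[G^{t+1}\mid t]}_2^2$ rather than discarding it, since it is needed downstream in the \BSpiderBoost-type analyses. Everything else reduces to the already-proved \Cref{lemma:sd}.
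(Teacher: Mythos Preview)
Your proposal is correct and follows essentially the same route as the paper: start from the one-step inequality derived inside the proof of \Cref{lemma:sd}, apply the tower property to pass to full expectations, sum over $t=0,\dots,T-1$, telescope the function values, and invoke the lower bound $F^\star$. The only cosmetic difference is that you rearrange before summing while the paper sums before rearranging.
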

\begin{proof}
We denote the conditional expectation at time $t$ in the descent lemma (\Cref{lemma:sd}) as $\E^{t+1} [\cdot |t]$ which conditions on all past randomness until time $t$.
Then the descent lemma can be written as 
\begin{align*}
    \E^{t+1}[F(\xx^{t+1})|t] &\le F(\xx^t) - \tfrac{\gamma}{2} \norm{\nabla F(\xx^t)}_2^2 - \tfrac{\gamma}{4} \norm{\E^{t+1}[G^{t+1}|t]}_2^2
    + \tfrac{\gamma}{2} \E^{t+1}[\cE_\text{bias}^{t+1}|t] + \tfrac{L_F\gamma^2}{2} \E^{t+1}[\cE_\text{var}^{t+1}|t].
\end{align*}
If we additionally consider the randomness at time $t-1$, and apply $\E^{t}[\cdot|t-1]$ to both sides 
\begin{align*}
    \E^t\left[\E^{t+1}[F(\xx^{t+1})|t] | t-1\right] 
    \le&  \E^t[F(\xx^t)|t-1] - \tfrac{\gamma}{2}\E[\norm{\nabla F(\xx^t)}_2^2|t-1] \\
     &- \E^t \left[\tfrac{\gamma}{4} \norm{\E^{t+1}[G^{t+1}|t]}_2^2 |t-1\right] 
     + \tfrac{\gamma}{2} \E^t\left[\E^{t+1}[\cE_\text{bias}^{t+1} | t] | t-1 \right]  \\
      &+ \tfrac{L_F\gamma^2}{2} \E^{t-1]}\left[ \E^{t+1}[\cE_\text{var}^{t+1}|t] | t-1 \right].
\end{align*}
By the law of iterative expectations, we have $\E^t\left[\E^{t+1}[\cdot|t] | t-1\right] =\E^t\E^{t+1}\left[ \cdot | t-1\right] $ 
\begin{align*}
    \E^t[\E^{t+1}\left[F(\xx^{t+1}) | t-1\right]] 
    \le& \E^t[F(\xx^t)|t-1] - \tfrac{\gamma}{2}\E[\norm{\nabla F(\xx^t)}_2^2|t-1] \\
    &  - \E^t \left[\tfrac{\gamma}{4} \norm{\E^{t+1}[G^{t+1}|t]}_2^2 |t-1\right] 
    + \tfrac{\gamma}{2} \E^t\left[\E^{t+1}\left[\cE_\text{bias}^{t+1} | t-1 \right]\right] \\
    &+ \tfrac{L_F\gamma^2}{2} \E^t[\E^{t}\left[ \cE_\text{var}^{t+1} | t-1 \right]].
\end{align*}
Similarly, we can apply  $\E^{t-1}[\cdot|t-2]$, $\E^{t-2}[\cdot|t-3]$, $\ldots$, $\E^{2}[\cdot|1]$ and finally $\E^{1}[\cdot]$
\begin{align*}
    \E^{1}\ldots[\E^{t+1}\left[F(\xx^{t+1}) \right] ]
    \le & \E^{1}\ldots[\E^t [F(\xx^t)]] - \tfrac{\gamma}{2}\E^{1}\dots[\E^t[\norm{\nabla F(\xx^t)}_2^2]] \\
     - \E^{1}\dots[\E^t [\tfrac{\gamma}{4} \norm{\E^{t+1}[G^{t+1}|t]}_2^2]] + \tfrac{\gamma}{2} \E^{1}\ldots[\E^{t+1}\left[\cE_\text{bias}^{t+1} \right]] \\
     &+ \tfrac{L_F\gamma^2}{2} \E^{1}\ldots[\E^{t}\left[ \cE_\text{var}^{t+1}  \right]].
\end{align*}
Now that both sides of the inequality have no randomness, we can simplify the notation by applying $\E^{t+1}\ldots[\E^t[\cdot]]$ to both sides and by denoting 
\begin{align*}
    \E[\cdot]=\E^1\ldots[\E^{+1}[\cdot]].
\end{align*}
Then the descent lemma becomes
\begin{align*}
    \E[F(\xx^{t+1})] & \le \E[F(\xx^t)] - \tfrac{\gamma}{2}\E[\norm{\nabla F(\xx^t)}_2^2] - \tfrac{\gamma}{4} \E[\norm{\E^{t+1}[G^{t+1}|t]}_2^2]
    + \tfrac{\gamma}{2} \E[\cE_\text{bias}^{t+1}] + \tfrac{L_F\gamma^2}{2} \E[\cE_\text{var}^{t+1}].
\end{align*}
Now we can sum the descent lemmas from $t=0$ to $T-1$
\begin{align*}
    \tsum_{t=0}^{T-1}
    \E[F(\xx^{t+1})] &\le \tsum_{t=0}^{T-1} \E[F(\xx^t)] - \tfrac{\gamma}{2} \tsum_{t=0}^{T-1} \E\left[\norm{\nabla F(\xx^t)}_2^2\right] \\
    & \;\;\;\;\;\;\;\;\; - \tfrac{\gamma}{4} \tsum_{t=0}^{T-1} \E\left[\norm{\E^{t+1}[G^{t+1}|t]}_2^2 \right]
    + \tfrac{\gamma}{2} \tsum_{t=0}^{T-1} \E[\cE_\text{bias}^{t+1}] + \tfrac{L_F\gamma^2}{2} \tsum_{t=0}^{T-1} \E[\cE_\text{var}^{t+1}].
\end{align*}
After simplification and division by $T$, we get
\begin{align*}
&\tfrac{1}{T} \tsum_{t=0}^{T-1} \E\left[\norm{\nabla F(\xx^t)}_2^2\right]
+ \tfrac{1}{2} \tfrac{1}{T} \tsum_{t=0}^{T-1} \E\left[\norm{\E^{t+1}[G^{t+1}|t]}_2^2 \right] \\
& \le \tfrac{2(\E[F(\xx^{T})] - \E[F(\xx^{0})])}{\gamma T}
    + \tfrac{\gamma}{2} \tfrac{1}{T} \tsum_{t=0}^{T-1} \E[\cE_\text{bias}^{t+1}]
    + \tfrac{L_F\gamma^2}{2} \tfrac{1}{T} \tsum_{t=0}^{T-1} \E[\cE_\text{var}^{t+1}] \\
& \le \tfrac{2(\E[F(\xx^{T})] - F^\star)}{\gamma T}
    +  \tfrac{1}{T} \tsum_{t=0}^{T-1} \E[\cE_\text{bias}^{t+1}]
    + \tfrac{L_F\gamma}{T} \tsum_{t=0}^{T-1} \E[\cE_\text{var}^{t+1}].
\end{align*}
\end{proof}

% A simple consequence of \Cref{prop:2nd_error} in our setting is the following result.
% \extrapolationcso*
% \begin{proof} %Note that $\sigma_2:=\sigma_g^2$
% This follows directly from \Cref{prop:2nd_error}.
% \end{proof}

The following corollary is a consequence of~\Cref{prop:2nd_error}.
\begin{restatable}{cor}{extrapolationcso} \label{cor:extrapolation:cso}
    Assume $\nabla f_\xi$ in \ref{eq:cso} satisfies
    \begin{align*}
        a_l := \sup_{\xx} \sup_{\xi} \norm{\nabla^{l+1} f_\xi(\xx)  }_2<\infty, \qquad l=1,2,3,4.
    \end{align*}
    Let's further assume that the higher order moments of $g_\eta(\cdot)$ are bounded, 
    \begin{align*}
        \sigma_k & = \sup_{\xx} \sup_{\xi} \E_{\eta|\xi} \left [ \tsum_{i=1}^p \left[g_\eta(\xx) - \E_{\eta|\xi} [g_\eta(\xx)]\right]_i^k \right ]<\infty, \qquad k=1,2,3,4
    \end{align*}
    where $[\cdot]_i$ refers to the $i$-th coordinate of a vector.
    Consider the $\cL^{(2)}_{\cD^{t+1}_{\gg,\xi}} \nabla f_{\xi} (0)$ defined in~\eqref{eqn:lf}, then
    $$\norm{\E\left [\cL^{(2)}_{\cD^{t+1}_{\gg,\xi}} \nabla f_{\xi} (0) \right ] - \nabla f_\xi(\E_{\eta|\xi}[g_{\eta}(\xx^t)])}_2^2 \le \tfrac{C_e^2}{m^4} \qquad\forall\xi,$$    
    where $C_e^2(f;g):=\left(\tfrac{8a_3\sigma_3 + 18a_4\sigma_2^2+5a_4\sigma_4}{96}\right)^2$.
\end{restatable}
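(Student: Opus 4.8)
The plan is to read this corollary as the vector-valued instance of the second-order guarantee, Proposition~\ref{prop:2nd_error}, and to re-run its Taylor-expansion proof one coordinate at a time. First I would fix the correspondence: let $\cD$ be the law of $g_\eta(\xx^t)$ under $\eta\sim\mathbb{P}(\eta\mid\xi)$, so that, by Definition~\ref{def:sampleavg}, $\cD^{t+1}_{\gg,\xi}$ is exactly the sample-averaged distribution $\cD_m$ (the law of $\tfrac1m\tsum_{\eta\in H_\xi}g_\eta(\xx^t)$), and $\E[\boldsymbol\delta]=\E_{\eta\mid\xi}[g_\eta(\xx^t)]$. Taking $q:=\nabla f_\xi$, which is in $\cC^4$ with the tensors $\nabla^3(\nabla f_\xi),\nabla^4(\nabla f_\xi)$ bounded by $a_3,a_4$ in the notation of the corollary, and $\ss=\0$, the quantity $\cL^{(2)}_{\cD^{t+1}_{\gg,\xi}}\nabla f_\xi(0)$ of~\eqref{eqn:lf} is precisely the higher-dimensional operator~\eqref{eq:2ndor} evaluated on two i.i.d.\ draws $\boldsymbol\delta_1,\boldsymbol\delta_2\sim\cD_m$. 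Thus it suffices to bound $\big\|\E[\cL^{(2)}_{\cD_m}q(\0)]-q(\E[\boldsymbol\delta])\big\|_2$.

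Next I would repeat the computation in the proof of Proposition~\ref{prop:2nd_error} coordinatewise. For $j\in\{1,\dots,p\}$ set $q_j:=[\nabla f_\xi]_j:\R^p\to\R$ and expand $q_j(\boldsymbol\delta_1)$, $q_j(\boldsymbol\delta_2)$, $q_j(\tfrac{\boldsymbol\delta_1+\boldsymbol\delta_2}{2})$ about $\bm h:=\E[\boldsymbol\delta]$ to third order with Lagrange remainder. Taking expectations, using $\E[\boldsymbol\delta-\bm h]=\0$ and the independence of $\boldsymbol\delta_1,\boldsymbol\delta_2$ (so that $\E[(\tfrac{\boldsymbol\delta_1+\boldsymbol\delta_2}{2}-\bm h)^{\otimes 2}]=\tfrac12\E[(\boldsymbol\delta-\bm h)^{\otimes 2}]$ and $\E[(\tfrac{\boldsymbol\delta_1+\boldsymbol\delta_2}{2}-\bm h)^{\otimes 3}]=\tfrac14\E[(\boldsymbol\delta-\bm h)^{\otimes 3}]$), the constant and second-order contributions of $2q_j(\tfrac{\boldsymbol\delta_1+\boldsymbol\delta_2}{2})-\tfrac12(q_j(\boldsymbol\delta_1)+q_j(\boldsymbol\delta_2))$ collapse exactly to $q_j(\bm h)$, leaving the residual third-order term $-\tfrac1{12}\nabla^3 q_j(\bm h)\big[\E[(\boldsymbol\delta-\bm h)^{\otimes3}]\big]$ and a fourth-order remainder --- the same form as the scalar case.

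Then I would bound the residual using the hypotheses. The third-order term is controlled by $a_3$ times the third central moments of $\cD_m$, which by Lemma~\ref{lemma:moment_sigma} (applied coordinatewise, and reducing mixed moments to diagonal ones) are $O(m^{-2})$ and aggregate, in the sense of the definition of $\sigma_3$, to $\sigma_3/m^2$. The fourth-order remainder is controlled by $a_4$ times the fourth central moments of $\cD_m$, which Lemma~\ref{lemma:moment_sigma} decomposes into a $\tfrac{3(m-1)\sigma_2^2}{m^3}=O(m^{-2})$ part and a $\tfrac{\sigma_4-3\sigma_2^2}{m^3}=O(m^{-3})$ part (taking into account that $q_j(\tfrac{\boldsymbol\delta_1+\boldsymbol\delta_2}{2})$ sees an effective batch size $2m$). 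Summing the per-coordinate bounds in $\ell_2$ and tracking constants exactly as in Proposition~\ref{prop:2nd_error} gives $\big\|\E[\cL^{(2)}_{\cD_m}q(\0)]-q(\bm h)\big\|_2\le \tfrac{4a_3\sigma_3+9a_4\sigma_2^2}{48m^2}+\tfrac{5a_4}{96}\cdot\tfrac{\sigma_4-3\sigma_2^2}{m^3}$. Since $m\ge1$, the last term is either nonpositive (when $\sigma_4\le 3\sigma_2^2$) or at most $\tfrac{5a_4\sigma_4}{96m^2}$, so the right-hand side is at most $\tfrac1{m^2}\cdot\tfrac{8a_3\sigma_3+18a_4\sigma_2^2+5a_4\sigma_4}{96}=C_e(f;g)/m^2$; squaring yields the claimed $C_e^2/m^4$.

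I expect the main obstacle to be multivariate bookkeeping rather than anything conceptual: in the third-order contraction $\sum_{a,b,c}\partial_a\partial_b\partial_c q_j\cdot\E[(\hat{\boldsymbol\delta})_a(\hat{\boldsymbol\delta})_b(\hat{\boldsymbol\delta})_c]$ (and likewise at fourth order) one meets cross-coordinate moments that are not literally the diagonal quantities defining $\sigma_2,\sigma_3,\sigma_4$; the clean fix is to bound each mixed moment by a symmetric average of diagonal central moments via H\"older/AM--GM and absorb the combinatorial factors, checking that the constants still collapse to exactly those in $C_e(f;g)$. An alternative is to restrict each $q_j$ to a one-dimensional line through $\bm h$ and invoke Proposition~\ref{prop:2nd_error} directly, but the moment-matching work is essentially the same.
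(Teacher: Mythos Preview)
Your proposal is correct and follows essentially the same route as the paper: invoke the second-order guarantee of Proposition~\ref{prop:2nd_error} with $q=\nabla f_\xi$, $\ss=\0$, and $\cD_m=\cD^{t+1}_{\gg,\xi}$ to get the bound $\tfrac{4a_3\sigma_3+9a_4\sigma_2^2}{48m^2}+\tfrac{5a_4}{96}\tfrac{\sigma_4-3\sigma_2^2}{m^3}$, then relax via $m\ge1$ to $\tfrac{C_e}{m^2}$ and square. The paper's proof is literally those two steps with no further elaboration; in particular, it does not address the multivariate bookkeeping (cross-coordinate moments versus the diagonal sums defining $\sigma_k$) that you correctly flag as the only nontrivial part --- so your writeup is actually more careful than the paper's on this point.
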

\begin{proof}
    The~\Cref{prop:2nd_error} gives the following upper bound
    \begin{align*}
        \norm{\E\left [\cL^{(2)}_{\cD^{t+1}_{\gg,\xi}} \nabla f_{\xi} (0) \right ] - \nabla f_\xi(\E_{\eta|\xi}[g_{\eta}(\xx^t)])}_2^2 
        \le \left(\tfrac{4a_3\sigma_3 + 9a_4\sigma_2^2}{48m^2} + \tfrac{5a_4}{96} \tfrac{\sigma_4-3\sigma_2^2}{m^3}\right)^2.
    \end{align*}
    For simplicity, we can relax the upper bound to
    \begin{align*}
        \norm{\E\left [\cL^{(2)}_{\cD^{t+1}_{\gg,\xi}} \nabla f_{\xi} (0) \right ] - \nabla f_\xi(\E_{\eta|\xi}[g_{\eta}(\xx^t)])}_2^2 
        \le \tfrac{1}{m^4} \left(\tfrac{8a_3\sigma_3 + 18a_4\sigma_2^2+5a_4\sigma_4}{96}\right)^2.
    \end{align*}
\end{proof}

\subsection{Convergence of \BSGD} \label{app:bsgd}
In this section, we reanalyze the \BSGD algorithm of~\citep{hu2020biased} to obtain bounds on bias and variance of its gradient estimates.~\Cref{thm:bsgd} shows that~\BSGD achieves an $\cO(\epsilon^{-6})$ sample complexity.
% Let us denote \BSGD gradient estimate with 1 outer sample $\xi$ and $m$ inner samples as follows
% \begin{equation}
%     \label{eq:bsgd} \tag{BSGD}
%     G^{t+1}_\text{\BSGD} = \left(\tfrac{1}{m} \tsum_{\eta \in H_\xi} \nabla g_{\eta}(\ss^t) \right)^\top
%     \nabla f_\xi\left(\tfrac{1}{m} \tsum_{\eta \in H_\xi}  g_{\eta}(\xx^t) \right).
% \end{equation}
\begin{lem}[Bias and Variance of \BSGD]\label{lemma:bsgd:bias_variance}
    The bias and variance of \BSGD are
    \begin{align*}
        \cE_\text{bias}^{t+1}\le \tfrac{\sigma_\text{bias}^2 }{m}, \quad
        \cE^{t+1}_\text{var} \le \tfrac{\sigma_\text{in}^2}{m} + \sigma_\text{out}^2
    \end{align*}
    where $\sigma_\text{in}^2 =\zeta_g^2 C_f^2 +\sigma_g^2C_g^2L_f^2$,  $\sigma_\text{out}^2=C_F^2$, and $\sigma^2_\text{bias}=\sigma_g^2C_g^2L_f^2$.
\end{lem}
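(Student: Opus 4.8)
The plan is to handle the bias and the variance separately, in both cases exploiting the product form $G^{t+1}_\text{BSGD} = \mA^\top \bb$, where $\mA := \tfrac1m\tsum_{\tilde\eta\in\tilde H_\xi}\nabla g_{\tilde\eta}(\xx^t)$ is an \emph{unbiased} estimate of the Jacobian $\bar{\mA}_\xi := \E_{\eta|\xi}[\nabla g_\eta(\xx^t)]$, and $\bb := \nabla f_\xi(\tfrac1m\tsum_{\eta\in H_\xi} g_\eta(\xx^t))$ is a \emph{biased} estimate of $\nabla f_\xi(\E_{\eta|\xi}[g_\eta(\xx^t)])$. The crucial structural fact is that, conditioned on $\xi$, the sets $\tilde H_\xi$ and $H_\xi$ are drawn independently, so $\mA$ and $\bb$ are conditionally independent given $\xi$; moreover $\norm{\bar{\mA}_\xi}_{\mathrm{op}}\le C_g$ and $\norm{\bb}_2\le C_f$ by the Lipschitz assumption in~\Cref{a:smoothness:fg}.

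\textbf{Bias.} Since $\E[\mA\mid\xi]=\bar{\mA}_\xi$ and $\mA$ is independent of $\bb$ given $\xi$, the conditional expectation factors: $\E[G^{t+1}_\text{BSGD}\mid\xi]=\bar{\mA}_\xi^\top\,\E_{H_\xi}[\bb]$. Hence
$$\nabla F(\xx^t)-\E[G^{t+1}_\text{BSGD}]=\E_\xi\!\left[\bar{\mA}_\xi^\top\big(\nabla f_\xi(\E_{\eta|\xi}[g_\eta(\xx^t)])-\E_{H_\xi}[\bb]\big)\right],$$
so the bias is inherited entirely from the inner nonlinearity $\nabla f_\xi$. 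I would then apply Jensen to pull $\norm{\cdot}_2^2$ through $\E_\xi$, bound $\norm{\bar{\mA}_\xi}_{\mathrm{op}}\le C_g$, use Jensen again to move the residual inside $\E_{H_\xi}$, invoke $L_f$-Lipschitzness of $\nabla f_\xi$, and finish with the i.i.d.-averaging identity $\E_{H_\xi}\big[\norm{\tfrac1m\tsum_{\eta\in H_\xi}g_\eta(\xx^t)-\E_{\eta|\xi}[g_\eta(\xx^t)]}_2^2\big]=\sigma_g^2/m$ from~\Cref{a:boundedvariance:g}. Chaining gives $\cE_\text{bias}^{t+1}\le C_g^2 L_f^2\sigma_g^2/m=\sigma_\text{bias}^2/m$.

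\textbf{Variance.} Apply the law of total variance over $\xi$:
$$\cE_\text{var}^{t+1}=\E_\xi\!\left[\E\big[\norm{G^{t+1}_\text{BSGD}-\E[G^{t+1}_\text{BSGD}\mid\xi]}_2^2\,\big|\,\xi\big]\right]+\E_\xi\!\left[\norm{\E[G^{t+1}_\text{BSGD}\mid\xi]-\E[G^{t+1}_\text{BSGD}]}_2^2\right].$$
For the second (outer) term, drop the centering and bound $\norm{\E[G^{t+1}_\text{BSGD}\mid\xi]}_2^2=\norm{\bar{\mA}_\xi^\top\E_{H_\xi}[\bb]}_2^2\le C_g^2 C_f^2=C_F^2=\sigma_\text{out}^2$. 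For the first (inner) term, write $G^{t+1}_\text{BSGD}-\E[G^{t+1}_\text{BSGD}\mid\xi]=(\mA-\bar{\mA}_\xi)^\top\bb+\bar{\mA}_\xi^\top(\bb-\E_{H_\xi}[\bb])$; the cross term vanishes in expectation because, conditioning further on $H_\xi$ and using $\tilde H_\xi\perp H_\xi\mid\xi$ with $\E[\mA-\bar{\mA}_\xi\mid\xi]=0$, the $\mA$-average integrates to zero. The two remaining squared norms are bounded by Cauchy--Schwarz: $\E[\norm{(\mA-\bar{\mA}_\xi)^\top\bb}_2^2\mid\xi]\le C_f^2\,\E[\norm{\mA-\bar{\mA}_\xi}_{\mathrm{op}}^2\mid\xi]\le C_f^2\zeta_g^2/m$ (operator norm dominated by Frobenius, then i.i.d.\ averaging and~\Cref{a:boundedvariance:g}), and $\E[\norm{\bar{\mA}_\xi^\top(\bb-\E_{H_\xi}[\bb])}_2^2\mid\xi]\le C_g^2\,\E[\norm{\bb-\E_{H_\xi}[\bb]}_2^2\mid\xi]\le C_g^2 L_f^2\sigma_g^2/m$ (variance $\le$ second moment about the constant $\nabla f_\xi(\E_{\eta|\xi}[g_\eta(\xx^t)])$, then $L_f$-Lipschitzness and i.i.d.\ averaging). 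Summing gives the inner term $\le(C_f^2\zeta_g^2+C_g^2 L_f^2\sigma_g^2)/m=\sigma_\text{in}^2/m$, and adding $\sigma_\text{out}^2$ completes the claim.

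\textbf{Main obstacle.} The only delicate part is the bookkeeping of the nested conditional expectations over the three independent sources of randomness ($\xi$, $H_\xi$, $\tilde H_\xi$): one must invoke conditional independence carefully to justify both the factorization $\E[\mA^\top\bb\mid\xi]=\bar{\mA}_\xi^\top\E[\bb\mid\xi]$ and the vanishing of the cross term in the variance decomposition. Everything else reduces to Cauchy--Schwarz together with the stated Lipschitz and bounded-variance assumptions; a minor technical point is matching the matrix norm in~\Cref{a:boundedvariance:g} to the operator-norm step, which is harmless since $\norm{\cdot}_{\mathrm{op}}\le\norm{\cdot}_F$.
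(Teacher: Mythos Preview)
Your proposal is correct and follows essentially the same route as the paper: the bias is handled exactly as you describe (Jensen, $C_g$-Lipschitzness of $g_\eta$, $L_f$-smoothness of $f_\xi$, i.i.d.\ averaging), and the variance is split via the law of total variance into an outer term bounded crudely by $C_F^2$ and an inner term decomposed as $(\mA-\bar\mA_\xi)^\top\bb+\bar\mA_\xi^\top(\bb-\E[\bb])$ with vanishing cross term. The only cosmetic difference is that the paper bounds $\E\|\bb-\E[\bb]\|_2^2$ by explicitly inserting $\nabla f_\xi(\E_{\eta|\xi}[g_\eta(\xx^t)])$ and discarding a nonnegative term, whereas you invoke the equivalent fact that variance is dominated by the second moment about any constant; the resulting chain of inequalities and final constants are identical.
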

\begin{proof}
    Denote $G^{t+1}=G^{t+1}_\text{\BSGD}$~\eqref{eq:bsgd} and denote $\E[\cdot]$ as the conditional expectation $\E^{t+1} [\cdot |t]$ which conditions on all past randomness until time $t$. 
    Note that the $\nabla g_{\tilde{\eta}}$ can be estimated without bias, i.e.
    $$
    \E_{\tilde{\eta}|\xi} \left[ \tfrac{1}{m} \tsum_{\tilde{\eta} \in \tilde{H}_\xi} \nabla g_{\tilde{\eta} }(\xx) \right]
    =\E_{\tilde{\eta}|\xi} \left[ \nabla g_{\tilde{\eta} }(\xx) \right],
    $$
    Then let's first look at the bias of \BSGD
    \begin{align*}
        \cE_\text{bias}^{t+1}&=\norm{\nabla F(\xx^{t+1}) - \E[G^{t+1}]}_2^2 \\
        &= \norm{ \E_\xi\left[ (\E_{\tilde{\eta}|\xi}[\nabla g_{\tilde{\eta}}(\xx^t)])^\top\left (\nabla f_\xi(\E_{\eta|\xi}[g_\eta(\xx^t)]) - \E_{\eta|\xi}[\nabla f_\xi(\tfrac{1}{m}\tsum_{\eta \in H_\xi}g_{\eta}(\xx^t) )] \right) \right] }_2^2 \\
       &\le   C_g^2 \E_\xi \left [\norm{ \nabla f_\xi(\E_{\eta|\xi}[g_\eta(\xx^t)]) - \E_{\eta|\xi}[\nabla f_\xi(\tfrac{1}{m}\tsum_{\eta \in H_\xi}g_{\eta}(\xx^t) )] }_2^2 \right ] \\
        &\le  C_g^2 L_f^2 \E_\xi \left [\E_{\eta|\xi} \left [\norm{ \E_{\eta|\xi}[g_\eta(\xx^t)] -  \tfrac{1}{m}\tsum_{\eta \in H_\xi}g_{\eta}(\xx^t)  }_2^2 \right ] \right ] \\
        &\le \tfrac{C_g^2 L_f^2}{m} \E_\xi\left [\E_{\eta|\xi} \left [\norm{ \E_{\eta|\xi}[g_\eta(\xx^t)] -  g_\eta(\xx^t)  }_2^2 \right ] \right ] \\
        &=   \tfrac{\sigma_g^2C_g^2 L_f^2}{m} = \tfrac{\sigma^2_\text{bias}}{m}.
    \end{align*}
    For the first inequality, we take the expectation outside the norm and bound $\nabla g_{\tilde{\eta}}$ with $C_g$.
    
    On the other hand, the variance of \BSGD can be decomposed into inner variance and outer variance
    \begin{align*}
        \cE^{t+1}_\text{var}&= \E_{\xi}[\E_{\eta|\xi,\tilde{\eta}|\xi}[\norm{G^{t+1} - \E_{\xi}[\E_{\eta|\xi,\tilde{\eta}|\xi} [G^{t+1}]]}_2^2]] \\
        & =  \E_{\xi}[\E_{\eta|\xi,\tilde{\eta}|\xi}[\norm{  (G^{t+1} - \E_{\eta|\xi,\tilde{\eta}|\xi}[G^{t+1}] ) + (\E_{\eta|\xi,\tilde{\eta}|\xi} [G^{t+1}] - \E_{\xi}[\E_{\eta|\xi,\tilde{\eta}|\xi}[G^{t+1}]])}_2^2]] \\
        & =  \underbrace{\E_{\xi}[\E_{\eta|\xi,\tilde{\eta}|\xi}[\norm{  G^{t+1} - \E_{\eta|\xi,\tilde{\eta}|\xi}[ G^{t+1} ]}_2^2]]}_\text{Inner variance}
        + \underbrace{\E_{\xi}[\norm{\E_{\eta|\xi,\tilde{\eta}|\xi}[G^{t+1}] - \E_{\xi}[\E_{\eta|\xi,\tilde{\eta}|\xi}[ G^{t+1}]]}_2^2]}_\text{Outer variance}.
    \end{align*}
    The inner variance is bounded as follows
    \begin{align*}
        &\E_{\xi}[\E_{\eta|\xi,\tilde{\eta}|\xi}[\norm{  G^{t+1} - \E_{\xi}[\E_{\eta|\xi,\tilde{\eta}|\xi}[G^{t+1}] }_2^2]] \\
        & =   \E_{\xi} \left [\E_{\eta|\xi,\tilde{\eta}|\xi} \left [\norm{
            ( \tfrac{1}{m}\tsum_{\tilde{\eta} \in \tilde{H}_\xi} \nabla g_{\tilde{\eta}}(\xx^t)- \E_{\tilde{\eta}|\xi}[\nabla g_{\tilde{\eta}}(\xx^t)] )^\top \nabla f_\xi(\tfrac{1}{m}\tsum_{\eta \in H_\xi}g_{\eta}(\xx^t)) 
        }_2^2 \right ] \right ] \\ 
        & \;\;\;\;\;\;\;\;\; + \E_{\xi} \left  [\E_{\eta|\xi} \left [\norm{
            ( \E_{\tilde{\eta}|\xi}[\nabla g_{\tilde{\eta}}(\xx^t)] )^\top (\nabla f_\xi(\tfrac{1}{m}\tsum_{\eta \in H_\xi}g_\eta(\xx^t))
            -  \E_{\eta|\xi}[\nabla f_\xi(\tfrac{1}{m}\tsum_{\eta \in H_\xi}g_{\eta}(\xx^t))])
        }_2^2 \right ] \right] \\
        & \le  C_f^2 \E_{\xi} \left [\E_{\tilde{\eta}|\xi} \left [\norm{
            \tfrac{1}{m}\tsum_{\tilde{\eta} \in \tilde{H}_\xi} \nabla g_{\tilde{\eta}}(\xx^t)- \E_{\tilde{\eta}|\xi}[\nabla g_{\tilde{\eta}}(\xx^t)]  
        }_2^2\right ] \right ] \\ 
        & \;\;\;\;\;\;\;\;\; + C_g^2 \E_{\xi}\left [\E_{\eta|\xi}\left [\norm{
            \nabla f_\xi(\tfrac{1}{m}\tsum_{\eta \in H_\xi}g_\eta(\xx^t))
            -  \E_{\eta|\xi}[\nabla f_\xi(\tfrac{1}{m}\tsum_{\eta \in H_\xi}g_{\eta}(\xx^t))]
        }_2^2 \right ] \right] \\
        & =   C_f^2 \E_{\xi} \left [\E_{\tilde{\eta}|\xi} \left [\norm{
            \tfrac{1}{m}\tsum_{\tilde{\eta} \in \tilde{H}_\xi} \nabla g_{\tilde{\eta}}(\xx^t)- \E_{\tilde{\eta}|\xi}[ \nabla g_{\tilde{\eta}}(\xx^t)]  
        }_2^2 \right ] \right ] \\ 
        & \;\;\;\;\;\;\;\;\; + C_g^2 \E_{\xi} \left [\E_{\eta|\xi} \left [\norm{
            \nabla f_\xi(\tfrac{1}{m}\tsum_{\eta \in H_\xi}g_\eta(\xx^t))
            - \nabla f_\xi(\E_\eta [g_\eta(\xx^t)]) }_2^2\right ] \right ] \\
         &  \;\;\;\;\;\;\;\;\; - C_g^2 \E_{\xi}\left [\norm{\nabla f_\xi(\E_\eta [g_\eta(\xx^t)]) - \E_{\eta|\xi}[\nabla f_\xi(\tfrac{1}{m}\tsum_{\eta \in H_\xi}g_{\eta}(\xx^t)) ]
        }_2^2 \right ] \\
       &  \le \tfrac{\zeta_g^2C_f^2}{m}  +  C_g^2 L_f \E_{\xi} \left [\E_{\eta|\xi} \left [\norm{
            \tfrac{1}{m}\tsum_{\eta \in H_\xi}g_\eta(\xx^t)
            -  \E_{\eta|\xi}[g_{\eta}(\xx^t)]
        }_2^2 \right ] \right ] \\        
        & \le \tfrac{C_f^2\zeta_g^2}{m}  + \tfrac{ C_g^2 L_f}{m} \E_{\xi} \left [\E_{\eta|\xi}\left [\norm{
            g_\eta(\xx^t) -  \E_{\eta|\xi}[g_{\eta}(\xx^t)]
        }_2^2 \right ] \right] \\        
        & \le \tfrac{\zeta_g^2 C_f^2+\sigma_g^2C_g^2L_f^2}{m} =\tfrac{\sigma^2_\text{in}}{m}.
    \end{align*}
    The outer variance is independent of the inner batch size and can be bounded by
    \begin{align*}
        \E_{\xi}[\norm{\E_{\eta|\xi,\tilde{\eta}|\xi}[G^{t+1}] - \E_{\xi}[\E_{\eta|\xi,\tilde{\eta}|\xi}[G^{t+1}]]}_2^2] 
        \le \E_{\xi}[\norm{\E_{\eta|\xi,\tilde{\eta}|\xi} [G^{t+1}] }_2^2] 
        \le C_f^2 C_g^2 = C_F^2 = \sigma_\text{out}^2
    \end{align*}
    Therefore, the variance is bounded as follows
    \begin{align*}
        \cE^{t+1}_\text{var} \le \tfrac{\sigma_\text{in}^2}{m} + \sigma_\text{out}^2.
    \end{align*}
    This completes the proof.
\end{proof}
\begin{theorem}[\BSGD Convergence]\label{thm:bsgd}
Consider the~\eqref{eq:cso} problem.
    Suppose Assumptions~\ref{a:lowerbound:F},~\ref{a:boundedvariance:g},~\ref{a:smoothness:fg} holds true. Let step size $\gamma\le 1/(2L_F)$.
    Then for \BSGD, $\xx^s$ picked uniformly at random among $\{\xx^t\}_{t=0}^{T-1}$ satisfies: $\E[\norm{\nabla F(\xx^s)}^2_2] \leq \epsilon^2$, for  nonconvex $F$, if the inner batch size
        $m = \Omega \left ( \sigma_\text{bias}^2 \epsilon^{-2} \right)$
    and the number of iterations   
    $
        T = \Omega \left ( ( F(\xx^{0})-F^\star) L_F (\nicefrac{\sigma_\text{in}^2}{m}+ \sigma_\text{out}^2)\epsilon^{-4} \right )$,
     where $\sigma_\text{in}^2 =\zeta_g^2 C_f^2 +\sigma_g^2C_g^2L_f^2$,  $\sigma_\text{out}^2=C_F^2$, and $\sigma^2_\text{bias}=\sigma_g^2C_g^2L_f^2$.
\end{theorem}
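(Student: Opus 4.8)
The plan is the standard biased-SGD descent argument, with all the heavy lifting already packaged into \Cref{lemma:sd:T} and \Cref{lemma:bsgd:bias_variance}. First I would invoke the descent lemma \Cref{lemma:sd:T}, which under the hypothesis $\gamma\le 1/(2L_F)$ (and using \Cref{a:smoothness:fg}, \Cref{a:lowerbound:F}) gives, after discarding the nonnegative term $\tfrac12\tfrac1T\sum_t\E[\norm{\E^{t+1}[G^{t+1}|t]}_2^2]$ on the left,
\begin{align*}
\tfrac{1}{T}\tsum_{t=0}^{T-1}\E\!\left[\norm{\nabla F(\xx^t)}_2^2\right]
\;\le\; \tfrac{2(F(\xx^{0})-F^\star)}{\gamma T}
\;+\; \tfrac{1}{T}\tsum_{t=0}^{T-1}\E[\cE_\text{bias}^{t+1}]
\;+\; \tfrac{L_F\gamma}{T}\tsum_{t=0}^{T-1}\E[\cE_\text{var}^{t+1}].
\end{align*}
Then I would plug in the per-iteration bounds of \Cref{lemma:bsgd:bias_variance}, namely $\cE_\text{bias}^{t+1}\le \sigma_\text{bias}^2/m$ and $\cE_\text{var}^{t+1}\le \sigma_\text{in}^2/m+\sigma_\text{out}^2$ (valid for all $t$), to obtain
\begin{align*}
\tfrac{1}{T}\tsum_{t=0}^{T-1}\E\!\left[\norm{\nabla F(\xx^t)}_2^2\right]
\;\le\; \tfrac{2(F(\xx^{0})-F^\star)}{\gamma T}
\;+\; \tfrac{\sigma_\text{bias}^2}{m}
\;+\; L_F\gamma\Big(\tfrac{\sigma_\text{in}^2}{m}+\sigma_\text{out}^2\Big).
\end{align*}

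Since $\xx^s$ is sampled uniformly from $\{\xx^t\}_{t=0}^{T-1}$, the left-hand side is exactly $\E[\norm{\nabla F(\xx^s)}_2^2]$, so the remaining task is to force each of the three terms on the right below $\epsilon^2/3$. I would (i) take the inner batch $m=\Omega(\sigma_\text{bias}^2\epsilon^{-2})$ so that $\sigma_\text{bias}^2/m\le\epsilon^2/3$; (ii) take the step size, within the allowed range, as $\gamma=\min\{1/(2L_F),\ \epsilon^2/(3L_F(\sigma_\text{in}^2/m+\sigma_\text{out}^2))\}$ so that $L_F\gamma(\sigma_\text{in}^2/m+\sigma_\text{out}^2)\le\epsilon^2/3$; and (iii) take $T\ge 6(F(\xx^0)-F^\star)/(\gamma\epsilon^2)$ so that the optimization-error term is $\le\epsilon^2/3$. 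In the relevant small-$\epsilon$ regime the second branch of the $\min$ is active, i.e.\ $1/\gamma=\Theta(L_F(\sigma_\text{in}^2/m+\sigma_\text{out}^2)/\epsilon^2)$, and substituting this into (iii) yields exactly $T=\Omega\big((F(\xx^0)-F^\star)L_F(\sigma_\text{in}^2/m+\sigma_\text{out}^2)\epsilon^{-4}\big)$, the claimed bound.

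There is no genuine obstacle here: once \Cref{lemma:sd:T} and \Cref{lemma:bsgd:bias_variance} are in hand the proof is essentially bookkeeping — the bias bound there comes from $L_f$-smoothness of $f_\xi$ applied to the $\cO(1/m)$ inner-averaging error, and the variance bound from splitting into inner and outer contributions. The only step that needs a little care is the joint calibration of $(\gamma,m,T)$: $\gamma$ must simultaneously remain below $1/(2L_F)$ and be small enough to suppress the $\cO(1)$ outer-variance term $\sigma_\text{out}^2$, which is precisely what produces the extra $\epsilon^{-2}$ factor in $T$ (hence $\epsilon^{-4}$ iterations), and, multiplied by the $\cO(m)=\cO(\epsilon^{-2})$ inner samples per iteration, the overall $\cO(\epsilon^{-6})$ sample complexity of \BSGD.
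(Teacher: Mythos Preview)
Your proposal is correct and follows essentially the same route as the paper: invoke \Cref{lemma:sd:T}, plug in the bias/variance bounds from \Cref{lemma:bsgd:bias_variance}, then calibrate $(m,\gamma,T)$. The only cosmetic difference is that the paper balances the first and third terms by optimizing $\gamma=\sqrt{2(F(\xx^0)-F^\star)/(L_F(\sigma_\text{in}^2/m+\sigma_\text{out}^2)T)}$ (an AM--GM choice) and then reads off $T$, whereas you fix $\gamma$ directly to kill the variance term and then solve for $T$; both yield the same $\Omega$-orders.
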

\begin{proof}
     Denote $G^{t+1}=G^{t+1}_\text{\BSGD}$~\eqref{eq:BSGD}. Using descent lemma (\Cref{lemma:sd:T}) and bias-variance bounds of \BSGD (\Cref{lemma:bsgd:bias_variance})
    \begin{multline*}
    \tfrac{1}{T} \tsum_{t=0}^{T-1} \E\left[\norm{\nabla F(\xx^t)}_2^2\right]
    + \tfrac{1}{2} \tfrac{1}{T} \tsum_{t=0}^{T-1} \E\left[\norm{\E^{t+1}[G^{t+1}|t]}_2^2 \right] \\
    \le \tfrac{2(\E[F(\xx^{T})] - F^\star)}{\gamma T}
        + L_F\gamma (\tfrac{\sigma_\text{in}^2}{m}
        + \sigma_\text{out}^2) + \tfrac{\sigma_\text{bias}^2}{m}
    \end{multline*}  
    Then we can minimize the right-hand size by optimizing $\gamma$ to
    \begin{align*}
        \gamma= \sqrt{\tfrac{2( F(\xx^{0})-F^\star)}{L_F(\nicefrac{\sigma_\text{in}^2}{m} + \sigma_\text{out}^2) T}}
    \end{align*}
    which is smaller than the bound of step size $\gamma\le\tfrac{1}{2L_F}$ if $T$ is greater than the following constant which does not rely on the target precision $\epsilon$
    \begin{align*}
        T \ge \tfrac{8L_F( F(\xx^{0})-F^\star)}{\nicefrac{\sigma_\text{in}^2}{m} + \sigma_\text{out}^2}.
    \end{align*}
    Then the upper bound of gradient becomes
    \begin{align*}
        \tfrac{1}{T}\tsum_{t=0}^{T-1}\E\left[\norm{\nabla F(\xx^t)}_2^2 \right]
        &\le \sqrt{\tfrac{2( F(\xx^{0})-F^\star) L_F (\nicefrac{\sigma_\text{in}^2}{m}+ \sigma_\text{out}^2)}{ T}}
        + \tfrac{\sigma_\text{bias}^2}{m}.
    \end{align*}
    By taking inner batch size of at least
    \begin{align*}
        m \ge \tfrac{ \sigma_\text{bias}^2}{\epsilon^2},
    \end{align*}
    and iteration $T$ greater than
    \begin{align*}
        T \ge \tfrac{2( F(\xx^{0})-F^\star) L_F (\nicefrac{\sigma_\text{in}^2}{m}+ \sigma_\text{out}^2)}{\epsilon^4},
    \end{align*}
    we have that
    \begin{align*}
        \tfrac{1}{T}\tsum_{t=0}^{T-1}\E\left[\norm{\nabla F(\xx^t)}_2^2 \right] \le 2\epsilon^2.
    \end{align*}
    By picking $\xx^s$ uniformly at random among $\{\xx^t\}_{t=0}^{T-1}$, we get the desired guarantee.
\end{proof}

The resulting sample complexity of \BSGD to get to an $\epsilon$-stationary point is $\cO(\epsilon^{-6})$.
\subsection{Convergence of \EBSGD} \label{app:ebsgd}
In this section, we analyze the sample complexity of~\Cref{algo:EBSGD} (\EBSGD) for the~\ref{eq:cso} problem.
% Define,
%  \begin{equation}
%      \hat{G}^{t+1}_\text{\EBSGD} =  \left(\tfrac{1}{m} \tsum_{\eta|\xi} \nabla g_\eta(\xx^t) \right)^\top \cL_{\cD_m}^{(2)} \nabla f_\xi\left(\E_{\eta|\xi}  g_\eta(\xx^t) \right).
% \end{equation}

\begin{lem}[Bias and Variance of \EBSGD]\label{lemma:ebsgd:bias_variance}
    The bias and variance of \EBSGD are
    \begin{align*}
        \cE_\text{bias}^{t+1}\le \tfrac{\tilde{\sigma}_\text{bias}^2}{m^4}, \quad
        \cE^{t+1}_\text{var} \le 14(\tfrac{\sigma_\text{in}^2}{m} + \sigma_\text{out}^2)
    \end{align*}
    where $\sigma_\text{in}^2 =  \zeta_g^2 C_f^2+\sigma_g^2C_g^2L_f^2$, $\sigma_\text{out}^2=C_F^2$, and $\tilde{\sigma}_\text{bias}^2 =C_g^2C_e^2$ with $C_e^2$ defined in \Cref{cor:extrapolation:cso}.
\end{lem}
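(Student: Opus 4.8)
The plan is to follow the template of the proof of \Cref{lemma:bsgd:bias_variance}, now tracking the effect of the extrapolation operator appearing in the \EBSGD estimator~\eqref{eq:ebsgd}. I will write $A := \tfrac{1}{m}\tsum_{\tilde\eta\in\tilde H_\xi}\nabla g_{\tilde\eta}(\xx^t)$ and $B := \cL^{(2)}_{\cD^{t+1}_{\gg,\xi}}\nabla f_\xi(0)$, so $G^{t+1}_\text{\EBSGD}=A^\top B$, and exploit two structural facts: conditioned on the outer draw $\xi$, the quantities $A$ and $B$ are independent (they are built from the disjoint inner sample sets $\tilde H_\xi$ and $H_\xi\cup H'_\xi$); and, by \Cref{a:smoothness:fg}, $A$ is an unbiased estimate of $\E_{\eta|\xi}[\nabla g_\eta(\xx^t)]$ with $\norm{A}\le C_g$, while $\norm{B}\le 3C_f$ since $\cL^{(2)}$ is the affine combination with coefficients $2,-\tfrac12,-\tfrac12$ and $\norm{\nabla f_\xi}\le C_f$ everywhere.

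\textbf{Bias.} Using independence and unbiasedness of $A$ I would write $\E[G^{t+1}_\text{\EBSGD}]=\E_\xi\big[(\E_{\eta|\xi}[\nabla g_\eta(\xx^t)])^\top\,\E[B\mid\xi]\big]$, compare against the chain-rule expression $\nabla F(\xx^t)=\E_\xi\big[(\E_{\eta|\xi}[\nabla g_\eta(\xx^t)])^\top\nabla f_\xi(\E_{\eta|\xi}[g_\eta(\xx^t)])\big]$, subtract, apply Jensen's inequality, and bound $\norm{\E_{\eta|\xi}[\nabla g_\eta(\xx^t)]}\le C_g$ to get $\cE_\text{bias}^{t+1}\le C_g^2\,\E_\xi\big[\norm{\nabla f_\xi(\E_{\eta|\xi}[g_\eta(\xx^t)])-\E[\cL^{(2)}_{\cD^{t+1}_{\gg,\xi}}\nabla f_\xi(0)]}_2^2\big]$. \Cref{cor:extrapolation:cso} bounds the bracketed expectation by $C_e^2/m^4$, so $\cE_\text{bias}^{t+1}\le C_g^2C_e^2/m^4=\tilde{\sigma}_\text{bias}^2/m^4$.

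\textbf{Variance.} I would reuse the inner/outer decomposition from the proof of \Cref{lemma:bsgd:bias_variance}. The outer part is at most $\E_\xi[\norm{\E[A\mid\xi]^\top\E[B\mid\xi]}_2^2]\le C_g^2(3C_f)^2=9C_F^2\le 14\,\sigma_\text{out}^2$. For the inner part I would use the identity $A^\top B-\E[A\mid\xi]^\top\E[B\mid\xi]=(A-\E A)^\top B+(\E A)^\top(B-\E B)$; conditioning on $B$ and using $\E[A-\E A\mid\xi]=0$ makes the cross term vanish, so the inner variance equals $\E_\xi\E[\norm{(A-\E A)^\top B}_2^2\mid\xi]+\E_\xi\E[\norm{(\E A)^\top(B-\E B)}_2^2\mid\xi]$. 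The first summand is $\le 9C_f^2\,\E_\xi\E[\norm{A-\E A}^2\mid\xi]\le 9C_f^2\zeta_g^2/m\le 14C_f^2\zeta_g^2/m$ by \Cref{a:boundedvariance:g}. The second summand is $\le C_g^2\,\E_\xi[\Var(\cL^{(2)}_{\cD^{t+1}_{\gg,\xi}}\nabla f_\xi(0)\mid\xi)]$; invoking \Cref{lemma:extrapolation:variance} with $q=\nabla f_\xi$, $\ss=\0$ and $\cD_m=\cD^{t+1}_{\gg,\xi}$ (for which the variance of $\nabla f_\xi(\tfrac1m\tsum_{\eta\in H_\xi}g_\eta(\xx^t))$ is at most $L_f^2\sigma_g^2/m$ by $L_f$-smoothness of $f_\xi$ and \Cref{a:boundedvariance:g}, i.e.\ $V^2=L_f^2\sigma_g^2$, $C=0$) gives $\Var(B\mid\xi)\le 14L_f^2\sigma_g^2/m$, so this summand is $\le 14C_g^2L_f^2\sigma_g^2/m$. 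Summing, the inner variance is $\le 14(\zeta_g^2C_f^2+\sigma_g^2C_g^2L_f^2)/m=14\sigma_\text{in}^2/m$, and adding the outer bound yields $\cE_\text{var}^{t+1}\le 14(\sigma_\text{in}^2/m+\sigma_\text{out}^2)$.

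The part I expect to be most delicate is the inner-variance bookkeeping: getting the \emph{exact} Pythagorean split of $\Var(A^\top B\mid\xi)$ (so that no spurious factor appears), correctly matching the hypotheses $V^2,C$ of \Cref{lemma:extrapolation:variance} to the variance of the sample-averaged estimate $\nabla f_\xi(\tfrac1m\tsum_{\eta\in H_\xi}g_\eta(\xx^t))$, and checking that the affine coefficients of $\cL^{(2)}$ only inflate constants within the stated factor $14$. Everything else is a routine adaptation of the \BSGD argument, with the extrapolation entering only through \Cref{cor:extrapolation:cso} for the bias and \Cref{lemma:extrapolation:variance} for the variance.
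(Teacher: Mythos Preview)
Your proposal is correct. The bias argument is identical to the paper's: pull out $C_g^2$ from the Jacobian factor and invoke \Cref{cor:extrapolation:cso} to get $C_g^2C_e^2/m^4$.

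For the variance you take a somewhat different, more explicit route than the paper. The paper's proof is a one-liner: it observes that $G^{t+1}_\text{\EBSGD}$ is the affine combination $2X_1-\tfrac12 X_2-\tfrac12 X_3$ of three estimators each having the form of a \BSGD gradient (same $\xi$ and $\tilde H_\xi$, inner $g$-batch of size $2m$ or $m$), and then applies \Cref{lemma:extrapolation:variance} wholesale with $V^2=\sigma_\text{in}^2$, $C=\sigma_\text{out}^2$ to conclude $\cE_\text{var}^{t+1}\le 14(\sigma_\text{in}^2/m+\sigma_\text{out}^2)$. Strictly speaking this stretches the hypotheses of \Cref{lemma:extrapolation:variance} (the ``function'' $A^\top\nabla f_\xi(\cdot)$ carries extra randomness from $\xi$ and $\tilde H_\xi$), but the underlying $3(\alpha_1^2+\alpha_2^2+\alpha_3^2)$ calculation in its proof goes through verbatim, so the conclusion is valid.

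Your approach instead keeps the inner/outer decomposition from \Cref{lemma:bsgd:bias_variance}, bounds the outer part crudely via $\lVert B\rVert\le 3C_f$, and for the inner part uses the Pythagorean split $(A-\E A)^\top B+(\E A)^\top(B-\E B)$ (legitimate by independence of $A$ and $B$ given $\xi$), applying \Cref{lemma:extrapolation:variance} only to $q=\nabla f_\xi$. This is more transparent about where each source of randomness enters and matches the lemma's hypotheses exactly, at the cost of a few more lines. Both routes land on the same constant $14$.
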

\begin{proof}
     Denote $G^{t+1}=G^{t+1}_\text{\EBSGD}$~\eqref{eq:ebsgd}. Like previously (\Cref{lemma:bsgd:bias_variance}), let $\E[\cdot]$ denote the conditional expectation $\E^{t+1} [\cdot |t]$ which conditions on all past randomness until time $t$.
    In \EBSGD, we apply extrapolation to $\nabla f_\xi(\cdot)$. The bias can be estimated with the help of \Cref{cor:extrapolation:cso} as
    \begin{align*}
        \cE_\text{bias}^{t+1}&=\norm{\nabla F(\xx^{t+1}) - \E[G^{t+1}]}_2^2 \\
        &\le C_g^2 \E_\xi\left [\norm{ \nabla f_\xi(\E_{\eta|\xi}[g_\eta(\xx^t)]) - \E \left [\cL^{(2)}_{\cD^{t+1}_{\gg,\xi}} \nabla f_{\xi} (0) \right ]  }_2^2 \right ] \\
        &\le \tfrac{C_g^2C_e^2}{m^4}.
    \end{align*}
    Since the variance of \BSGD in \Cref{lemma:bsgd:bias_variance} is upper bounded by $\tfrac{\sigma_\text{in}^2}{m} + \sigma_\text{out}^2$, then \Cref{lemma:extrapolation:variance} gives
    \begin{align*}
        \cE_\text{var}^{t+1}\le 14(\nicefrac{\sigma_\text{in}^2}{m} + \sigma_\text{out}^2).
    \end{align*}
    This proves the claimed bounds.
\end{proof}

\ebsgd*
\begin{proof} The proof is very similar to \Cref{thm:bsgd}. Denote $G^{t+1}=G^{t+1}_\text{\EBSGD}$~\eqref{eq:ebsgd}.
% , we can merge them in the future. 
    Using descent lemma (\Cref{lemma:sd:T}) and bias-variance bounds of \EBSGD (\Cref{lemma:ebsgd:bias_variance})
    \begin{multline*}
        \tfrac{1}{T} \tsum_{t=0}^{T-1} \E\left[\norm{\nabla F(\xx^t)}_2^2\right]
        + \tfrac{1}{2} \tfrac{1}{T} \tsum_{t=0}^{T-1} \E\left[\norm{\E^{t+1}[G^{t+1}|t]}_2^2 \right] \\
        \le \tfrac{2(\E[F(\xx^{T})] - F^\star)}{\gamma T}
            +   14 L_F\gamma (\tfrac{\sigma_\text{in}^2}{m} + \sigma_\text{out}^2) + \tfrac{C_g^2C_e^2}{m^4}.
    \end{multline*}
    Then we optimize $\gamma$ to
    \begin{align*}
        \gamma= \sqrt{\tfrac{( F(\xx^{0})-F^\star)}{7L_F(\sigma_\text{in}^2/m + \sigma_\text{out}^2) T}}
    \end{align*}
    which is smaller than the bound of step size $\gamma\le\tfrac{1}{2L_F}$  if $T$ is greater than the following constant which does not rely on the target precision $\epsilon$
    \begin{align*}
        T \ge \tfrac{4L_F( F(\xx^{0})-F^\star)}{7(\nicefrac{\sigma_\text{in}^2}{m} + \sigma_\text{out}^2)}.
    \end{align*}
    Then the gradient norm has the following upper bound.
    \begin{align*}
        \tfrac{1}{T}\tsum_{t=0}^{T-1}\norm{\nabla F(\xx^t)}_2^2 
        &\le 4\sqrt{\tfrac{7( F(\xx^{0})-F^\star) L_F (\sigma_\text{in}^2+ \sigma_\text{out}^2)}{T}}
        +\tfrac{\tilde{\sigma}_\text{bias}^2}{m^4}.
    \end{align*}
    In order to reach $\epsilon$-stationary point, i.e.
    \begin{align*}
        \tfrac{1}{T}\tsum_{t=0}^{T-1}\norm{\nabla F(\xx^t)}_2^2 \le \epsilon^2,
    \end{align*}
    we can enforce
    \begin{align*}
        4\sqrt{\tfrac{7(F(\xx^{0})-F^\star) L_F (\sigma_\text{in}^2/ m+ \sigma_\text{out}^2)}{T}}\le\epsilon^2, \quad
        \tfrac{C_g^2C_e^2}{m^4}\le \epsilon^2.
    \end{align*}
    By taking inner batch size of at least
    \begin{align*}
        m =  \Omega(\tilde{\sigma}_\text{bias}^{1/2}\epsilon^{-1/2}),
    \end{align*}
    and iteration $T$ greater than
    \begin{align*}
        T \ge \tfrac{112( F(\xx^{0})-F^\star) L_F (\frac{\sigma_\text{in}^2}{m}+ \sigma_\text{out}^2)}{\epsilon^4},
    \end{align*}
    we have that
    \begin{align*}
        \tfrac{1}{T}\tsum_{t=0}^{T-1}\norm{\nabla F(\xx^t)}_2^2  \le 3\epsilon^2.
    \end{align*}
    By picking $\xx^s$ uniformly at random among $\{\xx^t\}_{t=0}^{T-1}$, we get the desired guarantee.
\end{proof}

% {\color{blue} Note that the $\tfrac{\sigma_\text{in}^2}{m} + \sigma^2_\text{out}$ can be elaborated. When there is no outer variance (i.e., $\sigma^2_\text{out}=0$) or very small (i.e., $\sigma^2_\text{out}=\cO(m^{-1})$), it means we are almost solving a stochastic optimization problem (but still biased because the expectation is inside the outer function). Then \EBSGD has a convergence rate of
% $$
% \tfrac{1}{\gamma T} + \tfrac{\gamma}{m} + \tfrac{1}{m^4}.
% $$
% We can first choose optimal $\gamma=\sqrt{\tfrac{m}{T}}$ which is still much smaller than $1/L_F$. The rate becomes
% $$
% \tfrac{1}{\sqrt{mT}} + \tfrac{1}{m^4}.
% $$
% We can optimize the best choice of $m$ by setting $\tfrac{1}{\sqrt{mT}} = \tfrac{1}{m^4}$ which means $T=m^7$. Then by reaching $\epsilon$-stationary point, we have
% $$
% m=\epsilon^{-0.5}, \quad T=\epsilon^{-3.5}.
% $$
% The total sample complexity is $\epsilon^{-4}$ which is better than the general case $\epsilon^{-4.5}$. This complexity also matches that of SGD for standard stochastic optimization (no bias).

% Note that \BSGD and \BSpiderBoost also a complexity of $\cO(\epsilon^{-4})$
% }
\subsection{Convergence of \BSpiderBoost} \label{app:bspiderboost}
In this section, we reanalyze the \BSpiderBoost algorithm of~\citep{hu2020biased} to obtain bounds on bias and variance of its gradient estimates.~\Cref{thm:bsgd} shows that~\BSpiderBoost achieves an $\cO(\epsilon^{-5})$ sample complexity.

Let $G^{t+1}_\text{BSB}$ as the \BSpiderBoost gradient estimate
\begin{align*}
    G^{t+1}_\text{BSB} = \begin{cases}
        G^{t}_\text{BSB} + \tfrac{1}{B_2} \tsum_{\xi \in \cB_2} (G_{\BSGD}^{t+1} - G_{\BSGD}^{t})& \text{with prob. $1-p_\text{out}$} \\
        \tfrac{1}{B_1} \tsum_{\xi  \in \cB_1} G_{\BSGD}^{t+1} & \text{with prob. $p_\text{out}$}.
    \end{cases}
\end{align*}
\begin{lem}[Bias and Variance of \BSpiderBoost] \label{lemma:bsb:bias_variance}
    If $\gamma \le \min\{\tfrac{1}{2L_F}, \frac{\sqrt{B_2}}{6L_F} \}$, then the bias and variance of \BSpiderBoost are
    \begin{align*}
        \tfrac{1}{T}\tsum_{t=0}^{T-1}\E[\cE_\text{bias}^{t+1}]
        &\le \tfrac{2\sigma_\text{bias}^2 }{m}
        + \tfrac{(1-p_\text{out})^3}{p_\text{out}B_2 } \tfrac{56L_F^2\gamma^2}{ T}\tsum_{t=0}^{T-1} \E[\norm{\E^{t+1}[G^{t+1}|t]}_2^2]
        + (\tfrac{1}{Tp_\text{out}}+1) \tfrac{4(1-p_\text{out})^2}{B_1} (\tfrac{\sigma^2_\text{in}}{m} +  \sigma^2_\text{out}) \\
        \tfrac{1}{T}\tsum_{t=0}^{T-1} \E[\cE^{t+1}_\text{var}]
        &\le\tfrac{28(1-p_\text{out})L_F^2\gamma^2}{B_2} \tfrac{1}{T}\tsum_{t=0}^{T-1} \E[\norm{\E^{t+1}[G^{t+1}|t]}_2^2]
        + (\tfrac{1}{T}+p_\text{out})\tfrac{2}{B_1} (\tfrac{\sigma^2_\text{in}}{m} + \sigma^2_\text{out}),
    \end{align*}
    where $\sigma_\text{in}^2= \zeta_g^2 C_f^2+\sigma_g^2C_g^2L_f^2$, $\sigma_\text{out}=C_F^2$, and $\sigma^2_\text{bias}=\sigma_g^2C_g^2L_f^2$.
\end{lem}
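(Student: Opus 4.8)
The plan is to run the usual recursive variance-reduction bookkeeping for \BSpiderBoost while carrying the \emph{bias} term along explicitly, since---unlike in the standard unbiased setting---the \BSGD increments are themselves biased. First I would fix notation: write $\overline{G}^{t+1}:=\E_{\xi}[\E_{\eta|\xi}[G^{t+1}_\text{BSGD}]]$ for the expectation of the \BSGD estimator formed at $\xx^t$, so that \Cref{lemma:bsgd:bias_variance} supplies both $\norm{\overline{G}^{t+1}-\nabla F(\xx^t)}_2^2\le\sigma_\text{bias}^2/m$ and the fact that a single \BSGD sample has conditional variance at most $\sigma_\text{in}^2/m+\sigma_\text{out}^2$. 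The other ingredient I would record is a deterministic Lipschitz property of the \BSGD estimator in $\xx$: for a fixed outer sample and fixed inner samples, $\xx\mapsto(\tfrac1m\tsum_{\tilde\eta}\nabla g_{\tilde\eta}(\xx))^\top\nabla f_\xi(\tfrac1m\tsum_\eta g_\eta(\xx))$ has Jacobian of operator norm at most $L_gC_f+C_g^2L_f=L_F$ under \Cref{a:smoothness:fg}; hence, evaluating this estimator with shared randomness at $\xx^t$ and $\xx^{t-1}$ (the SPIDER convention in the recursive step), the per-sample increment $G^{t+1}_\text{BSGD}-G^{t}_\text{BSGD}$ has norm at most $L_F\norm{\xx^t-\xx^{t-1}}_2=L_F\gamma\norm{G^t_\text{BSB}}_2$.

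The heart of the argument is a one-step analysis, for which I introduce $c^t:=G^t_\text{BSB}-\overline{G}^t$, the deviation of the recursive estimator from its target; the per-epoch telescoping cancellation of the \BSGD biases makes $\overline{G}^t$ exactly that target, so $\E[G^t_\text{BSB}\mid\text{last reset}]=\overline{G}^t$. Conditioning on the randomness up to time $t$ and averaging over the Bernoulli coin $\chi_\text{out}$ gives $\E^{t+1}[G^{t+1}_\text{BSB}\mid t]=\overline{G}^{t+1}+(1-p_\text{out})c^t$, so that
\begin{align*}
\cE_\text{bias}^{t+1}\;\le\;2\norm{\overline{G}^{t+1}-\nabla F(\xx^t)}_2^2+2(1-p_\text{out})^2\norm{c^t}_2^2\;\le\;\tfrac{2\sigma_\text{bias}^2}{m}+2(1-p_\text{out})^2\norm{c^t}_2^2 .
\end{align*}
For the variance, I would decompose $\cE_\text{var}^{t+1}$ over $\chi_\text{out}$ into a reset branch (the variance of an average of $B_1$ i.i.d.\ \BSGD samples, at most $\tfrac1{B_1}(\sigma_\text{in}^2/m+\sigma_\text{out}^2)$), an update branch (the variance of an average of $B_2$ i.i.d.\ increments, at most $\tfrac1{B_2}\E\norm{G^{t+1}_\text{BSGD}-G^{t}_\text{BSGD}}_2^2\le\tfrac{L_F^2\gamma^2}{B_2}\norm{G^t_\text{BSB}}_2^2$), and the coin-flip variance $p_\text{out}(1-p_\text{out})\norm{c^t}_2^2$---each cross term vanishing because the fresh-sample fluctuations are conditionally mean zero---obtaining
\begin{align*}
\cE_\text{var}^{t+1}\;\le\;\tfrac{p_\text{out}}{B_1}\big(\tfrac{\sigma_\text{in}^2}{m}+\sigma_\text{out}^2\big)+(1-p_\text{out})\tfrac{L_F^2\gamma^2}{B_2}\norm{G^t_\text{BSB}}_2^2+p_\text{out}(1-p_\text{out})\norm{c^t}_2^2 .
\end{align*}

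To close the system, note both bounds involve $\norm{c^t}_2^2$ and $\norm{G^t_\text{BSB}}_2^2$: I would (i) derive the companion recursion for $\E^t[\norm{c^t}_2^2\mid t-1]$ from the same reset/update split (it contracts by $1-p_\text{out}$ on an update step, resets to $O(\tfrac1{B_1}(\sigma_\text{in}^2/m+\sigma_\text{out}^2))$ on a reset step, and injects an $O(\tfrac{L_F^2\gamma^2}{B_2}\norm{G^{t-1}_\text{BSB}}_2^2)$ term), and (ii) use $\E\norm{G^t_\text{BSB}}_2^2=\E\norm{\E^t[G^t\mid t-1]}_2^2+\E[\cE_\text{var}^t]$ to re-express the second moments in terms of the conditional-expectation norm appearing on the right-hand side of the claim. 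Taking full expectations, averaging over $t=0,\dots,T-1$, and using that resets fire independently with probability $p_\text{out}$ (so within each epoch the $\norm{c^t}_2^2$ recursion is a fresh geometric sum of ratio $1-p_\text{out}$), one bounds $\tfrac1T\tsum_t\E\norm{c^t}_2^2$ by a constant multiple of $\tfrac1T\tsum_t\E[\cE_\text{var}^t]$ plus $\tfrac1{B_1}(\sigma_\text{in}^2/m+\sigma_\text{out}^2)$, the $\tfrac1{Tp_\text{out}}+1$ boundary factor arising from the forced reset at $t=0$. Substituting back and invoking $\gamma\le\sqrt{B_2}/(6L_F)$, which makes $L_F^2\gamma^2/B_2$ small enough to absorb the self-referential $\tfrac1T\tsum_t\E[\cE_\text{var}^t]$ terms onto the left, leaves the stated bounds, with the numerals $28$ and $56$ collecting the constants from the two Young's splits and the geometric sums.

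The step I expect to be the main obstacle is precisely this bookkeeping: $\{\E[\cE_\text{bias}^{t+1}]\}$, $\{\E[\cE_\text{var}^{t+1}]\}$, $\{\E\norm{c^t}_2^2\}$ and $\{\E\norm{G^t_\text{BSB}}_2^2\}$ form a coupled linear recursion, so one must simultaneously manage the epoch-restart structure (hence the $t=0$ boundary terms) and ensure the coefficient of $\tfrac1T\tsum_t\E[\cE_\text{var}^t]$ after back-substitution is strictly below one---exactly what the extra step-size restriction $\gamma\le\sqrt{B_2}/(6L_F)$ guarantees. Once the $L_F$-Lipschitz increment bound and the \BSGD estimates of \Cref{lemma:bsgd:bias_variance} are in hand, each individual one-step inequality is routine.
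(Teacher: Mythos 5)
Your proposal follows essentially the same route as the paper's proof: bias is split into the \BSGD bias ($2\sigma_\text{bias}^2/m$) plus the squared drift of the recursive estimator from $\E[G^{t+1}_\text{\BSGD}]$, which obeys a geometric recursion with ratio $(1-p_\text{out})^2$ driven by $\cE_\text{var}^t$; variance is split over the Bernoulli coin, the small-batch branch is controlled by the increment's second moment of order $L_F^2\gamma^2\norm{G^t}_2^2/B_2$, and the resulting self-referential $\tfrac1T\tsum_t\E[\cE_\text{var}^{t+1}]$ is absorbed using $\gamma\le\sqrt{B_2}/(6L_F)$. Two local deviations are worth flagging. First, you bound the increment $G^{t+1}_\text{\BSGD}-G^{t}_\text{\BSGD}$ by a deterministic $L_F$-Lipschitz argument on the fixed-sample estimator (valid under \Cref{a:smoothness:fg} since the shared inner sets $H_\xi,\tilde H_\xi$ are reused at $\xx^t$ and $\xx^{t-1}$), whereas the paper runs an inner/outer variance decomposition of the increment yielding the factor $14$; your shortcut is correct and tighter. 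Second, you retain the between-branch (coin-flip) variance $p_\text{out}(1-p_\text{out})\norm{c^t}_2^2$, which the paper's decomposition in its variance step silently drops (its "$\le$" there actually points the wrong way, since the mixture variance equals the weighted within-branch variances \emph{plus} $p_\text{out}(1-p_\text{out})\norm{\E[G_L^{t+1}]-\E[G_S^{t+1}]}_2^2=p_\text{out}(1-p_\text{out})\norm{c^t}_2^2$); your version is the rigorous one, but note that to absorb this extra term you must use the reset/injection recursion for $\E\norm{c^t}_2^2$ rather than the exact recursion $\E\norm{c^t}_2^2=(1-p_\text{out})^2\E\norm{c^{t-1}}_2^2+\E[\cE_\text{var}^t]$ (the latter would give a non-absorbable coefficient $(1-p_\text{out})$ on $\tfrac1T\tsum\E[\cE_\text{var}^t]$). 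The net effect is a bound of exactly the stated form, but with constants that do not reproduce $28$, $56$, and the $(\tfrac1{Tp_\text{out}}+1)$-weighted batch terms verbatim — roughly, a smaller coefficient on the $\gamma^2$ term and a factor-two larger one on the $B_1^{-1}$ term — so if the exact numerals matter you would need to restate the lemma with your constants.
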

\begin{proof}
Denote $G^{t+1}=G^{t+1}_\text{BSB}$~\eqref{eq:BSB}.
     Like previously (\Cref{lemma:bsgd:bias_variance}), let $\E[\cdot]$ denote the conditional expectation $\E^{t+1} [\cdot |t]$ which conditions on all past randomness until time $t$. Denote $G_L^{t+1}$ and $G_S^{t+1}$ as the large batch and small batch in \BSpiderBoost separately, i.e.,
    \begin{align*}
        \begin{cases}
            G_L^{t+1}= \tfrac{1}{B_1} \tsum_{\xi\in\cB_1} G_{\BSGD}^{t+1}  & \text{with prob. $p_\text{out}$} \\
            G_S^{t+1}= G^t + \tfrac{1}{B_2} \tsum_{\xi\in\cB_2} (G_{\BSGD}^{t+1} - G_{\BSGD}^{t}) & \text{with prob. $1-p_\text{out}$}.
        \end{cases}
    \end{align*}
    The bias of \BSpiderBoost can be decomposed to its distance to \BSGD and the distance from \BSGD to the full gradient, i.e.,
    \begin{equation}\label{eq:bsb:1}
        \begin{split}
            \cE_\text{bias}^{t+1}&=\norm{\nabla F(\xx^{t+1}) - \E[G^{t+1}]}_2^2 \\
            &\le 2\norm{\nabla F(\xx^{t+1}) - \E[G_{\BSGD}^{t+1}]}_2^2 + 2 \norm{\E[G_{\BSGD}^{t+1}]   - \E[G^{t+1}]}_2^2 \\
            &\le \tfrac{2\sigma_\text{bias}^2}{m} + 2 \norm{\E[G_{\BSGD}^{t+1}]   - \E[G^{t+1}]}_2^2.
        \end{split}
    \end{equation}
    where the last inequality uses the bias of \BSGD from \Cref{lemma:bsgd:bias_variance}.
    Then the second term can be bounded as follows
    \begin{align*}
        \norm{\E[G_{\BSGD}^{t+1}]   - \E[G^{t+1}]}_2^2
        &=(1-p_\text{out})^2 \norm{\E[G_{\BSGD}^{t+1}] - \E[G_S^{t+1}]}_2^2 \\
        &=(1-p_\text{out})^2 \norm{\E[G_{\BSGD}^{t}] - G^{t}}_2^2.
    \end{align*}
    By taking the expectation of randomness of $G^t$
    \begin{align*}
        \norm{\E[G_{\BSGD}^{t+1}]   - \E[G^{t+1}]}_2^2
        &=(1-p_\text{out})^2 \left(\norm{\E[G_{\BSGD}^{t}] - \E[G^{t}]}_2^2 + \E\norm{G^{t}-\E[G^t]}_2^2 \right)\\
        &=(1-p_\text{out})^2 \left(\norm{\E[G_{\BSGD}^{t}] - \E[G^{t}]}_2^2 + \cE_\text{var}^{t} \right)
    \end{align*}
    Note that $\norm{\E[G_{\BSGD}^{1}]   - \E[G^{1}]}_2^2=0$ as the first iteration always chooses the large batch.
    Then as we always use large batch at $t=0$ we know that
    \begin{equation}\label{eq:bsb:2}
        \tfrac{1}{T}\tsum_{t=0}^{T-1}\norm{\E[G_{\BSGD}^{t+1}]   - \E[G^{t+1}]}_2^2
        \le\tfrac{(1-p_\text{out})^2}{p_\text{out}} \tfrac{1}{T}\tsum_{t=0}^{T-1} \cE_\text{var}^{t+1}.
    \end{equation}
    Therefore combine \eqref{eq:bsb:1} and \eqref{eq:bsb:2} we can upper bound the bias
    \begin{equation}\label{eq:bsb:5}
        \tfrac{1}{T}\tsum_{t=0}^{T-1}\cE_\text{bias}^{t+1}
        \le \tfrac{2\sigma_\text{bias}^2}{m} + \tfrac{2(1-p_\text{out})^2}{p_\text{out}} \tfrac{1}{T}\tsum_{t=0}^{T-1} \cE_\text{var}^{t+1}.
    \end{equation}
    \noindent\textbf{Variance.} Now we consider the variance,
    \begin{equation}\label{eq:bsb:3}
    \begin{split}
        \cE^{t+1}_\text{var}&=\E\left [\norm{G^{t+1} - \E[G^{t+1}] }_2^2 \right ] \\
        &\le (1-p_\text{out}) \E \left [\norm{G_S^{t+1} - \E[G_S^{t+1}] }_2^2 \right ] + p_\text{out} \E \left [\norm{G_L^{t+1} - \E [G_L^{t+1}] }_2^2 \right ] \\
        &= \tfrac{(1-p_\text{out})}{B_2} \E \left [\norm{G_{\BSGD}^{t+1} - G_{\BSGD}^{t} - \E[G_{\BSGD}^{t+1} - G_{\BSGD}^{t}]}_2^2 \right ]
        + \tfrac{p_\text{out}}{B_1} \E \left [\norm{G_{\BSGD}^{t+1} - \E[G_{\BSGD}^{t+1}]}_2^2 \right ] \\
        &\le \tfrac{1-p_\text{out}}{B_2} \E\left[\norm{G_{\BSGD}^{t+1} - G_{\BSGD}^{t} - \E[G_{\BSGD}^{t+1} - G_{\BSGD}^{t}]}_2^2 \right ]
        + \tfrac{p_\text{out}}{B_1} (\tfrac{\sigma^2_\text{in}}{m} + \sigma^2_\text{out})
    \end{split}
    \end{equation}
    where the last equality is because the large batch in \BSpiderBoost is similar to \BSGD. 
    \begin{equation}\label{eq:bsb:4}
        \cE^{1}_\text{var}=\E\left [\norm{G^{1} - \E [G^{1}] }_2^2 \right ]=\E \left [\norm{G_L^{1} - \E [G_L^{1}] }_2^2 \right ]=\tfrac{1}{B_1} \E\left [\norm{G_{\BSGD}^{1} - \E [G_{\BSGD}^{1}] }_2^2 \right ]
        \le \tfrac{1}{B_1} (\tfrac{\sigma^2_\text{in}}{m} + \sigma^2_\text{out}).
    \end{equation}
    Finally, we expand the variance at small batch size epoch
    \begin{align*}
        &\E \left [\norm{G_{\BSGD}^{t+1} - G_{\BSGD}^{t} - \E [G_{\BSGD}^{t+1} - G_{\BSGD}^{t}] }_2^2 \right ] \\
        &= \underbrace{\E \left [\norm{G_{\BSGD}^{t+1} - G_{\BSGD}^{t} - \E_{\eta|\xi,\tilde{\eta}|\xi}[G_{\BSGD}^{t+1} - G_{\BSGD}^{t}]}_2^2 \right ]}_\text{Inner variance $\cT_\text{in}$}
        \\ & \;\;\;\;\;\;\;\;\; + \underbrace{\E_\xi \left[\norm{\E_{\eta|\xi,\tilde{\eta}|\xi}[G_{\BSGD}^{t+1} - G_{\BSGD}^{t}] - \E[G_{\BSGD}^{t+1} - G_{\BSGD}^{t}]}_2^2 \right ]}_\text{Outer variance $\cT_\text{out}$}.
    \end{align*}
    The outer variance $\cT_\text{out}$ can be upper bounded as
    \begin{align*}
        & \cT_\text{out} \le  \E_\xi \left [\norm{\E_{\eta|\xi,\tilde{\eta}|\xi}[G_{\BSGD}^{t+1} - G_{\BSGD}^{t}] }_2^2 \right ] \\
        & =  \E_\xi \left [ \norm{(\E_{\tilde{\eta}|\xi} [\nabla g_{\tilde{\eta}}(\xx^t)])^\top \E_{\eta|\xi} [\nabla f_\xi(\tfrac{1}{m}\tsum_{\eta \in H_\xi} g_\eta(\xx^t))]
        -(\E_{\tilde{\eta}|\xi} [\nabla g_{\tilde{\eta}}(\xx^{t-1})])^\top \E_{\eta|\xi} [\nabla f_\xi(\tfrac{1}{m}\tsum_{\eta \in H_\xi} g_\eta(\xx^{t-1}))] }_2^2 \right ] \\
        & \le 2 \E_\xi \left[  \norm{(\E_{\tilde{\eta}|\xi} [\nabla g_{\tilde{\eta}}(\xx^t)]-\E_{\tilde{\eta}|\xi} [\nabla g_{\tilde{\eta}}(\xx^{t-1})])^\top \E_{\eta|\xi} [\nabla f_\xi(\tfrac{1}{m}\tsum_{\eta \in H_\xi} g_\eta(\xx^t))]}_2^2 \right ] \\
        & \;\;\;\;\;\;\;\;\; + 2 \E_\xi \left [\norm{
        (\E_{\tilde{\eta}|\xi} [\nabla g_{\tilde{\eta}}(\xx^{t-1})])^\top \E_{\eta|\xi} [ \nabla f_\xi(\tfrac{1}{m}\tsum_{\eta \in H_\xi} g_\eta(\xx^t)) - \nabla f_\xi(\tfrac{1}{m}\tsum_{\eta \in H_\xi} g_\eta(\xx^{t-1}))] }_2^2 \right ] \\
        &\le 2 L_g^2 C_f^2 \norm{\xx^t - \xx^{t-1}}_2^2 + 2 C_g^4L_f^2 \norm{\xx^t - \xx^{t-1}}_2^2 \\
        &= 2 L_F^2 \norm{\xx^t - \xx^{t-1}}_2^2 \\
        &= 2 L_F^2 \gamma^2 \norm{G^t }_2^2.
    \end{align*}
 The inner variance can be bounded by 
    \begin{align*}
       & \cT_\text{in}
        % &\le  \norm{
        % (\tfrac{1}{m} \tsum_{\eta|\xi} \nabla g_\eta(\xx^t))^\top \nabla f_\xi(\tfrac{1}{m} \tsum_{\eta|\xi}  g_\eta(\xx^t))
        % - (\tfrac{1}{m} \tsum_{\eta|\xi} \nabla g_\eta(\xx^{t-1}))^\top \nabla f_\xi(\tfrac{1}{m} \tsum_{\eta|\xi}  g_\eta(\xx^{t-1}))
        % }_2^2
        % \\
        \le 
        4 \E \left [\norm{(\tfrac{1}{m}\tsum_{\tilde{\eta} \in \tilde{H}_\xi}(\nabla g_{\tilde{\eta}}(\xx^t) - \nabla g_{\tilde{\eta}}(\xx^{t-1})) - \E_{\tilde{\eta}|\xi}[\nabla g_{\tilde{\eta}}(\xx^t) - \nabla g_{\tilde{\eta}}(\xx^{t-1})] )^\top \nabla f_\xi(\tfrac{1}{m}\tsum_{\eta \in H_\xi} g_\eta(\xx^t) ) }_2^2\right ]
        \\
        & \;\;\;\;\;\;\;\;\; +4 \E \left [\norm{(\E_{\tilde{\eta}|\xi}[\nabla g_{\tilde{\eta}}(\xx^t) - \nabla g_{\tilde{\eta}}(\xx^{t-1})] )^\top
        (\nabla f_\xi(\tfrac{1}{m}\tsum_{\eta \in H_\xi} g_\eta(\xx^t)) - \E_{\eta|\xi} [\nabla f_\xi(\tfrac{1}{m}\tsum_{\eta \in H_\xi} g_\eta(\xx^t) )]) }_2^2 \right ] \\
        & \;\;\;\;\;\;\;\;\; + 4 \E \bigg[\bigg\lVert (\tfrac{1}{m}\tsum_{\tilde{\eta} \in \tilde{H}_\xi}\nabla g_{\tilde{\eta}}(\xx^{t-1}))^\top \bigg(\nabla f_\xi(\tfrac{1}{m}\tsum_{\eta \in H_\xi} g_\eta(\xx^t) ) - \nabla f_\xi(\tfrac{1}{m}\tsum_{\eta \in H_\xi} g_\eta(\xx^{t-1}) ) \\
        &\;\;\;\;\;\;\;\;\; - \E_{\eta|\xi} \left [\nabla f_\xi(\tfrac{1}{m}\tsum_{\eta \in H_\xi} g_\eta(\xx^t) ) - \nabla f_\xi(\tfrac{1}{m}\tsum_{\eta \in H_\xi} g_\eta(\xx^{t-1}) ) \right ] \bigg ) \bigg \rVert_2^2 \bigg ] \\
        &\;\;\;\;\;\;\;\;\;  + 4 \E \bigg [\norm{(\tfrac{1}{m}\tsum_{\tilde{\eta} \in \tilde{H}_\xi}\nabla g_{\tilde{\eta}}(\xx^{t-1}) - \E_{\tilde{\eta}|\xi} [\nabla g_{\tilde{\eta}}(\xx^{t-1})])^\top (\nabla f_\xi(\tfrac{1}{m}\tsum_{\eta \in H_\xi} g_\eta(\xx^t) ) - \nabla f_\xi(\tfrac{1}{m}\tsum_{\eta \in H_\xi} g_\eta(\xx^{t-1}) ))}_2^2 \bigg ] \\
& \leq \tfrac{4 C_f^2}{m} \E \left [\norm{\nabla g_{\tilde{\eta}}(\xx^t) - \nabla g_{\tilde{\eta}}(\xx^{t-1}) - \E_{\tilde{\eta}|\xi}[\nabla g_{\tilde{\eta}}(\xx^t) - \nabla g_{\tilde{\eta}}(\xx^{t-1})]}_2^2 \right ] + \tfrac{4L_g^2C_f^2}{m} \norm{\xx^t - \xx^{t-1}}_2^2  \\
        & \;\;\;\;\;\;\;\;\; + 4C_g^2 \E \bigg [\bigg \| \nabla f_\xi(\tfrac{1}{m}\tsum_{\eta \in H_\xi} g_\eta(\xx^t) ) - \nabla f_\xi(\tfrac{1}{m}\tsum_{\eta \in H_\xi} g_\eta(\xx^{t-1})) \\
       & \;\;\;\;\;\;\;\;\;\;\;\;\;\;\;\;\;\;\;\;\;\;\;\;\;\;\;\;\;\;\;\;\;\;\;\; - \E_{\eta|\xi}[\nabla f_\xi(\tfrac{1}{m}\tsum_{\eta \in H_\xi} g_\eta(\xx^t) ) - \nabla f_\xi(\tfrac{1}{m}\tsum_{\eta \in H_\xi} g_\eta(\xx^{t-1}))] \bigg \|_2^2 \bigg ] \\
       & \;\;\;\;\;\;\;\;\;+\tfrac{4C_g^4L_f^2}{m}\norm{\xx^t - \xx^{t-1}}_2^2 .
    \end{align*}
    Then we have that
    \begin{align*}
        & \cT_\text{in} \le \tfrac{4L_g^2C_f^2}{m} \norm{\xx^t - \xx^{t-1}}_2^2 + \tfrac{4L_g^2C_f^2}{m} \norm{\xx^t - \xx^{t-1}}_2^2 \\
        & \;\;\;\;\;\;\;\;\; +4C_g^2 \E \left[\norm{\nabla f_\xi(\tfrac{1}{m}\tsum_{\eta \in H_\xi} g_\eta(\xx^t) ) - \nabla f_\xi(\tfrac{1}{m}\tsum_{\eta \in H_\xi} g_\eta(\xx^{t-1}))
        - (\nabla f_\xi(\E_{\eta|\xi} [g_\eta(\xx^t)] ) - \nabla f_\xi(\E_{\eta|\xi} [g_\eta(\xx^{t-1})] )) }_2^2 \right]  \\
        &\;\;\;\;\;\;\;\;\; +4C_g^2 \E \bigg  [ \bigg \|(\nabla f_\xi(\E_{\eta|\xi} [g_\eta(\xx^t)] ) - \nabla f_\xi(\E_{\eta|\xi} [g_\eta(\xx^{t-1})] )) \\
        & \;\;\;\;\;\;\;\;\;\;\;\;\;\;\;\;\;\;\;\;\;\;\;\;\;\;\;\;\;\;\;\;\;\;\;\;
        - \E_{\eta|\xi}[\nabla f_\xi(\tfrac{1}{m}\tsum_{\eta \in H_\xi} g_\eta(\xx^t) ) - \nabla f_\xi(\tfrac{1}{m}\tsum_{\eta \in H_\xi} g_\eta(\xx^{t-1}) )] \bigg \|_2^2 \bigg ] \\ 
        & \;\;\;\;\;\;\;\;\;+ \tfrac{4C_g^4L_f^2}{m}\norm{\xx^t - \xx^{t-1}}_2^2  \\
        & \le  \tfrac{8L_g^2C_f^2}{m} \norm{\xx^t - \xx^{t-1}}_2^2 + \tfrac{8C_g^4L_f^2}{m}\norm{\xx^t - \xx^{t-1}}_2^2 + \tfrac{4C_g^4L_f^2}{m}\norm{\xx^t - \xx^{t-1}}_2^2 \\
        & \le \tfrac{12L_F^2}{m} \norm{\xx^t - \xx^{t-1}}_2^2 \\
        & = \tfrac{12L_F^2\gamma^2}{m} \norm{G^t}_2^2.
    \end{align*}
    To sum up, the variance is bounded by
    \begin{align*}
        \cE^{t+1}_\text{var}
        &\le \tfrac{2(1-p_\text{out})L_F^2\gamma^2}{B_2} (1+\tfrac{6}{m}) \norm{G^t}_2^2
        + \tfrac{p_\text{out}}{B_1} (\tfrac{\sigma^2_\text{in}}{m} + \sigma^2_\text{out}) \\
        &\le \tfrac{14(1-p_\text{out})L_F^2\gamma^2}{B_2} \norm{G^t}_2^2
        + \tfrac{p_\text{out}}{B_1} (\tfrac{\sigma^2_\text{in}}{m} + \sigma^2_\text{out}) \\
        &= \tfrac{14(1-p_\text{out})L_F^2\gamma^2}{B_2} \E^t[\norm{G^t}_2^2|t-1]
        + \tfrac{p_\text{out}}{B_1} (\tfrac{\sigma^2_\text{in}}{m} + \sigma^2_\text{out}).
    \end{align*}
    Then averaging over time and now we redefine $\E[\cdot]=\E^T\ldots[\E^0[\cdot]]$
    \begin{align*}
        \tfrac{1}{T}\tsum_{t=0}^{T-1} \E[\cE^{t+1}_\text{var}]
        &\le \tfrac{14(1-p_\text{out})L_F^2\gamma^2}{B_2} \tfrac{1}{T}\tsum_{t=0}^{T-1} \E[\norm{G^{t+1} }_2^2]
        + \tfrac{\cE_\text{var}^1}{T}
        + \tfrac{p_\text{out}}{B_1} (\tfrac{\sigma^2_\text{in}}{m} + \sigma^2_\text{out}) \\
        &=\tfrac{14(1-p_\text{out})L_F^2\gamma^2}{B_2} \tfrac{1}{T}\tsum_{t=0}^{T-1} (\E[\cE_\text{var}^{t+1}]+\E[  \norm{\E^t[G^{t+1}|t]}_2^2])
        + \tfrac{\cE_\text{var}^1}{T}
        + \tfrac{p_\text{out}}{B_1} (\tfrac{\sigma^2_\text{in}}{m} + \sigma^2_\text{out}).
    \end{align*}
    If we take $\gamma \le \frac{\sqrt{B_2}}{6L_F}$, then $\tfrac{14(1-p_\text{out})L_F^2\gamma^2}{B_2} \le \tfrac{1}{2}$, therefore
    \begin{align*}
        \tfrac{1}{T}\tsum_{t=0}^{T-1} \E[\cE^{t+1}_\text{var}]
        &\le\tfrac{28(1-p_\text{out})L_F^2\gamma^2}{B_2} \tfrac{1}{T}\tsum_{t=0}^{T-1} \E[\norm{\E^t[G^{t+1}|t]}_2^2]
        + \tfrac{\cE_\text{var}^1}{T}
        + \tfrac{2p_\text{out}}{B_1} (\tfrac{\sigma^2_\text{in}}{m} + \sigma^2_\text{out}) \\
        \stackrel{\eqref{eq:bsb:4}}{\le}& 
        \tfrac{28(1-p_\text{out})L_F^2\gamma^2}{B_2} \tfrac{1}{T}\tsum_{t=0}^{T-1} \E[\norm{\E^t[G^{t+1}|t]}_2^2]
        + (\tfrac{1}{T}+p_\text{out})\tfrac{2}{B_1} (\tfrac{\sigma^2_\text{in}}{m} + \sigma^2_\text{out})
    \end{align*}
    Then with \eqref{eq:bsb:5}, we can bound the bias by
    \begin{align*}
        \tfrac{1}{T}\tsum_{t=0}^{T-1} \E[\cE_\text{bias}^{t+1}]
        &\le \tfrac{2\sigma_\text{bias}^2 }{m}
        + \tfrac{(1-p_\text{out})^3}{p_\text{out}B_2 } \tfrac{56L_F^2\gamma^2}{ T}\tsum_{t=0}^{T-1} \E[\norm{\E^t[G^{t+1}|t]}_2^2]
        + (\tfrac{1}{Tp_\text{out}}+1) \tfrac{4(1-p_\text{out})^2}{B_1} (\tfrac{\sigma^2_\text{in}}{m} +  \sigma^2_\text{out})
    \end{align*}
\end{proof}

\begin{theorem}[\BSpiderBoost Convergence]\label{thm:bspiderboost}
Consider the~\eqref{eq:cso} problem.
    Suppose Assumptions~\ref{a:lowerbound:F},~\ref{a:boundedvariance:g},~\ref{a:smoothness:fg} holds true. Let step size $\gamma\le 1/(13L_F)$.
    Then for  \BSpiderBoost,  $\xx^s$ picked uniformly at random among $\{\xx^t\}_{t=0}^{T-1}$ satisfies: $\E[\norm{\nabla F(\xx^s)}^2_2] \leq \epsilon^2$, for  nonconvex $F$, if the inner batch size  $m = \Omega(\sigma_\text{bias}^2\epsilon^{-2})$,
    the hyperparameters of the outer loop of \BSpiderBoost are
        $B_1= (\sigma^2_\text{in}/m + \sigma^2_\text{out})\epsilon^{-2}, \quad B_2 = \cO(\epsilon^{-1}), \quad p_\text{out} = 1/{B_2}$,
    and the number of iterations 
    $
        T = \Omega\left (L_F( F(\xx^{0})-F^\star)\epsilon^{-2} \right )$,
    where  $\sigma_\text{in}^2 = \zeta_g^2 C_f^2+\sigma_g^2C_g^2L_f^2$, $\sigma_\text{out}^2=C_F^2$, and $\sigma_\text{bias}^2 = \sigma_g^2 C_g^2 L_f^2$.
\end{theorem}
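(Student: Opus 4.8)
The plan is to feed the averaged bias and variance bounds for \BSpiderBoost (\Cref{lemma:bsb:bias_variance}) into the descent lemma (\Cref{lemma:sd:T}) and then choose the parameters $m,B_1,B_2,p_\text{out},\gamma,T$ so that every resulting term is $\cO(\epsilon^2)$. First I would check that the stated choices meet the hypotheses: since $B_1=(\sigma_\text{in}^2/m+\sigma_\text{out}^2)\epsilon^{-2}$ with $\sigma_\text{out}^2=C_F^2>0$ constant we have $B_1=\Theta(\epsilon^{-2})$ and hence $B_2=\sqrt{B_1}=\Theta(\epsilon^{-1})\ge 1$ for small $\epsilon$, so $\gamma\le 1/(13L_F)$ automatically satisfies $\gamma\le\min\{1/(2L_F),\sqrt{B_2}/(6L_F)\}$ and both parts of \Cref{lemma:bsb:bias_variance} are in force.

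Next, substitute the two bounds into \Cref{lemma:sd:T}. Writing $\Phi:=\tfrac1T\sum_{t=0}^{T-1}\E[\norm{\E^{t+1}[G^{t+1}|t]}_2^2]$ for the quantity common to both bounds (and which also sits on the left-hand side of the descent lemma with coefficient $1/2$), the right-hand side picks up a $\Phi$-contribution with coefficient at most $\tfrac{(1-p_\text{out})^3}{p_\text{out}B_2}56L_F^2\gamma^2+L_F\gamma\cdot\tfrac{28(1-p_\text{out})L_F^2\gamma^2}{B_2}$. Using $p_\text{out}=1/B_2$ so that $1/(p_\text{out}B_2)=1$, and $B_2\ge1$, this is at most $56L_F^2\gamma^2+28L_F^3\gamma^3$, which for $\gamma\le 1/(13L_F)$ is at most $56/169+28/2197<1/2$. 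Hence the $\Phi$-terms on the right can be absorbed into the $\tfrac12\Phi$ on the left and dropped. Pinning down this numerical constant — confirming that $1/(13L_F)$ is small enough for the absorption — is the only genuinely delicate point; the rest is accounting. After the absorption we are left with
\[
\tfrac1T\sum_{t=0}^{T-1}\E[\norm{\nabla F(\xx^t)}_2^2]\;\le\;\tfrac{2(F(\xx^0)-F^\star)}{\gamma T}+\tfrac{2\sigma_\text{bias}^2}{m}+\Big(\tfrac{1}{Tp_\text{out}}+1\Big)\tfrac{4(1-p_\text{out})^2}{B_1}\Big(\tfrac{\sigma_\text{in}^2}{m}+\sigma_\text{out}^2\Big)+L_F\gamma\Big(\tfrac1T+p_\text{out}\Big)\tfrac{2}{B_1}\Big(\tfrac{\sigma_\text{in}^2}{m}+\sigma_\text{out}^2\Big).
\]

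Finally I would plug in the parameters. The choice $B_1=(\sigma_\text{in}^2/m+\sigma_\text{out}^2)\epsilon^{-2}$ gives $\tfrac{1}{B_1}(\sigma_\text{in}^2/m+\sigma_\text{out}^2)=\epsilon^2$, so the last two terms are $\cO(\epsilon^2)$ once $Tp_\text{out}\ge1$ and $L_F\gamma\le1$, both guaranteed by $\gamma\le1/(13L_F)$ and $T=\Omega(L_F(F(\xx^0)-F^\star)\epsilon^{-2})\ge B_2$ for small $\epsilon$. The choice $m=\Omega(\sigma_\text{bias}^2\epsilon^{-2})$ makes $2\sigma_\text{bias}^2/m=\cO(\epsilon^2)$, and $\gamma=\Theta(1/L_F)$ together with $T=\Omega(L_F(F(\xx^0)-F^\star)\epsilon^{-2})$ makes the first term $\cO(\epsilon^2)$. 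Collecting these and taking the hidden constants in the $\Omega(\cdot)$'s large enough yields $\tfrac1T\sum_{t}\E[\norm{\nabla F(\xx^t)}_2^2]\le\epsilon^2$; since $\xx^s$ is uniform over $\{\xx^t\}_{t=0}^{T-1}$ the left side equals $\E[\norm{\nabla F(\xx^s)}_2^2]$, giving the claim. As a consistency check, each iteration draws $\cO(B_2 m)=\cO(\epsilon^{-3})$ conditional samples in expectation (the rare large batch costs $\cO(p_\text{out}B_1 m)=\cO(\sqrt{B_1}m)$ as well, which is exactly why $B_2=\sqrt{B_1}$ is the right balance), and with $T=\cO(\epsilon^{-2})$ this reproduces the $\cO(\epsilon^{-5})$ sample complexity.
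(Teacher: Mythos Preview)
Your proposal is correct and follows essentially the same approach as the paper: combine the descent lemma (\Cref{lemma:sd:T}) with the bias--variance bounds for \BSpiderBoost (\Cref{lemma:bsb:bias_variance}), absorb the $\Phi$-term using $\gamma\le 1/(13L_F)$ and $p_\text{out}B_2=1$, and then pick parameters. The only cosmetic difference is that the paper first uses the intermediate bias inequality $\tfrac1T\sum\E[\cE_\text{bias}^{t+1}]\le \tfrac{2\sigma_\text{bias}^2}{m}+\tfrac{2(1-p_\text{out})^2}{p_\text{out}}\tfrac1T\sum\E[\cE_\text{var}^{t+1}]$, merges the two variance coefficients into $3/p_\text{out}$, and only then substitutes the variance bound (landing on a $\Phi$-coefficient $84L_F^2\gamma^2$), whereas you substitute the final lemma statement directly and collect $\Phi$-coefficients $56L_F^2\gamma^2+28L_F^3\gamma^3$; both are below $1/2$ at $\gamma=1/(13L_F)$, and your route actually leaves a little more slack.
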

\begin{proof} Denote $G^{t+1}=G^{t+1}_\text{BSB}$~\eqref{eq:BSB}.
    Using descent lemma (\Cref{lemma:sd:T}) and bias-variance bounds of \BSpiderBoost (\Cref{lemma:bsb:bias_variance})
    \begin{align*}
        &\tfrac{1}{T}\tsum_{t=0}^{T-1}\E[\norm{\nabla F(\xx^t)}_2^2]
        + \tfrac{1}{2T}\tsum_{t=0}^{T-1} \E[\norm{\E^t[G^{t+1}|t]}_2^2]\\
        & \le \tfrac{2(F(\xx^{0})-F^\star)}{\gamma T}
        + L_F\gamma \tfrac{1}{T}\tsum_{t=0}^{T-1} \E[\cE_\text{var}^{t+1}] + \tfrac{1}{T}\tsum_{t=0}^{T-1}\E[\cE_\text{bias}^{t+1}] \\
        & \le \tfrac{2(F(\xx^{0})-F^\star)}{\gamma T}
        + L_F\gamma \tfrac{1}{T}\tsum_{t=0}^{T-1}\E[\cE_\text{var}^{t+1}] + \tfrac{2\sigma_\text{bias}^2}{m} + \tfrac{2}{p_\text{out}} \tfrac{1}{T}\tsum_{t=0}^{T-1} \E[\cE_\text{var}^{t+1}] \\
        & \le \tfrac{2(F(\xx^{0})-F^\star)}{\gamma T}
        + \tfrac{2\sigma_\text{bias}^2}{m} + \tfrac{3}{p_\text{out}} \tfrac{1}{T}\tsum_{t=0}^{T-1} \E[\cE_\text{var}^{t+1}]
    \end{align*}
    where the last inequality use $\gamma\le\frac{1}{2L_F}$. % \tofix{Do not simplify it here, but delayed later.}.
    Use the variance estimation of $G^{t+1}$ and choose $B_2 p_\text{out} = 1$
    \begin{multline*}
        \tfrac{1}{T}\tsum_{t=0}^{T-1}\E[\norm{\nabla F(\xx^t)}_2^2]
        + \tfrac{1}{2T}\tsum_{t=0}^{T-1}\E[\norm{\E^t[G^{t+1}|t]}_2^2]\\
        \le \tfrac{2(F(\xx^{0})-F^\star)}{\gamma T}
        + \tfrac{2\sigma_\text{bias}^2}{m} + 84 L_F^2\gamma^2 \tfrac{1}{T}\tsum_{t=0}^{T-1} \E[\norm{\E^t[G^{t+1}|t]}_2^2]
        + (\tfrac{1}{Tp_\text{out}}+1) \tfrac{6}{B_1} (\tfrac{\sigma^2_\text{in}}{m} + \sigma^2_\text{out}).
    \end{multline*}
    Now we can let $\gamma\le\frac{1}{13L_F}$ such that $84L_F^2\gamma^2\le\frac{1}{2}$
    \begin{align*}
        \tfrac{1}{T}\tsum_{t=0}^{T-1} \E[\norm{\nabla F(\xx^t)}_2^2 ]
        &\le \tfrac{2(F(\xx^{0})-F^\star)}{\gamma T}
        + \tfrac{2\sigma_\text{bias}^2}{m} 
        + (\tfrac{1}{Tp_\text{out}}+1)\tfrac{6}{B_1} (\tfrac{\sigma^2_\text{in}}{m} + \sigma^2_\text{out}).
    \end{align*}
    In order for the right-hand side to be $\epsilon^2$, the inner batch size
    \begin{align*}
        m \ge \tfrac{ 2 \sigma_\text{bias}^2}{\epsilon^2},
    \end{align*}
    and the outer batch size 
    \begin{align*}
        B_1= \tfrac{\nicefrac{\sigma^2_\text{in}}{m} + \sigma^2_\text{out}}{\epsilon^2}, \quad B_2 = \sqrt{B_1}, \quad p_\text{out} = \tfrac{1}{B_2}.
    \end{align*}
    The step size $\gamma$ is upper bounded by $ \min\{\tfrac{1}{2L_F}, \frac{\sqrt{B_2}}{6L_F}, \frac{1}{13L_F} \}$. As $B_2\ge 1$, we can take $\gamma=\tfrac{1}{13L_F}$. So we need iteration $T$ greater than
    \begin{align*}
        T \ge \tfrac{26L_F( F(\xx^{0})-F^\star) }{\epsilon^2}.
    \end{align*}
    By picking $\xx^s$ uniformly at random among $\{\xx^t\}_{t=0}^{T-1}$, we get the desired guarantee.
\end{proof}

The resulting sample complexity of \BSpiderBoost to get to an $\epsilon$-stationary point is $\cO(\epsilon^{-5})$.
\subsection{Convergence of \EBSpiderBoost} \label{app:ebspiderboost}
In this section, we analyze the sample complexity of~\Cref{algo:EBSB} (\EBSpiderBoost) for the~\ref{eq:cso} problem.

\begin{lem}[Bias and Variance of \EBSpiderBoost] \label{lemma:ebsb:bias_variance}
    The bias and variance of \EBSpiderBoost are
    \begin{align*}
        \tfrac{1}{T}\tsum_{t=0}^{T-1}  \E[\cE_\text{var}^{t+1}] &\le \tfrac{28(1-p_\text{out})L_F^2\gamma^2}{B_2} \tfrac{1}{T}\tsum_{t=0}^{T-1} \E[\norm{\E^t[G^{t+1}|t]}_2^2]
        + (\tfrac{1}{Tp_\text{out}} + 1) \tfrac{28p_\text{out}}{B_1} (\tfrac{\sigma^2_\text{in}}{m} + \sigma^2_\text{out})\\
        \tfrac{1}{T}\tsum_{t=0}^{T-1}\E[\cE_\text{bias}^{t+1}] &\le
        \tfrac{2\tilde{\sigma}^2_\text{bias} }{m^4} + \tfrac{2}{p_\text{out}}
        \tfrac{(1-p_\text{out})^2}{T}\tsum_{t=0}^{T-1} \E[\cE_\text{var}^{t+1}],
    \end{align*}
    where $\sigma_\text{in}^2:= \zeta_g^2 C_f^2+\sigma_g^2C_g^2L_f^2$,  $\sigma_\text{out}=C_F^2$, and $\tilde{\sigma}_\text{bias}^2 =C_g^2C_e^2$ with $C_e^2$ defined in \Cref{cor:extrapolation:cso}.
\end{lem}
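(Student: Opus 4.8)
The argument follows the template of the proof of \Cref{lemma:bsb:bias_variance}, with every occurrence of the \BSGD estimator $G^{\cdot}_\text{\BSGD}$ replaced by the \EBSGD estimator $G^{\cdot}_\text{\EBSGD}$ of~\eqref{eq:ebsgd}, and with \Cref{lemma:ebsgd:bias_variance} (bias $\le\tilde\sigma_\text{bias}^2/m^4$, variance $\le14(\sigma_\text{in}^2/m+\sigma_\text{out}^2)$ for \EBSGD) invoked wherever \Cref{lemma:bsgd:bias_variance} was used. Fix a step $t$, write $G^{t+1}=G^{t+1}_\text{E-BSB}$ and let $\E[\cdot]$ denote the conditional expectation over the randomness at step $t+1$ given everything up to step $t$; denote the two branches by $G_L^{t+1}=\tfrac1{B_1}\tsum_{\xi\in\cB_1}G^{t+1}_\text{\EBSGD}$ and $G_S^{t+1}=G^t+\tfrac1{B_2}\tsum_{\xi\in\cB_2}(G^{t+1}_\text{\EBSGD}-G^{t}_\text{\EBSGD})$, exactly as in the \BSpiderBoost analysis, and recall the step-size restriction $\gamma\le\sqrt{B_2}/(6L_F)$ that makes the averaging step below valid.

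\textbf{Bias.} I would split $\cE_\text{bias}^{t+1}\le2\norm{\nabla F(\xx^{t+1})-\E[G^{t+1}_\text{\EBSGD}]}_2^2+2\norm{\E[G^{t+1}_\text{\EBSGD}]-\E[G^{t+1}]}_2^2$. The first term is $\le2\tilde\sigma_\text{bias}^2/m^4$ directly from \Cref{lemma:ebsgd:bias_variance} --- this is the sole source of the improved $m^{-4}$ dependence (versus $m^{-1}$ for \BSpiderBoost). The second term is handled by unrolling the recursion verbatim as in~\eqref{eq:bsb:1}--\eqref{eq:bsb:2}: since $\E[G^{t+1}_\text{\EBSGD}]-\E[G^{t+1}]=(1-p_\text{out})\bigl(\E[G^{t}_\text{\EBSGD}]-G^t\bigr)$ and step $t=0$ uses the large batch, averaging over $t$ gives $\tfrac1T\tsum_t\norm{\E[G^{t+1}_\text{\EBSGD}]-\E[G^{t+1}]}_2^2\le\tfrac{(1-p_\text{out})^2}{p_\text{out}}\tfrac1T\tsum_t\cE_\text{var}^{t+1}$; this bookkeeping depends only on the relation between $G_S,G_L$ and $G^{\cdot}_\text{\EBSGD}$, not on the internal extrapolation structure. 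Combining the two pieces yields the claimed bias inequality.

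\textbf{Variance.} Conditioning on the coin, $\cE_\text{var}^{t+1}\le\tfrac{1-p_\text{out}}{B_2}\E[\norm{G^{t+1}_\text{\EBSGD}-G^{t}_\text{\EBSGD}-\E[G^{t+1}_\text{\EBSGD}-G^{t}_\text{\EBSGD}]}_2^2]+\tfrac{p_\text{out}}{B_1}\E[\norm{G^{t+1}_\text{\EBSGD}-\E[G^{t+1}_\text{\EBSGD}]}_2^2]$. The large-batch term is $\le\tfrac{p_\text{out}}{B_1}\cdot14(\sigma_\text{in}^2/m+\sigma_\text{out}^2)$ and likewise $\cE_\text{var}^1\le\tfrac{14}{B_1}(\sigma_\text{in}^2/m+\sigma_\text{out}^2)$ by \Cref{lemma:ebsgd:bias_variance}; the factor $14$ (versus $1$ for \BSpiderBoost) is precisely what turns the additive $\tfrac{2p_\text{out}}{B_1}(\cdots)$ into $\tfrac{28p_\text{out}}{B_1}(\cdots)$. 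For the small-batch ``difference'' term I would re-run the inner/outer-variance split $\cT_\text{in}+\cT_\text{out}$ used after~\eqref{eq:bsb:3}. The one new ingredient is that $\cL^{(2)}_{\cD^{t+1}_{\gg,\xi}}\nabla f_\xi(0)$ is a fixed affine combination (coefficients $2,-\tfrac12,-\tfrac12$) of $\nabla f_\xi$ evaluated at sample-averages of $g_\eta(\xx^t)$; since $\nabla f_\xi$ is $C_f$-bounded and $L_f$-Lipschitz while each such average is $C_g$-Lipschitz in $\xx$, the map $\xx\mapsto\cL^{(2)}_{\cdot}\nabla f_\xi(0)$ (and its conditional expectation over $\eta$) is $O(C_f)$-bounded and $O(C_gL_f)$-Lipschitz, whence $\xx\mapsto G^{\cdot}_\text{\EBSGD}$ is $O(L_F)$-Lipschitz. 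Using the same inner samples at steps $t$ and $t-1$, these Lipschitz/boundedness estimates give $\cT_\text{out}\lesssim L_F^2\gamma^2\norm{G^t}_2^2$ and, after also applying \Cref{lemma:extrapolation:variance} for the variance inflation of the extrapolation, $\cT_\text{in}\lesssim\tfrac{L_F^2\gamma^2}{m}\norm{G^t}_2^2$. Thus $\cE_\text{var}^{t+1}\lesssim\tfrac{(1-p_\text{out})L_F^2\gamma^2}{B_2}\norm{G^t}_2^2+\tfrac{p_\text{out}}{B_1}(\sigma_\text{in}^2/m+\sigma_\text{out}^2)$; substituting $\norm{G^t}_2^2=\cE_\text{var}^{t}+\norm{\E[G^t]}_2^2$, averaging over $t$, taking total expectation, and using $\gamma\le\sqrt{B_2}/(6L_F)$ to absorb the $\tfrac1{B_2}\tsum_t\E[\cE_\text{var}^{t+1}]$ term back to the left-hand side yields the stated variance bound (up to adjusting the displayed absolute constants by the $O(1)$ Lipschitz factor introduced above).

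\textbf{Main obstacle.} The only genuinely new computation is the inner/outer-variance bound on the variance of $G^{t+1}_\text{\EBSGD}-G^{t}_\text{\EBSGD}$: extrapolation evaluates $\nabla f_\xi$ at the \emph{merged} average of the two independent inner mini-batches, which couples them, so one must carefully partition the randomness into inner ($\eta$) and outer ($\xi$) parts, apply Young's and Jensen's inequalities term by term across the three extrapolation summands, and then combine the Lipschitz/boundedness assumptions on $f_\xi,g_\eta$ with \Cref{lemma:extrapolation:variance}. This parallels the long display after~\eqref{eq:bsb:3} in the \BSpiderBoost proof but carries the extra extrapolation coefficients throughout; the bias part and the large-batch variance part are otherwise immediate from \Cref{lemma:ebsgd:bias_variance} and the identical SpiderBoost bookkeeping in \Cref{lemma:bsb:bias_variance}.
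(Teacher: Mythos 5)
Your proposal is correct and follows essentially the same route as the paper: the bias is split into the \EBSGD bias (bounded by $\tilde\sigma_\text{bias}^2/m^4$ via \Cref{lemma:ebsgd:bias_variance}) plus the SpiderBoost drift term unrolled exactly as in \eqref{eq:bsb:1}--\eqref{eq:bsb:2}, and the variance reuses the \BSpiderBoost computation with the factor $14$ from \Cref{lemma:extrapolation:variance} turning $2$ into $28$. If anything, you are more explicit than the paper on the small-batch difference term --- the paper simply asserts that ``extrapolation only gives a constant overhead'' and transplants the \BSpiderBoost bound, whereas you correctly flag that one must re-run the inner/outer split with the affine extrapolation coefficients carried through.
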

\begin{proof} Denote $G^{t+1}=G^{t+1}_\text{E-BSB}$~\eqref{eq:E-BSB}. Like previously (\Cref{lemma:bsgd:bias_variance}), let $\E[\cdot]$ denote the conditional expectation $\E^t [\cdot |t]$ which conditions on all past randomness until time $t$.
    Let $G^{t+1}=G_\text{E-BSB}^{t+1}$ be the \EBSpiderBoost update.
    We expand the bias as follows
    \begin{align*}
        \cE_\text{bias}^{t+1}&=\norm{\nabla F(\xx^{t+1}) - \E[G^{t+1}]}_2^2 \\
        &\le2 \norm{\nabla F(\xx^{t+1}) - \E [G_\text{\EBSGD}^{t+1}}_2^2]
        + 2 \norm{\E [G_\text{\EBSGD}^{t+1}] - \E[G^{t+1}]}_2^2.
    \end{align*}
    From \Cref{lemma:ebsgd:bias_variance}, we know that
    \begin{align*}
        \norm{\nabla F(\xx^{t+1}) - \E [G_\text{\EBSGD}^{t+1}] }_2^2
         &\le \tfrac{\tilde{\sigma}_\text{bias}^2}{m^4}.
    \end{align*}
    The distance between $\E [G_\text{\EBSGD}^{t+1}]$ and $\E[G^{t+1}]$ can be bounded as follows.
    \begin{align*}
        \norm{\E [G_\text{\EBSGD}^{t+1}] - \E[G^{t+1}]}_2^2
        &= (1-p_\text{out})^2 \norm{\E [G_\text{\EBSGD}^{t+1}] - (G^{t} +\E[G_\text{\EBSGD}^{t+1} - G_\text{\EBSGD}^{t}])}_2^2 \\
        &= (1-p_\text{out})^2 \norm{\E [G_\text{\EBSGD}^{t}] - G^{t} }_2^2
    \end{align*}
    % where we use that $\E[G^{t+1}] = \E[\cL^{(1)}_{\cD_m} G_{\BSGD}^{t+1}$].
    Taking expectation with respect to $G^{t}$
    \begin{align*}
        \norm{\E[G_\text{\EBSGD}^{t+1}] - \E[G^{t+1}]}_2^2
        &\le (1-p_\text{out})^2 (\norm{\E[G_\text{\EBSGD}^{t}] - \E[G^{t}] }_2^2
        + \norm{G^{t} - \E[G^t] }_2^2).
    \end{align*}
    where $\norm{\E[G_\text{\EBSGD}^{1}] - \E[G^{1}]}_2^2=0$.
    By averaging over time we have
    \begin{align*}
        \tfrac{1}{T}\tsum_{t=0}^{T-1} \norm{\E[G_\text{\EBSGD}^{t+1}] - \E[G^{t+1}]}_2^2
        &\le \tfrac{1}{p_\text{out}}
        \tfrac{(1-p_\text{out})^2}{T}\tsum_{t=0}^{T-1} \E[\cE_\text{var}^{t+1}].
    \end{align*}
    Then the bias is bounded by
    \begin{align*}
        \tfrac{1}{T}\tsum_{t=0}^{T-1} \E[\cE_\text{bias}^{t+1}] \le 
        \tfrac{2\tilde{\sigma}^2_\text{bias} }{m^4} + \tfrac{2}{p_\text{out}}
        \tfrac{(1-p_\text{out})^2}{T}\tsum_{t=0}^{T-1} \E[\cE_\text{var}^{t+1}].
    \end{align*}
    \textbf{Variance.} Since the extrapolation only gives a constant overhead given \Cref{lemma:extrapolation:variance}
    \begin{align*}
        \tfrac{1}{T}\tsum_{t=0}^{T-1} \E \left[\norm{G^{t+1}_\text{BSB} - \E^t[G^{t+1}_\text{BSB}|t]  }_2^2 \right ] \le \tfrac{28(1-p_\text{out})L_F^2\gamma^2}{B_2} \tfrac{1}{T}\tsum_{t=0}^{T-1} \E[\norm{\E^t[G_\text{BSB}^{t+1}|t]}_2^2]
        + (\tfrac{1}{T} + p_\text{out}) \tfrac{28}{B_1} (\tfrac{\sigma^2_\text{in}}{m} + \sigma^2_\text{out}).
    \end{align*}
    Then the variance is bounded by
    \begin{align*}
        \tfrac{1}{T}\tsum_{t=0}^{T-1} \E[\cE_\text{var}^{t+1}] &\le \tfrac{28(1-p_\text{out})L_F^2\gamma^2}{B_2} \tfrac{1}{T}\tsum_{t=0}^{T-1} \E[\norm{\E^t[ G^{t+1}|t]}_2^2]
        + (\tfrac{1}{Tp_\text{out}} + 1) \tfrac{28p_\text{out}}{B_1} (\tfrac{\sigma^2_\text{in}}{m} + \sigma^2_\text{out}).
    \end{align*}
\end{proof}

\ebspiderboost*
\begin{proof} Denote $G^{t+1}=G^{t+1}_\text{E-BSB}$~\eqref{eq:E-BSB}.
    Using descent lemma (\Cref{lemma:sd:T}) and bias-variance bounds of \EBSpiderBoost (\Cref{lemma:ebsb:bias_variance})
    \begin{align*}
        &\tfrac{1}{T}\tsum_{t=0}^{T-1}\E[\norm{\nabla F(\xx^t)}_2^2]
        + \tfrac{1}{2T}\tsum_{t=0}^{T-1}\E[\norm{\E^t[G^{t+1}|t]}_2^2]\\
        & \le \tfrac{2(F(\xx^{0})-F^\star)}{\gamma T}
        + L_F\gamma \tfrac{1}{T}\tsum_{t=0}^{T-1}\E[\cE_\text{var}^{t+1}] + \tfrac{1}{T}\tsum_{t=0}^{T-1}\E[\cE_\text{bias}^{t+1}] \\
        & \le \tfrac{2(F(\xx^{0})-F^\star)}{\gamma T}
        + L_F\gamma \tfrac{1}{T}\tsum_{t=0}^{T-1}\E[\cE_\text{var}^{t+1}] + \tfrac{2\tilde{\sigma}^2_\text{bias}}{m^4} + \tfrac{2}{p_\text{out}} \tfrac{1}{T}\tsum_{t=0}^{T-1} \E[\cE_\text{var}^{t+1}] \\
        & \le \tfrac{2(F(\xx^{0})-F^\star)}{\gamma T}
        + \tfrac{2\tilde{\sigma}^2_\text{bias}}{m^4}  
        + \tfrac{3}{p_\text{out}} \tfrac{1}{T}\tsum_{t=0}^{T-1} \E[\cE_\text{var}^{t+1}]
    \end{align*}
    where the last inequality use $\gamma\le\frac{1}{2L_F}$.    
    Use the variance estimation of $G^{t+1}$ and choose $B_2p_\text{out}=1$
    \begin{multline*}
        \tfrac{1}{T}\tsum_{t=0}^{T-1}\E[\norm{\nabla F(\xx^t)}_2^2]
        + \tfrac{1}{2T}\tsum_{t=0}^{T-1}\E[\norm{\E^t[G^{t+1}|t]}_2^2]\\
        \le \tfrac{2(F(\xx^{0})-F^\star)}{\gamma T}
        + \tfrac{2\tilde{\sigma}^2_\text{bias}}{m^4}  
        + 84 L_F^2\gamma^2 \tfrac{1}{T}\tsum_{t=0}^{T-1} \E[\norm{\E^t[G^{t+1}|t]}_2^2]
        + (\tfrac{1}{Tp_\text{out}}+1) \tfrac{84}{B_1} (\tfrac{\sigma^2_\text{in}}{m} + \sigma^2_\text{out}).
    \end{multline*}
    Now we can let $\gamma\le\frac{1}{13L_F}$ such that $84L_F^2\gamma^2\le\frac{1}{2}$
    \begin{equation}\label{eq:ebsb:1}
        \tfrac{1}{T}\tsum_{t=0}^{T-1}\E[\norm{\nabla F(\xx^t)}_2^2] 
        \le \tfrac{2(F(\xx^{0})-F^\star)}{\gamma T}
        + \tfrac{2\tilde{\sigma}^2_\text{bias}}{m^4}   
        + (\tfrac{1}{Tp_\text{out}}+1)\tfrac{84}{B_1} (\tfrac{\sigma^2_\text{in}}{m} + \sigma^2_\text{out}).
    \end{equation}
    In order to make the right-hand side  $\epsilon^2$, the inner batch size
    \begin{align*}
        m = \Omega(\tilde{\sigma}_\text{bias}^2 \epsilon^{-0.5}),
    \end{align*}
    and the outer batch size 
    \begin{align*}
        B_1= \tfrac{(\nicefrac{\sigma^2_\text{in}}{m} + \sigma^2_\text{out})}{\epsilon^2}, \quad B_2 = \sqrt{B_1}, \quad p_\text{out} = \tfrac{1}{B_2}.
    \end{align*}
    The step size $\gamma$ is upper bounded by $ \min\{\tfrac{1}{2L_F}, \frac{\sqrt{B_2}}{6L_F}, \frac{1}{13L_F} \}$. As $B_2\ge 1$, we can take $\gamma=\tfrac{1}{13L_F}$. So we need iteration $T$ greater than
    \begin{align*}
        T \ge \tfrac{26L_F( F(\xx^{0})-F^\star) }{\epsilon^2}.
    \end{align*}
    By picking $\xx^s$ uniformly at random among $\{\xx^t\}_{t=0}^{T-1}$, we get the desired guarantee.
\end{proof}

% \todol{From the \EBSpiderBoost result, we can see that higher order extrapolation can give better sample complexity than second-order extrapolation $\cO(\epsilon^{-3.5})$. This is because the inner batch size is only limited by the bias for \EBSpiderBoost. However, this is not clear for MSVR.}
\section{Stationary Point Convergence Proofs from~\Cref{sec:FCCO} (FCCO)} \label{app:fcco}
In this section, we provide the convergence proofs for the~\ref{eq:fcco} problem. We start by analyzing a variant of~\BSpiderBoost (\Cref{algo:EBSB}) for this case in~\Cref{app:fccobsipder}. In~\Cref{app:nestervr}, we present a multi-level variance reduction approach (called \NestedVR) that applies variance reduction in both outer (over the random variable $i$) and inner (over the random variable $\eta | i$) loops. In~\Cref{app:enestedvr}, we analyze \ENestedVR. As in the case of~\ref{eq:cso} analyses, our proofs go via bounds on bias and variance terms of these algorithms.

\subsection{\EBSpiderBoost for FCCO problem} \label{app:fccobsipder}
\begin{theorem}\label{thm:ebsb:fcco}
    Consider the \eqref{eq:fcco} problem.  Suppose Assumptions~\ref{a:lowerbound:F},~\ref{a:boundedvariance:g},~\ref{a:smoothness:fg},~\ref{a:regularity} holds true. Let step size $\gamma = \cO(1/L_F)$. Then the output of \EBSpiderBoost (\Cref{algo:EBSB})  satisfies: $\E[\norm{\nabla F(\xx^s)}^2_2] \leq \epsilon^2$, for  nonconvex $F$, if the inner batch size $m = \Omega( \max \{C_eC_g\epsilon^{-1/2},\sigma_\text{in}^{2}n^{-1}\epsilon^{-2}\})$, the hyperparameters of the outer loop of \EBSpiderBoost  
     $B_1=n,B_2=\sqrt{n},p_{\text{out}} = 1/B_2$, and the number of iterations
         $T = \Omega\left (L_F( F(\xx^{0})-F^\star)\epsilon^{-2} \right)$.
     The resulting  sample complexity is
    \begin{align*}
        \cO\left(L_F(F(\xx^0) - F^\star) \max\left\{\tfrac{\sqrt{n}C_eC_g }{\epsilon^{2.5}}, \tfrac{\sigma_\text{in}^{2}}{\sqrt{n}\epsilon^{4}}\right\}
        \right).
    \end{align*}
\end{theorem}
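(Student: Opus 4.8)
The plan is to reduce this to the \eqref{eq:cso} analysis of \EBSpiderBoost (\Cref{thm:ebspiderboost} and \Cref{lemma:ebsb:bias_variance}): running \Cref{algo:EBSB} on \eqref{eq:fcco} is the \emph{same} procedure, the only structural difference being that the outer variable now ranges uniformly over the finite set $[n]$, so taking the large-batch size $B_1=n$ replaces outer sampling by an exact average. First I would record the finite-sum analogues of \Cref{lemma:bsgd:bias_variance}, \Cref{lemma:ebsgd:bias_variance} and \Cref{lemma:ebsb:bias_variance}, with $\xi$ replaced by $i$. The bias bounds carry over verbatim: \Cref{cor:extrapolation:cso} still gives that the extrapolated estimate of $\nabla f_i(\E_{\eta|i}[g_\eta(\xx^t)])$ has squared bias at most $C_e^2/m^4$, so the \EBSpiderBoost bias satisfies $\tfrac1T\sum_t\E[\cE_\text{bias}^{t+1}] \le 2\tilde\sigma_\text{bias}^2/m^4 + \tfrac{2(1-p_\text{out})^2}{p_\text{out}}\tfrac1T\sum_t\E[\cE_\text{var}^{t+1}]$ exactly as in \Cref{lemma:ebsb:bias_variance}. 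The only change is in the large-batch variance: with $B_1=n$ the estimator $\tfrac1n\sum_{i\in[n]}G^{t+1}_\text{\EBSGD}(i)$ has no outer-sampling randomness, so its conditional variance equals the average of $n$ independent per-$i$ inner variances, i.e. $\cO(\sigma_\text{in}^2/(nm))$; crucially the term $\sigma_\text{out}^2=C_F^2$ is \emph{removed}, not merely divided by $n$. The small (SPIDER) step with $B_2=\sqrt n$ indices is bounded exactly as $\cT_\text{in},\cT_\text{out}$ in the proof of \Cref{lemma:bsb:bias_variance} using only the Lipschitz/smoothness constants, giving a variance $\lesssim \tfrac{L_F^2\gamma^2}{B_2}\|G^t\|_2^2$ up to a lower-order inner piece. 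With $p_\text{out}=1/B_2=1/\sqrt n$ this yields
\[
\tfrac1T\sum_t\E[\cE_\text{var}^{t+1}] \;\le\; \tfrac{28(1-p_\text{out})L_F^2\gamma^2}{B_2}\,\tfrac1T\sum_t\E\big[\|\E^t[G^{t+1}|t]\|_2^2\big] \;+\; \Big(\tfrac{1}{Tp_\text{out}}+1\Big)\tfrac{28\,p_\text{out}\,\sigma_\text{in}^2}{nm}.
\]

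Next I would plug these bias/variance bounds into the descent lemma (\Cref{lemma:sd:T}) exactly as in the proof of \Cref{thm:ebspiderboost}: take $\gamma=\cO(1/L_F)$ small enough that $84L_F^2\gamma^2\le\tfrac12$, so the $\|\E^t[G^{t+1}|t]\|_2^2$ terms on the right can be absorbed into the left, and use $\gamma\le1/(2L_F)$ and $B_2p_\text{out}=1$ for the remaining manipulations. This produces
\[
\tfrac1T\sum_{t=0}^{T-1}\E\big[\|\nabla F(\xx^t)\|_2^2\big] \;\lesssim\; \tfrac{L_F(F(\xx^0)-F^\star)}{T} \;+\; \tfrac{\tilde\sigma_\text{bias}^2}{m^4} \;+\; \Big(\tfrac{1}{Tp_\text{out}}+1\Big)\tfrac{\sigma_\text{in}^2}{nm}.
\]
Choosing $m=\Omega(\max\{C_eC_g\,\epsilon^{-1/2},\,\sigma_\text{in}^2 n^{-1}\epsilon^{-2}\})$ makes the second term $\lesssim\epsilon^2$ (recall $\tilde\sigma_\text{bias}^2=C_g^2C_e^2$) and the third $\lesssim\epsilon^2$, while $T=\Omega(L_F(F(\xx^0)-F^\star)\epsilon^{-2})$ makes the first $\lesssim\epsilon^2$; since we are in the regime $n=\cO(\epsilon^{-2})$, this choice of $T$ also forces $T\ge\sqrt n=1/p_\text{out}$, so $1/(Tp_\text{out})=\cO(1)$ and the $t=0$ initialization term is absorbed. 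Then $\xx^s$ drawn uniformly from $\{\xx^t\}_{t=0}^{T-1}$ satisfies $\E[\|\nabla F(\xx^s)\|_2^2]\le\epsilon^2$.

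Finally, for the sample complexity: a large-batch iteration costs $\cO(nm)$ inner samples (the full sum over $n$ indices, each using $\cO(m)$ samples for the extrapolated $\nabla f_i$ and for the estimate of $\nabla g$), a small-batch iteration costs $\cO(B_2 m)=\cO(\sqrt n\,m)$, so the expected per-iteration cost is $p_\text{out}\cdot\cO(nm)+(1-p_\text{out})\cdot\cO(\sqrt n\,m)=\cO(\sqrt n\,m)$; multiplying by $T$ and substituting the chosen $m$ and $T$ gives $\cO\big(L_F(F(\xx^0)-F^\star)\max\{\sqrt n\,C_eC_g\,\epsilon^{-2.5},\,\sigma_\text{in}^2 n^{-1/2}\epsilon^{-4}\}\big)$, as claimed. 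The main obstacle I anticipate is the bookkeeping in the finite-sum variance lemma: verifying that $B_1=n$ genuinely eliminates the outer-variance contribution $\sigma_\text{out}^2$ from the large batch — this is precisely where the dependence improves over the naive $\cO(\epsilon^{-4})$ rate — and checking that the $1/(Tp_\text{out})$ and initialization terms remain dominated under the stated $T$; beyond that, the argument is a direct transcription of the \eqref{eq:cso} proofs with $\xi$ replaced by $i$.
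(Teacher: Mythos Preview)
Your proposal is correct and follows essentially the same route as the paper: both identify that with $B_1=n$ the outer-variance term $\sigma_\text{out}^2$ vanishes from the large-batch variance (so the update to \eqref{eq:bsb:3}--\eqref{eq:bsb:4} leaves only $\sigma_\text{in}^2/m$), then plug the modified variance/bias bounds into the descent lemma exactly as in \Cref{thm:ebspiderboost} to obtain the analogue of \eqref{eq:ebsb:1} without $\sigma_\text{out}^2$, and read off $m$, $T$, and the sample complexity $B_2mT$. Your explicit check that $1/(Tp_\text{out})=\cO(1)$ via the standing assumption $n=\cO(\epsilon^{-2})$ is a detail the paper leaves implicit.
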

\begin{remark}
    The sample complexity depends on the relation between $n$ and $\epsilon$
    \begin{itemize}
    \item When $n=\cO(1)$,  we have a complexity of $\cO(\epsilon^{-4})$. This happens because we did not apply variance reduction for the inner loop.
    \item When $n=\Theta(\epsilon^{-2/3})$, \EBSpiderBoost has same performance as MSVR-V2~\citep{jiang2022multi} of $\cO(n\epsilon^{-3}) = \cO(\epsilon^{-11/3})$.
    \item When $n=\Theta(\epsilon^{-1.5})$, \EBSpiderBoost achieves a better sample complexity of $\cO(\epsilon^{-3.25})$ than $\cO(\epsilon^{-4.5})$ from MSVR-V2~\citep{jiang2022multi}.
    \item When $n=\Theta(\epsilon^{-2})$, we recover $\cO(\epsilon^{-3.5})$ sample complexity as in \Cref{thm:ebspiderboost}.
\end{itemize}
\end{remark}

\begin{proof} Denote $G^{t+1}=G^{t+1}_\text{E-BSB}$~\eqref{eq:E-BSB}.
    As we are using the finite-sum variant of SpiderBoost for the outer loop of the  \ref{eq:cso} problem, we only need to change the \eqref{eq:bsb:3} and \eqref{eq:bsb:4} to reflect that the outer variance is 0 now instead of $\tfrac{\sigma^2_\text{out}}{B_1}$ in the general  \ref{eq:cso} case. More concretely, we update \eqref{eq:bsb:3} to
    \begin{equation}%\label{eq:bsb:3}
    \begin{split}
        \cE^{t+1}_\text{var}&=\E[\norm{G^{t+1} - \E[G^{t+1}] }_2^2] \\
        &\le (1-p_\text{out}) \E[\norm{G_S^{t+1} - \E[G_S^{t+1}] }_2^2] + p_\text{out} \E[\norm{G_L^{t+1} - \E [G_L^{t+1}] }_2^2] \\
        &= \tfrac{(1-p_\text{out})}{B_2} \E[\norm{G_{\EBSGD}^{t+1} - G_{\EBSGD}^{t} - \E[G_{\EBSGD}^{t+1} - G_{\EBSGD}^{t}]}_2^2]
        + \tfrac{p_\text{out}}{B_1} \E[\norm{G_{\EBSGD}^{t+1} - \E[G_{\EBSGD}^{t+1}]}_2^2] \\
        &\le \tfrac{1-p_\text{out}}{B_2} \E[\norm{G_{\EBSGD}^{t+1} - G_{\EBSGD}^{t} - \E[G_{\EBSGD}^{t+1} - G_{\EBSGD}^{t}]}_2^2]
        + \tfrac{p_\text{out}}{B_1} \tfrac{\sigma^2_\text{in}}{m}.
    \end{split}
    \end{equation}
    and change \eqref{eq:bsb:4} to
    \begin{equation}%\label{eq:bsb:4}
        \cE^{1}_\text{var}=\E[\norm{G^{1} - \E [G^{1}] }_2^2]=\E[\norm{G_L^{1} - \E [G_L^{1}] }_2^2]=\tfrac{1}{B_1} \E[\norm{G_{\EBSGD}^{1} - \E [G_{\EBSGD}^{1}]}_2^2]
        \le \tfrac{1}{B_1} \tfrac{\sigma^2_\text{in}}{m}.
    \end{equation}
    Then our analysis only has to start from the updated version of \eqref{eq:ebsb:1} 
    \begin{equation*}
        \tfrac{1}{T}\tsum_{t=0}^{T-1}\E[\norm{\nabla F(\xx^t)}_2^2]
        \le \tfrac{2(F(\xx^{0})-F^\star)}{\gamma T}
        + \tfrac{2\tilde{\sigma}^2_\text{bias}}{m^4}  
        + (\tfrac{1}{Tp_\text{out}}+1)\tfrac{84}{B_1} \tfrac{\sigma^2_\text{in}}{m}.
    \end{equation*}
    We would like all terms on the right-hand side to be bounded by $\epsilon^2$.  
    From $\tfrac{2\tilde{\sigma}^2_\text{bias}}{m^4}\le \epsilon^2$ we know that
    \begin{align*}
        m = \Omega(\tfrac{\tilde{\sigma}_\text{bias}^{1/2}}{\epsilon^{1/2}}).
    \end{align*}
    From $(\tfrac{1}{Tp_\text{out}}+1)\tfrac{84}{B_1} \tfrac{\sigma^2_\text{in}}{m}\le \epsilon^2$, we know that
    \begin{align*}
        m = \Omega(\tfrac{\sigma_\text{in}^{2}}{n\epsilon^{2}}).
    \end{align*}
    From $\tfrac{2(F(\xx^{0})-F^\star)}{\gamma T} \le \epsilon^2$, we can choose that
    \begin{align*}
        \gamma=\cO(\tfrac{1}{L_F}), \quad T = \Omega \left (\tfrac{L_F(F(\xx^0) - F^\star)}{\epsilon^{2}} \right ).
    \end{align*}
    Now the total sample complexity for \EBSpiderBoost for the~\ref{eq:fcco} problem becomes
    \begin{align*}
        B_2mT = \cO\left(
        L_F^2(F(\xx^0) - F^\star) \max\left\{\tfrac{\sqrt{n}\tilde{\sigma}_\text{bias}^{1/2} }{\epsilon^{2.5}}, \tfrac{\sigma_\text{in}^{2}}{\sqrt{n}\epsilon^{4}}\right\}
        \right).
    \end{align*}
    By picking $\xx^s$ uniformly at random among $\{\xx^t\}_{t=0}^{T-1}$, we get the desired guarantee.
\end{proof}

\subsection{Convergence of \NestedVR} \label{app:nestervr}
% \todol{
% The problem with NestedVR is that whenever we reduce the outer batch size, the inner estimates become worse. So \NestedVR does not automatically do better than MSVR. However, E-NestedVR may be better than \EBSpiderBoost when the $n$ is small and better than MSVR for $n$ is large?
% }

\noindent\textbf{\NestedVR Algorithm.} We start by describing the \NestedVR construction. We maintain states $\yy_i^{t+1}$ and $\zz_i^{t+1}$ to approximate
\begin{align*}
    \yy_i^{t+1} \approx \E_{\eta|i}[g_\eta(\xx^t)], \quad \zz_i^{t+1} \approx \E_{\tilde{\eta}|i}[\nabla g_{\tilde{\eta}}(\xx^t)].
\end{align*}
In iteration $t+1$, if $i$ is selected, then the state $\yy_i^{t+1}$ is updated as follows
\begin{align*}
    \yy_i^{t+1} = \begin{cases}
        \tfrac{1}{S_1} \tsum_{\eta \in H_i} g_\eta(\xx^t) & \text{with prob. $p_\text{in}$} \\
        \yy_i^t + \tfrac{1}{S_2} \tsum_{\eta \in H_i} ( g_\eta(\xx^t) - g_\eta(\mphi_i^t)) & \text{with prob. $1-p_\text{in}$},
    \end{cases}
\end{align*}
where $\mphi_i^t$ is the last time node $i$ is visited. If $i$ is not selected, then
\begin{align*}
    \yy_i^{t+1} = \yy_i^t.
\end{align*}
In this case, $\yy_i^{t+1}$ was never used to compute $\nabla f_i(\yy_i^{t+1})$ because $i$ is not selected at the time $t+1$.
We use the following quantities
\begin{equation}\label{eq:nestedvr:eng}
    \hat{\zz}_i^{t+1} = \E_{\tilde{\eta}|i} [\nabla g_{\tilde{\eta}}(\xx^t)], \qquad \zz_i^{t+1} = \tfrac{1}{m} \tsum_{\tilde{\eta} \in \tilde{H}_i} \nabla g_{\tilde{\eta}}(\xx^t).
\end{equation}
We use $G_\text{NVR}^{t+1}$ as the actual updates,
\begin{align*}
    G_\text{NVR}^{t+1} = \begin{cases}
        \tfrac{1}{B_1} \tsum_{i\in\cB_1} ( \zz_i^{t+1} )^\top  \nabla f_i(\yy_i^{t+1}) & \text{with prob. $p_\text{out}$} \\
        G^t_\text{NVR} + \tfrac{1}{B_2} \tsum_{i\in\cB_2} ( ( {\zz}_i^{t+1} )^\top \nabla f_i(\yy_i^{t+1}) - ( {\zz}_i^{t} )^\top \nabla f_i(\tilde{\yy}_i^t) ) & \text{with prob. $1-p_\text{out}$}.
    \end{cases}
\end{align*}
We can also use the following quantity $\hat{G}_\text{NVR}^{t+1}$ as an auxiliary
\begin{align*}
    \hat{G}_\text{NVR}^{t+1} = \begin{cases}
        \tfrac{1}{B_1} \tsum_{i\in\cB_1} ( \hat{\zz}_i^{t+1} )^\top  \nabla f_i(\yy_i^{t+1}) & \text{with prob. $p_\text{out}$} \\
        \hat{G}_\text{NVR}^{t} + \tfrac{1}{B_2} \tsum_{i\in\cB_2} ( ( \hat{\zz}_i^{t+1} )^\top \nabla f_i(\yy_i^{t+1}) - ( \hat{\zz}_i^{t} )^\top \nabla f_i(\tilde{\yy}_i^t) ) & \text{with prob. $1-p_\text{out}$}.
    \end{cases}
\end{align*}
Here we use $\tilde{\yy}_i^{t}$ to represent an i.i.d.\ copy of $\yy_i^{t}$ where $i$ is selected at time $t$.

The iterate $\xx^{t+1}$ is therefore updated
\begin{align*}
    \xx^{t+1} = \xx^t - \gamma G_\text{NVR}^{t+1}.
\end{align*}
\begin{lem}\label{lemma:nestedvr:nablag}
The error between $G^{t+1}_\text{NVR}$ and $\hat{G}^{t+1}_\text{NVR}$ can be upper bounded as follows
\begin{align*}
    \tfrac{1}{T}\tsum_{t=0}^{T-1} \E \left [\norm{G^{t+1}_\mathrm{NVR} - \hat{G}^{t+1}_\mathrm{NVR} }_2^2 \right ] 
    &\le \tfrac{1}{B_1} \tfrac{C_f^2\sigma_g^2}{m}
    + \tfrac{4(1-p_\text{out})}{B_2 m p_\text{out}} \tfrac{1}{T}\tsum_{t=0}^{T-1} \left(
    \E[\norm{G_i^{t+1} - G_i^{t} }_2^2] \right) .
\end{align*}    
\end{lem}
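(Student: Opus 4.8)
The plan is to exploit that $G^{t+1}_{\mathrm{NVR}}$ and $\hat{G}^{t+1}_{\mathrm{NVR}}$ are generated by the \emph{same} outer SpiderBoost recursion --- same coin flips, same minibatches $\cB_1,\cB_2$, same states $\yy_i^{t+1}$ --- differing only in that $\hat{G}^{t+1}_{\mathrm{NVR}}$ uses the conditional mean $\hat{\zz}_i^{t+1}=\E_{\tilde\eta|i}[\nabla g_{\tilde\eta}(\xx^t)]$ in place of the minibatch estimate $\zz_i^{t+1}=\tfrac1m\tsum_{\tilde\eta\in\tilde H_i}\nabla g_{\tilde\eta}(\xx^t)$ (see \eqref{eq:nestedvr:eng}). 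Hence the discrepancy $\Delta^{t+1}:=G^{t+1}_{\mathrm{NVR}}-\hat{G}^{t+1}_{\mathrm{NVR}}$ itself obeys a SpiderBoost recursion driven by the per-node errors $e_i^{t+1}:=(\zz_i^{t+1}-\hat{\zz}_i^{t+1})^\top\nabla f_i(\yy_i^{t+1})$: on a large-batch step $\Delta^{t+1}=\tfrac1{B_1}\tsum_{i\in\cB_1}e_i^{t+1}$, and on a small-batch step $\Delta^{t+1}=\Delta^{t}+\tfrac1{B_2}\tsum_{i\in\cB_2}(e_i^{t+1}-e_i^{t})$, with $e_i^{t}$ the analogous error built from $\tilde\yy_i^{t}$ and the copy $\zz_i^{t}$.

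First I would record two elementary facts. (i) Because $\yy_i^{t+1}$ is drawn from the independent samples $H_i$ (the algorithm uses independent draws for the $\yy$- and $\zz$-states), conditioning on everything except $\tilde H_i$ makes $\nabla f_i(\yy_i^{t+1})$ measurable and $\E_{\tilde H_i}[\zz_i^{t+1}]=\hat{\zz}_i^{t+1}$, so $\E[e_i^{t+1}]=0$; and the $\{\tilde H_i\}_i$ being independent across nodes gives $\E[(e_i^{t+1})^\top e_j^{t+1}]=0$ for $i\neq j$. (ii) Using $\|\nabla f_i\|\le C_f$ (\Cref{a:smoothness:fg}) and the variance of an $m$-sample average of $\nabla g_{\tilde\eta}$ (\Cref{a:boundedvariance:g}), $\E\|e_i^{t+1}\|_2^2\le C_f^2\sigma_g^2/m$. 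For the small-batch increment, the key point is that inside one SpiderBoost difference the estimates $\zz_i^{t+1}$ and $\zz_i^{t}$ use the \emph{same} inner minibatch $\tilde H_i$ (evaluated at $\xx^t$ and $\xx^{t-1}$), so $e_i^{t+1}-e_i^{t}$ is exactly the $\tilde H_i$-centered version of the per-node gradient change $G_i^{t+1}-G_i^{t}$; its conditional second moment is therefore at most $\E_{\tilde H_i}\|G_i^{t+1}-G_i^{t}\|_2^2$, and routine $\|a-b\|^2\le 2\|a\|^2+2\|b\|^2$ splitting together with the variance-reduction identity yields the constant $4/m$ appearing in the statement.

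Then I would run the standard SPIDER telescoping. Conditioning on the step type, $\E\|\Delta^{t+1}\|_2^2\le p_{\mathrm{out}}\,\E\|\tfrac1{B_1}\tsum_{i\in\cB_1}e_i^{t+1}\|_2^2+(1-p_{\mathrm{out}})\big(\E\|\Delta^{t}\|_2^2+\E\|\tfrac1{B_2}\tsum_{i\in\cB_2}(e_i^{t+1}-e_i^{t})\|_2^2\big)$; the cross terms vanish by (i), so by (ii) the large-batch term is $\le C_f^2\sigma_g^2/(B_1 m)$ and by the increment bound the small-batch term is $\le 4/(B_2 m)$ times $\E\|G_i^{t+1}-G_i^{t}\|_2^2$ (understood as an average over the sampled nodes, as in the statement). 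Unrolling the recursion back to the most recent large-batch reset --- which happens with probability $p_{\mathrm{out}}$ at each step, so epochs have expected length $(1-p_{\mathrm{out}})/p_{\mathrm{out}}$ --- and averaging over $t=0,\dots,T-1$ (the $t=0$ large-batch initialization contributes only an $O(1/(TB_1 m))$ term absorbed into the first summand) gives exactly $\tfrac1T\tsum_{t=0}^{T-1}\E\|\Delta^{t+1}\|_2^2\le \tfrac1{B_1}\tfrac{C_f^2\sigma_g^2}{m}+\tfrac{4(1-p_{\mathrm{out}})}{B_2 m p_{\mathrm{out}}}\tfrac1T\tsum_{t=0}^{T-1}\E\|G_i^{t+1}-G_i^{t}\|_2^2$, which is the claim. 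This outer telescoping mirrors the \BSpiderBoost argument in \Cref{lemma:bsb:bias_variance}.

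The main obstacle is the middle step: showing the small-batch martingale increment $e_i^{t+1}-e_i^{t}$ has conditional second moment of order $\tfrac1m\|G_i^{t+1}-G_i^{t}\|_2^2$ rather than an $m$-independent constant. This hinges on the shared-minibatch coupling of $\zz_i^{t+1}$ and $\zz_i^{t}$ inside a single SpiderBoost difference, combined with the uniform bounds on $\nabla f_i$ and $\nabla g_\eta$ from \Cref{a:smoothness:fg}; tracking the $2\|a\|^2+2\|b\|^2$ splits so the constant lands exactly at $4$ is the only delicate bookkeeping. Everything else --- the recursion for $\Delta^{t+1}$, the vanishing of cross terms, and the epoch-length unrolling --- is routine and parallels the analyses in \Cref{lemma:bsgd:bias_variance,lemma:bsb:bias_variance}.
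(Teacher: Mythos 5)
Your proposal follows essentially the same route as the paper's proof: you bound the per-node error $e_i^{t+1}=G_i^{t+1}-\hat G_i^{t+1}$ by $C_f^2\sigma_g^2/m$ using the boundedness of $\nabla f_i$ and the variance of the $m$-sample average of $\nabla g_{\tilde\eta}$, set up the same large-batch/small-batch recursion for the discrepancy, bound the small-batch increment by the $\tilde H_i$-centered version of $G_i^{t+1}-G_i^t$ (picking up the $1/m$), and unroll with the $1/p_{\text{out}}$ epoch-length factor. This matches the paper's argument step for step (the paper is itself loose about whether the final constant is $1$ or $4$, so your remark that the constant is the only delicate bookkeeping is fair).
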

\begin{proof}
In this proof, we ignore the subscript in $G^{t+1}_\text{NVR}$ and $\hat{G}^{t+1}_\text{NVR}$, we bound the error between $G^{t+1}$ and associated $\hat{G}^{t+1}$ where
\begin{align*}
    G_i^{t+1}&=(\tfrac{1}{m} \tsum_{\tilde{\eta} \in \tilde{H}_i} \nabla g_{\tilde{\eta}}(\xx))^\top \nabla f_i(\yy_i^{t+1}), \\
    \hat{G}_i^{t+1}&=(\E_{\tilde{\eta}|i}[\nabla g_{\tilde{\eta}}(\xx)])^\top \nabla f_i(\yy_i^{t+1}).
\end{align*}
Let's only consider the expectation over the randomness of $\nabla g_{\tilde{\eta}}$, 
\begin{align*}
    \E_{H_i} \left [\norm{G_i^{t+1} - \hat{G}_i^{t+1} }_2^2 \right ]
    &\le     \E_{\tilde{\eta}|i } \left [\norm{(\tfrac{1}{m} \tsum_{\tilde{\eta} \in \tilde{H}_i} \nabla g_{\tilde{\eta}}(\xx) - \E_{\tilde{\eta}|i} [\nabla g_{\tilde{\eta}}(\xx))] }_2^2 \right ] \E[\norm{\nabla f_i(\yy_i^{t+1})}_2^2] \\
    &\le \tfrac{C_f^2}{m} \E_{\tilde{\eta}|i } \left [\norm{ \nabla g_{\tilde{\eta}}(\xx) - \E_{\tilde{\eta}|i}[\nabla g_{\tilde{\eta}} (\xx))] }_2^2 \right ]\\
    &\le \tfrac{C_f^2\sigma_g^2}{m}.
\end{align*}
Then we can bound the error as follows
\begin{align*}
    \E \left [ \E_{H_i} \left [\norm{G^{t+1} - \hat{G}^{t+1} }_2^2 \right ] \right ]
    &= 
    \tfrac{p_\text{out}}{B_1}
    \E \left [\E_{H_i} \left [\norm{G_i^{t+1} - \hat{G}_i^{t+1} }_2^2\right ] \right ] \\
    &\;\;\;\;\;\;\;\;\;+     (1-p_\text{out}) \left(
    \norm{G^t - \hat{G}^t }_2^2 + \tfrac{1}{B_2} \E \left [\E_{H_i} \left [\norm{G_i^{t+1} - G_i^{t} - \hat{G}_i^{t+1} - \hat{G}_i^{t} }_2^2 \right ] \right ]
    \right) \\
    &\le \tfrac{p_\text{out}}{B_1} \tfrac{C_f^2\sigma_g^2}{m}
    + (1-p_\text{out}) \norm{G^t - \hat{G}^t }_2^2 \\
    &\;\;\;\;\;\;\;\;\;+ \tfrac{(1-p_\text{out})}{B_2m} \left(
    \E \left [ \norm{G_i^{t+1} - G_i^{t} }_2^2 \right ]
    \right) \\
    &\le \tfrac{p_\text{out}}{B_1} \tfrac{C_f^2\sigma_g^2}{m}
    + (1-p_\text{out}) \norm{G^t - \hat{G}^t }_2^2 
    + \tfrac{(1-p_\text{out})}{B_2m} \left( \E [\norm{G_i^{t+1} - G_i^{t} }_2^2] \right).
\end{align*}
Unroll the recursion gives
\begin{align*}
    \tfrac{1}{T}\tsum_{t=0}^{T-1} \E \left [\norm{G^{t+1} - \hat{G}^{t+1} }_2^2 \right ] 
    &\le \tfrac{1}{B_1} \tfrac{C_f^2\sigma_g^2}{m}
    + \tfrac{4(1-p_\text{out})}{B_2 m p_\text{out}} \tfrac{1}{T}\tsum_{t=0}^{T-1} 
    \E[\norm{G_i^{t+1} - G_i^{t} }_2^2]  .
\end{align*}
% \noindent\textbf{Bias.} The bias of $G^{t+1}$ and $\hat{G}^{t+1}$ are the same, i.e.
% \begin{align*}
%     \norm{\E G^{t+1} - \nabla F(\xx^t) }_2^2
%     &= 
%     \underbrace{\norm{\E \hat{G}^{t+1} - \nabla F(\xx^t) }_2^2}_\text{Analyzed.}
% \end{align*}
% \noindent\textbf{Variance.} The variance of $G^{t+1}$ and $\hat{G}^{t+1}$ 
% \begin{align*}
%     \E\norm{ G^{t+1} - \E [G^{t+1}] }_2^2
%     &= 
%     \E\norm{ G^{t+1} - \hat{G}^{t+1} }_2^2 + \E\norm{ \hat{G}^{t+1} - \E [\hat{G}^{t+1}] }_2^2 \\
%     &\le 2 \underbrace{\E\norm{ \hat{G}^{t+1} - \E [\hat{G}^{t+1}] }_2^2}_\text{Analyzed}
%     + 2 \underbrace{\E\norm{ G^{t+1} - \E [\hat{G}^{t+1}] }_2^2}_{\le  \tfrac{1}{B_1} \tfrac{C_f^2\sigma_g^2}{m}}.
% \end{align*}
% The additional variance term is small $\cO(\frac{1}{m B_1})$ compared to analyzed variance in \Cref{lemma:nestedvr:bias_variance}, i.e.,
% \begin{align*}
%     96 \left(\tfrac{p_\text{out}}{B_1}+ \tfrac{(1-p_\text{in})(1-p_\text{out})}{B_2}  \right) \tfrac{\tilde{L}_F^2 }{S_1} \approx \tfrac{\tilde{L}_F^2}{B_2S_1}
% \end{align*}
% so we choose $m=B_2$, it means the sample complexity remain the same as well as the bias and variance.
% \todol{I will make this a lemma.}
\end{proof}

\begin{lem}[Staleness]\label{lemma:staleness}
Define the staleness of iterates at  time $t$ as $\Xi^t:=\tfrac{1}{n}\tsum_{j=1}^n \lVert\xx^{t} - \mphi_j^{t}\rVert_2^2$ and let $G^{t+1}$ be the gradient estimate, then
\begin{equation}\label{eq:staleness:1}
    \tfrac{1}{T}\tsum_{t=0}^{T-1} \E[\Xi^{t}]
    \le \tfrac{6n^2}{B_2^2} \gamma^2 \tfrac{1}{T}\tsum_{t=0}^{T-1} \E[\norm{G^{t+1} }_2^2].
\end{equation}
\end{lem}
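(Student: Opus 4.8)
The plan is to track how the "last-visited" parameter $\mphi_j^t$ lags behind the current iterate $\xx^t$, and to exploit the fact that in the probabilistic SpiderBoost scheme each coordinate $j$ is refreshed with a fixed probability per step. First I would write $\xx^t - \mphi_j^t = \sum_{\tau = \tau_j(t)}^{t-1}(\xx^{\tau+1}-\xx^\tau) = -\gamma\sum_{\tau=\tau_j(t)}^{t-1} G^{\tau+1}$, where $\tau_j(t)$ is the last time before $t$ that node $j$ was selected (so $\mphi_j^t = \xx^{\tau_j(t)}$). The number of steps in this sum, $t - \tau_j(t)$, is a geometric-type random variable: at each step node $j$ is included in the sampled batch $\cB_2$ (of size $B_2$ out of $n$) with probability $B_2/n$, so the "age" has expectation $\Theta(n/B_2)$ and, more importantly, bounded second moment $\lesssim (n/B_2)^2$. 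I would then apply Cauchy–Schwarz to $\|\sum_\tau G^{\tau+1}\|_2^2 \le (\text{age}) \sum_\tau \|G^{\tau+1}\|_2^2$ and take expectations.

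The key technical step is a careful bookkeeping of the double expectation: we need $\E[\Xi^t] = \tfrac1n\sum_j \E\|\xx^t-\mphi_j^t\|_2^2 \le \gamma^2 \tfrac1n\sum_j \E\big[(t-\tau_j(t))\sum_{\tau=\tau_j(t)}^{t-1}\|G^{\tau+1}\|_2^2\big]$. To decouple the random age from the gradient norms one can condition on the selection pattern (which is independent of the gradient magnitudes at the level of the bound, or handle it via a union/tower argument), use that $\E[(t-\tau_j(t))^2] \lesssim (n/B_2)^2$, and then re-sum: each past gradient $\|G^{\tau+1}\|_2^2$ is counted with total weight $\lesssim (n/B_2)^2$ after averaging over $j$ and summing over $t$. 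Summing over $t=0,\dots,T-1$ and dividing by $T$ collapses everything to $\tfrac{1}{T}\sum_{t=0}^{T-1}\E[\Xi^t] \le C\,\tfrac{n^2}{B_2^2}\gamma^2\,\tfrac{1}{T}\sum_{t=0}^{T-1}\E[\|G^{t+1}\|_2^2]$, and tracking the constants (Cauchy–Schwarz factor, the second moment of the geometric age, and the reindexing of the double sum) gives the stated constant $6$.

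The main obstacle I anticipate is handling the dependence between the random visit times $\tau_j(t)$ and the gradient estimates $G^{\tau+1}$ cleanly: $G^{\tau+1}$ itself depends on which nodes were sampled up to time $\tau$, so the age variable and the gradient norms are not independent. The standard fix is to bound $t - \tau_j(t) \le$ (a geometric variable depending only on the outer-loop coin flips, which are exogenous) and to swap the order of summation so that one sums $\|G^{\tau+1}\|_2^2$ against the (exogenous) expected number of pairs $(j,t)$ for which $\tau \in [\tau_j(t), t-1]$; this expected multiplicity is $O(n^2/B_2^2)$ uniformly, which is exactly what produces the factor $n^2/B_2^2$. A secondary, more routine point is the boundary/initialization case $t$ small (where $\mphi_j^t$ may equal $\xx^0$), which only helps the bound. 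I would also double-check the edge regime $B_2 = n$ (full outer batch), where $\tau_j(t) = t$ always, $\Xi^t \equiv 0$, and the inequality holds trivially — a useful sanity check on the constant.
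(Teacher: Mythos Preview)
Your telescoping-plus-geometric-age route is different from the paper's, and the dependence obstacle you anticipate is real but is \emph{not} resolved by the fix you describe.

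The paper derives a one-step contraction instead. Apply Young's inequality with parameter $\beta>0$,
\[
\|\xx^t-\mphi_j^t\|_2^2 \le (1+\tfrac1\beta)\|\xx^{t-1}-\mphi_j^t\|_2^2 + (1+\beta)\|\xx^t-\xx^{t-1}\|_2^2,
\]
and take expectation over the batch draw (so $\mphi_j^t=\xx^{t-1}$ with probability $B_2/n$, else $\mphi_j^t=\mphi_j^{t-1}$) to get
\[
\E[\Xi^t]\le (1+\tfrac1\beta)(1-\tfrac{B_2}{n})\,\Xi^{t-1}+(1+\beta)\gamma^2\E[\|G^t\|_2^2].
\]
With $\beta=2n/B_2$ the contraction factor is at most $1-B_2/(2n)$; summing the recursion over $t$ (using $\Xi^0=0$) gives the prefactor $\tfrac{2n}{B_2}(1+\tfrac{2n}{B_2})\le\tfrac{6n^2}{B_2^2}$. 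No decoupling of ages from gradient norms is ever needed.

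The gap in your proposal is that the Cauchy--Schwarz length $A_j(t)=t-\tau_j(t)$ is not exogenous to $G^{\tau+1}$. On the event $\{\tau_j(t)\le\tau\}$ one has $A_j(t)=(t-\tau)+(\tau-\tau_j(\tau{+}1))$: the first piece depends only on coin flips at steps $\tau{+}1,\dots,t{-}1$ and is indeed independent of $G^{\tau+1}$, but the second piece is $\cF_\tau$-measurable --- it is determined by the same batch selections that produce $G^{\tau+1}$. After your swap, the conditional expectation given $\cF_\tau$ of the total weight on $\|G^{\tau+1}\|_2^2$ is of order $\tfrac{n}{B_2}\cdot(\text{average age at time }\tau)+\tfrac{n^2}{B_2^2}$, which is random and not uniformly bounded, so it cannot be pulled outside the expectation. (Also note that the \emph{unweighted} expected multiplicity is only $O(n/B_2)$, not $O(n^2/B_2^2)$; the extra $n/B_2$ must come from the age factor, which is precisely the correlated part.) Your statement that the outer-loop coin flips are exogenous overlooks that $G^{\tau+1}$ is itself a function of the batch drawn at step $\tau$. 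The direct route can be salvaged by replacing the random Cauchy--Schwarz length with a deterministic geometric weight $\rho^{t-1-\tau}$ and then exploiting that the indicator $\{\tau_j(t)\le\tau\}$ truly depends only on future draws --- but unrolling this is essentially the contraction above in disguise.
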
\begin{proof} Like previously (\Cref{lemma:bsgd:bias_variance}), let $\E[\cdot]$ denote the expectation conditioned on all previous randomness until $t-1$.
It is clear that $\Xi^0=0$, so we only consider $t>0$. We upper bound $\E[\Xi^{t}]$ as follows,
\begin{align*}
        \E[\Xi^{t}]&=(1-p_\text{out})\underbrace{\frac{1}{n}\sum_{j=1}^n \E[\norm{\xx^{t} - \mphi_j^{t} }_2^2]}_{\text{if time } t \text{ takes } \cB_2} + p_\text{out}\underbrace{\frac{1}{n}\sum_{j=1}^n \E[\norm{\xx^{t} - \mphi_j^{t} }_2^2]}_{\text{if time } t \text{ takes } \cB_1 (\mphi_j^t=\xx^{t-1})}.
\end{align*}
Then we can expand $\E[\Xi^{t}]$ as follows
    \begin{align*}
        \E[\Xi^{t}]&=\frac{1-p_\text{out}}{n}\sum_{j=1}^n \E[\norm{\xx^{t} - \mphi_j^{t}  }_2^2]
        + \frac{p_\text{out}}{n}\sum_{j=1}^n \E[\norm{\xx^{t} - \xx^{t-1} }_2^2]\\
        &\le\frac{1-p_\text{out}}{n}\sum_{j=1}^n \left(
            (1+\tfrac{1}{\beta})\E_i[\norm{\xx^{t-1} - \mphi_j^{t} }_2^2]
            +(1+\beta)\norm{\xx^{t-1} - \xx^{t} }_2^2
        \right)
        + p_\text{out} \gamma^2 \E[\norm{G^t}_2^2]\\
        &\le\frac{1}{n}\sum_{j=1}^n (1+\tfrac{1}{\beta})\E_i[\norm{\xx^{t-1} - \mphi_j^{t} }_2^2]
            \!+\!(1\!+\!\beta)\gamma^2\E[\norm{G^t }_2^2]
    \end{align*}
    where we use Cauchy-Schwarz inequality with coefficient $\beta>0$. By the definition of $\mphi_j^{t}$,
    \begin{align*}
        \E[\Xi^{t}]
        &\le\frac{1}{n}\sum_{j=1}^n 
            (1\!+\!\tfrac{1}{\beta})\left(\frac{n-B_2}{n}\norm{\xx^{t-1} \!-\! \mphi_j^{t-1} }_2^2+\frac{B_2}{n}\norm{\xx^{t-1} - \xx^{t-1} }_2^2 \right)
            \!+\!(1\!+\!\beta)\gamma^2\E[\norm{G^t }_2^2]
         \\
        &= (1+\tfrac{1}{\beta})(1-\tfrac{B_2}{n})\Xi^{t-1}   
            +(1+\beta) \gamma^2\E[\norm{G^t }_2^2].
    \end{align*}
    By taking $\beta=2n/B_2$, we have that $(1+\tfrac{1}{\beta})(1-\frac{B_2}{n})\le1-\frac{B_2}{2n}$ and thus
    \begin{equation*}
        \E[\Xi^{t}]\le(1-\tfrac{B_2}{2n})\Xi^{t-1} + (1+\tfrac{2n}{B_2}) \gamma^2 \E[\norm{G^t }_2^2].
    \end{equation*}
    Note that $\E[\Xi^{0}] = 0$.
    \begin{align*}
        \tfrac{1}{T}\tsum_{t=0}^{T-1} \E[\Xi^{t}]
        &\le \tfrac{2n}{B_2} (1+\tfrac{2n}{B_2}) \gamma^2 \tfrac{1}{T}\tsum_{t=0}^{T-1} \E[\norm{G^{t+1} }_2^2] \\
        &\le \tfrac{6n^2}{B_2^2} \gamma^2 \tfrac{1}{T}\tsum_{t=0}^{T-1} \E[\norm{G^{t+1} }_2^2].
    \end{align*}
\end{proof}

The following lemma describes how the inner variable changes inside the variance.
\begin{lem}\label{lemma:nestedvr:y}
    Denote $\cE_y^{t+1}:= \E \left [\norm{ \yy_i^{t+1} - \E_{\eta|i}[g_\eta(\xx^t)]  }_2^2 \right ]$ to be the error from inner variance and $p_\text{out}T\le 1$. Then
    \begin{align*}
        \tfrac{1}{T} \tsum_{t=0}^{T-1} \cE_y^{t+1}
        &\le \tfrac{(1-p_\text{in})C_g^2}{p_\text{in} S_2} \tfrac{1}{T} \tsum_{t=0}^{T-1} \Xi^t
        + \tfrac{2\sigma_g^2 }{S_1}.
    \end{align*}
    Meanwhile,  $\cE_y^{1} = \E[\norm{ \yy_i^{1} - \E_{\eta|i}[g_\eta(\xx^0)]  }_2^2]=\tfrac{\sigma_g^2}{S_1}$.
\end{lem}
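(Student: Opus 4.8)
The plan is to derive a one-step recursion for $\cE_y^{t+1}$ by case analysis on the inner update in \eqref{eq:nestedvr:y}, and then unroll it over $t$ while averaging over the coordinate $i$. Fix a selected coordinate $i$ at iteration $t+1$ and condition on all randomness up to time $t$. In the large-batch branch (which occurs with probability $p_\text{in}$, and deterministically at $t=0$), $\yy_i^{t+1}=\tfrac{1}{S_1}\tsum_{\eta\in H_i}g_\eta(\xx^t)$ is an average of $S_1$ i.i.d.\ unbiased estimates of $\E_{\eta|i}[g_\eta(\xx^t)]$, so by \Cref{a:boundedvariance:g} we get $\E[\norm{\yy_i^{t+1}-\E_{\eta|i}[g_\eta(\xx^t)]}_2^2]\le \sigma_g^2/S_1$; in particular this gives the stated initialization $\cE_y^{1}=\sigma_g^2/S_1$.

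In the recursive branch (probability $1-p_\text{in}$), decompose
\[
\yy_i^{t+1}-\E_{\eta|i}[g_\eta(\xx^t)]=\underbrace{\big(\yy_i^t-\E_{\eta|i}[g_\eta(\mphi_i^t)]\big)}_{\text{carried-over error}}+\underbrace{\Big(\tfrac{1}{S_2}\tsum_{\eta\in H_i}\big(g_\eta(\xx^t)-g_\eta(\mphi_i^t)\big)-\E_{\eta|i}\big[g_\eta(\xx^t)-g_\eta(\mphi_i^t)\big]\Big)}_{\text{fresh zero-mean term}}.
\]
Conditionally on the past, the carried-over error is a constant while the second term has mean zero, so the cross term vanishes in expectation; the second moment of the fresh term is at most $\tfrac{1}{S_2}\E_{\eta|i}[\norm{g_\eta(\xx^t)-g_\eta(\mphi_i^t)}_2^2]\le \tfrac{C_g^2}{S_2}\norm{\xx^t-\mphi_i^t}_2^2$, using that $H_i$ has $S_2$ i.i.d.\ elements and the $C_g$-Lipschitz continuity of $g_\eta$ from \Cref{a:smoothness:fg}. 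The carried-over error is precisely $\cE_y$ evaluated at the last time $i$ was visited. Combining the two branches yields, schematically, $\cE_y^{t+1}\le p_\text{in}\tfrac{\sigma_g^2}{S_1}+(1-p_\text{in})\big(\cE_y^{\text{prev visit of }i}+\tfrac{C_g^2}{S_2}\norm{\xx^t-\mphi_i^t}_2^2\big)$, with the $t=0$ case absorbed by the large-batch bound.

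Finally I would sum this recursion over $t=0,\dots,T-1$ and average over $i$. Averaging $\norm{\xx^t-\mphi_i^t}_2^2$ over $i$ produces the staleness $\Xi^t$ from \Cref{lemma:staleness}; unrolling the geometric factor $(1-p_\text{in})$ along the visit subsequence of each $i$ turns the accumulated $\tfrac{C_g^2}{S_2}\Xi^t$ terms into $\tfrac{(1-p_\text{in})C_g^2}{p_\text{in}S_2}\cdot\tfrac{1}{T}\tsum_{t=0}^{T-1}\Xi^t$, while the $p_\text{in}\tfrac{\sigma_g^2}{S_1}$ contributions summed against the same weights, together with the initialization $\cE_y^1=\sigma_g^2/S_1$ (whose averaged contribution is controlled using the assumption $p_\text{out}T\le 1$ to limit the number of outer-loop restarts of $\yy_i$), amount to at most $\tfrac{2\sigma_g^2}{S_1}$, giving the claim. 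The main obstacle is the bookkeeping of the ``previous-visit'' indices: since $\mphi_i^t$ refers to the iterate at the last selection of $i$ rather than to time $t$, the recursion telescopes not over consecutive $t$ but over the random visit subsequence of each $i$; one must check that re-summing over all pairs $(i,t)$ still yields the single factor $1/p_\text{in}$ without any spurious $n$-dependence, that the staleness terms align so their $i$-average is exactly $\Xi^t$, and that the outer-batch restarts contribute only an $O(1)$ multiple of $\sigma_g^2/S_1$.
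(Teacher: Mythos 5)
Your proposal is correct and follows essentially the same route as the paper's own proof: a case split on the inner-update branch, an orthogonal decomposition of the small-batch error into the carried-over term plus a conditionally zero-mean fresh term (so the cross term vanishes), a $C_g$-Lipschitz bound of the fresh term by $\tfrac{C_g^2}{S_2}\norm{\xx^t-\mphi_i^t}_2^2$, and then unrolling the geometric $(1-p_\text{in})$ factor while recognizing the $i$-average of $\norm{\xx^t-\mphi_i^t}_2^2$ as $\Xi^t$. The paper's own proof also glosses over the ``previous-visit'' bookkeeping you flag, simply writing the carried-over error as $\cE_y^t$, so your level of rigor matches theirs. One small correction about the hypothesis: after unrolling, the leftover initialization term is $\cE_y^1/(p_\text{in}T)=\sigma_g^2/(S_1p_\text{in}T)$, and absorbing it into $2\sigma_g^2/S_1$ requires $p_\text{in}T\ge 1$, not a bound on outer-loop restarts of $\yy_i$; the printed hypothesis $p_\text{out}T\le1$ appears to be a typo for this, and your rationalization of it does not correspond to where the hypothesis actually enters.
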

\begin{proof}
    \begin{align*}
        \cE_y^{t+1} &\le p_\text{in} \tfrac{\sigma_g^2 }{S_1} + (1-p_\text{in}) \E_i[\E_{\eta|i}[\norm{\yy_i^t - \E_{\eta|i} [g_\eta(\mphi_i^{t})]  }_2^2]] \\
        &\;\;\;\;\;\;\;\;\;+ \tfrac{1-p_\text{in}}{S_2} \E_i[\E_{\eta|i}[\norm{g_\eta(\xx^{t}) - g_\eta(\mphi_i^{t}) }_2^2]]  \\
        &\le (1-p_\text{in}) \cE_y^t + \tfrac{(1-p_\text{in})C_g^2}{S_2} \E_i[\E_{\eta|i}[\norm{\xx^t - \mphi_i^t}_2^2]]
        + p_\text{in} \tfrac{\sigma_g^2 }{S_1}.
    \end{align*}
    As $t=0$ always uses the large batch, $\cE_y^{1} = \E[\norm{ \yy_i^{1} - \E_{\eta|i} [g_\eta(\xx^0)]  }_2^2]=\tfrac{\sigma_g^2}{S_1}$. Then
    \begin{align*}
        \tfrac{1}{T} \tsum_{t=0}^{T-1} \cE_y^{t+1} & \le
        \tfrac{(1-p_\text{in})C_g^2}{p_\text{in} S_2} \tfrac{1}{T} \tsum_{t=0}^{T-1} \E_i[\E_{\eta|i}[\norm{\xx^t - \mphi_i^t}_2^2]]
        + \tfrac{\sigma_g^2 }{S_1} + \tfrac{\cE_y^{1}}{p_\text{in} T} \\
        &\le \tfrac{(1-p_\text{in})C_g^2}{p_\text{in} S_2} \tfrac{1}{T} \tsum_{t=0}^{T-1} \Xi^t
        + \tfrac{2\sigma_g^2 }{S_1}.
    \end{align*}
\end{proof}

\begin{lem}\label{lemma:y_diff}
The error $\E_i[\E_{p_\text{in}}[\E_{\eta|i}[\norm{\yy_i^{t+1} - \tilde{\yy}_i^t }_2^2]]]$ satisfies 
\begin{align*}
    \tfrac{1}{T} \tsum_{t=1}^{T-1} \E_i[\E_{p_\text{in}}[\E_{\eta|i}[ \norm{\yy_i^{t+1} - \tilde{\yy}_i^t }_2^2 ]]]
    &\le  \tfrac{4 C_g^2 \gamma^2}{T}\tsum_{t=0}^{T-1} \E[\norm{G^{t+1} }_2^2]
    + \tfrac{4(1-p_\text{in})C_g^2 }{S_2}  \tfrac{1}{T} \tsum_{t=0}^{T-1} \Xi^{t+1} \\ 
    &\;\;\;\;\;\;\;\;\;+ \tfrac{6(1-p_\text{in})}{T} \tsum_{t=0}^{T-1} \cE_y^{t+1}.
\end{align*}    
\end{lem}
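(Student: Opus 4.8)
The plan is to bound $\norm{\yy_i^{t+1} - \tilde{\yy}_i^t}_2^2$ by inserting two intermediate anchors, $\E_{\eta|i}[g_\eta(\xx^t)]$ and $\E_{\eta|i}[g_\eta(\xx^{t-1})]$, and by conditioning on the inner SpiderBoost coin so that $\E_{p_\text{in}}[\cdot]$ splits into the large inner batch case (probability $p_\text{in}$) and the small inner batch case (probability $1-p_\text{in}$). Writing
\[
\yy_i^{t+1} - \tilde{\yy}_i^t = \big(\yy_i^{t+1} - \E_{\eta|i}[g_\eta(\xx^t)]\big) + \big(\E_{\eta|i}[g_\eta(\xx^t)] - \E_{\eta|i}[g_\eta(\xx^{t-1})]\big) + \big(\E_{\eta|i}[g_\eta(\xx^{t-1})] - \tilde{\yy}_i^t\big),
\]
the middle term is controlled by the $C_g$-Lipschitz continuity of $g_\eta$ (\Cref{a:smoothness:fg}): $\norm{\cdot}_2^2 \le C_g^2\norm{\xx^t - \xx^{t-1}}_2^2 = C_g^2\gamma^2\norm{G^{t}}_2^2$, which, after averaging over $t$ and re-indexing, yields the first right-hand-side term $\tfrac{4C_g^2\gamma^2}{T}\tsum_{t=0}^{T-1}\E[\norm{G^{t+1}}_2^2]$. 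The last term: since $\tilde{\yy}_i^t$ is by construction an i.i.d.\ copy of $\yy_i^t$ (hence has the same law and the same mean-squared error against its target $\E_{\eta|i}[g_\eta(\xx^{t-1})]$), it contributes exactly $\cE_y^t$, which after averaging and re-indexing gives the $\cE_y^{t+1}$ term on the right.

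\textbf{Case analysis of $\yy_i^{t+1}-\E_{\eta|i}[g_\eta(\xx^t)]$.} In the large inner batch case, $\yy_i^{t+1} = \tfrac{1}{S_1}\tsum_{\eta\in H_i} g_\eta(\xx^t)$ is an unbiased, independent estimate of $\E_{\eta|i}[g_\eta(\xx^t)]$, so its squared error is only $\sigma_g^2/S_1$ by \Cref{lemma:nestedvr:y} and is absorbed into the quantities already accounted for; this is precisely why the genuinely new $\cE_y$ and staleness terms on the right carry the factor $1-p_\text{in}$. In the small inner batch case, $\yy_i^{t+1} - \yy_i^t = \tfrac{1}{S_2}\tsum_{\eta\in H_i}\big(g_\eta(\xx^t) - g_\eta(\mphi_i^t)\big)$, and I would split $\yy_i^{t+1}-\tilde{\yy}_i^t$ through $\yy_i^t$: applying the $C_g$-Lipschitz bound to the centered recursive increment produces a variance-reduced term of order $\tfrac{C_g^2}{S_2}\norm{\xx^t - \mphi_i^t}_2^2$, which, averaged over $i\in[n]$, becomes $\tfrac{C_g^2}{S_2}\Xi^{t+1}$ — the second right-hand-side term — while the residual discrepancy $\yy_i^t - \tilde{\yy}_i^t$ is again handled by the i.i.d.-copy property and reduces to $\cE_y^t$.

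\textbf{Assembling.} Finally I would combine the three contributions via $\norm{a+b+c}_2^2 \le 3(\norm{a}_2^2+\norm{b}_2^2+\norm{c}_2^2)$ (or a suitably weighted Young's inequality), average over $t=1,\dots,T-1$, and re-index the sums to match the claimed bound, using $\Xi^0=0$ and that $t=0$ always takes the large batch. The main obstacle is the bookkeeping: ensuring the $1-p_\text{in}$ weights attach only to the $\cE_y$ and $\Xi/S_2$ terms and never to the $\gamma^2\norm{G^{t+1}}_2^2$ term, and keeping the constants at $4$, $4$, $6$ — this forces one to be careful about when a Young's-inequality factor must be paid versus when unbiasedness/independence lets it be avoided, and about the exact passage from $\tfrac{1}{n}\tsum_i\norm{\xx^t - \mphi_i^t}_2^2$ to the staleness potential $\Xi^{t+1}$.
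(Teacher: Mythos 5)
Your decomposition misses the one structural fact the paper's proof hinges on: the two draws $\yy_i^{t+1}$ and $\tilde{\yy}_i^t$ are \emph{coupled} --- they share the same inner coin outcome and the same sample set $H_i$, evaluated at $\xx^t$ and $\xx^{t-1}$ respectively. The paper's proof therefore writes, in the large-batch branch, $\yy_i^{t+1} - \tilde{\yy}_i^t = \tfrac{1}{S_1}\tsum_{\eta\in H_i}\bigl(g_\eta(\xx^t) - g_\eta(\xx^{t-1})\bigr)$, which is purely Lipschitz-controlled and contributes nothing beyond $C_g^2\gamma^2\norm{G^t}_2^2$; and in the small-batch branch it telescopes through $\yy_i^t-\yy_i^{t-1}$ plus a centered $S_2$-average, before inserting the anchors $\E_{\eta|i}[g_\eta(\mphi_i^t)]$, $\E_{\eta|i}[g_\eta(\mphi_i^{t-1})]$, $\E_{\eta|i}[g_\eta(\xx^t)]$, $\E_{\eta|i}[g_\eta(\xx^{t-1})]$. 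That is what puts the factor $1-p_\text{in}$ in front of the $\cE_y$ and $\Xi/S_2$ terms: those terms simply do not arise in the coupled large-batch case.

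Your plan instead sandwiches $\E_{\eta|i}[g_\eta(\xx^t)]$ and $\E_{\eta|i}[g_\eta(\xx^{t-1})]$ between $\yy_i^{t+1}$ and $\tilde{\yy}_i^t$ and applies $\norm{a+b+c}_2^2 \le 3(\norm{a}_2^2+\norm{b}_2^2+\norm{c}_2^2)$. That produces $3\cE_y^{t+1} + 3C_g^2\gamma^2\norm{G^t}_2^2 + 3\cE_y^t$ regardless of the coin flip, i.e.\ $\cE_y$ terms \emph{without} the $(1-p_\text{in})$ prefactor. This is strictly weaker than the lemma and in fact cannot be patched: taking $p_\text{in}=1$ makes the lemma's $\cE_y$ and $\Xi$ contributions vanish identically (the paper's remark below the lemma records exactly this reduction), whereas your bound would retain $\cE_y$ terms. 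Likewise, the claim that the large-batch estimation error $\sigma_g^2/S_1$ is ``absorbed into quantities already accounted for'' is an assertion, not an argument --- there is no $\sigma_g^2/S_1$ term on the right-hand side of the lemma to absorb it into, and that term simply does not appear when one exploits the coupling. You need to re-derive the bound using the same $H_i$ in both draws, splitting first on the inner coin and then (only in the small-batch branch) inserting anchors at the $\mphi_i$'s, rather than anchoring both draws independently at the true inner means.
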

Note that when $p_\text{in} = 1$ and $S_1=S_2=m$, we recover the following 
\begin{align*}
    \tfrac{1}{T} \tsum_{t=1}^{T-1} \E_i[\E_{p_\text{in}}[ \E_{\eta|i}[\norm{\yy_i^{t+1} - \tilde{\yy}_i^t }_2^2]]] 
    &=\tfrac{1}{T} \tsum_{t=1}^{T-1} \E_i[\E_{p_\text{in}}[ \E_{\eta|i}[\norm{\tfrac{1}{m}\tsum_{\eta \in H_\xi} g_\eta(\xx^t) - g_\eta(\xx^{t-1}) }_2^2]]] \\
    &\le \tfrac{4 C_g^2 \gamma^2}{T}\tsum_{t=0}^{T-1} \E[\norm{G^{t+1}}_2^2]. 
\end{align*}
\begin{proof}
For $t\ge 2$, $\E_i[\E_{p_\text{in}}[\E_{\eta|i}[\norm{\yy_i^{t+1} - \tilde{\yy}_i^t }_2^2]]]$ can be upper bounded as follows
\begin{align*}
    &\E_i[\E_{p_\text{in}}[ \E_{\eta|i}[\norm{\yy_i^{t+1} - \tilde{\yy}_i^t }_2^2]]] = p_\text{in} \E_i \left [\E_{\eta|i} \left [\norm{ \tfrac{1}{S_1} \tsum_{\eta \in H_i} (g_\eta(\xx^t) - g_\eta(\xx^{t-1})) }_2^2 \right ] \right ] \\
    &\;\;\;\;\;\;\;\;\; + (1-p_\text{in}) \E_i \left [\E_{\eta|i} \left [\norm{
        \yy_i^t - \yy_i^{t-1} + \tfrac{1}{S_2} \tsum_{\eta \in H_i} (g_\eta(\xx^t) - g_\eta(\mphi_i^t) ) - (g_\eta(\xx^{t-1}) - g_\eta(\mphi_i^{t-1}) )
    }_2^2 \right ] \right] \\
    & \le p_\text{in} C_g^2 \norm{\xx^t - \xx^{t-1} }_2^2
     + \tfrac{1-p_\text{in}}{S_2} \E_i \left [\E_{\eta|i} \left [\norm{
        (g_\eta(\xx^t) - g_\eta(\mphi_i^t) ) - (g_\eta(\xx^{t-1}) - g_\eta(\mphi_i^{t-1}) )
    }_2^2 \right ] \right ] \\
    & \;\;\;\;\;\;\;\;\;+  3(1-p_\text{in}) \left( \E_i \left [\norm{\yy_i^t - \E_{\eta|i}[g_\eta(\mphi_i^{t})] }_2^2 \right ]
    + \E_i\left[\norm{\yy_i^{t-1} - \E_{\eta|i}[ g_\eta(\mphi_i^{t-1})] }_2^2\right ]
    + C_g^2 \norm{\xx^t - \xx^{t-1}}_2^2
    \right) \\
    & \le p_\text{in} C_g^2 \norm{\xx^t - \xx^{t-1} }_2^2
     + \tfrac{2(1-p_\text{in})C_g^2 }{S_2} \left( \Xi^t + \Xi^{t-1} \right) 
     + 3(1-p_\text{in}) \left( \cE_y^t + \cE_y^{t-1} + C_g^2 \norm{\xx^t - \xx^{t-1}}_2^2 \right) \\
    & \le (p_\text{in} + 3(1-p_\text{in})) C_g^2 \norm{\xx^t - \xx^{t-1} }_2^2
     + \tfrac{2(1-p_\text{in})C_g^2 }{S_2} \left( \Xi^t + \Xi^{t-1} \right) 
     + 3(1-p_\text{in}) \left( \cE_y^t + \cE_y^{t-1} \right).
\end{align*}
For $t=1$, we choose $\tilde{\yy}_i^1 = \yy_i^1$
\begin{align*}
    \E_i[\E_{p_\text{in}}[\E_{\eta|i}[\norm{\yy_i^{2} - \tilde{\yy}_i^1 }_2^2]]]
    &= p_\text{in} \E_i \left [\E_{\eta|i} \left [\norm{ \tfrac{1}{S_1} \tsum_{\eta \in H_i} (g_\eta(\xx^1) - g_\eta(\xx^{0})) }_2^2 \right ] \right ] \\
    &\;\;\;\;\;\;\;\;\; + (1-p_\text{in}) \E_i \left [\E_{\eta|i} \left [ \norm{
        \yy_i^1 - \tfrac{1}{S_2} \tsum_{\eta \in H_i} (g_\eta(\xx^1) - g_\eta(\xx^0)) - \tilde{\yy}_i^1
    }_2^2 \right ] \right ] \\
    &\le C_g^2 \norm{\xx^1 - \xx^0}_2^2.
\end{align*}
Then for summing up $t=1$ to $T-1$
\begin{align*}
    & \tsum_{t=2}^{T-1} \E_i[\E_{p_\text{in}}[ \E_{\eta|i}[\norm{\yy_i^{t+1} - \tilde{\yy}_i^t }_2^2]]] 
    + \E_i[\E_{p_\text{in}}[\E_{\eta|i}[\norm{\yy_i^{2} - \tilde{\yy}_i^1 }_2^2]]] \\
    &\le (p_\text{in} + 3(1-p_\text{in})) C_g^2 \tsum_{t=2}^{T-1} \norm{\xx^t - \xx^{t-1} }_2^2 
     + \tfrac{2(1-p_\text{in})C_g^2 }{S_2} \left( \tsum_{t=2}^{T-1} \Xi^t + \tsum_{t=2}^{T-1} \Xi^{t-1} \right) \\ 
     &\;\;\;\;\;\;\;\;\; + 3(1-p_\text{in}) \left( \tsum_{t=2}^{T-1} \cE_y^t + \tsum_{t=2}^{T-1} \cE_y^{t-1} \right)
     + C_g^2 \norm{\xx^1 - \xx^0}_2^2 \\
    &\le 4 C_g^2 \tsum_{t=1}^{T-1} \norm{\xx^t - \xx^{t-1} }_2^2 + \tfrac{4(1-p_\text{in})C_g^2 }{S_2} \tsum_{t=0}^{T-1} \Xi^{t+1} + 6(1-p_\text{in}) \tsum_{t=0}^{T-1} \cE_y^{t+1}.
\end{align*}
Finally, the error has the following upper bound
\begin{align*}
    & \tfrac{1}{T} \tsum_{t=1}^{T-1} \E_i[\E_{p_\text{in}}[\E_{\eta|i}[\norm{\yy_i^{t+1} - \tilde{\yy}_i^t }_2^2]]]  \\
    &\le  \tfrac{4 C_g^2 \gamma^2}{T}\tsum_{t=0}^{T-1} \E[\norm{G^{t+1} }_2^2]
    + \tfrac{4(1-p_\text{in})C_g^2 }{S_2}  \tfrac{1}{T} \tsum_{t=0}^{T-1} \Xi^{t+1} 
    + \tfrac{6(1-p_\text{in})}{T} \tsum_{t=0}^{T-1} \cE_y^{t+1}.
\end{align*}
\end{proof}

\begin{lem}[Bias and Variance of \NestedVR] \label{lemma:nestedvr:bias_variance}
    If the step size $\gamma$ satisfies,
    \begin{align*}
        \gamma^2 L_F^2 \max \left\{\tfrac{(1-p_\text{in})}{p_\text{in} S_2} \tfrac{18}{B_2},
        \tfrac{1-p_\text{out}}{B_2} \tfrac{(1-p_\text{in})}{p_\text{in} S_2} \tfrac{18n^2}{B_2^2}, 
         \tfrac{(1-p_\text{out})}{B_2}  \right\}
        \le \tfrac{1}{16} \cdot\tfrac{1}{6}
    \end{align*}   
    then the variance and bias of \NestedVR are
   \begin{align*}
    \tfrac{1}{T}\tsum_{t=0}^{T-1} \cE^{t+1}_\text{var}
        &\le 32 \left(\left(\tfrac{p_\text{out}}{B_1} + \tfrac{1-p_\text{out}}{B_2}\right)
         \tfrac{(1-p_\text{in})}{p_\text{in} S_2} \tfrac{18n^2}{B_2^2}
         + \tfrac{(1-p_\text{out})}{B_2}  \right)  \tfrac{\gamma^2 L_F^2}{T}\tsum_{t=0}^{T-1} \norm{\E[G^{t+1}] }_2^2 \\
        &\;\;\;\;\;\;\;\;\; +  96 \left(\tfrac{p_\text{out}}{B_1}+ \tfrac{(1-p_\text{in})(1-p_\text{out})}{B_2}  \right) \tfrac{\tilde{L}_F^2 }{S_1}
        + \tfrac{(1-p_\text{out}) } {T} \tfrac{8\tilde{L}_F^2}{B_1S_1} \\
        \tfrac{1}{T}\tsum_{t=0}^{T-1} \cE^{t+1}_\text{bias}
        &\le  \tfrac{12(1-p_\text{in})}{p_\text{in} S_2} \tfrac{n^2}{B_2^2}  \tfrac{L_F^2\gamma^2}{T}\tsum_{t=0}^{T-1} \norm{\E[G^{t+1}] }_2^2 
        + \tfrac{4 \tilde{L}_F^2 }{S_1} \\
        &\;\;\;\;\;\;\;\;\; + \left(  \tfrac{12(1-p_\text{in})}{p_\text{in} S_2} \tfrac{n^2}{B_2^2} L_F^2\gamma^2  
        + \tfrac{2 (1-p_\text{out})^2 }{p_\text{out}} \right)
        \tfrac{1}{T}\tsum_{t=0}^{T-1} \cE^{t+1}_\text{var}.
    \end{align*}
\end{lem}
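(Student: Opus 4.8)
The plan is to treat $\hat G^{t+1}_\text{NVR}$ as an outer-loop SpiderBoost estimator whose ``atom'' is $(\hat\zz_i^{t+1})^\top\nabla f_i(\yy_i^{t+1})$, so that the telescoping bias/variance machinery already developed for \BSpiderBoost (\Cref{lemma:bsb:bias_variance}, in particular the accumulation identity \eqref{eq:bsb:5}) carries over almost verbatim, and then to absorb the three extra sources of error: (i) replacing $\hat\zz_i$ by the $m$-sample average $\zz_i$, (ii) the staleness of the inner states $\yy_i$, and (iii) the nonlinearity of $\nabla f_i$. Concretely, I would first write $G^{t+1}_\text{NVR}=\hat G^{t+1}_\text{NVR}+(G^{t+1}_\text{NVR}-\hat G^{t+1}_\text{NVR})$ and control the second piece in mean square by \Cref{lemma:nestedvr:nablag}, which contributes the $\tfrac{C_f^2\sigma_g^2}{B_1 m}$-type term together with a term proportional to $\tfrac{1}{B_2 m p_\text{out}}\tfrac1T\sum_t\E[\norm{G_i^{t+1}-G_i^{t}}_2^2]$.

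Second, for $\hat G^{t+1}_\text{NVR}$ I would rerun the \BSpiderBoost recursion: at a small outer batch the one-step variance is $\tfrac{1-p_\text{out}}{B_2}$ times the variance of the increment $(\hat\zz_i^{t+1})^\top\nabla f_i(\yy_i^{t+1})-(\hat\zz_i^{t})^\top\nabla f_i(\tilde{\yy}_i^{t})$, and, exactly as in \eqref{eq:bsb:5}, the accumulated bias is bounded by the large-batch bias plus $\tfrac{2(1-p_\text{out})^2}{p_\text{out}}\tfrac1T\sum_t\cE^{t+1}_\text{var}$. The genuinely new work is bounding that increment: I would split it as $(\hat\zz_i^{t+1}-\hat\zz_i^{t})^\top\nabla f_i(\yy_i^{t+1})$, bounded by $L_gC_f\norm{\xx^t-\xx^{t-1}}_2=L_gC_f\gamma\norm{G^t}_2$ via \Cref{a:smoothness:fg}, plus $(\hat\zz_i^{t})^\top(\nabla f_i(\yy_i^{t+1})-\nabla f_i(\tilde{\yy}_i^{t}))$, bounded by $C_gL_f\norm{\yy_i^{t+1}-\tilde{\yy}_i^{t}}_2$, whose time-averaged mean square is supplied by \Cref{lemma:y_diff} in terms of $\gamma^2\tfrac1T\sum\E[\norm{G^{t+1}}_2^2]$, the staleness $\Xi^{t}$, and the inner error $\cE_y^{t+1}$. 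The extra bias term beyond the SpiderBoost mechanism is the Jensen-type gap $\norm{\E[\nabla f_i(\yy_i^{t+1})]-\nabla f_i(\E_{\eta|i}[g_\eta(\xx^t)])}_2^2\le L_f^2\,\cE_y^{t+1}$ (using that $\nabla f_i$ is $L_f$-Lipschitz and that, conditionally on the last-visit time $\mphi_i^t$, $\yy_i^{t+1}$ is an unbiased estimate of $\E_{\eta|i}[g_\eta(\xx^t)]$, so its mean-square error equals its variance $\cE_y^{t+1}$). Then \Cref{lemma:nestedvr:y} turns $\tfrac1T\sum\cE_y^{t+1}$ into $\tfrac{(1-p_\text{in})C_g^2}{p_\text{in}S_2}\tfrac1T\sum\Xi^t+\tfrac{2\sigma_g^2}{S_1}$, and \Cref{lemma:staleness} turns $\tfrac1T\sum\Xi^t$ into $\tfrac{6n^2}{B_2^2}\gamma^2\tfrac1T\sum\E[\norm{G^{t+1}}_2^2]$; all the remaining irreducible-noise pieces get grouped into $\tilde{L}_F^2/S_1$.

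Third, I would substitute $\E[\norm{G^{t+1}}_2^2]=\cE^{t+1}_\text{var}+\norm{\E[G^{t+1}]}_2^2$ everywhere, collect the coefficients of $\tfrac1T\sum\cE^{t+1}_\text{var}$ and of $\tfrac1T\sum\norm{\E[G^{t+1}]}_2^2$, and use the stated step-size hypothesis, which is precisely the requirement that $\gamma^2L_F^2$ times each of the three composite factors $\tfrac{(1-p_\text{in})}{p_\text{in}S_2}\tfrac{18}{B_2}$, $\tfrac{1-p_\text{out}}{B_2}\tfrac{(1-p_\text{in})}{p_\text{in}S_2}\tfrac{18n^2}{B_2^2}$, $\tfrac{1-p_\text{out}}{B_2}$ is at most $\tfrac1{16}\cdot\tfrac16$; this lets me move the $\tfrac1T\sum\cE^{t+1}_\text{var}$ self-reference to the left-hand side (at the cost of the factors $16$/$32$ and $96$ appearing in the claim) and leaves a clean bound in terms of $\tfrac1T\sum\norm{\E[G^{t+1}]}_2^2$, $\tfrac{\tilde{L}_F^2}{S_1}$, and the $1/T$ initial-condition term. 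I expect the main obstacle to be the bookkeeping: there are three coupled recursions driven by the same iterate sequence — the outer SpiderBoost state $\hat G^t$, the inner SpiderBoost states $\yy_i^t$ via \eqref{eq:nestedvr:y}, and the staleness $\Xi^t$ — and the conditioning must be handled carefully, since the selected index $i\in\cB_2$ and its last-visit time $\mphi_i^t$ are themselves random, so the conditional unbiasedness of $\yy_i^{t+1}$ and the telescoping of the increments have to be justified with respect to the right $\sigma$-algebra before the time averages and the self-referential absorption in the last step are legitimate.
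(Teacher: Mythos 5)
Your plan follows the paper's proof almost exactly: the bias is split into a Jensen/Lipschitz piece controlled by $\cE_y^{t+1}$ plus the SpiderBoost accumulation term $\tfrac{(1-p_\text{out})^2}{p_\text{out}}\tfrac1T\sum_t\cE_\text{var}^{t+1}$; the variance is split into the $\zz$-vs-$\hat\zz$ error (\Cref{lemma:nestedvr:nablag}), the inner variance, and the outer variance, with the increment $G_i^{t+1}-\tilde G_i^t$ bounded via Lipschitzness and \Cref{lemma:y_diff}; and \Cref{lemma:nestedvr:y} and \Cref{lemma:staleness} are threaded through before the final $\E[\norm{G^{t+1}}^2]=\cE_\text{var}^{t+1}+\norm{\E[G^{t+1}]}^2$ substitution and absorption under the stated step-size condition. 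One small inaccuracy in your justification of the Jensen-type gap: you claim that, conditionally on $\mphi_i^t$, $\yy_i^{t+1}$ is an unbiased estimate of $\E_{\eta|i}[g_\eta(\xx^t)]$, but the small-batch update $\yy_i^{t+1}=\yy_i^t+\tfrac1{S_2}\sum_\eta(g_\eta(\xx^t)-g_\eta(\mphi_i^t))$ has conditional mean $\yy_i^t+\E_{\eta|i}[g_\eta(\xx^t)]-\E_{\eta|i}[g_\eta(\mphi_i^t)]$, which equals $\E_{\eta|i}[g_\eta(\xx^t)]$ only when $\yy_i^t$ is itself unbiased for $\E_{\eta|i}[g_\eta(\mphi_i^t)]$ — generally false. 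The bound $A_1^{t+1}\le C_g^2L_f^2\cE_y^{t+1}$ does not need this: it follows directly from $L_f$-Lipschitzness of $\nabla f_i$ and Jensen, since $\cE_y^{t+1}$ is defined in \Cref{lemma:nestedvr:y} as the mean-square error $\E\norm{\yy_i^{t+1}-\E_{\eta|i}[g_\eta(\xx^t)]}_2^2$, not the variance. This is how the paper argues, so correcting the parenthetical leaves your plan intact.
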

\begin{proof}
    \textbf{Notations.} Let us define the following terms,
    \begin{align*}
        G_i^{t+1} := ( \zz_i^{t+1} )^\top \nabla f_i(\yy_i^{t+1}), \quad \tilde{G}_i^{t} := ( \zz_i^{t} )^\top \nabla f_i(\tilde{\yy}_i^{t}).
    \end{align*}
    Note that the $\tilde{G}^t$ computed at time $t+1$ has same expectation as $G^t$
    \begin{equation}\label{eq:nestedvr:1}
        \E^{t+1}[\tilde{G}^t|t] = \E^{t}[G^t|t-1].
    \end{equation}
    
    \noindent\textbf{Computing the bias.} First consider the two cases in the outer loop
    \begin{align*}
        \cE^{t+1}_\text{bias} & = \norm{ \nabla F(\xx^t) - \E^{t+1} [G^{t+1}|t] }_2^2 \\
        &\le 2 \underbrace{\norm{ \nabla F(\xx^t) - \E^{t+1} [G_i^{t+1}| t] }_2^2}_{A_1^{t+1}} 
        + 2 \underbrace{\norm{ \E^{t+1} [G_i^{t+1}|t] - \E^{t+1} [G^{t+1}|t] }_2^2}_{A_2^{t+1}}.
    \end{align*}
    We expand $A_2^{t+1}$ as follows
    \begin{align*}
        A_2^{t+1} &= \norm{ \E^{t+1} [G_i^{t+1}|t] - \E^{t+1} [G^{t+1}|t] }_2^2  \\
        &= \norm{ \E^{t+1} [G_i^{t+1}|t] - p_\text{out} \E^{t+1} [G_i^{t+1}|t] - (1-p_\text{out}) (G^{t} + \E^{t+1} [G_i^{t+1} - \tilde{G}_i^{t}|t])  }_2^2  \\
        &= (1-p_\text{out})^2 \norm{G^{t} - \E^{t+1} [\tilde{G}_i^t|t] }_2^2 \\
        &= (1-p_\text{out})^2 \norm{G^{t} - \E^{t} [{G}_i^t|t-1] }_2^2
    \end{align*}
    where we use \eqref{eq:nestedvr:1} in the last equality. Now we take expectation with respect to randomness at $t$ such that $G^t$ is a random variable, then
    \begin{align*}
        A_2^{t+1} & = (1-p_\text{out})^2 \E^t \left[\norm{G^{t} - \E^{t} [{G}_i^t|t-1] }_2^2 | t-1\right] \\
         & = (1-p_\text{out})^2 \left( \norm{\E^t[G^{t}|t-1] - \E^{t} [{G}_i^t|t-1] }_2^2
         +  \cE^t_\text{var} \right) \\
         & = (1-p_\text{out})^2  \left( A_2^t
         + \cE^t_\text{var} \right)
    \end{align*}
    while at initialization we always use large batch
    \begin{align*}
        A_2^1= \norm{ \E^{1} [G_i^{1}] - \E^{1} [G^{1}] }_2^2 = \norm{ \E^{1} [G_i^{1}] - \E^{1} [G_i^{1}] }_2^2 = 0.
    \end{align*}
    Therefore, when we average over time $t$
    \begin{equation}\label{eq:nestedvr:2}
        \tfrac{1}{T} \tsum_{t=0}^{T-1} A_2^{t+1} \le \tfrac{(1-p_\text{out})^2}{p_\text{out}} \tfrac{1}{T} \tsum_{t=0}^{T-1} \cE^{t+1}_\text{var}.
    \end{equation}
    On the other hand, let us consider the upper bound on $A_1^{t+1}$
    \begin{align*}
        A_1^{t+1} &\le C_g^2 L_f^2 \E[\norm{ \yy_i^{t+1} - \E_{\eta|i} [g_\eta(\xx^t)]  }_2^2]= C_g^2 L_f^2  \cE_y^{t+1}.
    \end{align*}
    From \Cref{lemma:nestedvr:y} we know that
    \begin{align*}
        \tfrac{1}{T}\tsum_{t=0}^{T-1} A_1^{t+1} &\le C_g^2 L_f^2 \left(
        \tfrac{(1-p_\text{in})C_g^2}{p_\text{in} S_2} \tfrac{1}{T} \tsum_{t=0}^{T-1} \Xi^t
        + \tfrac{2\sigma_g^2 }{S_1}  \right) \\
        &\le \tfrac{(1-p_\text{in})L_F^2}{p_\text{in} S_2} \tfrac{1}{T} \tsum_{t=0}^{T-1} \Xi^t
        + \tfrac{2 \tilde{L}_F^2 }{S_1}.
    \end{align*}
    From \Cref{lemma:staleness} we know that 
    \begin{align*}
        \tfrac{1}{T}\tsum_{t=0}^{T-1} A_1^{t+1}
        &\le \tfrac{(1-p_\text{in})L_F^2}{p_\text{in} S_2} \left(\tfrac{6n^2}{B_2^2} \gamma^2 \tfrac{1}{T}\tsum_{t=0}^{T-1} \E[\norm{G^{t+1} }_2^2] \right)
        + \tfrac{2 \tilde{L}_F^2 }{S_1} \\
        &= \tfrac{6(1-p_\text{in})}{p_\text{in} S_2} \tfrac{n^2}{B_2^2}  \tfrac{L_F^2\gamma^2}{T}\tsum_{t=0}^{T-1} \norm{\E[G^{t+1}] }_2^2 
        + \tfrac{2 \tilde{L}_F^2 }{S_1}
        + \tfrac{6(1-p_\text{in})}{p_\text{in} S_2} \tfrac{n^2}{B_2^2}  \tfrac{L_F^2\gamma^2}{T}\tsum_{t=0}^{T-1} \cE^{t+1}_\text{var}.
    \end{align*}    
    Therefore, the bias has the following bound
    \begin{equation}\label{eq:nestedvr:bias}
    \begin{split}
        \tfrac{1}{T}\tsum_{t=0}^{T-1} \cE^{t+1}_\text{bias}
        &\le  \tfrac{12(1-p_\text{in})}{p_\text{in} S_2} \tfrac{n^2}{B_2^2}  \tfrac{L_F^2\gamma^2}{T}\tsum_{t=0}^{T-1} \norm{\E[G^{t+1}] }_2^2 
        + \tfrac{4 \tilde{L}_F^2 }{S_1} \\
        &\;\;\;\;\;\;\;\;\; + \left(  \tfrac{12(1-p_\text{in})}{p_\text{in} S_2} \tfrac{n^2}{B_2^2} L_F^2\gamma^2  
        + \tfrac{2 (1-p_\text{out})^2 }{p_\text{out}} \right)
        \tfrac{1}{T}\tsum_{t=0}^{T-1} \cE^{t+1}_\text{var}.
    \end{split}
    \end{equation}
    Note that when $p_\text{in}=1$ and $S_1=S_2=m$, then this bias recovers BSpiderBoost in \eqref{eq:bsb:5}
    \begin{align*}
        \tfrac{1}{T}\tsum_{t=0}^{T-1} \cE^{t+1}_\text{bias}
        &\le \tfrac{4 \tilde{L}_F^2 }{m} + \tfrac{2 (1-p_\text{out})^2 }{p_\text{out}}
        \tfrac{1}{T}\tsum_{t=0}^{T-1} \cE^{t+1}_\text{var}.
    \end{align*}

    \noindent\textbf{Computing the variance.} Let us decompose the variance into 3 parts:
    \begin{align*}
        \cE^{t+1}_\text{var}&=\E \left [\norm{G^{t+1} - \E[G^{t+1}] }_2^2 \right ] \\
        &= \E \left [\norm{ G^{t+1} \pm \hat{G}^{t+1} \pm \E_{\eta|i} [\hat{G}^{t+1}] - \E_i[\E_{\eta|i} [\hat{G}^{t+1}]] }_2^2 \right ] \\
        &= \underbrace{\E \left [\norm{G^{t+1} - \hat{G}^{t+1}}_2^2 \right ]}_{\cE^{t+1}_{\nabla g}} + \underbrace{\E_i[\norm{ \E_{\eta|i} [G^{t+1}] - \E_i[\E_{\eta|i} [G^{t+1}]] }_2^2]}_{\cE^{t+1}_\text{var,out}}
        + \underbrace{\E[\norm{G^{t+1} - \E_{\eta|i} [G^{t+1}]  }_2^2]}_{\cE^{t+1}_\text{var,in}}
    \end{align*}
    where $\cE^{t+1}_\text{var,out}$ and $\cE^{t+1}_\text{var,in}$ are the variance of outer loop and inner loop.
    
    \noindent\textbf{Inner Variance.}
    For $t\ge1$, we expand the inner variance
    \begin{equation}\label{eq:nestedvr:in:0}
    \begin{split}
    \cE^{t+1}_\text{var,in}&= p_\text{out} \E \left  [\norm{\tfrac{1}{B_1}\tsum_{i}  (\E_{\tilde{\eta}|i}[\nabla g_{\tilde{\eta}}(\xx^t)] )^\top(\nabla f_i(\yy_i^{t+1}) - \E_{\eta|i} [\nabla f_i(\yy_i^{t+1})]) }_2^2 \right ] \\
        &\;\;\;\;\;\;\;\;\; + (1-p_\text{out}) \E \left [\norm{\tfrac{1}{B_2}\tsum_{i} (G_i^{t+1} - \tilde{G}_i^t) - \E_{\eta|i}[G_i^{t+1} - \tilde{G}_i^t] }_2^2 \right ] \\
        &\le \tfrac{p_\text{out}}{B_1} C_g^2 \E \left [\norm{\nabla f_i(\yy_i^{t+1}) - \E_{\eta|i} [\nabla f_i(\yy_i^{t+1})]}_2^2 \right ]
        + \tfrac{1-p_\text{out}}{B_2} \E_i \left [\E_{\eta|i} \left [\norm{G_i^{t+1} - \tilde{G}_i^t }_2^2 \right ] \right ] \\
        &\le \tfrac{p_\text{out}}{B_1} 4 C_g^2 L_f^2 \cE_y^{t+1}
        + \tfrac{1-p_\text{out}}{B_2} \E_i \left [\E_{p_\text{in}} \left [\E_{\eta|i} \left [\norm{G_i^{t+1} - \tilde{G}_i^t }_2^2 \right ] \right] \right ].
    \end{split}
    \end{equation}
    We bound the outer variance as 
    \begin{equation}\label{eq:nestedvr:small:step}
    \begin{split}
        \E_i \left [\E_{p_\text{in}} \left [\E_{\eta|i} \left [ \norm{G_i^{t+1} - \tilde{G}_i^t }_2^2 \right ] \right ] \right ] 
        &= \E_i \left [ \E_{p_\text{in}} \left [\E_{\eta|i} \left [\norm{G_i^{t+1} \pm (\E_{\tilde{\eta}|i}[\nabla g_{\tilde{\eta}}(\xx^{t-1})])^\top \nabla f_i(\yy_i^{t+1}) - \tilde{G}_i^t }_2^2 \right ] \right ] \right ]\\
        &\le 2 \E_i \left [ \E_{p_\text{in}} \left [ \E_{\eta|i}  \left [\norm{G_i^{t+1} - (\E_{\tilde{\eta}|i}[\nabla g_{\tilde{\eta}}(\xx^{t-1})])^\top \nabla f_i(\yy_i^{t+1})}_2^2 \right ] \right ] \right ]  \\
        &\;\;\;\;\;\;\;\;\;+ 2 \E_i \left [ \E_{p_\text{in}}\left [ \E_{\eta|i} \left [ \norm{(\E_{\tilde{\eta}|i}[\nabla g_{\tilde{\eta}}(\xx^{t-1})])^\top \nabla f_i(\yy_i^{t+1}) - \tilde{G}_i^t }_2^2 \right ] \right ] \right ]\\
        &\le 2C_f^2 L_g^2 \norm{\xx^t - \xx^{t-1}}_2^2 + 2 C_g^2 L_f^2 \E_i[\E_{p_\text{in}} [\E_{\eta|i}[ \norm{\yy_i^{t+1} - \tilde{\yy}_i^t }_2^2]]].
    \end{split}
    \end{equation}
    For $t=0$, as we only use large and small batch in the 
    \begin{equation}\label{eq:nestedvr:var:in:1}
    \begin{split}
    \cE^{1}_\text{var,in}&= \E \left [ \norm{\tfrac{1}{B_1}\tsum_{i}  (\E_{\tilde{\eta}|i}[\nabla g_{\tilde{\eta}}(\xx^0)] )^\top(\nabla f_i(\yy_i^{1}) - \E_{\eta|i} [\nabla f_i(\yy_i^{1})]) }_2^2 \right ]\\
        &\le \tfrac{1}{B_1} C_g^2 \E [\norm{\nabla f_i(\yy_i^{1}) - \E_{\eta|i} [\nabla f_i(\yy_i^{1})]}_2^2]  \\
        &\le \tfrac{1}{B_1} 4 C_g^2 L_f^2 \cE_y^{1} \\
        &\le \tfrac{1}{B_1} 4 C_g^2 L_f^2 \tfrac{\sigma_g^2}{S_1} \\
        &\le \tfrac{4\tilde{L}^2_F}{B_1S_1}.
    \end{split}
    \end{equation}
    Therefore, average over time $t=0,\ldots T-1$ gives
    \begin{align*}
    \tfrac{1}{T}\tsum_{t=0}^{T-1} \cE^{t+1}_\text{var,in}
        &\le \tfrac{p_\text{out}}{B_1} 4 C_g^2 L_f^2  \tfrac{1}{T}\tsum_{t=1}^{T-1}  \cE_y^{t+1}
        + \tfrac{1-p_\text{out}}{B_2} \tfrac{1}{T}\tsum_{t=1}^{T-1} \E_i[\E_{p_\text{in}}[ \E_{\eta|i}[\norm{G_i^{t+1} - \tilde{G}_i^t }_2^2]]]
        + \tfrac{ \cE^{1}_\text{var,in}} {T} \\
        &= \tfrac{p_\text{out}}{B_1} 4 C_g^2 L_f^2  \tfrac{1}{T}\tsum_{t=0}^{T-1}  \cE_y^{t+1}
        + \tfrac{1-p_\text{out}}{B_2} \tfrac{1}{T}\tsum_{t=1}^{T-1} \E_i\left [\E_{p_\text{in}} \left [ \E_{\eta|i} \left [ \norm{G_i^{t+1} - \tilde{G}_i^t }_2^2 \right ] \right ] \right ]
        + \tfrac{(1-p_\text{out}) \cE^{1}_\text{var,in}} {T} \\
        &\le \tfrac{p_\text{out}}{B_1} 4 C_g^2 L_f^2  \tfrac{1}{T}\tsum_{t=0}^{T-1}  \cE_y^{t+1}
        + \tfrac{(1-p_\text{out}) \cE^{1}_\text{var,in}} {T}  \\
        &\;\;\;\;\;\;\;\;\;+ \tfrac{2(1-p_\text{out})}{B_2} \left(
        \tfrac{C_f^2 L_g^2 }{T}\tsum_{t=1}^{T-1} \norm{ \xx^t - \xx^{t-1} }_2^2 
        + C_g^2 L_f^2 \tfrac{1}{T}\tsum_{t=1}^{T-1} \E_i[\E_{p_\text{in}} [\E_{\eta|i}[ \norm{\yy_i^{t+1} - \tilde{\yy}_i^t }_2^2]]] \right) \\
        &\le \tfrac{p_\text{out}}{B_1} 4 C_g^2 L_f^2  \tfrac{1}{T}\tsum_{t=0}^{T-1}  \cE_y^{t+1}
        + \tfrac{(1-p_\text{out}) \cE^{1}_\text{var,in}} {T}  \\
        &\;\;\;\;\;\;\;\;\;+ \tfrac{2(1-p_\text{out})}{B_2} 
        \tfrac{C_f^2 L_g^2 \gamma^2 }{T}\tsum_{t=0}^{T-1} \E[ \norm{ G^{t+1} }_2^2] \\
        &\;\;\;\;\;\;\;\;\;+ \tfrac{2(1-p_\text{out})}{B_2} 
        C_g^2 L_f^2 \tfrac{1}{T}\tsum_{t=1}^{T-1} \E_i[\E_{p_\text{in}}[ \E_{\eta|i}[ \norm{\yy_i^{t+1} - \tilde{\yy}_i^t }_2^2]]].
    \end{align*}
    Let us first apply \Cref{lemma:y_diff}
    \begin{align*}
        \tfrac{1}{T}\tsum_{t=0}^{T-1} \cE^{t+1}_\text{var,in}
        &\le \tfrac{p_\text{out}}{B_1} 4 C_g^2 L_f^2  \tfrac{1}{T}\tsum_{t=0}^{T-1}  \cE_y^{t+1}
        + \tfrac{(1-p_\text{out}) \cE^{1}_\text{var,in}} {T} 
        + \tfrac{2(1-p_\text{out})}{B_2} 
        \tfrac{C_f^2 L_g^2 \gamma^2 }{T}\tsum_{t=0}^{T-1} \E [\norm{ G^{t+1} }_2^2] \\
        &\;\;\;\;\;\;\;\;\;+ \tfrac{2(1-p_\text{out})C_g^2 L_f^2}{B_2}  \left(
            \tfrac{4 C_g^2 \gamma^2}{T}\tsum_{t=0}^{T-1} \E[\norm{G^{t+1} }_2^2]
            + \tfrac{4(1-p_\text{in})C_g^2 }{S_2}  \tfrac{1}{T} \tsum_{t=0}^{T-1} \Xi^{t}  
            + \tfrac{6(1-p_\text{in})}{T} \tsum_{t=0}^{T-1} \cE_y^{t+1}
        \right) \\
        &\le \left(\tfrac{p_\text{out}}{B_1}+ \tfrac{(1-p_\text{in})(1-p_\text{out})}{B_2}  \right) \tfrac{12 C_g^2 L_f^2}{T}\tsum_{t=0}^{T-1}  \cE_y^{t+1}  + \tfrac{8(1-p_\text{out})L_F^2\gamma^2}{B_2}  \tfrac{1}{T}\tsum_{t=0}^{T-1} \E[ \norm{ G^{t+1} }_2^2] \\
        &\;\;\;\;\;\;\;\;\; + \tfrac{8(1-p_\text{out})(1-p_\text{in})C_g^4 L_f^2}{B_2S_2} \tfrac{1}{T} \tsum_{t=0}^{T-1} \Xi^{t}
        + \tfrac{(1-p_\text{out}) \cE^{1}_\text{var,in}} {T} 
    \end{align*}
    Then we apply \Cref{lemma:nestedvr:y} on the bound of $\tfrac{1}{T} \tsum_{t=0}^{T-1} \cE_y^{t+1}$
    \begin{align*}
        \tfrac{1}{T}\tsum_{t=0}^{T-1} \cE^{t+1}_\text{var,in}
        &\le 24 \left(\tfrac{p_\text{out}}{B_1}+ \tfrac{(1-p_\text{in})(1-p_\text{out})}{B_2}  \right) 
         \left(\tfrac{(1-p_\text{in})L_F^2}{p_\text{in} S_2} \tfrac{1}{T} \tsum_{t=0}^{T-1} \Xi^t
        + \tfrac{\tilde{L}_F^2 }{S_1} \right) \\
        &\;\;\;\;\;\;\;\;\;+ \tfrac{8(1-p_\text{out})L_F^2\gamma^2}{B_2}  \tfrac{1}{T}\tsum_{t=0}^{T-1} \E[\norm{ G^{t+1} }_2^2] \\
        &\;\;\;\;\;\;\;\;\; + \tfrac{8(1-p_\text{out})(1-p_\text{in})C_g^4 L_f^2}{B_2S_2} \tfrac{1}{T} \tsum_{t=0}^{T-1} \Xi^{t}
        + \tfrac{(1-p_\text{out}) \cE^{1}_\text{var,in}} {T} \\
        &\le 24 \left(\tfrac{p_\text{out}}{B_1}+ \tfrac{(1-p_\text{in})(1-p_\text{out})}{B_2}
        + \tfrac{p_\text{in}(1-p_\text{out})}{B_2}
        \right)
         \tfrac{(1-p_\text{in})L_F^2}{p_\text{in} S_2} \tfrac{1}{T} \tsum_{t=0}^{T-1} \Xi^t \\
        &\;\;\;\;\;\;\;\;\;+ \tfrac{8(1-p_\text{out})L_F^2\gamma^2}{B_2}  \tfrac{1}{T}\tsum_{t=0}^{T-1} \E[\norm{ G^{t+1} }_2^2] \\
        &\;\;\;\;\;\;\;\;\; +  24 \left(\tfrac{p_\text{out}}{B_1}+ \tfrac{(1-p_\text{in})(1-p_\text{out})}{B_2}  \right) \tfrac{\tilde{L}_F^2 }{S_1}
        + \tfrac{(1-p_\text{out}) \cE^{1}_\text{var,in}} {T} \\
        &\le 24 \left(\tfrac{p_\text{out}}{B_1} + \tfrac{1-p_\text{out}}{B_2}\right)
         \tfrac{(1-p_\text{in})L_F^2}{p_\text{in} S_2} \tfrac{1}{T} \tsum_{t=0}^{T-1} \Xi^t \\
        &\;\;\;\;\;\;\;\;\;+ \tfrac{8(1-p_\text{out})L_F^2\gamma^2}{B_2}  \tfrac{1}{T}\tsum_{t=0}^{T-1} \E[ \norm{ G^{t+1} }_2^2] \\
        &\;\;\;\;\;\;\;\;\; +  24 \left(\tfrac{p_\text{out}}{B_1}+ \tfrac{(1-p_\text{in})(1-p_\text{out})}{B_2}  \right) \tfrac{\tilde{L}_F^2 }{S_1}
        + \tfrac{(1-p_\text{out}) } {T} \cE^{1}_\text{var,in}.
    \end{align*}
    From \Cref{lemma:staleness}, we plug in the upper bound of $\tfrac{1}{T} \tsum_{t=0}^{T-1} \Xi^t$
    \begin{align*}
        \tfrac{1}{T}\tsum_{t=0}^{T-1} \cE^{t+1}_\text{var,in}
        &\le 24 \left(\tfrac{p_\text{out}}{B_1} + \tfrac{1-p_\text{out}}{B_2}\right)
         \tfrac{(1-p_\text{in})L_F^2}{p_\text{in} S_2} \left(\tfrac{6n^2}{B_2^2} \gamma^2 \tfrac{1}{T}\tsum_{t=0}^{T-1} \E[\norm{G^{t+1} }_2^2] \right) \\
        &\;\;\;\;\;\;\;\;\;+ \tfrac{8(1-p_\text{out})L_F^2\gamma^2}{B_2}  \tfrac{1}{T}\tsum_{t=0}^{T-1} \E[ \norm{ G^{t+1} }_2^2] \\
        &\;\;\;\;\;\;\;\;\; +  24 \left(\tfrac{p_\text{out}}{B_1}+ \tfrac{(1-p_\text{in})(1-p_\text{out})}{B_2}  \right) \tfrac{\tilde{L}_F^2 }{S_1}
        + \tfrac{(1-p_\text{out}) } {T} \cE^{1}_\text{var,in} \\
        &\le 8 \left(\left(\tfrac{p_\text{out}}{B_1} + \tfrac{1-p_\text{out}}{B_2}\right)
         \tfrac{(1-p_\text{in})}{p_\text{in} S_2} \tfrac{18n^2}{B_2^2}
         + \tfrac{(1-p_\text{out})}{B_2}  \right)  \tfrac{\gamma^2 L_F^2}{T}\tsum_{t=0}^{T-1} \E[\norm{G^{t+1} }_2^2] \\
        &\;\;\;\;\;\;\;\;\; +  24 \left(\tfrac{p_\text{out}}{B_1}+ \tfrac{(1-p_\text{in})(1-p_\text{out})}{B_2}  \right) \tfrac{\tilde{L}_F^2 }{S_1}
        + \tfrac{(1-p_\text{out}) } {T} \cE^{1}_\text{var,in}.
    \end{align*}
    Finally, we add the upper bound on with $\cE^{1}_\text{var,in}$ with \eqref{eq:nestedvr:var:in:1}
    \begin{equation}\label{eq:nestedvr:var:in}
        \begin{split}
        \tfrac{1}{T}\tsum_{t=0}^{T-1} \cE^{t+1}_\text{var,in}
        &\le 8 \left(\left(\tfrac{p_\text{out}}{B_1} + \tfrac{1-p_\text{out}}{B_2}\right)
         \tfrac{(1-p_\text{in})}{p_\text{in} S_2} \tfrac{18n^2}{B_2^2}
         + \tfrac{(1-p_\text{out})}{B_2}  \right)  \tfrac{\gamma^2 L_F^2}{T}\tsum_{t=0}^{T-1} \E[\norm{G^{t+1} }_2^2] \\
        &\;\;\;\;\;\;\;\;\; +  24 \left(\tfrac{p_\text{out}}{B_1}+ \tfrac{(1-p_\text{in})(1-p_\text{out})}{B_2}  \right) \tfrac{\tilde{L}_F^2 }{S_1}
        + \tfrac{(1-p_\text{out}) } {T} \tfrac{4\tilde{L}_F^2}{B_1S_1}.
        \end{split}
    \end{equation}

    \noindent\textbf{Outer Variance.}
    Now we consider the outer variance for $t\ge1$
    \begin{align*}
    \cE^{t+1}_\text{var,out}&\le \tfrac{(1-p_\text{out})^2}{B_2} \E_i \left [\norm{ \E_{\eta|i} [G_i^{t+1}] - \E_{\eta|i}[\tilde{G}_i^{t}]}_2^2 \right ]  \\
    &\le \tfrac{(1-p_\text{out})^2}{B_2} \E_i \left [\E_{\eta|i} \left [ \norm{  G_i^{t+1} - \tilde{G}_i^{t}}_2^2  \right ] \right ].
    \end{align*}
    Compared to \eqref{eq:nestedvr:in:0} we know that the upper bound of is smaller than that of $\cE^{t+1}_\text{var,in}$. Besides, whereas $\cE^{1}_\text{var,out}=0$ as we use large batch at $t=0$. Therefore, the upper bound of $\cE^{t+1}_\text{var}$ is upper bounded by 2*\eqref{eq:nestedvr:var:in}.
    
    \noindent\textbf{Variance of $\nabla g_{\tilde{\eta}}$.}
    From \Cref{lemma:nestedvr:nablag}, we know that
    \begin{align*}
        \tfrac{1}{T}\tsum_{t=0}^{T-1} \E [\cE^{t+1}_{\nabla g}]
    &\le \tfrac{1}{B_1} \tfrac{C_f^2\sigma_g^2}{m}
    + \tfrac{4(1-p_\text{out})}{B_2 m p_\text{out}} \tfrac{1}{T}\tsum_{t=0}^{T-1} \left(
    \E\left [\norm{G_i^{t+1} - \tilde{G}_i^{t} }_2^2 \right ] \right)  \\
    &\le \tfrac{1}{B_1} \tfrac{C_f^2\sigma_g^2}{m} + \tfrac{1}{m} \cE^{t+1}_\text{var}
    \end{align*}

    Finally, we use $\E[\norm{G^{t+1}}_2^2]=\norm{\E[G^{t+1}]}_2^2+\cE^{t+1}_\text{var}$.
    \begin{align*}
    \tfrac{1}{T}\tsum_{t=0}^{T-1} \cE^{t+1}_\text{var}
        &\le 16 \left(\left(\tfrac{p_\text{out}}{B_1} + \tfrac{1-p_\text{out}}{B_2}\right)
         \tfrac{(1-p_\text{in})}{p_\text{in} S_2} \tfrac{18n^2}{B_2^2}
         + \tfrac{(1-p_\text{out})}{B_2}  \right)  \tfrac{\gamma^2 L_F^2}{T}\tsum_{t=0}^{T-1} \norm{\E[G^{t+1}] }_2^2 \\
        &\;\;\;\;\;\;\;\;\; + 16 \left(\left(\tfrac{p_\text{out}}{B_1} + \tfrac{1-p_\text{out}}{B_2}\right)
         \tfrac{(1-p_\text{in})}{p_\text{in} S_2} \tfrac{18n^2}{B_2^2}
         + \tfrac{(1-p_\text{out})}{B_2}  \right)  \tfrac{\gamma^2 L_F^2}{T}\tsum_{t=0}^{T-1} \cE^{t+1}_\text{var} \\
        &\;\;\;\;\;\;\;\;\; +  48 \left(\tfrac{p_\text{out}}{B_1}+ \tfrac{(1-p_\text{in})(1-p_\text{out})}{B_2}  \right) \tfrac{\tilde{L}_F^2 }{S_1}
        + \tfrac{(1-p_\text{out}) } {T} \tfrac{8\tilde{L}_F^2}{B_1S_1}.
    \end{align*}
    By taking step size $\gamma$ to satisfy
    \begin{align*}
        \gamma^2 L_F^2 \max\left\{\tfrac{p_\text{out}}{B_1}\tfrac{(1-p_\text{in})}{p_\text{in} S_2} \tfrac{18n^2}{B_2^2},
        \tfrac{1-p_\text{out}}{B_2} \tfrac{(1-p_\text{in})}{p_\text{in} S_2} \tfrac{18n^2}{B_2^2}, 
         \tfrac{(1-p_\text{out})}{B_2}  \right\}
        \le \tfrac{1}{16} \cdot\tfrac{1}{6}
    \end{align*}
    which can be simplified to 
    \begin{align*}
        \gamma^2 L_F^2 \max \left\{\tfrac{(1-p_\text{in})}{p_\text{in} S_2} \tfrac{18}{B_2},
        \tfrac{1-p_\text{out}}{B_2} \tfrac{(1-p_\text{in})}{p_\text{in} S_2} \tfrac{18n^2}{B_2^2}, 
         \tfrac{(1-p_\text{out})}{B_2}  \right\}
        \le \tfrac{1}{16} \cdot\tfrac{1}{6}.
    \end{align*}
    Then the coefficient of $\tfrac{1}{T}\tsum_{t=0}^{T-1} \cE^{t+1}_\text{var}$ is bounded by $\tfrac{1}{2}$
    \begin{align*}
        16 \left(\left(\tfrac{p_\text{out}}{B_1} + \tfrac{1-p_\text{out}}{B_2}\right)
         \tfrac{(1-p_\text{in})}{p_\text{in} S_2} \tfrac{18n^2}{B_2^2}
         + \tfrac{(1-p_\text{out})}{B_2}  \right) \gamma^2 L_F^2 \le \tfrac{1}{2}.
    \end{align*}
    The the variance has the following bound
   \begin{align*}
    \tfrac{1}{T}\tsum_{t=0}^{T-1} \cE^{t+1}_\text{var}
        &\le 32 \left(\left(\tfrac{p_\text{out}}{B_1} + \tfrac{1-p_\text{out}}{B_2}\right)
         \tfrac{(1-p_\text{in})}{p_\text{in} S_2} \tfrac{18n^2}{B_2^2}
         + \tfrac{(1-p_\text{out})}{B_2}  \right)  \tfrac{\gamma^2 L_F^2}{T}\tsum_{t=0}^{T-1} \norm{\E[G^{t+1}] }_2^2 \\
        &\;\;\;\;\;\;\;\;\; +  96 \left(\tfrac{p_\text{out}}{B_1}+ \tfrac{(1-p_\text{in})(1-p_\text{out})}{B_2}  \right) \tfrac{\tilde{L}_F^2 }{S_1}
        + \tfrac{(1-p_\text{out}) } {T} \tfrac{8\tilde{L}_F^2}{B_1S_1}.
    \end{align*}
\end{proof}

\begin{theorem}\label{thm:nestedvr}
    Consider the \eqref{eq:fcco} problem.  Suppose Assumptions~\ref{a:lowerbound:F},~\ref{a:boundedvariance:g},~\ref{a:smoothness:fg} holds true. Let step size $\gamma = \cO(\tfrac{1}{\sqrt{n}L_F} )$. Then for 
    \NestedVR, $\xx^s$ picked uniformly at random among $\{\xx^t\}_{t=0}^{T-1}$ satisfies: $\E[\norm{\nabla F(\xx^s)}^2_2] \leq \epsilon^2$, for  nonconvex $F$, if the hyperparameters of the inner loop $S_1 = \cO(\tilde{L}_F^2\epsilon^{-2}), S_2 = \cO(\tilde{L}_F\epsilon^{-1}), p_\text{in}= \cO(1/S_2)$, the hyperparameters of the outer loop 
     $B_1=n,B_2=\sqrt{n},p_{\text{out}} = 1/B_2$, and the number of iterations
         $$T = \Omega\left (\tfrac{\sqrt{n} L_F( F(\xx^{0})-F^\star) }{\epsilon^2} \right).$$
     The resulting  sample complexity is
    \begin{align*}
        \cO\left (\tfrac{n L_F \tilde{L}_F( F(\xx^{0})-F^\star) }{\epsilon^3} \right).
    \end{align*}
    In fact, it reaches this sample complexity for all~$\tfrac{p_\text{in}p_\text{out}}{\sqrt{1-p_\text{in}}} \lesssim \epsilon$.
\end{theorem}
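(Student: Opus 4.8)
The plan is to feed the bias and variance bounds for \NestedVR from \Cref{lemma:nestedvr:bias_variance} into the averaged descent inequality of \Cref{lemma:sd:T}, and then pick the hyperparameters so that every term on the right-hand side is $\cO(\epsilon^2)$. Before invoking \Cref{lemma:nestedvr:bias_variance} I would first verify that its step-size precondition is met by the claimed choices: with $B_1=n$, $B_2=\sqrt n$, $p_\text{out}=1/B_2$, $S_2=\cO(\tilde L_F\epsilon^{-1})$, $p_\text{in}=\Theta(1/S_2)$ one has $(1-p_\text{in})/(p_\text{in}S_2)=\Theta(1)$ and $n^2/B_2^2=n$, so the three quantities inside the $\max$ are $\cO(1/\sqrt n)$, $\cO(\sqrt n)$, $\cO(1/\sqrt n)$, and $\gamma^2L_F^2=\cO(1/n)$ makes their product $\cO(1/\sqrt n)\le \tfrac1{96}$. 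This cancellation of $n^2/B_2^2$ against $\gamma^2L_F^2$ is already the structural reason behind the choices $B_2=\sqrt n$ and $\gamma=\cO(1/(\sqrt n L_F))$.

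The core step is to substitute the variance bound of \Cref{lemma:nestedvr:bias_variance} into its own bias bound, so that $\tfrac{1}{T}\sum_{t}\cE_\text{bias}^{t+1}$ becomes a bound purely in terms of $\tfrac{1}{T}\sum_{t}\norm{\E[G^{t+1}]}_2^2$ plus the two residual terms of order $\tilde L_F^2/S_1$ and $\tilde L_F^2/(TS_1)$, and to plug both resulting bounds back into \Cref{lemma:sd:T} (identifying $\norm{\E[G^{t+1}]}_2^2$ with $\E[\norm{\E^t[G^{t+1}|t]}_2^2]$ under the full-expectation convention). Gathering the coefficient of $\tfrac{1}{T}\sum_{t}\E[\norm{\E^t[G^{t+1}|t]}_2^2]$ on the right-hand side, each contribution turns out proportional to $nL_F^2\gamma^2$, $L_F^3\gamma^3\sqrt n$, or $p_\text{out}^{-1}\sqrt n\,L_F^2\gamma^2$ (times absolute constants); all are $\cO(1)$ under $\gamma=\cO(1/(\sqrt n L_F))$ and can be made $\le\tfrac14$ once the hidden constant in $\gamma$ is small. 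Note this absorption is what forces $\gamma\asymp 1/\sqrt n$ rather than the weaker $\gamma\asymp 1/n^{1/4}$ that the step-size precondition alone allows. Moving this coefficient to the left-hand side (and dropping the now-nonnegative $\E[\norm{\E^t[G^{t+1}|t]}_2^2]$ term) leaves $\tfrac{1}{T}\sum_{t}\E[\norm{\nabla F(\xx^t)}_2^2]\le 2(F(\xx^0)-F^\star)/(\gamma T)+\cO(\tilde L_F^2/S_1)+\cO(\tilde L_F^2/(TS_1))$.

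From here, setting $S_1=\cO(\tilde L_F^2\epsilon^{-2})$ makes the $\tilde L_F^2/S_1$ terms $\cO(\epsilon^2)$, and taking $T=\Omega(\sqrt n L_F(F(\xx^0)-F^\star)\epsilon^{-2})$ together with $\gamma=\Theta(1/(\sqrt n L_F))$ makes $2(F(\xx^0)-F^\star)/(\gamma T)=\cO(\epsilon^2)$; rescaling $\epsilon$ by a constant and picking $\xx^s$ uniformly among $\{\xx^t\}_{t=0}^{T-1}$ gives $\E[\norm{\nabla F(\xx^s)}_2^2]\le\epsilon^2$. For the sample complexity, an iteration uses in expectation $\cO(p_\text{out}B_1+B_2)=\cO(\sqrt n)$ outer indices and, for each selected index, $\cO(p_\text{in}S_1+S_2)=\cO(S_2)=\cO(\tilde L_F\epsilon^{-1})$ inner samples (here $p_\text{in}S_1\asymp S_2$ is exactly $p_\text{in}\asymp 1/S_2$), so the total is $\cO(T\sqrt n\,\tilde L_F\epsilon^{-1})=\cO(n L_F\tilde L_F(F(\xx^0)-F^\star)\epsilon^{-3})$. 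The closing ``for all $p_\text{in}p_\text{out}/\sqrt{1-p_\text{in}}\lesssim\epsilon$'' claim is obtained by rerunning the same computation with $p_\text{in},p_\text{out}$ symbolic and observing that the step-size precondition, the absorption of the $\norm{\E[G^{t+1}]}_2^2$ term, and the amortized cost of the rare large-batch draws are each implied by this single combined quantity once $B_2=\sqrt n$, $S_1\asymp\tilde L_F^2\epsilon^{-2}$, $S_2\asymp\tilde L_F\epsilon^{-1}$ are fixed.

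The main obstacle I expect is the bookkeeping in the substitution step: the bias, variance, staleness $\Xi^t$, and inner-error $\cE_y^{t+1}$ estimates of \Cref{lemma:nestedvr:bias_variance} are mutually coupled through several Cauchy-Schwarz splittings, so one must verify that after the two-level substitution \emph{every} coefficient of $\tfrac{1}{T}\sum_{t}\norm{\E[G^{t+1}]}_2^2$ is simultaneously $\le\tfrac12$ and \emph{every} residual constant is $\cO(\tilde L_F^2/S_1)$, all under the single scaling $B_2=\sqrt n$, $\gamma=\Theta(1/(\sqrt n L_F))$. Carrying $p_\text{in}$ and $p_\text{out}$ symbolically for the final clause, and isolating $p_\text{in}p_\text{out}/\sqrt{1-p_\text{in}}$ as the binding combination, is the part most prone to slips.
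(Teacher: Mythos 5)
Your proposal follows essentially the same path as the paper's proof: feed the bias and variance bounds from \Cref{lemma:nestedvr:bias_variance} into the averaged descent inequality of \Cref{lemma:sd:T}, absorb the $\tfrac{1}{T}\sum_t \E[\norm{\E^{t}[G^{t+1}|t]}_2^2]$ terms into the left-hand side under the chosen scalings $B_2=\sqrt n$, $p_\text{in}S_2=\Theta(1)$, $\gamma\asymp 1/(\sqrt n L_F)$, and then count expected samples as $T\times(p_\text{out}B_1+B_2)\times(p_\text{in}S_1+S_2)$. You also correctly identify that the coefficient absorption, not the precondition of \Cref{lemma:nestedvr:bias_variance}, is what pins down $\gamma\asymp 1/\sqrt n$, and that isolating $p_\text{in}p_\text{out}/\sqrt{1-p_\text{in}}\lesssim\epsilon$ is the binding condition, matching the paper's symbolic treatment.
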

\begin{proof}
    Using descent lemma (\Cref{lemma:sd:T}) and bias-variance bounds of \NestedVR (\Cref{lemma:nestedvr:bias_variance})
    \begin{align*}
        &\tfrac{1}{T}\tsum_{t=0}^{T-1}\norm{\nabla F(\xx^t)}_2^2
        + \tfrac{1}{2T} \tsum_{t=0}^{T-1}\norm{\E[G^{t+1}]}_2^2 \\
        &\le \tfrac{2(F(\xx^{0})-F^\star)}{\gamma T}
        + \tfrac{L_F\gamma}{T}\tsum_{t=0}^{T-1}\cE_\text{var}^{t+1} + \tfrac{1}{T}\tsum_{t=0}^{T-1} \cE_\text{bias}^{t+1} \\
        &\le \underbrace{\tfrac{2(F(\xx^{0})-F^\star)}{\gamma T}}_{\cT_0}
        + \underbrace{\tfrac{4 \tilde{L}_F^2 }{S_1}}_{\cT_1}
        + \underbrace{\tfrac{12(1-p_\text{in})}{p_\text{in} S_2} \tfrac{n^2}{B_2^2}  \tfrac{L_F^2\gamma^2}{T}\tsum_{t=0}^{T-1} \norm{\E[G^{t+1}] }_2^2}_{\cT_2}  \\
        & \;\;\;\;\;\;\;\;\; + \underbrace{\left(  \tfrac{12(1-p_\text{in})}{p_\text{in} S_2} \tfrac{n^2}{B_2^2} L_F^2\gamma^2  
        + \tfrac{2 (1-p_\text{out})^2 }{p_\text{out}} + \gamma L_F \right)
        \tfrac{1}{T}\tsum_{t=0}^{T-1} \cE^{t+1}_\text{var}}_{\cT_3}.
    \end{align*}
    \noindent\textbf{Compute $\cT_0$.} In order to let $\cT_0\le\epsilon^{2}$, we require that
    \begin{equation}\label{eq:nestedvr:thm:gammaT}
        \gamma T \ge \epsilon^{-2}.
    \end{equation}
    \noindent\textbf{Compute $\cT_1$.} In order to let $\cT_1$ to be smaller than $\epsilon^2$, we need
    \begin{align*}
        S_1=\tfrac{4\tilde{L}_F^2}{\epsilon^2}.
    \end{align*}
    \noindent\textbf{Compute $\cT_2$.} In order to let the coefficient of $\tfrac{1}{T}\tsum_{t=0}^{T-1} \norm{\E[G^{t+1}] }_2^2$ in $\cT_2$ to be less than $\tfrac{1}{4}$, i.e.
    \begin{equation}\label{eq:nestedvr:t2}
        \tfrac{12(1-p_\text{in})}{p_\text{in} S_2} \tfrac{n^2}{B_2^2}  L_F^2\gamma^2 \le \tfrac{1}{4},
    \end{equation}
    which requires $\gamma$
    \begin{equation}\label{eq:nestedvr:gamma1}
        \gamma \le \tfrac{B_2 \sqrt{p_\text{in} S_2 }}{7L_F n \sqrt{1-p_\text{in}}}
        =\tfrac{p_\text{out}p_\text{in} \tilde{L}_F}{7\epsilon L_F\sqrt{1-p_\text{in}}}.
    \end{equation}
    \noindent\textbf{Compute $\cT_3$.}
    Let us now focus on $\cT_3$ and notice that the middle term $\tfrac{2(1-p_\text{out})^2}{p_\text{out}}$
    \begin{align*}
        \tfrac{2(1-p_\text{out})^2}{p_\text{out}} \tfrac{1}{T}\tsum_{t=0}^{T-1} \cE^{t+1}_\text{var}.
    \end{align*}
    Using \Cref{lemma:nestedvr:bias_variance} we have that
   \begin{align*}
        &\tfrac{2(1-p_\text{out})^2}{p_\text{out}} \tfrac{1}{T}\tsum_{t=0}^{T-1} \cE^{t+1}_\text{var} \\
        &\le \underbrace{32 \tfrac{2(1-p_\text{out})^2}{p_\text{out}}\left(\left(\tfrac{p_\text{out}}{B_1} + \tfrac{1-p_\text{out}}{B_2}\right)
         \tfrac{(1-p_\text{in})}{p_\text{in} S_2} \tfrac{18n^2}{B_2^2}
         + \tfrac{(1-p_\text{out})}{B_2}  \right) \gamma^2 L_F^2 \tfrac{1}{T}\tsum_{t=0}^{T-1} \norm{\E[G^{t+1}] }_2^2}_{\cT_{3,1}} \\
        & \;\;\;\;\;\;\;\;\; + \underbrace{96 \tfrac{2(1-p_\text{out})^2}{p_\text{out}} \left(\tfrac{p_\text{out}}{B_1}+ \tfrac{(1-p_\text{in})(1-p_\text{out})}{B_2}  \right) \tfrac{\tilde{L}_F^2 }{S_1}}_{\cT_{3,2}}
        + \underbrace{\tfrac{2(1-p_\text{out})^2}{p_\text{out}} \tfrac{(1-p_\text{out}) } {T} \tfrac{8\tilde{L}_F^2}{B_1S_1}}_{\cT_{3,3}}.
    \end{align*}
    \begin{itemize}
        \item Compute \textbf{$\cT_{3,3}$}: As we already know that $S_1=\cO(\epsilon^{-2})$ and $T\ge1$ and $B_1p_\text{out}\ge1$. This imposes no more constraints, i.e.
        \begin{align*}
            S_1 = \cO\left(\frac{\tilde{L}_F^2 }{\epsilon^2} \right).
        \end{align*}
        \item Compute \textbf{$\cT_{3,2}$}: As $S_1 = \cO(\epsilon^{-2})$ and $B_1=n$ and $B_2=B_1 p_\text{out}$, then it requires
        \begin{align*}
            \tfrac{(1-p_\text{in}) (1-p_\text{out})^3 }{p^2_\text{out} } \le n.
        \end{align*}
        \item Compute \textbf{$\cT_{3,1}$}: In order to satisfy the following
        \begin{align*}
            32 \tfrac{2(1-p_\text{out})^2}{p_\text{out}}\left(\left(\tfrac{p_\text{out}}{B_1} + \tfrac{1-p_\text{out}}{B_2}\right)
         \tfrac{(1-p_\text{in})}{p_\text{in} S_2} \tfrac{18n^2}{B_2^2}
         + \tfrac{(1-p_\text{out})}{B_2}  \right) \gamma^2 L_F^2 \le \tfrac{1}{12}
        \end{align*}
        we need to enforce
        \begin{equation}\label{eq:nestedvr:gamma2}
            \gamma \le \frac{p_\text{in} p_\text{out}\tilde{L}_F }{\epsilon L_F (1-p_\text{in})^{1/2} (1-p_\text{out})^{3/2}}.
        \end{equation}
    \end{itemize}
    Now we go back to $\cT_3$ and compare the other two coefficients
    \begin{align*}
        \tfrac{12(1-p_\text{in})}{p_\text{in} S_2} \tfrac{n^2}{B_2^2} L_F^2\gamma^2  
        + \tfrac{2 (1-p_\text{out})^2 }{p_\text{out}} + \gamma L_F.
    \end{align*}
    As $\gamma L_F\le\tfrac{1}{2}\lesssim\tfrac{2 (1-p_\text{out})^2 }{p_\text{out}}$ we can safely ignore $\gamma L_F$. On the other hand, from \eqref{eq:nestedvr:t2} we know that the first term is also have
    \begin{align*}
        \tfrac{12(1-p_\text{in})}{p_\text{in} S_2} \tfrac{n^2}{B_2^2} L_F^2\gamma^2 \le \tfrac{1}{4} \lesssim \tfrac{2 (1-p_\text{out})^2 }{p_\text{out}}.
    \end{align*}

    \noindent\textbf{Constraints from the Bias-Variance Lemma (\Cref{lemma:nestedvr:bias_variance}).} By setting $B_1=n$ and $S_1=\cO(\tfrac{\tilde{L}_F^2}{\epsilon^2})$, this constraint translates to 
    \begin{align*}
        \gamma^2 L_F^2 \max \left\{\tfrac{(1-p_\text{in})}{p_\text{in}^2} \tfrac{\epsilon^2}{B_2 },
        \tfrac{1-p_\text{out}}{B_2} \tfrac{(1-p_\text{in})\epsilon^2}{p_\text{in}^2} \tfrac{1}{p_\text{out}^2}, 
         \tfrac{(1-p_\text{out})}{B_2}  \right\}
        \lesssim 1
    \end{align*}   
    which is weaker than \eqref{eq:nestedvr:gamma1}.

    \noindent\textbf{Summary on the Limit on $\gamma$.}
    Combine \eqref{eq:nestedvr:gamma1} and \eqref{eq:nestedvr:gamma2} and $\gamma\le\tfrac{1}{2L_F}$, we have a final limit on step size $\gamma$
    \begin{equation}\label{eq:nestedvr:gamma3}
    \begin{split}
        \gamma\lesssim\min\left\{\tfrac{p_\text{out}p_\text{in}\tilde{L}_F}{\epsilon L_F\sqrt{1-p_\text{in}}}, \tfrac{1}{L_F} \right\}
    \end{split}
    \end{equation}
    Then the total sample complexity of \NestedVR can be computed as 
    \begin{align*}
        (\text{\# of iters $T$}) \times (\text{Avg. outer batch size $B_2=B_1p_\text{out}$})
        \times (\text{Avg. inner batch size $S_2=S_1p_\text{in}$}).
    \end{align*}
    This sample complexity has the following requirement
    \begin{align*}
        B_2S_2T
        =\frac{B_2S_2(T\gamma)}{\gamma} 
        \stackrel{\eqref{eq:nestedvr:thm:gammaT}}{\ge}\frac{B_2S_2}{\epsilon^2\gamma}
        =\frac{n\epsilon^{-2} }{\epsilon^2} \tfrac{p_\text{in}p_\text{out}}{\gamma}
        \stackrel{\eqref{eq:nestedvr:gamma3}}{\gtrsim} n\epsilon^{-3}. 
    \end{align*}
    The lower bound $n\epsilon^{-3}$ is reached when in \eqref{eq:nestedvr:gamma3} we have
    \begin{align*}
        \tfrac{p_\text{out}p_\text{in}\tilde{L}_F}{\epsilon L_F\sqrt{1-p_\text{in}}} \lesssim \tfrac{1}{L_F}.
    \end{align*}
    That is, $\tfrac{p_\text{out}p_\text{in}}{\sqrt{1-p_\text{in}}} \lesssim \epsilon$.

    In particular, we can choose the following hyperparameters to reach $\cO(n\epsilon^{-3})$ sample complexity
    \begin{align*}
        B_1=n,\quad B_2=\sqrt{n},\quad p_\text{out}=\tfrac{1}{\sqrt{n}},
        \quad S_1=\cO(\tilde{L}_F^2 \epsilon^{-2}),
        \quad S_2=\cO(\tilde{L}_F \epsilon^{-1}),
        \quad p_\text{in}=\cO(\tilde{L}_F^{-1} \epsilon)
    \end{align*}
    The step size $\gamma$ can be chosen as 
    \begin{align*}
        \gamma \lesssim \tfrac{1}{\sqrt{n}L_F}.
    \end{align*}
    and the iteration complexity
    \begin{align*}
        T = \Omega \left (\tfrac{\sqrt{n}L_F(F(\xx^0) - F^\star)}{\epsilon^2} \right ).
    \end{align*}
    Putting these together gives the claimed sample complexity bound. By picking $\xx^s$ uniformly at random among $\{\xx^t\}_{t=0}^{T-1}$, we get the desired guarantee.
\end{proof}

\subsection{Convergence of \ENestedVR} \label{app:enestedvr}
In this section, we analyze the sample complexity of~\Cref{algo:ENVR} (\ENestedVR) for the~\ref{eq:fcco} problem with 
\begin{align} \label{eqn:envr}
    G^{t+1}_\text{E-NVR} = \begin{cases}
        \tfrac{1}{B_1} \tsum_{i} ( \zz_i^{t+1} )^\top  \cL^{(2)}_{\cD_{\yy,i}^{t+1} } \nabla f_i(0)  & \text{with prob. $p_\text{out}$}\\
        G^{t}_\text{E-NVR} + \tfrac{1}{B_2} \tsum_{i}  \left( ( \zz_i^{t+1} )^\top   \cL^{(2)}_{\cD_{\yy,i}^{t+1} } \nabla f_i(0) - ( \zz_i^{t} )^\top  \cL^{(2)}_{\cD_{\yy,i}^{t} } \nabla f_i(0) \right)& \text{with prob. $1-p_\text{out}$}.
    \end{cases}
\end{align}

\begin{lem}[Bias and Variance of \ENestedVR] \label{lemma:enestedvr:bias_variance}
    If the step size $\gamma$ satisfies
    \begin{align*}
        \gamma^2 L_F^2 \max \left\{\tfrac{(1-p_\text{in})}{p_\text{in} S_2} \tfrac{18}{B_2},
        \tfrac{1-p_\text{out}}{B_2} \tfrac{(1-p_\text{in})}{p_\text{in} S_2} \tfrac{18n^2}{B_2^2}, 
         \tfrac{(1-p_\text{out})}{B_2}  \right\}
        \le \tfrac{1}{16} \cdot\tfrac{1}{6}
    \end{align*}   
    then the variance and bias of \ENestedVR are
   \begin{align*}
    \tfrac{1}{T}\tsum_{t=0}^{T-1} \cE^{t+1}_\text{var}
        &\le 14\cdot32 \left(\left(\tfrac{p_\text{out}}{B_1} + \tfrac{1-p_\text{out}}{B_2}\right)
         \tfrac{(1-p_\text{in})}{p_\text{in} S_2} \tfrac{18n^2}{B_2^2}
         + \tfrac{(1-p_\text{out})}{B_2}  \right)  \tfrac{\gamma^2 L_F^2}{T}\tsum_{t=0}^{T-1} \norm{\E[G^{t+1}] }_2^2 \\
        &\;\;\;\;\;\;\;\;\; +  14\cdot96 \left(\tfrac{p_\text{out}}{B_1}+ \tfrac{(1-p_\text{in})(1-p_\text{out})}{B_2}  \right) \tfrac{\tilde{L}_F^2 }{S_1}
        + \tfrac{(1-p_\text{out}) } {T} \tfrac{8\tilde{L}_F^2}{B_1S_1}. \\
        \tfrac{1}{T} \tsum_{t=0}^{T-1} \cE_\text{bias}^{t+1}
        &\le  \tfrac{(1-p_\text{in})^3\tilde{L}_F^2}{p_\text{in} S_2} 
        \tfrac{6n^2}{B_2^2} \gamma^2 \tfrac{1}{T}\tsum_{t=0}^{T-1} \norm{\E[G^{t+1}] }_2^2
        + \tfrac{2(1-p_\text{in})^2\tilde{L}_F^2 }{S_1}+ \tfrac{C_e^2}{S_2^4} \\
        &\;\;\;\;\;\;\;\;\;+ \left( \tfrac{(1-p_\text{in})^3\tilde{L}_F^2}{p_\text{in} S_2} 
        \tfrac{6n^2}{B_2^2} \gamma^2 + \tfrac{(1-p_\text{out})^2}{p_\text{out}} \right)\tfrac{1}{T} \tsum_{t=0}^{T-1} \cE_\text{var}^{t+1}.
    \end{align*}

\end{lem}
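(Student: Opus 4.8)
The plan is to treat \ENestedVR as \NestedVR with every evaluation $\nabla f_i(\yy_i^{t+1})$ replaced by the second-order extrapolate $\cL^{(2)}_{\cD^{t+1}_{\yy,i}}\nabla f_i(0)$, and then to re-run the bias/variance bookkeeping of \Cref{lemma:nestedvr:bias_variance} tracking only the two consequences of that substitution: a constant-factor inflation of the variance, and the replacement of the crude $\Theta(\cE_y)$ one-step bias by a small extrapolation residual plus a conditional bias of the $\yy$-state that contracts geometrically. All the auxiliary estimates that involve only $\yy_i,\zz_i$ and the iterates — the staleness recursion \Cref{lemma:staleness}, the $\cE_y$ bound \Cref{lemma:nestedvr:y}, \Cref{lemma:y_diff}, and the $\nabla g$-variance term \Cref{lemma:nestedvr:nablag} — are untouched and are reused as black boxes.

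For the variance I would note that, conditioned on the history up to time $t$, the $\nabla f_i$-randomness entering $G^{t+1}_\text{E-NVR}$ is exactly the randomness that \NestedVR feeds into $\nabla f_i(\yy_i^{t+1})$, now passed through the linear operator $\cL^{(2)}_{\cD^{t+1}_{\yy,i}}$. By \Cref{lemma:extrapolation:variance} this inflates second moments by at most a factor $14$, so each inner- and outer-variance estimate in the \NestedVR derivation (culminating in \eqref{eq:nestedvr:var:in} and its outer-loop analogue) picks up a factor $\le 14$. Re-threading these through the same recursion, and — under the stated step-size restriction, which forces the coefficient of $\tfrac{1}{T}\sum_t\cE^{t+1}_\text{var}$ below $\tfrac12$ exactly as before — absorbing the self-reference, gives the claimed bound with the constants $14\cdot32$ and $14\cdot96$ (the $\tfrac{(1-p_\text{out})}{T}\tfrac{8\tilde L_F^2}{B_1S_1}$ term, coming from the always-large first iteration, carries a harmless small constant).

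For the bias I keep the \NestedVR split $\cE^{t+1}_\text{bias}\le 2A_1^{t+1}+2A_2^{t+1}$ with $G_i^{t+1}:=(\zz_i^{t+1})^\top\cL^{(2)}_{\cD^{t+1}_{\yy,i}}\nabla f_i(0)$. The outer-SpiderBoost drift $A_2^{t+1}$ is handled verbatim (it only uses $\E^{t+1}[\tilde G_i^t\mid t]=\E^t[G_i^t\mid t-1]$, still valid after the swap), contributing $\tfrac{(1-p_\text{out})^2}{p_\text{out}}\tfrac{1}{T}\sum_t\cE^{t+1}_\text{var}$. The new work is $A_1^{t+1}=\norm{\nabla F(\xx^t)-\E[G_i^{t+1}]}_2^2\le C_g^2\,\E_i\norm{\nabla f_i(\E_{\eta\mid i}[g_\eta(\xx^t)])-\E[\cL^{(2)}_{\cD^{t+1}_{\yy,i}}\nabla f_i(0)]}_2^2$, which I bound by two pieces: the extrapolation residual $\norm{\E[\cL^{(2)}_{\cD^{t+1}_{\yy,i}}\nabla f_i(0)]-\nabla f_i(\E[\yy_i^{t+1}])}_2^2\le C_e^2/S_2^4$ from the vector form of \Cref{prop:2nd_error}/\Cref{cor:extrapolation:cso}, and $L_f^2\,\norm{\E_{\eta\mid i}[g_\eta(\xx^t)]-\E[\yy_i^{t+1}\mid\cF_t]}_2^2$. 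This is where extrapolation pays off: unlike \NestedVR, where nonlinearity of $\nabla f_i$ forced the crude bound $\norm{\nabla f_i(\E[g])-\E[\nabla f_i(\yy)]}\le L_f\sqrt{\cE_y}$, here $\cL^{(2)}$ lets us move the expectation inside up to $\cO(S_2^{-2})$, and then the conditional bias $b_i^{t+1}:=\norm{\E[\yy_i^{t+1}\mid\cF_t]-\E_{\eta\mid i}[g_\eta(\xx^t)]}_2$ of the $\yy$-SpiderBoost state is estimated directly: with probability $p_\text{in}$ the large-batch branch resets it to $0$, so $b_i^{t+1}$ obeys a contraction whose unrolling is a $(1-p_\text{in})$-geometrically-weighted sum of staleness increments — the source of the extra $(1-p_\text{in})$ powers relative to \NestedVR. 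Feeding this through \Cref{lemma:nestedvr:y} and \Cref{lemma:staleness} turns the staleness into $\tfrac{6n^2}{B_2^2}\gamma^2\tfrac{1}{T}\sum_t(\norm{\E[G^{t+1}]}_2^2+\cE^{t+1}_\text{var})$ and leaves the residual $\tilde L_F^2/S_1$, yielding the stated inequality with the $(1-p_\text{in})^2$ and $(1-p_\text{in})^3$ coefficients.

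The main obstacle is justifying the residual bound $C_e^2/S_2^4$ when $p_\text{in}<1$: there $\cD^{t+1}_{\yy,i}$ is the law of a SpiderBoost state rather than a clean $S_2$-fold sample average, so I must show that, on each $p_\text{in}$-branch and conditioned on $\cF_t$, its second-through-fourth central moments still decay like those of a batch of size $S_2$. This follows by applying \Cref{lemma:moment_sigma} to the fresh increment $\tfrac{1}{S_2}\sum_{\eta\in H_i}(g_\eta(\xx^t)-g_\eta(\phi_i^t))$ — whose base moments are bounded via \Cref{a:regularity} — so that \Cref{prop:2nd_error} applies with effective batch $S_2$ (using $S_2\le S_1$ to dominate the large-batch branch). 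Everything else is a faithful re-run of the \NestedVR argument with the constant-factor inflation and the $(1-p_\text{in})$-damping carried along.
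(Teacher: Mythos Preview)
Your proposal is correct and follows essentially the same route as the paper: the variance bound is obtained by multiplying the \NestedVR variance estimate of \Cref{lemma:nestedvr:bias_variance} by the factor $14$ from \Cref{lemma:extrapolation:variance}; the bias is split as $2A_1^{t+1}+2A_2^{t+1}$, with $A_2$ handled by the outer-SpiderBoost drift recursion and $A_1$ further split into an extrapolation residual (bounded branchwise on the $p_\text{in}$ coin, yielding $C_e^2/S_2^4$ via the fresh-increment moments) plus a term governed by the conditional mean of $\yy_i^{t+1}$, then routed through \Cref{lemma:nestedvr:y} and \Cref{lemma:staleness}. One clarification: in the paper the $(1-p_\text{in})^2$ prefactor on $\tilde L_F^2/S_1$ does not come from a geometric unrolling of a bias recursion on $b_i^{t+1}$, but arises in a single step---on the large-batch branch $\E_{\eta|i}[\yy_i^{t+1}]=\E_{\eta|i}[g_\eta(\xx^t)]$ exactly, so $A_{1,1}^{t+1}\le (1-p_\text{in})^2 L_f^2\,\E_i\norm{\yy_i^t-\E_{\eta|i}[g_\eta(\mphi_i^t)]}_2^2=(1-p_\text{in})^2 L_f^2\,\cE_y^t$, and the third $(1-p_\text{in})$ factor then enters through the coefficient in \Cref{lemma:nestedvr:y}; your ``contraction'' description and the invocation of \Cref{lemma:nestedvr:y} are really the same mechanism, so just be consistent when you write it out.
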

\begin{proof}
    Note that this proof is very similar to \NestedVR so we highlight the differences.
    Let $G^{t+1}=G_\text{E-NVR}^{t+1}$~\eqref{eqn:envr} be the \ENestedVR update and define
    \begin{align*}
        G_i^{t+1} := (\zz_i^{t+1})^\top \cL^{(2)}_{\cD_{\yy,i}^{t+1} } \nabla f_i(0)
    \end{align*}
    We expand the bias by inserting $\E_{i,p_\text{in},\eta|i}[G_i^{t+1}]$
    \begin{align*}
        \cE_\text{bias}^{t+1}&=\norm{\nabla F(\xx^{t+1}) - \E[G^{t+1}]}_2^2 \\
        &\le2  \underbrace{\norm{\nabla F(\xx^{t+1}) - \E_{i,p_\text{in},\eta,\tilde{\eta}|i}[G_i^{t+1}]  }_2^2}_{A_1^{t+1}}
        + 2 \underbrace{\norm{ \E_{i,p_\text{in},\eta,\tilde{\eta}|i}[G_i^{t+1}]  - \E[G^{t+1}]}_2^2}_{A_2^{t+1}}.
    \end{align*}
    \noindent\textbf{Consider $A_{1}^{t+1}$.}
    The term $A_1^{t+1}$ captures the difference between full gradient and extrapolated gradient
    \begin{align*}
        A_1^{t+1} &= \norm{ \E_i\left[ (\E_{\tilde{\eta}|i} [\nabla g_{\tilde{\eta}}(\xx^{t})] )^\top \nabla f_i(\E [g_\eta(\xx^t)] )
        - \E_{p_\text{in},\eta|i} \left[ (\E_{\tilde{\eta}|i} [\nabla g_{\tilde{\eta}}(\xx^{t})])^\top \cL^{(2)}_{\cD_{\yy,i}^{t+1} } \nabla f_i(0) \right]\right] }_2^2 \\
        &\le C_g^2 \E_i \left [\norm{ \nabla f_i(\E_{\eta|i}[g_\eta(\xx^t)])- \E_{p_\text{in},\eta|i} \left[\cL^{(2)}_{\cD_{\yy,i}^{t+1}} \nabla f_i(0) \right] }_2^2 \right ] \\
        &\le 2C_g^2 \underbrace{\E_i \left [\norm{ \nabla f_i(\E_{\eta|i}[g_\eta(\xx^t)])- \E_{p_\text{in}} [\nabla f_i(\E_{\eta|i} [\yy_i^{t+1}] )]}_2^2 \right ]}_{=:A_{1,1}^{t+1}} \\
        &\;\;\;\;\;\;\;\;\; + 2C_g^2 \underbrace{\E_i \left [ \norm{ \E_{p_\text{in}}[\nabla f_i(\E_{\eta|i} [\yy_i^{t+1}])] - \E_{p_\text{in},\eta|i} \left[\cL^{(2)}_{\cD_{\yy,i}^{t+1}} \nabla f_i(0)\right] }_2^2 \right ]}_{=:A_{1,2}^{t+1}}.
    \end{align*}
    The first term $A_{1,1}^{t+1}$ can be upper bounded through smoothness of $f_\xi$, for $t\ge 1$
    \begin{align*}
        A_{1,1}^{t+1}&= \E_i \left [\norm{ \nabla f_i(\E_{\eta|i}[g_\eta(\xx^t)])- p_\text{in} \nabla f_i(\E_{\eta|i}  [g_\eta(\xx^t)] ) - (1-p_\text{in}) \nabla f_i(\yy_i^t + \E_{\eta|i} [g_\eta(\xx^t) - g_\eta(\mphi_i^{t})] )  }_2^2 \right ] \\
            &= (1-p_\text{in})^2 \E_i\left[ \norm{\nabla f_i(\E[g_\eta(\xx^t)]) -\nabla f_i(\yy_i^t + \E_{\eta|i}[g_\eta(\xx^t) - g_\eta(\mphi_i^{t})] ) }_2^2 \right ] \\
        &\le (1-p_\text{in})^2 L_f^2 \E_i \left [\norm{\E_{\eta|i}[g_\eta(\xx^t)] - (\yy_i^t + \E_{\eta|i}[g_\eta(\xx^t) - g_\eta(\mphi_i^{t})] ) }_2^2 \right ] \\
        &= (1-p_\text{in})^2 L_f^2 \E_i[\norm{\yy_i^t  - \E_{\eta|i}[g_\eta(\mphi_i^{t})] }_2^2] \\
        &= (1-p_\text{in})^2 L_f^2 \cE_y^{t}.
    \end{align*}
    For $t=0$, $A_{1,1}^{1}=0$, then
    \begin{equation}\label{eq:enestedvr:1}
        \tfrac{1}{T}\tsum_{t=0}^{T-1} A_{1,1}^{t+1}
        \le (1-p_\text{in})^2 L_f^2 C_g^2 \tfrac{1}{T}\tsum_{t=0}^{T-1} \cE_y^{t+1}.
    \end{equation}
    On the other hand, with \Cref{lemma:ebsgd:bias_variance}
    \begin{align*}
        A_{1,2}^{t+1}&\le
        p_\text{in} \E_i \left [\norm{ \nabla f_i(\E_{\eta|i}[g_\eta(\xx^t)]) - \E_{\eta|i}\left[\cL^{(2)}_{\cD_{\yy,S_1,i}^{t+1} } \nabla f_i(0)\right] }_2^2 \right ] \\
        &\;\;\;\;\;\;\;\;\; + (1 - p_\text{in}) \E_i \left [\norm{ \nabla f_i(\yy_i^t + \E_{\eta|i}[g_\eta(\xx^t) - g_\eta(\mphi_i^{t})] ) 
        - \E_{\eta|i}\left[\cL^{(2)}_{\cD_{\yy,S_2,i}} \nabla f_i(0) \right] }_2^2 \right] \\
        &\le \tfrac{p_\text{in}C_e^2}{S_1^4} + \tfrac{(1-p_\text{in})C_e^2}{S_2^4} \\
        &\le \tfrac{C_e^2}{S_2^4}
    \end{align*}
    where $\cD_{\yy,S_1,i}^{t+1}$ is the distribution of $\tfrac{1}{S_1}\tsum_{\eta\in \cS_1}\gg_\eta(\xx^t)$ and $\cD_{\yy,S_2,i}^{t+1} $ is the distribution of
    $$\yy_i^t + \tfrac{1}{S_2} \tsum_{\eta\in\cS_2} (g_\eta(\xx^t) - \E [g_\eta(\mphi_i^{t})]).$$
    Thus the $A_1^{t+1}$ has the following upper bound
    \begin{equation}\label{eq:envr:A1}
        \tfrac{1}{T}\tsum_{t=0}^{T-1} A_{1}^{t+1}
        \le (1-p_\text{in})^2 L_f^2 C_g^2 \tfrac{1}{T}\tsum_{t=0}^{T-1} \cE_y^{t+1}
        + \tfrac{C_e^2}{S_2^4}.
    \end{equation}

    \noindent\textbf{Consider $A_{2}^{t+1}$.}
    Let us expand $A_2^{t+1}$ through recursion
    \begin{align*}
        A_2^{t+1} &= \norm{ \E_{i,p_\text{in},\eta,\tilde{\eta}|i} [G_i^{t+1}]  - \E[G^{t+1}]}_2^2\\
         &= (1 - p_\text{out})^2 \norm{ G^t - \E_i[\E_{\eta,\tilde{\eta}|i} [\tilde{G}_i^t]]  }_2^2 \\
         &= (1 - p_\text{out})^2 \left(  \norm{ \E[G^t] - \E_i[\E_{\eta,\tilde{\eta}|i} [\tilde{G}_i^t]]  }_2^2
         + \cE^{t}_\text{var} \right) \\
         &= (1 - p_\text{out})^2 \left(  A_2^t + \cE^{t}_\text{var} \right).
    \end{align*}
    For $t=0$, we have that $A_2^{1} = 0$, then average over time gives
    \begin{align*}
        \tfrac{1}{T} \tsum_{t=0}^{T-1} A_2^{t+1} \le \tfrac{(1-p_\text{out})^2}{p_\text{out}} \tfrac{1}{T} \tsum_{t=0}^{T-1} \cE_\text{var}^{t+1}.
    \end{align*}
    Therefore, the bias has the following bound
    \begin{align*}
        \tfrac{1}{T} \tsum_{t=0}^{T-1} \cE_\text{bias}^{t+1}
        \le (1-p_\text{in})^2 L_f^2 C_g^2 \tfrac{1}{T}\tsum_{t=0}^{T-1} \cE_y^{t+1}
        + \tfrac{C_e^2}{S_2^4}
        + \tfrac{(1-p_\text{out})^2}{p_\text{out}} \tfrac{1}{T} \tsum_{t=0}^{T-1} \cE_\text{var}^{t+1}.
    \end{align*}
    Using \Cref{lemma:nestedvr:y}
    \begin{align*}
        \tfrac{1}{T} \tsum_{t=0}^{T-1} \cE_\text{bias}^{t+1}
        &\le (1-p_\text{in})^2 L_f^2 C_g^2 \left(\tfrac{(1-p_\text{in})C_g^2}{p_\text{in} S_2} \tfrac{1}{T} \tsum_{t=0}^{T-1} \Xi^t
        + \tfrac{2\sigma_g^2 }{S_1}\right)\\
        &\;\;\;\;\;\;\;\;\;+ \tfrac{C_e^2}{S_2^4}
        + \tfrac{(1-p_\text{out})^2}{p_\text{out}} \tfrac{1}{T} \tsum_{t=0}^{T-1} \cE_\text{var}^{t+1} \\
        &\le  \tfrac{(1-p_\text{in})^3\tilde{L}_F^2}{p_\text{in} S_2} \tfrac{1}{T} \tsum_{t=0}^{T-1} \Xi^t
        + \tfrac{2(1-p_\text{in})^2\tilde{L}_F^2 }{S_1}+ \tfrac{C_e^2}{S_2^4}
        + \tfrac{(1-p_\text{out})^2}{p_\text{out}} \tfrac{1}{T} \tsum_{t=0}^{T-1} \cE_\text{var}^{t+1}.
    \end{align*}
    Using \Cref{lemma:staleness} we have that
    \begin{align*}
        \tfrac{1}{T} \tsum_{t=0}^{T-1} \cE_\text{bias}^{t+1}
        &\le  \tfrac{(1-p_\text{in})^3\tilde{L}_F^2}{p_\text{in} S_2} \left(
        \tfrac{6n^2}{B_2^2} \gamma^2 \tfrac{1}{T}\tsum_{t=0}^{T-1} \E[\norm{G^{t+1}] }_2^2
        \right)
        + \tfrac{2(1-p_\text{in})^2\tilde{L}_F^2 }{S_1}+ \tfrac{C_e^2}{S_2^4}
        + \tfrac{(1-p_\text{out})^2}{p_\text{out}} \tfrac{1}{T} \tsum_{t=0}^{T-1} \cE_\text{var}^{t+1} \\
        &\le  \tfrac{(1-p_\text{in})^3\tilde{L}_F^2}{p_\text{in} S_2} 
        \tfrac{6n^2}{B_2^2} \gamma^2 \tfrac{1}{T}\tsum_{t=0}^{T-1} \norm{\E[G^{t+1}] }_2^2
        + \tfrac{2(1-p_\text{in})^2\tilde{L}_F^2 }{S_1}+ \tfrac{C_e^2}{S_2^4} \\
        &\;\;\;\;\;\;\;\;\;+ \left( \tfrac{(1-p_\text{in})^3\tilde{L}_F^2}{p_\text{in} S_2} 
        \tfrac{6n^2}{B_2^2} \gamma^2 + \tfrac{(1-p_\text{out})^2}{p_\text{out}} \right)\tfrac{1}{T} \tsum_{t=0}^{T-1} \cE_\text{var}^{t+1}.
    \end{align*}

    \noindent\textbf{Variance.}
    Combine the variance of \NestedVR in \Cref{lemma:nestedvr:bias_variance} and \Cref{lemma:extrapolation:variance} gives
   \begin{align*}
    \tfrac{1}{T}\tsum_{t=0}^{T-1} \cE^{t+1}_\text{var}
        &\le 14\cdot32 \left(\left(\tfrac{p_\text{out}}{B_1} + \tfrac{1-p_\text{out}}{B_2}\right)
         \tfrac{(1-p_\text{in})}{p_\text{in} S_2} \tfrac{18n^2}{B_2^2}
         + \tfrac{(1-p_\text{out})}{B_2}  \right)  \tfrac{\gamma^2 L_F^2}{T}\tsum_{t=0}^{T-1} \norm{\E[G^{t+1}] }_2^2 \\
        &\;\;\;\;\;\;\;\;\; +  14\cdot96 \left(\tfrac{p_\text{out}}{B_1}+ \tfrac{(1-p_\text{in})(1-p_\text{out})}{B_2}  \right) \tfrac{\tilde{L}_F^2 }{S_1}
        + \tfrac{(1-p_\text{out}) } {T} \tfrac{8\tilde{L}_F^2}{B_1S_1}.
    \end{align*}
\end{proof}

\enestedvr*
\begin{proof}
 Denote. $G^{t+1}=G_\text{E-NVR}^{t+1}$~\eqref{eqn:envr}.
    Using descent lemma (\Cref{lemma:sd}) and bias-variance of \ENestedVR (\Cref{lemma:enestedvr:bias_variance})
    \begin{align*}
        &\tfrac{1}{T}\tsum_{t=0}^{T-1}\norm{\nabla F(\xx^t)}_2^2
        + \tfrac{1}{2T} \tsum_{t=0}^{T-1}\norm{\E[G^{t+1}]}_2^2 \\
        &\le \tfrac{2(F(\xx^{0})-F^\star)}{\gamma T}
        + \tfrac{L_F\gamma}{T}\tsum_{t=0}^{T-1}\cE_\text{var}^{t+1} 
        + \tfrac{1}{T}\tsum_{t=0}^{T-1} \cE_\text{bias}^{t+1} \\
        &\le \tfrac{2(F(\xx^{0})-F^\star)}{\gamma T}
        + \tfrac{(1-p_\text{in})^3\tilde{L}_F^2}{p_\text{in} S_2} 
        \tfrac{6n^2}{B_2^2} \gamma^2 \tfrac{1}{T}\tsum_{t=0}^{T-1} \norm{\E[G^{t+1}] }_2^2
        + \tfrac{2(1-p_\text{in})^2\tilde{L}_F^2 }{S_1}+ \tfrac{C_e^2}{S_2^4} \\
        &\;\;\;\;\;\;\;\;\;+ \left( \tfrac{(1-p_\text{in})^3\tilde{L}_F^2}{p_\text{in} S_2} 
        \tfrac{6n^2}{B_2^2} \gamma^2 + \tfrac{(1-p_\text{out})^2}{p_\text{out}} + L_F\gamma \right)\tfrac{1}{T} \tsum_{t=0}^{T-1} \cE_\text{var}^{t+1}.
    \end{align*}
    As we would like the right-hand side to be bounded by either $\tfrac{1}{T}\tsum_{t=0}^{T-1} \norm{\E[G^{t+1}]]}_2^2$ or $\epsilon^2$.
    \begin{itemize}
        \item \textbf{Bound on  $\tfrac{2(F(\xx^{0})-F^\star)}{\gamma T}$ with $\epsilon^2$ }, i.e.
        \begin{equation}\label{eq:enestedvr:gammaT}
            \gamma T \gtrsim (F(\xx^{0})-F^\star) \epsilon^{-2}
        \end{equation}
        \item \textbf{Coefficient of $\tfrac{1}{T}\tsum_{t=0}^{T-1} \norm{\E[G^{t+1}] }_2^2$ is bounded by $\tfrac{1}{4}$}, i.e.
        \begin{align*}
            \tfrac{(1-p_\text{in})^3\tilde{L}_F^2}{p_\text{in} S_2} 
        \tfrac{6n^2}{B_2^2} \gamma^2 \le \tfrac{1}{4}
        \end{align*}
        which can be achieved by choosing the following step size
        \begin{equation}\label{eq:enestedvr:gamma1}
        \gamma\le\tfrac{p_\text{out}p_\text{in} \sqrt{S_1} }{5\tilde{L}_F (1-p_\text{in})^{3/2}}.
        \end{equation}

        \item \textbf{Bound on $\tfrac{2(1-p_\text{in})^2\tilde{L}_F^2 }{S_1} $ with $\epsilon^2$}
        \begin{equation}\label{eq:enestedvr:gamma2}
        \tfrac{2(1-p_\text{in})^2\tilde{L}_F^2 }{S_1} \le \epsilon^2.
        \end{equation}
        
        \item \textbf{Bound $ \tfrac{C_e^2}{S_2^4} $ with $\epsilon^2$.} This leads to
        \begin{equation}\label{eq:enestedvr:gamma3}
            S_2 \ge \sqrt{\tfrac{C_e}{\epsilon}}.
        \end{equation}

        \item \textbf{Bound on the variance.} First notice from \eqref{eq:enestedvr:gamma1} and $\gamma\le\tfrac{1}{2L_F}$, 
        \begin{align*}
            \tfrac{(1-p_\text{in})^3\tilde{L}_F^2}{p_\text{in} S_2} 
        \tfrac{6n^2}{B_2^2} \gamma^2 &\le \tfrac{1}{4} \lesssim \tfrac{(1-p_\text{out})^2}{p_\text{out}}  \\ 
        L_F\gamma &\le \tfrac{1}{2} \lesssim \tfrac{(1-p_\text{out})^2}{p_\text{out}}.
        \end{align*}
        Therefore, we only need to consider the upper bound on 
        \begin{align*}
        &\tfrac{(1-p_\text{out})^2}{p_\text{out}} \tfrac{1}{T}\tsum_{t=0}^{T-1} \cE^{t+1}_\text{var}\\
        &\le 14\cdot32 \tfrac{(1-p_\text{out})^2}{p_\text{out}} \left(\left(\tfrac{p_\text{out}}{B_1} + \tfrac{1-p_\text{out}}{B_2}\right)
         \tfrac{(1-p_\text{in})}{p_\text{in} S_2} \tfrac{18n^2}{B_2^2}
         + \tfrac{(1-p_\text{out})}{B_2}  \right)  \tfrac{\gamma^2 L_F^2}{T}\tsum_{t=0}^{T-1} \norm{\E[G^{t+1}] }_2^2 \\
        &\;\;\;\;\;\;\;\;\; +  14\cdot96 \tfrac{(1-p_\text{out})^2}{p_\text{out}} \left(\tfrac{p_\text{out}}{B_1}+ \tfrac{(1-p_\text{in})(1-p_\text{out})}{B_2}  \right) \tfrac{\tilde{L}_F^2 }{S_1}
        + \tfrac{(1-p_\text{out})^3 } {p_\text{out} T} \tfrac{8\tilde{L}_F^2}{B_1S_1}.
        \end{align*}
        We impose the constraints for each term
        \begin{align*}
            \tfrac{(1-p_\text{out})^2}{p_\text{out}} \tfrac{p_\text{out}}{B_1}  \tfrac{(1-p_\text{in})}{p_\text{in} S_2} \tfrac{18n^2}{B_2^2} L_F^2 \gamma^2 &\lesssim 1 \\
            \tfrac{(1-p_\text{out})^2}{p_\text{out}} \tfrac{1-p_\text{out}}{B_2}  \tfrac{(1-p_\text{in})}{p_\text{in} S_2} \tfrac{18n^2}{B_2^2} L_F^2 \gamma^2 &\lesssim 1 \\
            \tfrac{(1-p_\text{out})^2}{p_\text{out}} \tfrac{1-p_\text{out} }{B_2} L_F^2 \gamma^2 &\lesssim 1 \\
            \tfrac{(1-p_\text{out})^2}{p_\text{out}} \tfrac{p_\text{out}}{B_1} \tfrac{\tilde{L}_F^2 }{S_1} &\lesssim \epsilon^2 \\
            \tfrac{(1-p_\text{out})^2}{p_\text{out}} \tfrac{(1-p_\text{in})(1-p_\text{out})}{B_2}  
            \tfrac{\tilde{L}_F^2 }{S_1} &\lesssim \epsilon^2 \\
            \tfrac{(1-p_\text{out})^3 } {p_\text{out} T} \tfrac{8\tilde{L}_F^2}{B_1S_1} \lesssim \epsilon^2.
        \end{align*}
        These can be simplified as 
        \begin{align}
            \gamma &\lesssim \tfrac{p_\text{in} p_\text{out} \sqrt{B_1} \sqrt{S_1} }{(1-p_\text{out}) \sqrt{1-p_\text{in}}}  \tfrac{1}{L_F} \label{eq:nestedvr:3} \\
            \gamma &\lesssim \tfrac{p_\text{in} p_\text{out}^{2} \sqrt{B_1} \sqrt{S_1} }{(1-p_\text{out})^{3/2} \sqrt{1-p_\text{in}}}  \tfrac{1}{L_F} \label{eq:nestedvr:4} \\
            \gamma &\lesssim \tfrac{ \sqrt{B_1}  }{(1-p_\text{out})^{3/2} }  \tfrac{1}{L_F} \label{eq:nestedvr:5}\\
            B_1 S_1 \gtrsim&\tfrac{(1-p_\text{out})^2 \tilde{L}_F^2 }{\epsilon^2} \label{eq:nestedvr:6}\\
            B_1 S_1 \gtrsim&\tfrac{(1-p_\text{out})^3(1-p_\text{in}) \tilde{L}_F^2 }{\epsilon^2p_\text{out}^2} \label{eq:nestedvr:7} \\
            B_1 S_1 \gtrsim&  \tfrac{(1-p_\text{out})^3 \tilde{L}_F^2 }{T\epsilon^2p_\text{out}}. \label{eq:nestedvr:8}
        \end{align}

        \item Constraints from \Cref{lemma:enestedvr:bias_variance}
        \begin{align*}
            \gamma^2 L_F^2 \max \left\{\tfrac{(1-p_\text{in})}{p_\text{in} S_2} \tfrac{18}{B_2},
            \tfrac{1-p_\text{out}}{B_2} \tfrac{(1-p_\text{in})}{p_\text{in} S_2} \tfrac{18n^2}{B_2^2}, 
             \tfrac{(1-p_\text{out})}{B_2}  \right\}
            \le \tfrac{1}{16} \cdot\tfrac{1}{6}
        \end{align*}   
        which can be translated to 
        \begin{align}
            \gamma & \lesssim \tfrac{p_\text{in} \sqrt{S_1} \sqrt{B_2} }{L_F \sqrt{1- p_\text{in}} } \label{eq:nestedvr:9} \\
            \gamma & \lesssim \tfrac{p_\text{in}p_\text{out} \sqrt{S_1} \sqrt{B_2} }{L_F \sqrt{1- p_\text{in}} \sqrt{1-p_\text{out} } }  \label{eq:nestedvr:10}\\
            \gamma & \lesssim \tfrac{\sqrt{B_2} }{L_F\sqrt{1-p_\text{out} } } \label{eq:nestedvr:11}
        \end{align}

        \item Constraint from sufficient decrease lemma:
        \begin{equation}
            \label{eq:nestedvr:sd}
            \gamma \le  \tfrac{1}{2L_F}.
        \end{equation}
    \end{itemize}
    We simplify the conditions noticing that 1) \eqref{eq:nestedvr:8} is weaker than \eqref{eq:nestedvr:6}; 2) \eqref{eq:nestedvr:5} and \eqref{eq:nestedvr:11} are weaker than \eqref{eq:nestedvr:sd}.    Combine all the constraints on $\gamma$, i.e. \eqref{eq:nestedvr:3}, \eqref{eq:nestedvr:4}, \eqref{eq:nestedvr:9}, \eqref{eq:nestedvr:10}, \eqref{eq:nestedvr:sd}
    \begin{align*}
        \gamma &\lesssim \tfrac{1}{L_F} \min \left\{ 
            \min \left\{1,  \tfrac{p_\text{out} }{\sqrt{1-p_\text{out} } } \right\}
        \tfrac{p_\text{in} p_\text{out} \sqrt{B_1} \sqrt{S_1} }{(1-p_\text{out}) \sqrt{1-p_\text{in}}}  \tfrac{1}{L_F}, 
        \min \left\{1,  \tfrac{p_\text{out} }{\sqrt{1-p_\text{out} } } \right\} \tfrac{p_\text{in} \sqrt{S_1} \sqrt{B_2} }{\sqrt{1- p_\text{in}} }, 
        1, \tfrac{p_\text{out}p_\text{in} \sqrt{S_1} }{5\tilde{L}_F (1-p_\text{in})^{3/2}} \right\}.
    \end{align*}
    This can be simplified as an upper bound
    \begin{align*}
        \gamma \lesssim \tfrac{1}{L_F}  \min \left\{ \tfrac{p_\text{in} p_\text{out} \sqrt{S}_1 }{ \sqrt{1-p_\text{in} }},
        \tfrac{p_\text{in} p_\text{out} \sqrt{S}_1 \sqrt{B_1} }{ \sqrt{1-p_\text{out} }},
        \tfrac{p_\text{in} p_\text{out}^2 \sqrt{S_1} \sqrt{B_1} }{ \sqrt{1-p_\text{in} } \sqrt{1-p_\text{out} }},
        1 \right\}.
    \end{align*}
    Now we consider two sets of hyperparameters depending on the size of $n$
    \noindent\textbf{Case 1:} For $n=\cO(\epsilon^{-2/3})$, we choose the following set of hyperparameters
    \begin{align*}
        B_1=B_2=n,\quad p_\text{out}=1, \quad S_1=\tilde{L}_F^2\epsilon^{-2}, \quad S_2=\tilde{L}_F\epsilon^{-1}, \quad p_\text{in}=\tilde{L}_F^{-1}\epsilon.
    \end{align*}
    Then we have $\gamma \lesssim \tfrac{1}{L_F} \min \{ \tfrac{p_\text{in} \sqrt{S_1} }{\sqrt{1-p_\text{in} }}, 1\}=\tfrac{1}{L_F} $, we have the total sample complexity of 
    \begin{align*}
        B_2S_2T = \tfrac{B_2S_2T\gamma}{\gamma} \stackrel{\eqref{eq:enestedvr:gammaT} }{=} 
        \tfrac{F(\xx^0) - F^\star}{\epsilon^2} \tfrac{B_2S_2}{\gamma} 
        = \tfrac{(F(\xx^0) - F^\star) n \tilde{L}_FL_F }{\epsilon^3} 
    \end{align*}

    \noindent\textbf{Case 2:}     For $n=\Omega(\epsilon^{-2/3})$, we choose the following set of hyperparameters
    \begin{align*}
        B_1=n, \quad B_2=\sqrt{n},\quad p_\text{out}=\tfrac{1}{\sqrt{n}}.
    \end{align*}
    In this case, \eqref{eq:nestedvr:6} is stronger than \eqref{eq:nestedvr:7} which requires $S_1\gtrsim \tfrac{\tilde{L}_F^2}{n\epsilon^2}$
    \begin{align*}
        S_1=S_2=\max\left\{\tilde{\sigma}_\text{bias}^{1/2} \epsilon^{-1/2}, \tfrac{\sigma_\text{in}^2 }{n\epsilon^2}\right\}, \quad p_\text{in}=1
    \end{align*}
    
    Then we have $\gamma \lesssim \tfrac{1}{L_F} \min \{ \tfrac{p_\text{out} \sqrt{n} \sqrt{S_1} }{\sqrt{1-p_\text{out}} } , 1\}=\tfrac{1}{L_F} $, we have the total sample complexity of 
    \begin{align*}
        B_2S_2T = \tfrac{B_2S_2T\gamma}{\gamma} \stackrel{\eqref{eq:enestedvr:gammaT} }{=} 
        \tfrac{F(\xx^0) - F^\star}{\epsilon^2} \tfrac{B_2S_2}{\gamma} 
        = (F(\xx^0) - F^\star) \max\left\{\tfrac{\sqrt{n}\tilde{\sigma}_\text{bias}^{1/2}}{\epsilon^{2.5}}, \tfrac{\sigma_\text{in}^2 }{\sqrt{n}\epsilon^4}\right\}.
    \end{align*}
    By picking $\xx^s$ uniformly at random among $\{\xx^t\}_{t=0}^{T-1}$, we get the desired guarantee.
\end{proof}

\section{Missing Details from Section~\ref{sec:experiments}} \label{app: expts}

\subsection{Application of First-order MAML}
Over the past few years, the MAML framework~\citep{finn2017model} has become quite popular for few-shot supervised learning and meta
reinforcement learning tasks.
The first-order Model-Agnostic Meta-Learning (MAML) can be formulated mathematically as follows:
\begin{align*}
\min_{\xx} \E_{i\sim p, \cD^i_\text{query}} \ell_i\left( \E_{\cD^i_\text{supp}}(\xx - \alpha \nabla \ell_i(\xx, \cD^i_\text{supp} ) ), \cD^i_\text{query} \right)
\end{align*}
where $\alpha$ is the step size, $\cD^i_\text{supp}$ and  $\cD^i_\text{query}$ are meta-training and meta-testing data respectively and $\ell_i$ being the loss function of task $i$. % \textcolor{blue}{what is $f$ and $g$}
Stated in the CSO framework, $f_\xi(\xx):=\ell_i(\xx,\cD^i_\text{query})$ and $g_\eta(\xx, \xi):=\xx - \alpha \nabla \ell_i(\xx, \cD^i_\text{supp}) $ where $\xi=(i,\cD^i_\text{query})$ and $\eta=\cD^i_\text{supp}$. 
% \shiva{it will be good to spell out $f_\xi$ and $g_\eta$ here.}

In this context, lots of popular choices for $f_\xi$ are smooth. For illustration purposes, we now discuss a widely used sine-wave few-shot regression task as appearing from the work of \citet{finn2017model}, where the goal is to do a few-shot learning of a sine wave, $A \sin(t - \phi)$, using a neural network $\Phi_{\xx}(t)$ with smooth activations, where $A$ and $\phi$ represent the unknown amplitude and phase, and $\xx$ denotes the model weight. Each task $i$ is characterized by $(A^i, \phi^i, \cD^i_\text{query})$. In the first-order MAML training process, we randomly select a task $i$, and draw training data $\eta=\cD^i_\text{supp}$.
Define the loss function for a given dataset $\cD$ as $\ell_i(\Phi_{\xx}; \cD) = \frac{1}{2} \E_{t\sim\cD} \norm{A^i\sin(t-\phi^i) - \Phi_{\xx}(t)}_2^2$. We then establish the outer function $f_i(\xx)=\ell_i(\Phi_{\xx}; \cD^i_\text{query})$ and inner function $g_\eta(\xx)=\xx - \alpha \nabla_{\xx} \ell_i(\Phi_{\xx}; \cD^i_\text{supp})$. As $f_i$ is smooth, our results are applicable.
\begin{figure}
  \centering
  \includegraphics[width=0.8\linewidth]{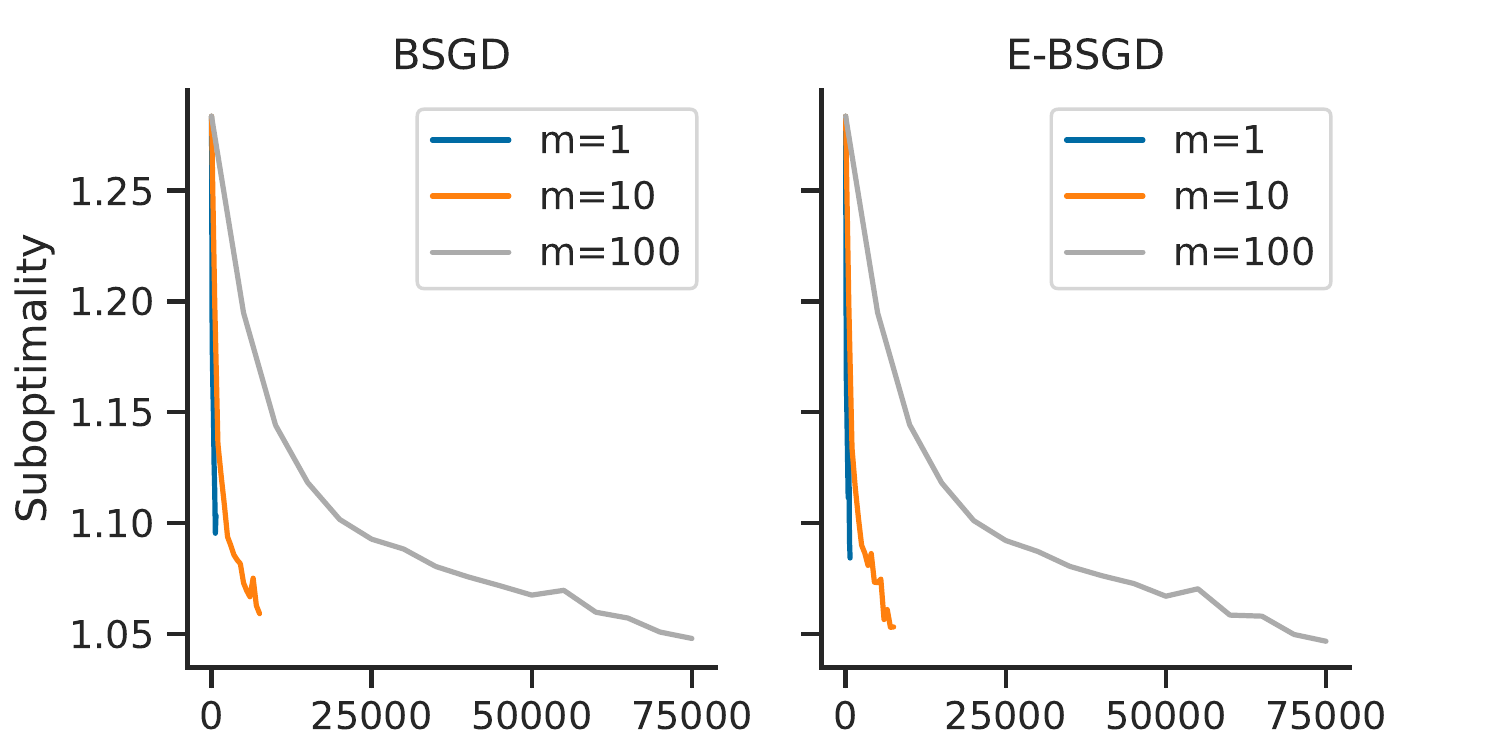}
  \caption{Performance of \BSGD vs.\ \EBSGD on the few-shot sinsuoid regression task. }
        \label{fig:maml}
\end{figure}

In \Cref{fig:maml}, we show the results of \BSGD and \EBSGD applied to this problem. In this experiment, the amplitude $A$ is drawn from a uniform distribution $\cU(0.1, 5)$ and the phase $\phi$ is drawn from $\cU(0, \pi)$. Both $\cD_\text{supp}$ and $\cD_\text{query}$ are independently drawn from $\cU(-5,5)$. The step size is set to $\alpha=0.01$. The batch size is fixed to 10. The performances of \BSGD and \EBSGD are very close. This is not surprising because finetuning step size $\alpha$ is chosen to be small which significantly reduces the variance of $g_\eta$, making the bias of meta gradient to be very small ($\cO(\alpha^2)$). Therefore, we observe similar performance of \BSGD and \EBSGD. Similar trend also holds for \BSpiderBoost and \NestedVR compared to their extrapolated variants.

% In the future experiments, we will explore setups with larger $\alpha$ where the effectiveness of extrapolation is better demonstrated.

\subsection{Application of Deep Average Precision Maximization}
The areas under precision-recall curve (AUPRC) has an unbiased point estimator that maximizes average precision (AP) \citep{DBLP:conf/nips/QiLXJY21,DBLP:conf/aistats/0006YZY22}. Let $\cS_{+}$ and $\cS_{-}$ be the set of positive and negative samples and $\cS=\cS_{-}\cup \cS_{+}$. Let $h_{\ww}(\cdot)$ be a classifier parameterized with $\ww$ and $\ell$ be a surrogate function, such as logistic or sigmoid. A smooth surrogate objective for
maximizing average precision can be formulated as~\citep{wang2022finite}:
\begin{align*}
F(\ww) = -\frac{1}{|\cS_+|} \sum_{\xx_i\in\cS_+} \frac{
\sum_{\xx\in\cS_+} \ell(h_{\ww}(\xx) - h_{\ww}(\xx_i))
}{
\sum_{\xx\in\cS} \ell(h_{\ww}(\xx) - h_{\ww}(\xx_i))
}
\end{align*}
This problem can be seen as a conditional stochastic optimization problem with $g_i(\ww)=[\sum_{\xx\in\cS_+} \ell(h_{\ww}(\xx) - h_{\ww}(\xx_i)), \sum_{\xx\in\cS} \ell(h_{\ww}(\xx) - h_{\ww}(\xx_i))]$ and $f_i: \R\times\R\backslash\{0\} \rightarrow \R $ is defined as $f_i(\yy)=-\frac{[\yy]_1}{[\yy]_2}$ where $[\yy]_k$ denotes the $k$th coordinate of a vector $\yy \in \R \times \R\backslash\{0\}$. 
During the stochastic optimization of this objective, we draw uniformly at random $\xi:=\xx_i$ (drawn from the set $\cS_+$) as a positive sample and $\eta|\xi=[\cF_{\xx_1}, \cF_{\xx_2}]$ where set $\xx_1$ is drawn uniformly at random from $\cS_+$ and $\xx_2$ is drawn uniformly at random from $\cS$ and functional $\cF_{\xx}(\ww):=\ell(h_w(\xx) - h_w(\xx_i))$. Note that $f_i \in \cC^\infty$ is smooth with gradient $$\nabla f_i(\yy) = \begin{bmatrix}
        - \tfrac{1}{[\yy]_2}\\
        \tfrac{[\yy]_1}{([\yy]_2)^2}
    \end{bmatrix}.$$ 
    Therefore, our results from Sections~\ref{sec:CSO} and~\ref{sec:FCCO} again apply.

    \subsection{Necessity of Additional Smoothness Conditions}\label{ssec:necessity}
    Throughout the paper, we assume bounded moments (\Cref{a:oracle}) and a smoothness condition (\Cref{a:smoothness}) to derive our extrapolation technique. However, it is worth noting that the technique itself does not explicitly depend on higher-order derivatives. Our theoretical framework does not address the behavior of extrapolation in the absence of these smoothness constraints. In this section, we investigate the application of extrapolation to two non-smooth functions:
    
    \begin{itemize}
    \item ReLU function given by $q(x)=\max\{x,0\}$;
    \item Perturbed quadratics represented as $q(x)=x^2/2+\text{TriangleWave(x)}+1$. The function TriangleWave($x$) has a period of 2 and spans the range [-1,1], defined as:
    \begin{align*}
    \text{TriangleWave}(x) = 2\left|2\left(\frac{x}{2}-\left\lfloor\frac{x}{2}+\frac{1}{2}\right\rfloor\right)\right|-1
    \end{align*}
    \end{itemize}
    
    Visual representations of these functions can be found in \Cref{fig:figure3}. We set $s=0$ and consider a random variable $\delta\sim\mathcal{N}(10,100)$ with $m=1$. We then apply first-, second-, and third-order extrapolation. The outcomes are depicted in \Cref{fig:necessity}. Remarkably, both the ReLU and the perturbed quadratic functions do not conform to the differentiability assumptions inherent to our stochastic extrapolation schemes. Nonetheless, as indicated by \Cref{fig:figure1} and \Cref{fig:figure2}, our proposed second- and third-order extrapolation techniques yield a superior approximation of $q(\mathbb{E}[\delta])$.

    \renewcommand{\thefigure}{5}
    \begin{figure}[!ht]
    \hspace*{-4ex}
    \begin{minipage}[b]{0.31\linewidth}
    \centering
    \includegraphics[scale=0.7]{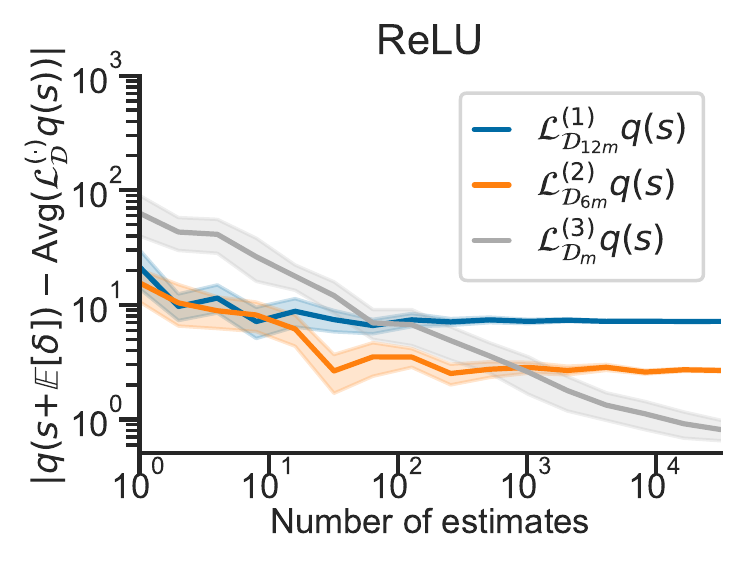}
    \renewcommand{\thefigure}{5a}
    \caption{}
    \label{fig:figure1}
    \end{minipage}
    \hspace*{2ex}
    \begin{minipage}[b]{0.31\linewidth}
    \centering
    \includegraphics[scale=0.7]{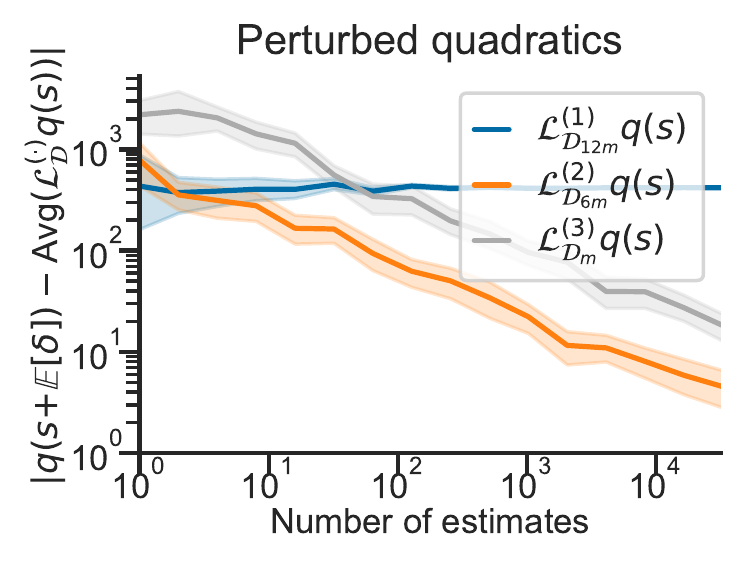}
    \renewcommand{\thefigure}{5b}
    \caption{}
    \label{fig:figure2}
    \end{minipage}
    \hspace*{4ex}
    \begin{minipage}[b]{0.31\linewidth}
    \centering
    % \vspace*{-14ex}
    \includegraphics[scale=0.7]{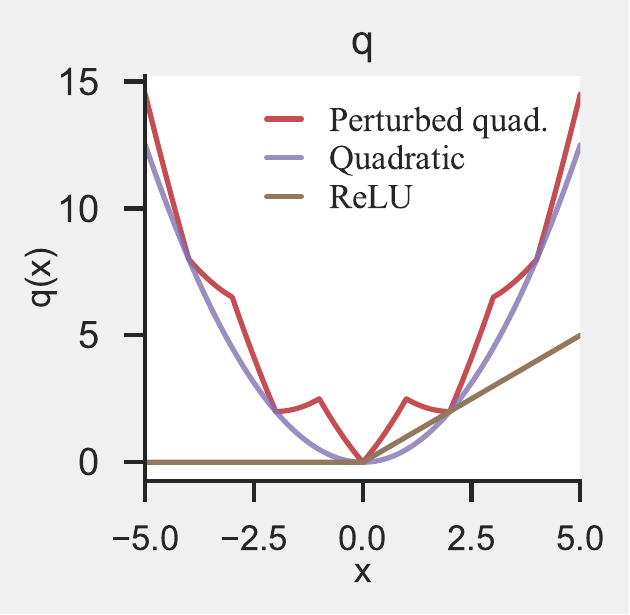}
    \renewcommand{\thefigure}{5c}
    \caption{}
    \label{fig:figure3}
    \end{minipage}
    % \hspace*{-10ex}
    \caption{ (a) Fig. 5a: Error in estimating $q(s+\mathbb{E}[\delta])$ for our proposed first-, second-, and third-order extrapolation schemes applied to ReLU $q(x)=\max\{x,0\}$, $s=0$, $\delta\sim\mathcal{N}(10,100)$, $m=1$. (b) Fig 5b: Error in estimating $q(s+\mathbb{E}[\delta])$ for our proposed first-, second-, and third-order extrapolation schemes applied to a perturbed quadratic $q(x)=x^2/2+\text{TriangleWave(x)}+1$, $s=0, \delta\sim\mathcal{N}(10,100)$, $m=1$. The TriangleWave($x$) has a period of 2 and spans the range [-1,1], i.e. $2|2\left(\frac{x}{2}-\lfloor\frac{x}{2}+\frac{1}{2}\rfloor\right)|-1$.
    (c) Fig 5c: The ReLU and perturbed quadratic used in the Fig. 5a and 5b along with quadratic curves. }
    \label{fig:necessity}
    \end{figure}

\end{document}